\documentclass{article}

% Recommended, but optional, packages for figures and better typesetting:
\usepackage{microtype}
\usepackage{graphicx}
\usepackage{subfigure}
\usepackage{booktabs} % for professional tables

% hyperref makes hyperlinks in the resulting PDF.
% If your build breaks (sometimes temporarily if a hyperlink spans a page)
% please comment out the following usepackage line and replace
% \usepackage{icml2021} with \usepackage[nohyperref]{icml2021} above.
\usepackage{hyperref}

% Attempt to make hyperref and algorithmic work together better:

% Use the following line for the initial blind version submitted for review:
%\usepackage{icml2021}

% If accepted, instead use the following line for the camera-ready submission:
\usepackage[accepted]{icml2021}

% The \icmltitle you define below is probably too long as a header.
% Therefore, a short form for the running title is supplied here:
\icmltitlerunning{MARINA: Faster Non-Convex Distributed Learning with Compression}

%Our packages
\usepackage{amssymb, amsmath, amsthm, latexsym}
\usepackage{url}
\usepackage{tabularx}
\usepackage{paralist}
\usepackage{mathtools}
\usepackage{nicefrac}
\usepackage{makecell}

\usepackage[flushleft]{threeparttable} % http://ctan.org/pkg/threeparttable

\usepackage{caption}
\usepackage{multirow}
\usepackage{colortbl}
\definecolor{bgcolor}{rgb}{0.8,1,1}
\definecolor{bgcolor2}{rgb}{0.8,1,0.8}

\usepackage[dvipsnames]{xcolor}
\usepackage{pifont}

\usepackage{subfigure}

\newcommand{\R}{\mathbb{R}}

\def\<#1,#2>{\left\langle #1,#2\right\rangle}

% shaded theorems
%\usepackage{mdframed} 
%\usepackage{thmtools}
%
%\definecolor{shadecolor}{gray}{0.9}
%\declaretheoremstyle[
%headfont=\normalfont\bfseries,
%notefont=\mdseries, notebraces={(}{)},
%bodyfont=\normalfont,
%postheadspace=0.5em,
%spaceabove=1pt,
%mdframed={
%  skipabove=8pt,
%  skipbelow=8pt,
%  hidealllines=true,
%  backgroundcolor={shadecolor},
%  innerleftmargin=4pt,
%  innerrightmargin=4pt}
%]{shaded}
%
%
%\declaretheorem[style=shaded,within=section]{definition}
%\declaretheorem[style=shaded,sibling=definition]{theorem}
%\declaretheorem[style=shaded,sibling=definition]{proposition}
%\declaretheorem[style=shaded,sibling=definition]{assumption}
%\declaretheorem[style=shaded,sibling=definition]{corollary}
%\declaretheorem[style=shaded,sibling=definition]{conjecture}
%\declaretheorem[style=shaded,sibling=definition]{lemma}
%\declaretheorem[style=shaded,sibling=definition]{remark}
%\declaretheorem[style=shaded,sibling=definition]{example}

\newtheorem{lemma}{Lemma}[section]
\newtheorem{theorem}{Theorem}[section]
\newtheorem{definition}{Definition}[section]

\newtheorem{assumption}{Assumption}[section]
\newtheorem{corollary}{Corollary}[section]
\newtheorem{remark}{Remark}[section]

\usepackage{xspace}

\newcommand{\squeeze}{\textstyle} % when deployed

\newcommand{\algname}[1]{{\sf \small #1}\xspace}
\newcommand{\algnamex}[1]{{\sf #1}\xspace}

% TO DO NOTES 
\usepackage[colorinlistoftodos,bordercolor=orange,backgroundcolor=orange!20,linecolor=orange,textsize=scriptsize]{todonotes}

% caligraphic

\newcommand{\cD}{{\cal D}}

\newcommand{\cL}{{\cal L}}

\newcommand{\cO}{{\cal O}}

\newcommand{\cQ}{{\cal Q}}

% matrices

\newcommand{\EE}{\mathbf{E}}

\usepackage{hyperref}
\graphicspath{{plots/}}

\usepackage{makecell}

\usepackage{accents}
\newlength{\dhatheight}

\begin{document}

\twocolumn[
\icmltitle{MARINA: Faster Non-Convex Distributed Learning with Compression}

% It is OKAY to include author information, even for blind
% submissions: the style file will automatically remove it for you
% unless you've provided the [accepted] option to the icml2021
% package.

% List of affiliations: The first argument should be a (short)
% identifier you will use later to specify author affiliations
% Academic affiliations should list Department, University, City, Region, Country
% Industry affiliations should list Company, City, Region, Country

% You can specify symbols, otherwise they are numbered in order.
% Ideally, you should not use this facility. Affiliations will be numbered
% in order of appearance and this is the preferred way.
\icmlsetsymbol{equal}{*}

\begin{icmlauthorlist}
\icmlauthor{Eduard Gorbunov}{mipt,yandex,kaust}
\icmlauthor{Konstantin Burlachenko}{kaust}
\icmlauthor{Zhize Li}{kaust}
\icmlauthor{Peter Richt\'{a}rik}{kaust}
\end{icmlauthorlist}

\icmlaffiliation{mipt}{Moscow Institute of Physics and Technology, Moscow, Russia}
\icmlaffiliation{yandex}{Yandex, Moscow, Russia}
\icmlaffiliation{kaust}{King Abdullah University of Science and Technology, Thuwal, Saudi Arabia}

\icmlcorrespondingauthor{Eduard Gorbunov}{eduard.gorbunov@phystech.edu}
\icmlcorrespondingauthor{Peter Richt\'{a}rik}{peter.richtarik@kaust.edu.sa}

% You may provide any keywords that you
% find helpful for describing your paper; these are used to populate
% the "keywords" metadata in the PDF but will not be shown in the document
\icmlkeywords{Non-Convex Optimization, Distributed Learning}

\vskip 0.3in
]

% this must go after the closing bracket ] following \twocolumn[ ...

% This command actually creates the footnote in the first column
% listing the affiliations and the copyright notice.
% The command takes one argument, which is text to display at the start of the footnote.
% The \icmlEqualContribution command is standard text for equal contribution.
% Remove it (just {}) if you do not need this facility.

\printAffiliationsAndNotice{}  % leave blank if no need to mention equal contribution
%\printAffiliationsAndNotice{\icmlEqualContribution} % otherwise use the standard text.

\begin{abstract}
We develop and analyze \algname{MARINA}: a new communication efficient method for non-convex distributed learning over heterogeneous datasets.   \algname{MARINA} employs a novel communication compression strategy based on the compression of gradient differences that is reminiscent of but different from the strategy employed in the \algname{DIANA} method of Mishchenko et al. (2019). Unlike  virtually all competing distributed first-order methods,  including \algname{DIANA}, ours is based on a carefully designed {\em biased} gradient estimator, which is the key to its superior theoretical and practical performance. The communication complexity bounds we prove for \algname{MARINA} are evidently better than those of all previous first-order methods. Further, we develop and analyze two variants of \algname{MARINA}: \algname{VR-MARINA} and \algname{PP-MARINA}. The first method  is designed for the case when the local loss functions owned by clients are either of a finite sum or of an expectation form, and the second method allows for a partial participation of clients -- a feature important in federated learning. All our methods are superior to previous state-of-the-art methods in terms of oracle/communication complexity. Finally, we provide a convergence analysis of all methods  for problems satisfying the Polyak-{\L}ojasiewicz condition.
\end{abstract}

\section{Introduction}\label{sec:intro}
Non-convex optimization problems appear in various applications of machine learning, such as training deep neural networks \cite{goodfellow2016deep} and matrix completion and recovery \cite{ma2018implicit, bhojanapalli2016dropping}. Because of their practical importance, these problems gained much attention in recent years, which led to a rapid development of new efficient methods for non-convex optimization problems \cite{danilova2020recent}, and especially the training of deep learning models \cite{sun2019optimization}.

Training deep neural networks is notoriously computationally challenging and time-consuming. In the quest to improve the generalization performance of modern deep learning models, practitioners resort to using increasingly larger datasets in the training process, and to support such workloads, it is imperative to use advanced parallel and distributed hardware, systems, and algorithms. Distributed computing is often necessitated by the desire to train models from data naturally distributed across several edge devices, as is the case in federated learning \cite{FEDLEARN, FL2017-AISTATS}. However, even when this is not the case, distributed methods are often very efficient at reducing the training time \cite{goyal2017accurate,You2020Large}. Due to these and other reasons,   distributed optimization has gained immense popularity in recent years.

However, distributed methods almost invariably suffer from the so-called \textit{communication bottleneck}: the communication cost of information necessary for the workers  to jointly solve the problem  at hand is often very high, and depending on the particular compute architecture, workload, and algorithm used, it can be orders of magnitude higher than the computation cost. A popular technique for resolving this issue  is \textit{communication compression} \cite{seide20141, FEDLEARN, Suresh2017}, which is based on applying a lossy transformation/compression to the models, gradients, or tensors to be sent over the network to save on communication.  Since applying a lossy compression  generally decreases the utility of the exchanged messages, such an approach will typically lead to an increase in the number of communications, and the overall usefulness of this technique manifests itself in situations where the communication savings are larger compared to the increased need for the number of communication rounds \cite{Cnat}. 

The optimization and machine learning communities have exerted considerable effort in recent years to design  distributed methods  supporting compressed communication. From many methods proposed, we emphasize \algname{VR-DIANA} \cite{horvath2019stochastic},  \algname{FedCOMGATE} \cite{haddadpour2020federated}, and \algname{FedSTEPH} \cite{das2020improved} because these papers
contain the state-of-the-art results in the setup when the local loss functions can be arbitrary heterogeneous.

%In this work, we develop a new method called \algname{MARINA} and derive the convergence guarantees for \algname{MARINA} establishing the superiotity to the previous state-of-the-art convergence results for finding stationary points of the non-convex distributed optimization problem \eqref{eq:main_problem}.

\begin{table*}[t]
    \centering
%    \small
    \scriptsize
	\caption{\small Summary of the state-of-the-art results for finding an \textbf{$\varepsilon$-stationary point} for the problem \eqref{eq:main_problem}, i.e., such a point $\hat x$ that $\EE\left[\|\nabla f(\hat x)\|^2\right] \le \varepsilon^2$. Dependences on the numerical constants, ``quality'' of the starting point, and smoothness constants are omitted in the complexity bounds.  Abbreviations: ``PP'' = partial participation; ``Communication complexity'' = the number of communications rounds needed to find an $\varepsilon$-stationary point; ``Oracle complexity'' = the number of (stochastic) first-order oracle calls needed to find an $\varepsilon$-stationary point. Notation: $\omega$ = the quantization parameter (see Def.~\ref{def:quantization}); $n$ = the number of nodes; $m$ = the size of the local dataset; $r$ = (expected) number of clients sampled at each iteration; $b'$ = the batchsize for \algnamex{VR-MARINA} at the iterations with compressed communication. To simplify the bounds, we assume that the expected density $\zeta_{\cQ}$ of the quantization operator $\cQ$ (see Def.~\ref{def:quantization}) satisfies $\omega+1 = \Theta(\nicefrac{d}{\zeta_{\cQ}})$ (e.g., this holds for RandK and $\ell_2$-quantization, see \cite{beznosikov2020biased}). We notice that \cite{haddadpour2020federated} and \cite{das2020improved} contain also better rates under different assumptions on clients' similarity.}
    \label{tab:comparison}    
    \begin{threeparttable}
    \begin{tabular}{|c|c|c c c|}
         \hline
         Setup & Method & Citation & Communication Complexity & Oracle Complexity\\ 
\hline\hline
    \multirow{5}{0.7cm}{\centering \eqref{eq:main_problem}} &\algnamex{DIANA} &\makecell{\cite{mishchenko2019distributed}\\\cite{horvath2019stochastic}\\ \cite{li2020unified}} & $\frac{1+\left(1 + \omega\right)\sqrt{\nicefrac{\omega}{n}}}{\varepsilon^2}$ &  $\frac{1+\left(1 + \omega\right)\sqrt{\nicefrac{\omega}{n}}}{\varepsilon^2}$ \\
%    \cline{2-5}
%    & \algname{VR-DIANA}& \cite{horvath2019stochastic} & $\frac{\left(1 + \omega\right)\sqrt{1+\nicefrac{\omega}{n}}}{\varepsilon^2}$ &  $\frac{\left(1 + \omega\right)\sqrt{1+\nicefrac{\omega}{n}}}{\varepsilon^2}$ \\
%    \cline{2-5}
    & \algnamex{FedCOMGATE}\tnote{\color{red} (1)} & \cite{haddadpour2020federated} & $\frac{1 + \omega}{\varepsilon^2}$ & $\frac{1+\omega}{n\varepsilon^4}$\\
%    \cline{2-5}
    & \algnamex{FedSTEPH}, $r=n$ & \cite{das2020improved} & $\frac{1 + \nicefrac{\omega}{n}}{\varepsilon^4}$ & $\frac{1 + \nicefrac{\omega}{n}}{\varepsilon^4}$ \\
%	\cline{2-5}
    & \cellcolor{bgcolor2}\algnamex{MARINA} (Alg.~\ref{alg:marina}) &\cellcolor{bgcolor2} Thm.~\ref{thm:main_result_non_cvx} \& Cor.~\ref{cor:main_result_non_cvx} (NEW) &\cellcolor{bgcolor2} $\frac{1 + \nicefrac{\omega}{\sqrt{n}}}{\varepsilon^2}$ &\cellcolor{bgcolor2} $\frac{1 + \nicefrac{\omega}{\sqrt{n}}}{\varepsilon^2}$\\    
    \hline\hline
    \multirow{4}{0.7cm}{\centering\eqref{eq:main_problem}+\eqref{eq:f_i_finite_sum}}& \algnamex{DIANA} &\cite{li2020unified} & $\frac{1+\left(1 + \omega\right)\sqrt{\nicefrac{\omega}{n}}}{\varepsilon^2} + \frac{1+\omega}{n\varepsilon^4}$ &  $\frac{1+\left(1 + \omega\right)\sqrt{\nicefrac{\omega}{n}}}{\varepsilon^2} + \frac{1+\omega}{n\varepsilon^4}$\\
%    \cline{2-5}
    & \algnamex{VR-DIANA} & \cite{horvath2019stochastic} & $\frac{\left(m^{\nicefrac{2}{3}} + \omega\right)\sqrt{1+\nicefrac{\omega}{n}}}{\varepsilon^2}$ &  $\frac{\left(m^{\nicefrac{2}{3}} + \omega\right)\sqrt{1+\nicefrac{\omega}{n}}}{\varepsilon^2}$\\
%    \cline{2-5}
    &\cellcolor{bgcolor2} \algnamex{VR-MARINA} (Alg.~\ref{alg:vr_marina}), $b'=1$\tnote{\color{red}(2)} &\cellcolor{bgcolor2} Thm.~\ref{thm:main_result_non_cvx_finite_sums} \& Cor.~\ref{cor:main_result_non_cvx_finite_sums} (NEW) &\cellcolor{bgcolor2} $\frac{1 + \nicefrac{\max\left\{\omega,\sqrt{(1+\omega)m}\right\}}{\sqrt{n}}}{\varepsilon^2}$ &\cellcolor{bgcolor2} $\frac{1 + \nicefrac{\max\left\{\omega,\sqrt{(1+\omega)m}\right\}}{\sqrt{n}}}{\varepsilon^2}$\\
    \hline\hline
    \multirow{5}{0.7cm}{\centering\eqref{eq:main_problem}+\eqref{eq:f_i_expectation}}& \algnamex{DIANA}\tnote{\color{red}(3)} &\makecell{\cite{mishchenko2019distributed}\\\cite{li2020unified}} & $\frac{1+\left(1 + \omega\right)\sqrt{\nicefrac{\omega}{n}}}{\varepsilon^2} + \frac{1+\omega}{n\varepsilon^4}$ &  $\frac{1+\left(1 + \omega\right)\sqrt{\nicefrac{\omega}{n}}}{\varepsilon^2} + \frac{1+\omega}{n\varepsilon^4}$\\
%    \cline{2-5}
    & \algnamex{FedCOMGATE}\tnote{\color{red}(3)} & \cite{haddadpour2020federated} & $\frac{1 + \omega}{\varepsilon^2}$ & $\frac{1+\omega}{n\varepsilon^4}$\\
%    \cline{2-5}
    &\cellcolor{bgcolor2} \algnamex{VR-MARINA}  (Alg.~\ref{alg:vr_marina}), $b' = 1$ &\cellcolor{bgcolor2} Thm.~\ref{thm:main_result_non_cvx_online} \& Cor.~\ref{cor:main_result_non_cvx_online} (NEW) &\cellcolor{bgcolor2} $\omega + \frac{1 + \nicefrac{\omega}{\sqrt{n}}}{\varepsilon^2} + \frac{\sqrt{1+\omega}}{n\varepsilon^3}$ &\cellcolor{bgcolor2} $\omega + \frac{1 + \nicefrac{\omega}{\sqrt{n}}}{\varepsilon^2} + \frac{\sqrt{1+\omega}}{n\varepsilon^3}$\\
%    \cline{2-5}
    &\cellcolor{bgcolor2} \algnamex{VR-MARINA} (Alg.~\ref{alg:vr_marina}), $b' = \Theta\left(\frac{1}{n\varepsilon^2}\right)$ &\cellcolor{bgcolor2} Thm.~\ref{thm:main_result_non_cvx_online} \& Cor.~\ref{cor:main_result_non_cvx_online} (NEW) &\cellcolor{bgcolor2} $\omega + \frac{1 + \nicefrac{\omega}{\sqrt{n}}}{\varepsilon^2}$ &\cellcolor{bgcolor2} $\frac{\omega}{n\varepsilon^2} + \frac{1 + \nicefrac{\omega}{\sqrt{n}}}{n\varepsilon^4}$\\
    \hline\hline
    \multirow{2.5}{0.7cm}{\centering PP, \eqref{eq:main_problem}}& \algnamex{FedSTEPH} & \cite{das2020improved} & $\frac{1+\nicefrac{\omega}{n}}{r\varepsilon^4} + \frac{(1+\omega)(n-r)}{r(n-1)\varepsilon^4}$ & $\frac{1+\nicefrac{\omega}{n}}{r\varepsilon^4} + \frac{(1+\omega)(n-r)}{r(n-1)\varepsilon^4}$\\
%    \cline{2-5}
    &\cellcolor{bgcolor2} \algnamex{PP-MARINA} (Alg.~\ref{alg:pp_marina}) &\cellcolor{bgcolor2} Thm.~\ref{thm:main_result_non_cvx_pp} \& Cor.~\ref{cor:main_result_non_cvx_pp} (NEW) &\cellcolor{bgcolor2} $\frac{1+ \nicefrac{(1 + \omega)\sqrt{n}}{r}}{\varepsilon^2}$ &\cellcolor{bgcolor2} $\frac{1+ \nicefrac{(1 + \omega)\sqrt{n}}{r}}{\varepsilon^2}$\\
    \hline
    \end{tabular}
    \begin{tablenotes}
      {\scriptsize
        \item [{\color{red}(1)}] The results for \algnamex{FedCOMGATE} are derived under assumption that for all vectors $x_1,\ldots,x_n \in\R^d$ the quantization operator $\cQ$ satisfies $\EE\left[\left\|\frac{1}{n}\sum_{i=1}^n\cQ(x_j)\right\|^2 - \left\|\cQ\left(\frac{1}{n}\sum_{i=1}^n x_j\right)\right\|^2\right] \le G$ for some constant $G \ge 0$. In fact, this assumption does not hold for classical quantization operators like RandK and $\ell_2$-quantization on $\R^d$. The counterexample: $n=2$ and $x_1 = -x_2 = (t,t,\ldots,t)^\top$ with arbitrary large $t > 0$.
        \item [{\color{red}(2)}] One can even further improve the communication complexity by increasing $b'$.
        \item [{\color{red}(3)}] No assumptions on the smoothness of the stochastic realizations $f_{\xi}(x)$ are used.
      }
    \end{tablenotes}
    \end{threeparttable}
\end{table*}

\subsection{Contributions}

We propose several new distributed optimization methods supporting compressed communication, specifically focusing on smooth but  nonconvex problems of the form
\begin{equation}
\squeeze
	\min\limits_{x\in\R^d}\left\{f(x) = \frac{1}{n}\sum\limits_{i=1}^n f_i(x)\right\}, \label{eq:main_problem}
\end{equation}
where $n$ workers/devices/clients/peers are connected in a centralized way with a parameter-server, and client $i$ has an access to the local loss function $f_i$ only.   We establish strong complexity rates for them and show that they are better than previous state-of-the-art results.

$\bullet$ \textbf{MARINA.} The main contribution of our paper is a new distributed method supporting communication compression called \algname{MARINA} (Alg~\ref{alg:marina}). In this algorithm, workers apply an unbiased compression operator to the {\em gradient differences} at each iteration with some probability and send them to the server that performs aggregation by averaging. Unlike all known methods operating with unbiased compression operators, this procedure leads to a {\em biased} gradient estimator.    We prove  convergence guarantees for \algname{MARINA}, which are strictly better than previous state-of-the-art methods (see Table~\ref{tab:comparison}). For example, \algname{MARINA}'s rate $\cO(\frac{1+\omega/\sqrt{n}}{\varepsilon^2})$ is $\cO(\sqrt{\omega})$ times better than that of the state-of-the-art method \algname{DIANA} \citep{mishchenko2019distributed}, where $\omega$ is the variance parameter associated with the deployed compressor. For example, in the case of the Rand1 sparsification compressor, we have $\omega=d-1$, and hence we get an improvement by the factor $\cO(\sqrt{d})$. Since the number $d$ of features can be truly very large when training modern models, this is a substantial improvement that can even amount to  {\em several orders of magnitude.} 

$\bullet$ \textbf{Variance Reduction on Nodes.} We generalize \algname{MARINA} to \algname{VR-MARINA}, which can handle the situation when the local functions $f_i$ have either a finite-sum (each $f_i$ is an average of $m$ functions) or an expectation form, and when it is more efficient to rely on local stochastic gradients rather than on local gradients. When compared with \algname{MARINA},  \algname{VR-MARINA} additionally performs {\em local variance reduction} on all nodes, progressively removing the variance coming from the stochastic approximation, leading to a better oracle complexity than previous state-of-the-art results (see Table~\ref{tab:comparison}). When no compression is used (i.e., $\omega=0$), the rate of \algname{VR-MARINA} is $\cO(\frac{\sqrt{m}}{\sqrt{n} \varepsilon^2})$, while the rate of the state-of-the-art method \algname{VR-DIANA} is $\cO(\frac{m^{2/3}}{\varepsilon^2})$. This is an improvement by the factor $\cO(\sqrt{n}m^{1/6})$. When much compression is applied, and $\omega$ is large, our method is faster by the factor  $\cO(\frac{m^{2/3} + \omega}{m^{1/2} + \omega^{1/2}})$. In the special case, when there is just a single node ($n=1$), and no compression is used, \algname{VR-MARINA} reduces to the \algname{PAGE} method of \citet{li2020page}; this is an optimal first-order algorithm for smooth non-convex finite-sum/online optimization problems.

$\bullet$ \textbf{Partial Participation.} We develop a modification of \algname{MARINA} allowing for {\em partial participation} of the clients, which is a feature critical in federated learning. The resulting method, \algname{PP-MARINA}, has  superior communication complexity to the existing methods developed for this settings (see Table~\ref{tab:comparison}).

$\bullet$ \textbf{Convergence Under the Polyak-{\L}ojasiewicz Condition.} We analyze all proposed methods for problems satisfying the Polyak-{\L}ojasiewicz condition \cite{polyak1963gradient,lojasiewicz1963topological}. Again, the obtained results are strictly better than previous ones (see Table~\ref{tab:comparison_pl}). Statements and proofs of all these results are in the Appendix.

$\bullet$ \textbf{Simple Analysis.} The simplicity and flexibility of our analysis offer several extensions. For example, one can easily generalize our analysis to the case of different quantization operators and different batch sizes used by clients. Moreover, one can combine the ideas of \algname{VR-MARINA} and \algname{PP-MARINA} and obtain a single distributed algorithm with compressed communications, variance reduction on nodes, and clients' sampling. We did not do this to keep the exposition simpler.

\begin{table*}[h]
    \centering
%    \small
    \scriptsize
	\caption{\small Summary of the state-of-the-art results for finding an $\varepsilon$-solution for the problem \eqref{eq:main_problem} satifying \textbf{Polyak-{\L}ojasiewicz condition} (see As.~\ref{as:pl_condition}), i.e., such a point $\hat x$ that $\EE\left[f(\hat x) - f(x^*)\right] \le \varepsilon$. Dependences on the numerical constants and $\log(\nicefrac{1}{\varepsilon})$ factors are omitted and all smoothness constanst are denoted by $L$ in the complexity bounds.  Abbreviations: ``PP'' = partial participation; ``Communication complexity'' = the number of communications rounds needed to find an $\varepsilon$-stationary point; ``Oracle complexity'' = the number of (stochastic) first-order oracle calls needed to find an $\varepsilon$-stationary point. Notation: $\omega$ = the quantization parameter (see Def.~\ref{def:quantization}); $n$ = the number of nodes; $m$ = the size of the local dataset; $r$ = (expected) number of clients sampled at each iteration; $b'$ = the batchsize for \algnamex{VR-MARINA} at the iterations with compressed communication. To simplify the bounds, we assume that the expected density $\zeta_{\cQ}$ of the quantization operator $\cQ$ (see Def.~\ref{def:quantization}) satisfies $\omega+1 = \Theta(\nicefrac{d}{\zeta_{\cQ}})$ (e.g., this holds for RandK and $\ell_2$-quantization, see \cite{beznosikov2020biased}). We notice that \cite{haddadpour2020federated} and \cite{das2020improved} contain also better rates under different assumptions on clients' similarity.}
    \label{tab:comparison_pl}    
   \begin{threeparttable}
    \begin{tabular}{|c|c|c c c|}
         \hline
         Setup & Method & Citation & Communication Complexity & Oracle Complexity \\ 
\hline\hline
    \multirow{3}{0.7cm}{\centering \eqref{eq:main_problem}} & \algnamex{DIANA} &\cite{li2020unified} & $\frac{L(1+\left(1 + \omega\right)\sqrt{\nicefrac{\omega}{n}})}{\mu}$ &  $\frac{L(1+\left(1 + \omega\right)\sqrt{\nicefrac{\omega}{n}})}{\mu}$ \\
%    \cline{2-5}
%    & \algname{VR-DIANA}& \cite{horvath2019stochastic} & $\frac{\left(1 + \omega\right)\sqrt{1+\nicefrac{\omega}{n}}}{\varepsilon^2}$ &  $\frac{\left(1 + \omega\right)\sqrt{1+\nicefrac{\omega}{n}}}{\varepsilon^2}$ \\
%    \cline{2-5}
    & \algnamex{FedCOMGATE}\tnote{\color{red} (1)} & \cite{haddadpour2020federated} & $\frac{L(1 + \omega)}{\mu}$ & $\frac{L(1+\omega)}{n\mu\varepsilon}$\\
%    \cline{2-5}
%    & \algnamex{FedSTEPH}, $r=n$ & \cite{das2020improved} & $\frac{1 + \nicefrac{\omega}{n}}{\varepsilon^4}$ & $\frac{1 + \nicefrac{\omega}{n}}{\varepsilon^4}$ \\
%	\cline{2-5}
    & \cellcolor{bgcolor2}\algnamex{MARINA} (Alg.~\ref{alg:marina}) &\cellcolor{bgcolor2} Thm.~\ref{thm:main_result_pl} \& Cor.~\ref{cor:main_result_pl_appendix} (NEW) &\cellcolor{bgcolor2} $\omega+\frac{L(1 + \nicefrac{\omega}{\sqrt{n}})}{\mu}$ &\cellcolor{bgcolor2} $\omega+\frac{L(1 + \nicefrac{\omega}{\sqrt{n}})}{\mu}$\\    
    \hline\hline
    \multirow{6}{0.7cm}{\centering\eqref{eq:main_problem}+\eqref{eq:f_i_finite_sum}}& \algnamex{DIANA} &\cite{li2020unified} & \makecell{$\frac{L(1+\left(1 + \omega\right)\sqrt{\nicefrac{\omega}{n}})}{\mu}+$\quad\quad\quad\\ \quad\quad\quad$ + \frac{L(1+\omega)}{n\mu}\left(\frac{L}{\mu}+\frac{1}{\varepsilon}\right)$} &  \makecell{$\frac{L(1+\left(1 + \omega\right)\sqrt{\nicefrac{\omega}{n}})}{\mu}+$\quad\quad\quad\\ \quad\quad\quad$ + \frac{L(1+\omega)}{n\mu}\left(\frac{L}{\mu}+\frac{1}{\varepsilon}\right)$}\\
%    \cline{2-5}
    & \algnamex{VR-DIANA}& \cite{li2020unified} & $\frac{L\left(m^{\nicefrac{2}{3}} + \omega\right)\sqrt{1+\nicefrac{\omega}{n}}}{\mu}$ &  $\frac{L\left(m^{\nicefrac{2}{3}} + \omega\right)\sqrt{1+\nicefrac{\omega}{n}}}{\mu}$\\
%    \cline{2-5}
    &\cellcolor{bgcolor2} \algnamex{VR-MARINA} (Alg.~\ref{alg:vr_marina}), $b'=1$\tnote{\color{red}(2)} &\cellcolor{bgcolor2} Thm.~\ref{thm:main_result_pl_finite_sums_appendix} \& Cor.~\ref{cor:main_result_pl_finite_sums_appendix} (NEW) &\cellcolor{bgcolor2} 
%    \begin{tabular}{c}
%    $\omega + m +$\quad\quad\quad\quad\quad\quad\quad\quad\quad\quad\\$+\frac{L(1 + \nicefrac{\max\left\{\omega,\sqrt{(1+\omega)m}\right\}}{\sqrt{n}})}{\mu}$
%\end{tabular} 
\makecell{$\omega + m +$\quad\quad\quad\quad\quad\quad\quad\quad\quad\quad~~\\$+\frac{L(1 + \nicefrac{\max\left\{\omega,\sqrt{(1+\omega)m}\right\}}{\sqrt{n}})}{\mu}$}&\cellcolor{bgcolor2} 
%\begin{tabular}{c}
%    $\omega + m +$\quad\quad\quad\quad\quad\quad\quad\quad\quad\quad\\$+\frac{L(1 + \nicefrac{\max\left\{\omega,\sqrt{(1+\omega)m}\right\}}{\sqrt{n}})}{\mu}$
%    \end{tabular}
    \makecell{$\omega + m +$\quad\quad\quad\quad\quad\quad\quad\quad\quad\quad~~\\$+\frac{L(1 + \nicefrac{\max\left\{\omega,\sqrt{(1+\omega)m}\right\}}{\sqrt{n}})}{\mu}$}\\
    \hline\hline
    \multirow{7}{0.7cm}{\centering\eqref{eq:main_problem}+\eqref{eq:f_i_expectation}}& \algnamex{DIANA}\tnote{\color{red}(3)} &\makecell{\cite{mishchenko2019distributed}\\\cite{li2020unified}} & $\frac{1+\left(1 + \omega\right)\sqrt{\nicefrac{\omega}{n}}}{\varepsilon^2} + \frac{1+\omega}{n\varepsilon^4}$ &  $\frac{1+\left(1 + \omega\right)\sqrt{\nicefrac{\omega}{n}}}{\varepsilon^2} + \frac{1+\omega}{n\varepsilon^4}$\\
%    \cline{2-5}
    & \algnamex{FedCOMGATE}\tnote{\color{red}(3)} & \cite{haddadpour2020federated} & $\frac{L(1 + \omega)}{\mu}$ & $\frac{L(1+\omega)}{n\mu\varepsilon}$\\
%    \cline{2-5}
    &\cellcolor{bgcolor2} \algnamex{VR-MARINA} (Alg.~\ref{alg:vr_marina}), $b' = 1$ &\cellcolor{bgcolor2} Thm.~\ref{thm:main_result_pl_online_appendix} \& Cor.~\ref{cor:main_result_pl_online_appendix} (NEW) &\cellcolor{bgcolor2} 
%    \begin{tabular}{c}
%    $\omega + \frac{1}{n\mu\varepsilon} + $ \quad\quad\quad\quad\quad\quad\\$+\frac{L}{\mu}\left(1 + \frac{\omega}{\sqrt{n}} + \sqrt{\frac{\omega}{n^2\mu\varepsilon}}\right)$
%\end{tabular} 
 \makecell{    $\omega + \frac{1}{n\mu\varepsilon} + $ \quad\quad\quad\quad\quad\quad\quad\quad\quad~\\$+\frac{L}{\mu}\left(1 + \frac{\omega}{\sqrt{n}} + \sqrt{\frac{\omega}{n^2\mu\varepsilon}}\right)$}&\cellcolor{bgcolor2} 
% \begin{tabular}{c}
%    $\omega + \frac{1}{n\mu\varepsilon} + $ \quad\quad\quad\quad\quad\quad\\$+\frac{L}{\mu}\left(1 + \frac{\omega}{\sqrt{n}} + \sqrt{\frac{\omega}{n^2\mu\varepsilon}}\right)$
%\end{tabular}
\makecell{$\omega + \frac{1}{n\mu\varepsilon} + $ \quad\quad\quad\quad\quad\quad\quad\quad\quad~\\$+\frac{L}{\mu}\left(1 + \frac{\omega}{\sqrt{n}} + \sqrt{\frac{\omega}{n^2\mu\varepsilon}}\right)$}\\
%    \cline{2-5}
    &\cellcolor{bgcolor2} \algnamex{VR-MARINA} (Alg.~\ref{alg:vr_marina}), $b' = \Theta\left(\frac{1}{n\mu\varepsilon}\right)$ &\cellcolor{bgcolor2} Thm.~\ref{thm:main_result_pl_online_appendix} \& Cor.~\ref{cor:main_result_pl_online_appendix} (NEW) &\cellcolor{bgcolor2} $\omega+\frac{L(1 + \nicefrac{\omega}{\sqrt{n}})}{\mu}$ &\cellcolor{bgcolor2} $\frac{\omega}{n\mu\varepsilon}+\frac{L(1 + \nicefrac{\omega}{\sqrt{n}})}{n\mu^2\varepsilon}$\\
    \hline\hline
    \multirow{2.5}{0.7cm}{\centering PP, \eqref{eq:main_problem}}& \algnamex{FedSTEPH}\tnote{\color{red}(4)} & \cite{das2020improved} & $\left(\frac{L}{\mu}\right)^{\nicefrac{3}{2}}$ & $\left(\frac{L}{\mu}\right)^{\nicefrac{3}{2}}$\\
%    \cline{2-5}
    &\cellcolor{bgcolor2} \algnamex{PP-MARINA} (Alg.~\ref{alg:pp_marina}) &\cellcolor{bgcolor2} Thm.~\ref{thm:main_result_pl_pp_appendix} \& Cor.~\ref{cor:main_result_pl_pp_appendix} (NEW) &\cellcolor{bgcolor2} $\frac{(\omega+1)n}{r}+\frac{L(1+ \nicefrac{(1 + \omega)\sqrt{n}}{r})}{\mu}$ &\cellcolor{bgcolor2} $\frac{(\omega+1)n}{r}+\frac{L(1+ \nicefrac{(1 + \omega)\sqrt{n}}{r})}{\mu}$\\
    \hline
    \end{tabular}
    \begin{tablenotes}
      {\scriptsize
        \item [{\color{red}(1)}] The results for \algnamex{FedCOMGATE} are derived under assumption that for all vectors $x_1,\ldots,x_n \in\R^d$ the quantization operator $\cQ$ satisfies $\EE\left[\left\|\frac{1}{n}\sum_{i=1}^n\cQ(x_j)\right\|^2 - \left\|\cQ\left(\frac{1}{n}\sum_{i=1}^n x_j\right)\right\|^2\right] \le G$ for some constant $G \ge 0$. In fact, this assumption does not hold for classical quantization operators like RandK and $\ell_2$-quantization on $\R^d$. The counterexample: $n=2$ and $x_1 = -x_2 = (t,t,\ldots,t)^\top$ with arbitrary large $t > 0$.
        \item [{\color{red}(2)}] One can even further improve the communication complexity by increasing $b'$.
        \item [{\color{red}(3)}] No assumptions on the smoothness of the stochastic realizations $f_{\xi}(x)$ are used.
        \item [{\color{red}(4)}] The rate is derived under assumption that $r = \Omega((1+\omega)\sqrt{\nicefrac{L}{\mu}}\log(\nicefrac{1}{\varepsilon}))$.
      }
    \end{tablenotes}
    \end{threeparttable}
\end{table*}

\subsection{Related Work}
\noindent\textbf{Non-Convex Optimization.} Since finding a global minimum of a non-convex function is, in general, an NP-hard problem \cite{murty1987some}, many researchers in non-convex optimization focus on relaxed goals such as finding an $\varepsilon$-stationary point. The theory of stochastic first-order methods for finding $\varepsilon$-stationary points is well-developed: it contains lower bounds for expectation minimization without smoothness of stochastic realizations \cite{arjevani2019lower} and for finite-sum/expectation minimization \cite{fang2018near,li2020page} as well as optimal methods matching the lower bounds (see \cite{danilova2020recent,li2020page} for the overview). Recently, distributed variants of such methods were proposed \cite{sun2020improving,sharma2019parallel,khanduri2020distributed}.

\noindent\textbf{Compressed Communications.} Works on  distributed methods supporting communication compression can be roughly split into two large groups: the first group focuses on methods using {\em unbiased} compression operators (which refer to as quantizations in this paper), such as RandK, and the second one studies methods using {\em biased} compressors such as TopK. One can find a detailed summary of the most popular compression operators in \citep{UP2020, beznosikov2020biased}.

\noindent\textbf{Unbiased Compression.} In this line of work, the first convergence result in the non-convex case was obtained by \citet{alistarh2017qsgd} for  \algname{QSGD}, under assumptions that the local loss functions are the same for all workers, and the stochastic gradient has uniformly bounded second moment. After that, \citet{mishchenko2019distributed} proposed \algname{DIANA} (and its momentum version)  and proved its convergence rate for non-convex problems without any assumption on the boundedness of the second moment of the stochastic gradient, but under the assumption that the dissimilarity between local loss functions is bounded. This restriction was later eliminated by \citet{horvath2019stochastic} for the variance reduced version of \algname{DIANA} called  \algname{VR-DIANA}, and the analysis was extended to a large class of unbiased compressors. Finally, the results for  \algname{QSGD} and \algname{DIANA} were recently generalized and tightened by \citet{li2020unified} in a unifying framework that included many other methods as well.

\noindent\textbf{Biased Compression.} Biased compression operators are less ``optimization-friendly'' than unbiased ones. Indeed, one can construct a simple convex quadratic problem for which distributed \algname{SGD} with Top1 compression diverges exponentially fast  \cite{beznosikov2020biased}. However, this issue can be resolved using {\em error compensation} \cite{seide20141}. The first analysis of error-compensated \algname{SGD} (\algname{EC-SGD}) for non-convex problems was obtained by \citet{karimireddy2019error} for homogeneous problems under the assumption that the second moment of the stochastic gradient is uniformly  bounded. The last assumption was recently removed from the analysis of \algname{EC-SGD} by \citet{stich2020error, beznosikov2020biased}, while the first results without the homogeneity assumption were obtained by \citet{KoloskovaLSJ19decentralized} for \algname{Choco-SGD}, but still under the assumption that the second moment of the stochastic gradient is uniformly  bounded. This issue was resolved by \citet{beznosikov2020biased}. In general, the current understanding of optimization methods with biased compressors is far from complete: even in the strongly convex case, the first linearly converging \cite{gorbunov2020linearly} and accelerated \cite{qian2020error} error-compensated stochastic methods were proposed just recently.

\noindent\textbf{Other Approaches.} Besides communication compression, there are also different techniques aiming to reduce the overall communication cost of distributed methods. The most popular ones are based on decentralized communications and multiple local steps between communication rounds, where the second technique is very popular in federated learning \cite{konecny2016federated,kairouz2019advances}. One can find the state-of-the-art distributed optimization methods using these techniques and their combinations in \cite{lian2017can,karimireddy2020scaffold,li2019communication,koloskova2020unified}. Moreover, there exist results based on the combinations of communication compression with either decentralized communication, e.g., \algname{Choco-SGD} \cite{KoloskovaLSJ19decentralized}, or local updates, e.g., \algname{Qsparse-Local-SGD} \cite{basu2019qsparse}, \algname{FedCOMGATE} \cite{haddadpour2020federated}, \algname{FedSTEPH} \cite{das2020improved}, where in \cite{basu2019qsparse} the convergence rates were derived under an assumption that the stochastic gradient has uniformly bounded second moment and the results for \algname{Choco-SGD}, \algname{FedCOMGATE}, \algname{FedSTEPH} were described either earlier in the text, or in Table~\ref{tab:comparison}.

%Besides communication compression, there are also different techniques aiming to reduce the overall communication cost of distributed methods. We briefly review some of them in the Appendix.

\subsection{Preliminaries}

We will rely on two key assumptions thrughout the text.
\begin{assumption}[Uniform lower bound]\label{as:lower_bound}
	There exists  $f_*\in \R$ such that $f(x) \ge f_*$ for all $x\in\R^d$.
\end{assumption}
\begin{assumption}[$L$-smoothness]\label{as:L_smoothness}
	We assume that $f_i$ is $L_i$-smooth for all $i\in [n] = \{1,2,\dots,n\}$ meaning that the following inequality holds $\forall x,y\in \R^d$, $\forall i\in [n]$:
	\begin{equation}
		\left\|\nabla f_{i}(x) - \nabla f_{i}(y)\right\| \le L_i\|x-y\|.\label{eq:L_smoothness}
	\end{equation}
\end{assumption}
This assumption implies that $f$ is $L_f$-smooth with $L_f^2 \le L^2 = \frac{1}{n}\sum_{i=1}^nL_i^2$.

Finally, we describe a large class of unbiased compression operators satisfying a certain variance bound, which we will refer to, in this paper, by the name {\em quantization}.
\begin{definition}[Quantization]\label{def:quantization}
	We say that a stochastic mapping $\cQ:\R^d \to \R^d$ is a quantization operator/quantization if there exists  $\omega > 0$ such that for any $x\in\R^d$ , we have
	\begin{equation}
		\EE\left[\cQ(x)\right] = x,\quad \EE\left[\|\cQ(x) - x\|^2\right] \le \omega\|x\|^2. \label{eq:quantization_def}
	\end{equation}
	For the given quantization operator $\cQ(x)$, we define the the expected density as $\zeta_{\cQ} = \sup_{x\in\R^d}\EE\left[\left\|\cQ(x)\right\|_0\right],$ 
%	\begin{equation}
%		\zeta_{\cQ} = \sup\limits_{x\in\R^d}\EE\left[\left\|\cQ(x)\right\|_0\mid x\right],\label{eq:expected_density}
%	\end{equation}
	where $\|y\|_0$ is the number of non-zero components of $y\in\R^d$.
\end{definition}
Notice that the expected density is well-defined for any quantization operator since $\left\|\cQ(x)\right\|_0 \le d$.

\section{{\sf MARINA}}\label{sec:marina}
In this section, we describe the main algorithm of this work: \algname{MARINA}  (see Algorithm~\ref{alg:marina}). At each iteration of \algname{MARINA}, each  worker $i$ either sends to the server  the dense vector $\nabla f_i(x^{k+1})$ with probability $p$, or it sends the quantized gradient difference $\cQ\left(\nabla f_{i}(x^{k+1}) - \nabla f_{i}(x^k))\right)$ with probability $1-p$. In the first situation, the server just averages the vectors received from workers and gets $g^{k+1} = \nabla f(x^{k+1})$, whereas in the second case, the server averages the quantized differences from all workers and then adds the result to $g^k$ to get $g^{k+1}.$ Moreover, if $\cQ$ is identity quantization, i.e., $\cQ(x) = x$, then \algname{MARINA} reduces to Gradient Descent (\algname{GD}).

\begin{algorithm}[h]
   \caption{\algname{MARINA}}\label{alg:marina}
\begin{algorithmic}[1]
   \STATE {\bfseries Input:} starting point $x^0$, stepsize $\gamma$, probability $p\in(0,1]$, number of iterations $K$
   \STATE Initialize $g^0 = \nabla f(x^0)$
   \FOR{$k=0,1,\ldots,K-1$}
   \STATE Sample $c_k \sim \text{Be}(p)$
   \STATE Broadcast $g^k$ to all workers
   \FOR{$i = 1,\ldots,n$ in parallel} 
   \STATE $x^{k+1} = x^k - \gamma g^k$
   \STATE Set $g_i^{k+1} = \nabla f_i(x^{k+1})$ if $c_k = 1$, and $g_i^{k+1} = g^k + \cQ\left(\nabla f_{i}(x^{k+1}) - \nabla f_{i}(x^k))\right)$ otherwise
   \ENDFOR
   \STATE $g^{k+1} = \frac{1}{n}\sum_{i=1}^ng_i^{k+1}$
   \ENDFOR
   \STATE {\bfseries Return:} $\hat x^K$ chosen uniformly at random from $\{x^k\}_{k=0}^{K-1}$
\end{algorithmic}
\end{algorithm}

However, for non-trivial quantizations, we have $\EE[g^{k+1}\mid x^{k+1}] \neq \nabla f(x^{k+1})$ unlike all other distributed methods using exclusively unbiased compressors we know of. That is, $g^{k+1}$ is a \textit{biased} stochastic estimator of $\nabla f(x^{k+1})$. However, \algname{MARINA} is an example of a rare phenomenon in stochastic optimization  when the {\em bias of the stochastic gradient helps to achieve better complexity.}

\subsection{Convergence Results for Generally Non-Convex Problems}
We start with the following result.
\begin{theorem}\label{thm:main_result_non_cvx}
	Let Assumptions~\ref{as:lower_bound}~and~\ref{as:L_smoothness} be satisfied. Then, after
	\begin{equation}
	\squeeze
		K = \cO\left(\frac{\Delta_0 L}{\varepsilon^2}\left(1 + \sqrt{\frac{(1-p)\omega}{pn}}\right)\right) \notag
	\end{equation}
	iterations with $\Delta_0 = f(x^0)-f_*$, $L^2 = \frac{1}{n}\sum_{i=1}^nL_i^2$ and the stepsize $\gamma \le L^{-1}\left(1 + \sqrt{\nicefrac{(1-p)\omega}{(pn)}}\right)^{-1}$,
%	\begin{equation}
%		\gamma \le \frac{1}{L\left(1 + \sqrt{\frac{(1-p)\omega}{pn}}\right)}\notag
%	\end{equation}
	\algname{MARINA} produces  point $\hat x^K$ for which $\EE[\|\nabla f(\hat x^K)\|^2] \le \varepsilon^2$.
\end{theorem}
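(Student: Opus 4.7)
My plan is to use a Lyapunov function of the form
$\Phi^k = \EE[f(x^k) - f_*] + \lambda \sigma_k^2$, where $\sigma_k^2 := \EE\|g^k - \nabla f(x^k)\|^2$, and show that under the stated stepsize rule it decreases by at least $\tfrac{\gamma}{2}\EE\|\nabla f(x^k)\|^2$ per iteration. Since $g^k$ is a \emph{biased} estimator of $\nabla f(x^k)$, the standard \algname{SGD} telescoping does not apply; the role of the $\lambda\sigma_k^2$ term is precisely to absorb this bias. Telescoping then bounds $\tfrac{1}{K}\sum_{k=0}^{K-1}\EE\|\nabla f(x^k)\|^2$, which equals $\EE\|\nabla f(\hat x^K)\|^2$ by the definition of $\hat x^K$.

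The first ingredient is the descent lemma for $f$ (which is $L_f$-smooth with $L_f \le L$): combining $f(x^{k+1}) \le f(x^k) - \gamma\langle\nabla f(x^k),g^k\rangle + \tfrac{L\gamma^2}{2}\|g^k\|^2$ with the polarization identity $-\langle\nabla f(x^k),g^k\rangle = \tfrac{1}{2}\|g^k-\nabla f(x^k)\|^2 - \tfrac{1}{2}\|g^k\|^2 - \tfrac{1}{2}\|\nabla f(x^k)\|^2$ and taking expectations yields
$\EE[f(x^{k+1})] \le \EE[f(x^k)] + \tfrac{\gamma}{2}\sigma_k^2 - \tfrac{\gamma(1-L\gamma)}{2}\EE\|g^k\|^2 - \tfrac{\gamma}{2}\EE\|\nabla f(x^k)\|^2.$

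The second ingredient is a one-step recursion on $\sigma_k^2$. Conditioning on $c_k$: when $c_k=1$, the error $g^{k+1}-\nabla f(x^{k+1})$ is zero; when $c_k=0$, one checks directly from the update that
$g^{k+1}-\nabla f(x^{k+1}) = (g^k-\nabla f(x^k)) + \tfrac{1}{n}\sum_{i=1}^n\bigl[\cQ(\Delta_i^k) - \Delta_i^k\bigr]$, where $\Delta_i^k := \nabla f_i(x^{k+1}) - \nabla f_i(x^k)$. Unbiasedness of $\cQ$ kills the cross term in expectation, independence across workers and the variance bound in Def.~\ref{def:quantization} contribute $\tfrac{\omega}{n^2}\sum_i \|\Delta_i^k\|^2$, and $L_i$-smoothness of $f_i$ together with $L^2 = \tfrac{1}{n}\sum_i L_i^2$ bounds this by $\tfrac{\omega L^2\gamma^2}{n}\EE\|g^k\|^2$. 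Averaging over $c_k$ gives
$\sigma_{k+1}^2 \le (1-p)\sigma_k^2 + (1-p)\tfrac{\omega L^2\gamma^2}{n}\EE\|g^k\|^2.$

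Putting these together with $\lambda = \tfrac{\gamma}{2p}$, the coefficient of $\sigma_k^2$ in $\Phi^{k+1} - \Phi^k$ cancels, and the coefficient of $\EE\|g^k\|^2$ is non-positive iff $L\gamma + \tfrac{(1-p)\omega}{pn}L^2\gamma^2 \le 1$; this follows from $\gamma \le L^{-1}(1 + \sqrt{(1-p)\omega/(pn)})^{-1}$ by the elementary observation that if $a+b\le 1$ then $a+b^2\le 1$, applied with $a = L\gamma$ and $b = L\gamma\sqrt{(1-p)\omega/(pn)}$. Hence $\Phi^{k+1} \le \Phi^k - \tfrac{\gamma}{2}\EE\|\nabla f(x^k)\|^2$. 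Since $g^0 = \nabla f(x^0)$ we have $\sigma_0^2 = 0$ and $\Phi^0 = \Delta_0$, while $\Phi^K\ge 0$, so telescoping yields $\tfrac{1}{K}\sum_{k=0}^{K-1}\EE\|\nabla f(x^k)\|^2 \le \tfrac{2\Delta_0}{\gamma K}$; setting the right-hand side below $\varepsilon^2$ and inserting the largest admissible $\gamma$ produces exactly the stated $K$.

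The main obstacle is the bias of $g^k$: one has to (i) discover the right Lyapunov potential balancing $f(x^k)-f_*$ against $\sigma_k^2$, and (ii) prove that the one-step growth of $\sigma_k^2$ is controlled by the same quantity $\gamma^2\EE\|g^k\|^2$ that the descent lemma subtracts, so that the contribution can actually be absorbed. This tight cancellation is what drives both the choice $\lambda = \gamma/(2p)$ and the precise shape of the stepsize bound, and ultimately it is what yields the $\sqrt{\omega/n}$ dependence rather than the $\omega$-dependence one would get from a naive unbiased-estimator analysis.
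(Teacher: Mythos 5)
Your proof is correct and takes essentially the same route as the paper: the identical Lyapunov function $\Phi_k = f(x^k)-f_* + \tfrac{\gamma}{2p}\EE\left[\|g^k-\nabla f(x^k)\|^2\right]$, the same descent inequality (your polarization step is just Lemma~\ref{lem:lemma_2_page} re-derived), the same one-step variance recursion obtained via unbiasedness, worker independence, and $L_i$-smoothness, and the same verification that the stepsize bound makes the $\EE\|g^k\|^2$ coefficient non-positive. No gaps.
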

One can find the full statement of the theorem together with its proof in Section~\ref{sec:proof_of_thm_non_cvx} of the Appendix.

The following corollary provides the bounds on the number of iterations/communication rounds and estimates the total communication cost needed to achieve an $\varepsilon$-stationary point in expectation. Moreover, for simplicity, throughout the paper we assume that the communication cost is proportional to the number of non-zero components of transmitted vectors from workers to the server.
\begin{corollary}\label{cor:main_result_non_cvx}
	Let the assumptions of Theorem~\ref{thm:main_result_non_cvx} hold and $p = \nicefrac{\zeta_{\cQ}}{d}$. If $\gamma \le L^{-1}\left(1 + \sqrt{\nicefrac{\omega(d-\zeta_{\cQ})}{(n\zeta_{\cQ})}}\right)^{-1}$,
%	\begin{equation*}
%		\gamma \le \frac{1}{L\left(1 + \sqrt{\frac{\omega}{n}\left(\frac{d}{\zeta_{\cQ}}-1\right)}\right)},
%	\end{equation*}
	then \algname{MARINA} requires 
	\begin{equation*}
	\squeeze
		\cO\left(\frac{\Delta_0 L}{\varepsilon^2}\left(1 + \sqrt{\frac{\omega}{n}\left(\frac{d}{\zeta_{\cQ}}-1\right)}\right)\right)
	\end{equation*}
	iterations/communication rounds in order to achieve $\EE[\|\nabla f(\hat x^K)\|^2] \le \varepsilon^2$, and the expected total communication cost per worker is $\cO(d + \zeta_{\cQ}K)$.
%	\begin{equation*}
%		\cO\left(d+\frac{\Delta_0 L}{\varepsilon^2}\left(\zeta_{\cQ} + \sqrt{\frac{\omega\zeta_{\cQ}}{n}\left(d-\zeta_{\cQ}\right)}\right)\right)
%	\end{equation*}
%	under assumption that the communication cost is proportional to the number of non-zero components of transmitted vectors from workers to the server.
\end{corollary}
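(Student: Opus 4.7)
The plan is to derive this corollary as a direct specialization of Theorem~\ref{thm:main_result_non_cvx} together with a short expected-cost calculation, so no new convergence argument is required — the entire task reduces to bookkeeping under the choice $p = \zeta_{\cQ}/d$.

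First, I would substitute $p = \zeta_{\cQ}/d$ into the quantity $(1-p)\omega/(pn)$ appearing in both the stepsize bound and the iteration count of Theorem~\ref{thm:main_result_non_cvx}. A direct computation gives
\begin{equation*}
\frac{(1-p)\omega}{pn} \;=\; \frac{(1 - \zeta_{\cQ}/d)\,\omega}{n \zeta_{\cQ}/d} \;=\; \frac{\omega(d-\zeta_{\cQ})}{n\zeta_{\cQ}} \;=\; \frac{\omega}{n}\!\left(\frac{d}{\zeta_{\cQ}}-1\right).
\end{equation*}
Plugging this into the stepsize condition $\gamma \le L^{-1}(1+\sqrt{(1-p)\omega/(pn)})^{-1}$ recovers exactly the stepsize stated in the corollary, and plugging it into the iteration bound $K = \mathcal{O}((\Delta_0 L/\varepsilon^2)(1+\sqrt{(1-p)\omega/(pn)}))$ yields the claimed iteration/communication-round complexity. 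Since each iteration of \algname{MARINA} entails exactly one round of workers-to-server communication, iteration and communication-round counts coincide.

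Next, I would bound the expected per-worker communication cost. The initialization step requires each worker to transmit $\nabla f_i(x^0)$, a dense vector of $d$ coordinates, contributing a one-time cost of $d$. At each subsequent iteration $k$, worker $i$ transmits either the dense gradient $\nabla f_i(x^{k+1})$ (when $c_k=1$, i.e., with probability $p$) or the quantized difference $\cQ(\nabla f_i(x^{k+1})-\nabla f_i(x^k))$ (when $c_k=0$, with probability $1-p$). By Definition~\ref{def:quantization}, the expected number of nonzero components of $\cQ(\cdot)$ is at most $\zeta_{\cQ}$, so the expected cost of one round is
\begin{equation*}
pd + (1-p)\zeta_{\cQ} \;\le\; \frac{\zeta_{\cQ}}{d}\cdot d + \zeta_{\cQ} \;=\; 2\zeta_{\cQ}.
\end{equation*}
Summing over $K$ iterations and adding the initialization cost gives an expected total per-worker cost of $d + 2\zeta_{\cQ}K = \mathcal{O}(d + \zeta_{\cQ}K)$, which matches the statement.

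There is essentially no obstacle here — everything is direct algebraic manipulation plus an expectation over the Bernoulli variable $c_k$ and over the randomness of $\cQ$. The only subtlety worth double-checking is that the choice $p = \zeta_{\cQ}/d \in (0,1]$ is admissible (it is, since $0 < \zeta_{\cQ} \le d$), and that the initialization cost $d$ is correctly accounted for separately from the per-iteration cost so that even when $\zeta_{\cQ}K \ll d$ the stated $\mathcal{O}(d+\zeta_{\cQ}K)$ bound remains valid.
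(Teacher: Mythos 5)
Your proposal is correct and follows essentially the same route as the paper's own proof: substitute $p=\nicefrac{\zeta_{\cQ}}{d}$ into the stepsize and iteration bounds of Theorem~\ref{thm:main_result_non_cvx} (using $\nicefrac{(1-p)}{p}=\nicefrac{d}{\zeta_{\cQ}}-1$), and bound the expected per-round cost by $pd+(1-p)\zeta_{\cQ}\le 2\zeta_{\cQ}$, adding the initial dense transmission of cost $d$. No gaps; the remarks on admissibility of $p$ and on counting the initialization separately are consistent with the paper's argument.
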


Let us clarify the obtained result. First of all, if $\omega = 0$ (no quantization), then $\zeta_{\cQ} = 0$ and the rate coincides with the rate of Gradient Descent (\algname{GD}). Since \algname{GD} is optimal among first-order methods in terms of reducing the norm of the gradient \cite{carmon2019lower}, the dependence on $\varepsilon$ in our bound cannot be improved in general. Next, if $n$ is large enough, i.e., $n \ge \omega(\nicefrac{d}{\zeta_{\cQ}}-1)$, then\footnote{For $\ell_2$-quantization this requirement is satisfied when $n \ge d$.}  the iteration complexity of \algname{MARINA} (method with compressed communications) and \algname{GD} (method with dense communications) coincide. This means that in this regime,  \algname{MARINA} is able to reach a provably better communication complexity than \algname{GD}!

\subsection{Convergence Results Under Polyak-{\L}ojasiewicz condition}
In this section, we provide a complexity bounds for \algname{MARINA} under the Polyak-{\L}ojasiewicz (P{\L}) condition.

\begin{assumption}[P{\L} condition]\label{as:pl_condition}
	Function $f$ satisfies Polyak-{\L}ojasiewicz (P{\L}) condition with parameter $\mu$, i.e., 	\begin{equation}
		\|\nabla f(x)\|^2 \ge 2\mu\left(f(x) - f(x^*)\right). \label{eq:pl_condition}
	\end{equation}
	holds for $x^*= \arg \min_{x\in\R^d} f(x)$ and for all $x\in\R^d$.

\end{assumption}

Under this and previously introduced assumptions, we derive the following result.
\begin{theorem}\label{thm:main_result_pl}
	Let Assumptions~\ref{as:lower_bound},~\ref{as:L_smoothness}~and~\ref{as:pl_condition} be satisfied. Then, after
	\begin{equation}
	\squeeze
		K = \cO\left(\max\left\{\frac{1}{p},\frac{L}{\mu}\left(1 + \sqrt{\frac{(1-p)\omega}{pn}}\right)\right\}\log\frac{\Delta_0}{\varepsilon}\right) \notag
	\end{equation}
	iterations with $\Delta_0 = f(x^0)-f(x^*)$, $L^2 = \frac{1}{n}\sum_{i=1}^nL_i^2$ and the stepsize $\gamma \le \min\left\{L^{-1}\left(1 + \sqrt{\nicefrac{2(1-p)\omega}{(pn)}}\right)^{-1}, p(2\mu)^{-1}\right\}$, 
%	\begin{equation}
%		\gamma \le \min\left\{\frac{1}{L\left(1 + \sqrt{\frac{2(1-p)\omega}{pn}}\right)}, \frac{p}{2\mu}\right\}\notag
%	\end{equation}
	\algname{MARINA} produces a point $x^K$ for which $\EE[f(x^K) - f(x^*)] \le \varepsilon$.
\end{theorem}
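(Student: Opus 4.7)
The plan is to build a Lyapunov function that couples the function suboptimality with a variance proxy for the biased estimator $g^k$, and to show geometric contraction at rate $(1-\gamma\mu)$. Define $G^k \eqdef \|g^k - \nabla f(x^k)\|^2$ and consider $\Psi^k \eqdef f(x^k) - f(x^*) + \alpha\,G^k$ for a weight $\alpha>0$ to be chosen. The whole argument proceeds in three blocks: (i) a one-step recursion for $\EE[G^{k+1}]$; (ii) a descent lemma for $f(x^{k+1})$ that pays for $G^k$; (iii) balancing of the two via $\alpha$, followed by invoking Assumption~\ref{as:pl_condition}.

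First I would derive the variance recursion. On the event $\{c_k=1\}$ we have $g^{k+1}=\nabla f(x^{k+1})$, so $G^{k+1}=0$. On the event $\{c_k=0\}$, writing $\Delta_i \eqdef \nabla f_i(x^{k+1})-\nabla f_i(x^k)$ and using that $\cQ$ is unbiased and independent across workers together with $\EE[\|\cQ(v)-v\|^2]\le\omega\|v\|^2$ and $L_i$-smoothness, a direct expansion gives
\[
\EE\!\left[G^{k+1}\mid \cF^k, c_k=0\right]\le G^k+\tfrac{\omega L^2}{n}\|x^{k+1}-x^k\|^2.
\]
Averaging over $c_k\sim\mathrm{Be}(p)$ and using $x^{k+1}-x^k=-\gamma g^k$,
\[
\EE\!\left[G^{k+1}\mid\cF^k\right]\le (1-p)\,G^k+(1-p)\tfrac{\omega L^2\gamma^2}{n}\|g^k\|^2.
\]
Next I would apply the standard $L_f$-smoothness descent lemma and the identity $2\<a,b>=\|a\|^2+\|b\|^2-\|a-b\|^2$ to obtain
\[
f(x^{k+1})\le f(x^k)-\tfrac{\gamma}{2}\|\nabla f(x^k)\|^2+\tfrac{\gamma}{2}G^k-\tfrac{\gamma(1-L_f\gamma)}{2}\|g^k\|^2,
\]
using $L_f\le L$ from Assumption~\ref{as:L_smoothness}.

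Combining these into $\Psi^k$ and taking conditional expectation gives
\begin{align*}
\EE[\Psi^{k+1}\mid\cF^k]
&\le f(x^k)-f(x^*)-\tfrac{\gamma}{2}\|\nabla f(x^k)\|^2\\
&\quad+\bigl(\tfrac{\gamma}{2}+\alpha(1-p)\bigr) G^k\\
&\quad-\Bigl(\tfrac{\gamma(1-L_f\gamma)}{2}-\alpha(1-p)\tfrac{\omega L^2\gamma^2}{n}\Bigr)\|g^k\|^2.
\end{align*}
Choose $\alpha=\gamma/p$. Then the $G^k$ coefficient collapses to $\alpha(1-p/2)$, and the $\|g^k\|^2$ coefficient is nonpositive exactly when $\gamma^2 L^2\cdot 2(1-p)\omega/(pn)\le 1-L_f\gamma$, which is implied by the assumed $\gamma\le L^{-1}\bigl(1+\sqrt{2(1-p)\omega/(pn)}\bigr)^{-1}$. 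Dropping the negative $\|g^k\|^2$ term and applying the P{\L} inequality $\|\nabla f(x^k)\|^2\ge 2\mu(f(x^k)-f(x^*))$ yields
\[
\EE[\Psi^{k+1}\mid\cF^k]\le \max\!\left\{1-\gamma\mu,\;1-\tfrac{p}{2}\right\}\Psi^k.
\]
The second stepsize condition $\gamma\le p/(2\mu)$ forces the max to equal $1-\gamma\mu$, so $\EE[\Psi^K]\le (1-\gamma\mu)^K\Psi^0$; since $f(x^K)-f(x^*)\le\Psi^K$ and $\Psi^0=\Delta_0$ (because $g^0=\nabla f(x^0)$, so $G^0=0$), solving $(1-\gamma\mu)^K\Delta_0\le\varepsilon$ and plugging in the minimal allowed stepsize reproduces the announced $K$ up to constants.

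The main obstacle is picking $\alpha$ correctly: one has to balance (a) how fast the variance proxy $G^k$ relaxes per iteration (controlled by the refresh probability $p$) against (b) the price $\tfrac{\gamma}{2}G^k$ paid in the descent lemma, while simultaneously keeping the $\|g^k\|^2$ coefficient non-positive — the latter is what ties $\gamma$ to the quantization variance $\omega$ and the number of workers $n$. The choice $\alpha=\gamma/p$ is the natural one that makes the contraction on the $G^k$ component exactly $1-p/2$; everything else is bookkeeping. The extra condition $\gamma\le p/(2\mu)$ is the small but essential twist that turns the $\max$ of two linear rates into a clean $(1-\gamma\mu)$ contraction and yields the $\max\{1/p,\,L(1+\sqrt{(1-p)\omega/(pn)})/\mu\}$ form of the iteration complexity.
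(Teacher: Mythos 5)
Your proposal is correct and follows essentially the same route as the paper: your Lyapunov function with $\alpha=\nicefrac{\gamma}{p}$ is exactly the paper's $\Phi_k = f(x^k)-f(x^*)+\tfrac{\gamma}{p}\|g^k-\nabla f(x^k)\|^2$, your descent inequality is Lemma~\ref{lem:lemma_2_page} written with $\|x^{k+1}-x^k\|=\gamma\|g^k\|$, and your two stepsize conditions (nonpositivity of the $\|g^k\|^2$ coefficient and $\gamma\mu\le\nicefrac{p}{2}$) coincide with the conditions the paper derives from \eqref{eq:gamma_bound_pl_appendix}. The only cosmetic difference is that you phrase the contraction as $\max\{1-\gamma\mu,\,1-\nicefrac{p}{2}\}$ before invoking $\gamma\le\nicefrac{p}{(2\mu)}$, whereas the paper folds this directly into the inequality $\tfrac{\gamma}{2}+\tfrac{\gamma}{p}(1-p)\le(1-\gamma\mu)\tfrac{\gamma}{p}$; the content is identical.
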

One can find the full statement of the theorem together with its proof in Section~\ref{sec:proof_of_thm_pl} of the Appendix.

%\begin{corollary}\label{cor:main_result_pl}
%	Let the assumptions of Theorem~\ref{thm:main_result_pl} hold and $p = \nicefrac{\zeta_{\cQ}}{d}$. If $\gamma \le \min\left\{L^{-1}\left(1 + \sqrt{\nicefrac{2\omega(d-\zeta_{\cQ})}{(n\zeta_{\cQ})}}\right)^{-1}, p(2\mu)^{-1}\right\}$
%%	\begin{equation}
%%		\gamma \le \min\left\{\frac{1}{L\left(1 + \sqrt{\frac{2\omega}{n}\left(\frac{d}{\zeta_{\cQ}}-1\right)}\right)}, \frac{p}{2\mu}\right\},\notag
%%	\end{equation}
%	then \algname{MARINA} requires 
%	\begin{equation*}
%		\cO\left(\frac{L}{\mu}\left(1 + \sqrt{\frac{\omega}{n}\left(\frac{d}{\zeta_{\cQ}}-1\right)}\right)\log\frac{\Delta_0}{\varepsilon}\right)
%	\end{equation*}
%	iterations/communication rounds in order to achieve $\EE[f(x^K) - f(x^*)] \le \varepsilon$, and the expected total communication cost per worker is $\cO(d + \zeta_{\cQ}K)$.
%%	\begin{equation*}
%%		\cO\left(d+\frac{L}{\mu}\left(\zeta_{\cQ} + \sqrt{\frac{\omega\zeta_{\cQ}}{n}\left(d-\zeta_{\cQ}\right)}\right)\log\frac{\Delta_0}{\varepsilon}\right)
%%	\end{equation*}
%%	under assumption that the communication cost is proportional to the number of non-zero components of transmitted vectors from workers to the server.
%\end{corollary}

\section{Variance Reduction}\label{sec:vr}
Throughout this section, we assume that the local loss on each node has either a finite-sum form (finite sum case), 
\begin{equation}
\squeeze	f_i(x) = \frac{1}{m}\sum\limits_{j=1}^mf_{ij}(x), \label{eq:f_i_finite_sum}
\end{equation}
or an expectation form (online case),
\begin{equation}
\squeeze	f_i(x) = \EE_{\xi_i\sim\cD_i}[f_{\xi_i}(x)]. \label{eq:f_i_expectation}
\end{equation}

\subsection{Finite Sum Case}
In this section, we generalize \algname{MARINA} to problems of the form \eqref{eq:main_problem}+\eqref{eq:f_i_finite_sum}, obtaining  \algname{VR-MARINA} (see Algorithm~\ref{alg:vr_marina}).
\begin{algorithm}[h]
   \caption{\algname{VR-MARINA}: finite sum case}\label{alg:vr_marina}
\begin{algorithmic}[1]
   \STATE {\bfseries Input:} starting point $x^0$, stepsize $\gamma$, minibatch size $b'$, probability $p\in(0,1]$, number of iterations $K$
   \STATE Initialize $g^0 = \nabla f(x^0)$ %, where $I_0$ is the set of the indices in the minibatch, $|I_0| = b$.
   \FOR{$k=0,1,\ldots,K-1$}
   \STATE Sample $c_k \sim \text{Be}(p)$
   \STATE Broadcast $g^k$ to all workers
   \FOR{$i = 1,\ldots,n$ in parallel} 
   \STATE $x^{k+1} = x^k - \gamma g^k$
   \STATE Set $g_i^{k+1} = \nabla f_i(x^{k+1})$ if $c_k = 1$, and $g_i^{k+1} = g^k + \cQ\left(\frac{1}{b'}\sum_{j\in I_{i,k}'}(\nabla f_{ij}(x^{k+1}) - \nabla f_{ij}(x^k))\right)$ otherwise, where $I_{i,k}'$ is the set of the indices in the minibatch, $|I_{i,k}'| = b'$
%   \STATE $g_i^{k+1} = \begin{cases}\nabla f_i(x^{k+1})& \text{with probability } p,\\ g^k + \cQ\left(\frac{1}{b'}\sum\limits_{j\in I'}(\nabla f_{ij}(x^{k+1}) - \nabla f_{ij}(x^k))\right)& \text{with probability } 1-p \end{cases}$
   \ENDFOR
   \STATE $g^{k+1} = \frac{1}{n}\sum_{i=1}^ng_i^{k+1}$
   \ENDFOR
   \STATE {\bfseries Return:} $\hat x^K$ chosen uniformly at random from $\{x^k\}_{k=0}^{K-1}$
\end{algorithmic}
\end{algorithm}
At each iteration of \algname{VR-MARINA}, devices are to compute the full gradients $\nabla f_i(x^{k+1})$ and send them to the server with probability $p$. Typically, $p \le \nicefrac{1}{m}$ and $m$ is large, meaning that workers compute full gradients rarely (once per $\ge m$ iterations in expectation). At other iterations, workers compute minibatch stochastic gradients evaluated at the current and previous points, compress them using an unbiased compression operator, i.e., quantization/quantization operator, and send the resulting vectors $g_i^{k+1} - g^{k}$ to the server. Moreover, if $\cQ$ is the identity quantization, i.e., $\cQ(x) = x$, and $n=1$, then \algname{MARINA} reduces to the optimal method \algname{PAGE} \cite{li2020page}.
%\subsubsection{Convergence Results for Generally Non-Convex Problems}

In this part, we will rely on the following average smoothness assumption.
\begin{assumption}[Average $\cL$-smoothness]\label{as:avg_smoothness}
	For all $k\ge 0$ and $i\in[n]$ the minibatch stochastic gradients difference $\widetilde{\Delta}_i^k = \frac{1}{b'}\sum_{j\in I_{i,k}'}(\nabla f_{ij}(x^{k+1}) - \nabla f_{ij}(x^k))$ computed on the $i$-th worker satisfies $\EE\left[\widetilde{\Delta}_i^k\mid x^k,x^{k+1}\right] = \Delta_i^k$ and
	\begin{equation}
	\squeeze	\EE\left[\left\|\widetilde{\Delta}_i^k - \Delta_i^k\right\|^2\mid x^k,x^{k+1}\right] \le \frac{\cL_i^2}{b'}\|x^{k+1}-x^k\|^2\label{eq:avg_L_smoothness}
	\end{equation}
	with some $\cL_i \ge 0$, where $\Delta_i^k = \nabla f_i(x^{k+1}) - \nabla f_i(x^k)$.
\end{assumption}
This assumption is satisfied in many standard minibatch regimes. In particular, if $I_{i,k}' = \{1,\ldots,m\}$, then $\cL_i = 0$, and if $I_{i,k}'$ consists of $b'$ i.i.d.\ samples from the uniform distributions on $\{1,\ldots,m\}$ and $f_{ij}$ are $L_{ij}$-smooth, then $\cL_i \le \max_{j\in [m]}L_{ij}$.

Under this and the previously introduced assumptions, we derive the following result.
\begin{theorem}\label{thm:main_result_non_cvx_finite_sums}
	Consider the finite sum case \eqref{eq:main_problem}+\eqref{eq:f_i_finite_sum}. Let Assumptions~\ref{as:lower_bound},~\ref{as:L_smoothness}~and~\ref{as:avg_smoothness} be satisfied. Then, after
	\begin{equation}
\squeeze		K = \cO\left(\frac{\Delta_0}{\varepsilon^2}\left(L + \sqrt{\frac{1-p}{pn}\left(\omega L^2 + \frac{(1+\omega)\cL^2}{b'}\right)}\right)\right) \notag
	\end{equation}
	iterations with $\Delta_0 = f(x^0)-f_*$, $L^2 = \frac{1}{n}\sum_{i=1}^nL_i^2$, $\cL^2 = \frac{1}{n}\sum_{i=1}^n\cL_i^2$ and the stepsize $\gamma \le \left(L + \sqrt{\left(\omega L^2 + \nicefrac{(1+\omega)\cL^2}{b'}\right)\nicefrac{(1-p)}{(pn)}}\right)^{-1}$, 
%	\begin{equation}
%		\gamma \le \frac{1}{L + \sqrt{\frac{1-p}{pn}\left(\omega L^2 + \frac{(1+\omega)\cL^2}{b'}\right)}}\notag
%	\end{equation}
	\algname{VR-MARINA} produces such a point $\hat x^K$ that $\EE[\|\nabla f(\hat x^K)\|^2] \le \varepsilon^2$.
\end{theorem}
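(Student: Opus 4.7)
The plan is to follow the standard MARINA-style Lyapunov analysis, with the extra subtlety that the local updates now involve an additional layer of minibatch randomness. I would introduce the variance proxy $V^k \eqdef \|g^k - \nabla f(x^k)\|^2$ and set up a Lyapunov function $\Phi^k \eqdef f(x^k) - f_* + c V^k$ for a carefully chosen constant $c$. Note that $V^0 = 0$ since $g^0 = \nabla f(x^0)$, which is why the starting Lyapunov value reduces to $\Delta_0$.

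First, I would apply the $L_f$-smoothness descent inequality to $x^{k+1} = x^k - \gamma g^k$ and use the identity $-\ve{a}{b} = \tfrac{1}{2}(\|a-b\|^2 - \|a\|^2 - \|b\|^2)$ to obtain
\begin{equation*}
f(x^{k+1}) \le f(x^k) - \tfrac{\gamma}{2}\|\nabla f(x^k)\|^2 - \tfrac{\gamma(1-L\gamma)}{2}\|g^k\|^2 + \tfrac{\gamma}{2}V^k,
\end{equation*}
which uses $L_f^2 \le L^2$ from Assumption~\ref{as:L_smoothness}.

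The key step is the recursion on $V^k$. Conditioning on $x^k$ and $g^k$: with probability $p$ we have $g^{k+1} = \nabla f(x^{k+1})$ so $V^{k+1} = 0$, and with probability $1-p$ one has $g^{k+1} - \nabla f(x^{k+1}) = (g^k - \nabla f(x^k)) + \tfrac{1}{n}\sum_i[\cQ(\widetilde\Delta_i^k) - \Delta_i^k]$, where $\Delta_i^k = \nabla f_i(x^{k+1}) - \nabla f_i(x^k)$. Taking expectations sequentially in $\cQ$ and in the minibatch, the cross term vanishes by unbiasedness (of both $\cQ$ and $\widetilde\Delta_i^k$). Using independence across workers, for each $i$ I would bound
\begin{equation*}
\EE\|\cQ(\widetilde\Delta_i^k) - \Delta_i^k\|^2 \le \omega\EE\|\widetilde\Delta_i^k\|^2 + \EE\|\widetilde\Delta_i^k - \Delta_i^k\|^2,
\end{equation*}
then split $\EE\|\widetilde\Delta_i^k\|^2 = \EE\|\widetilde\Delta_i^k-\Delta_i^k\|^2 + \|\Delta_i^k\|^2$, use Assumption~\ref{as:avg_smoothness} to bound the stochastic gradient variance by $\tfrac{\cL_i^2}{b'}\|x^{k+1}-x^k\|^2$, and use $L_i$-smoothness to bound $\|\Delta_i^k\|^2 \le L_i^2\|x^{k+1}-x^k\|^2$. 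Averaging over $i$ with $L^2 = \tfrac{1}{n}\sum L_i^2$, $\cL^2 = \tfrac{1}{n}\sum \cL_i^2$, and recalling $\|x^{k+1}-x^k\|^2 = \gamma^2\|g^k\|^2$, this produces
\begin{equation*}
\EE[V^{k+1}\mid \cdot] \le (1-p) V^k + \tfrac{(1-p)\gamma^2}{n}\Bigl(\omega L^2 + \tfrac{(1+\omega)\cL^2}{b'}\Bigr)\|g^k\|^2.
\end{equation*}

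With the choice $c = \tfrac{\gamma}{2p}$, combining the two inequalities makes the $V^k$ terms in the Lyapunov recursion cancel exactly, leaving
\begin{equation*}
\EE[\Phi^{k+1}] \le \Phi^k - \tfrac{\gamma}{2}\|\nabla f(x^k)\|^2 - \tfrac{\gamma}{2}\Bigl(1 - L\gamma - \tfrac{(1-p)\gamma^2 A}{pn}\Bigr)\|g^k\|^2,
\end{equation*}
where $A = \omega L^2 + (1+\omega)\cL^2/b'$. I would then verify that the stepsize bound $\gamma \le (L + \sqrt{(1-p)A/(pn)})^{-1}$ implies $L\gamma + \gamma^2(1-p)A/(pn) \le 1$ (using that $\gamma\sqrt{(1-p)A/(pn)} \le 1$, so the quadratic term is dominated by the linear one), killing the $\|g^k\|^2$ term. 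Telescoping from $k=0$ to $K-1$ and using $V^0 = 0$ gives $\tfrac{1}{K}\sum_k\EE\|\nabla f(x^k)\|^2 \le \tfrac{2\Delta_0}{\gamma K}$; by the random-index rule for $\hat x^K$, setting the right-hand side to $\varepsilon^2$ yields the claimed $K$.

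The step I expect to require the most care is the variance recursion: one has to order the conditional expectations correctly (first over $\cQ$, then over the minibatch), exploit independence across the $n$ workers to get the $1/n$ factor, and combine the $\omega$ factor from quantization with the $(1+\omega)$ factor that emerges when the minibatch noise itself gets quantized. Everything else is bookkeeping around the Lyapunov function and the choice $c = \gamma/(2p)$, which is dictated by making the $V^k$ terms cancel.
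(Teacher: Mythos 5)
Your proposal is correct and follows essentially the same route as the paper's proof: the PAGE-style descent inequality, the same variance recursion for $\|g^{k+1}-\nabla f(x^{k+1})\|^2$ obtained by peeling off the quantization noise (factor $\omega$) and the minibatch noise (factor $1+\omega$ via Assumption~\ref{as:avg_smoothness}) with the $1/n$ gain from independence across workers, and the same Lyapunov function $f(x^k)-f_*+\frac{\gamma}{2p}\|g^k-\nabla f(x^k)\|^2$ with the identical stepsize verification and telescoping. The only differences are cosmetic (you re-derive the descent lemma inline and write $\gamma^2\|g^k\|^2$ in place of $\|x^{k+1}-x^k\|^2$), so nothing is missing.
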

One can find the full statement of the theorem together with its proof in Section~\ref{sec:proof_of_thm_non_cvx_fin_sums} of the Appendix.
%\begin{remark}[About batchsizes dissimilarity]
%	We notice that our analysis can be easily extended to handle the version of \algname{VR-MARINA} with different batchsizes $b'_1,\ldots,b'_n$ on different workers, i.e., when $|I_{i,k}'| = b_i'$ and $\widetilde{\Delta}_i^k = \frac{1}{b_i'}\sum_{j\in I_{i,k}'}(\nabla f_{ij}(x^{k+1}) - \nabla f_{ij}(x^k))$. In this case, the statement of Theorem~\ref{thm:main_result_non_cvx_finite_sums} remains the same with the small modificiation: instead of $\frac{\cL^2}{b'}$ the complexity bounds will have $\frac{1}{n}\sum_{i=1}^n\frac{\cL_i^2}{b_i'}$.
%\end{remark}
%\eduard{One can get tighter result if we take $p = \min\left\{\frac{\zeta_{\cQ}}{d},\frac{b'}{m}\right\}$.}
\begin{corollary}\label{cor:main_result_non_cvx_finite_sums}
	Let the assumptions of Theorem~\ref{thm:main_result_non_cvx_finite_sums} hold and $p = \min\left\{\nicefrac{\zeta_{\cQ}}{d},\nicefrac{b'}{(m+b')}\right\}$, where $b'\le m$. If $\gamma \le \left(L + \sqrt{\left(\omega L^2 + \nicefrac{(1+\omega)\cL^2}{b'}\right)\nicefrac{\max\left\{\nicefrac{d}{\zeta_{\cQ}} - 1,\nicefrac{m}{b'}\right\}}{n}}\right)^{-1}$
%	\begin{equation*}
%		\gamma \le \frac{1}{L + \sqrt{\frac{\max\left\{\nicefrac{d}{\zeta_{\cQ}} - 1,\nicefrac{m}{b'}\right\}}{n}\left(\omega L^2 + \frac{(1+\omega)\cL^2}{b'}\right)}},
%	\end{equation*}
	then \algname{VR-MARINA} requires 
	\begin{eqnarray*}
	\squeeze	\cO\Bigg(\frac{\Delta_0}{\varepsilon^2}\Bigg(L\Bigg(1 + \sqrt{\frac{\omega\max\left\{\nicefrac{d}{\zeta_{\cQ}} - 1,\nicefrac{m}{b'}\right\}}{n}}\Bigg)&\\
		&\squeeze\hspace{-3.5cm} + \cL\sqrt{\frac{(1+\omega)\max\left\{\nicefrac{d}{\zeta_{\cQ}} - 1,\nicefrac{m}{b'}\right\}}{nb'}}\Bigg)\Bigg)
	\end{eqnarray*}
	iterations/communication rounds and $\cO\left(m + b'K\right)$
%	\begin{eqnarray*}
%		\hspace{-0.1cm}\cO\Bigg(m+\frac{\Delta_0}{\varepsilon^2}\Bigg(L\Bigg(b' + \sqrt{\frac{\omega\max\left\{(\nicefrac{d}{\zeta_{\cQ}} - 1)(b')^2,mb'\right\}}{n}}\Bigg)&\\
%		&\hspace{-5.6cm} + \cL\sqrt{\frac{(1+\omega)\max\left\{(\nicefrac{d}{\zeta_{\cQ}} - 1) b',m\right\}}{n}}\Bigg)\Bigg)
%	\end{eqnarray*}
	stochastic oracle calls per node in expectation in order to achieve $\EE[\|\nabla f(\hat x^K)\|^2] \le \varepsilon^2$, and the expected total communication cost per worker is $\cO(d + \zeta_{\cQ}K)$.
%	\begin{eqnarray*}
%		\cO\Bigg(d+\frac{\Delta_0\zeta_{\cQ}}{\varepsilon^2}\Bigg(L\Bigg(1 + \sqrt{\frac{\omega\max\left\{\nicefrac{d}{\zeta_{\cQ}} - 1,\nicefrac{m}{b'}\right\}}{n}}\Bigg)&\\ 
%		&\hspace{-4.5cm}+ \cL\sqrt{\frac{(1+\omega)\max\left\{\nicefrac{d}{\zeta_{\cQ}} - 1,\nicefrac{m}{b'}\right\}}{nb'}}\Bigg)\Bigg)
%	\end{eqnarray*}
%	 under assumption that the communication cost is proportional to the number of non-zero components of transmitted vectors from workers to the server.
\end{corollary}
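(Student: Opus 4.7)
The plan is to instantiate Theorem~\ref{thm:main_result_non_cvx_finite_sums} at the specific probability $p = \min\{\nicefrac{\zeta_{\cQ}}{d},\nicefrac{b'}{(m+b')}\}$ and then separately account for the expected per-iteration communication and stochastic-oracle costs. The theorem already supplies the iteration complexity as a function of $p$, so the role of the corollary is merely to pick $p$ so that the two kinds of per-iteration cost are balanced, and to convert ``number of iterations'' into ``total communication work'' and ``total oracle work''.

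For the stated $p$, a direct computation gives
\[
\frac{1-p}{p} \;=\; \max\left\{\frac{d}{\zeta_{\cQ}} - 1,\; \frac{m}{b'}\right\},
\]
because $p=\zeta_{\cQ}/d$ yields $(1-p)/p = d/\zeta_{\cQ}-1$ and $p = b'/(m+b')$ yields $(1-p)/p = m/b'$, and taking the $\min$ of the two candidate probabilities is the same as taking the $\max$ of the two corresponding ratios. Substituting this identity into the stepsize condition of Theorem~\ref{thm:main_result_non_cvx_finite_sums} recovers the stepsize stated in the corollary verbatim, and substituting it into the theorem's $K$-bound, followed by $\sqrt{a+b}\le\sqrt{a}+\sqrt{b}$ to split the $\omega L^2$ and $(1+\omega)\cL^2/b'$ terms inside the square root, yields the claimed iteration complexity.

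For the total costs, at each iteration a worker transmits a dense vector of size $d$ with probability $p$ and a quantized vector of expected density $\zeta_{\cQ}$ with probability $1-p$, and computes $m$ gradient samples with probability $p$ and $\Theta(b')$ samples (two gradient evaluations per minibatch index) with probability $1-p$. Hence the expected per-iteration communication is $pd+(1-p)\zeta_{\cQ}\le 2\zeta_{\cQ}$, using $p\le\zeta_{\cQ}/d$, and the expected per-iteration oracle cost is $pm+(1-p)\,\Theta(b')=\Theta(b')$, using $p\le b'/(m+b')$ which gives $pm\le b'$. Summing over $K$ iterations and adding the initial full-gradient computation $g^0=\nabla f(x^0)$, which costs $d$ coordinates per worker to broadcast and $m$ samples per node to compute, yields the claimed totals $\cO(d+\zeta_{\cQ}K)$ and $\cO(m+b'K)$.

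There is no genuine obstacle beyond careful bookkeeping; the proof is really a specialization of Theorem~\ref{thm:main_result_non_cvx_finite_sums}. The only conceptually interesting point is that the two terms defining $p$ are precisely the balance thresholds: $p\le\zeta_{\cQ}/d$ is exactly what makes rare dense communication no more expensive in expectation than frequent compressed communication, and $p\le b'/(m+b')$ is exactly what makes rare full-gradient evaluation no more expensive in expectation than frequent minibatch evaluation. Taking $p$ as large as possible subject to both constraints minimizes $(1-p)/p$, and hence $K$, while keeping per-iteration costs pinned at their sparse/minibatch baseline.
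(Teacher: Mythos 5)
Your proposal is correct and follows essentially the same route as the paper's own proof: instantiate Theorem~\ref{thm:main_result_non_cvx_finite_sums} with the given $p$, use $\frac{1-p}{p}=\max\left\{\frac{d}{\zeta_{\cQ}}-1,\frac{m}{b'}\right\}$ together with $\sqrt{a+b}\le\sqrt{a}+\sqrt{b}$ for the iteration bound, and bound the expected per-iteration costs by $pd+(1-p)\zeta_{\cQ}\le 2\zeta_{\cQ}$ and $pm+2(1-p)b'=\Theta(b')$ before adding the initialization costs $d$ and $m$. No gaps.
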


First of all, when workers quatize differences of the full gradients, then $I_{i,k}' = \{1,\ldots,m\}$ for all $i\in[n]$ and $k\ge 0$,  implying $\cL = 0$. In this case, the complexity bounds for \algname{VR-MARINA} recover the ones for \algname{MARINA}. Next, when $\omega = 0$ (no quantization) and $n = 1$, our bounds for iteration and oracle complexities for \algname{VR-MARINA} recover the bounds for \algname{PAGE} \cite{li2020unified}, which is optimal for finite-sum smooth non-convex optimization. This observation implies that the dependence on $\varepsilon$ and $m$ in the complexity bounds for \algname{VR-MARINA} cannot be improved in the class of first-order stochastic methods. Next, we notice that up to the differences in smoothness constants, the iteration and oracle complexities for \algname{VR-MARINA} benefit from the number of workers $n$. Finally, as Table~\ref{tab:comparison} shows, the rates for \algname{VR-MARINA} are strictly better than ones for the previous state-of-the-art method \algname{VR-DIANA}\cite{horvath2019stochastic}.

We provide the convergence results for \algname{VR-MARINA} in the finite-sum case under the Polyak-{\L}ojasiewicz condition,  together with complete proofs, in Section~\ref{sec:proof_of_thm_pl_fin_sums} of the Appendix.

\subsection{Online Case}
In this section, we focus on  problems of type \eqref{eq:main_problem}+\eqref{eq:f_i_expectation}. For this type of problems, we consider a  slightly modified version of \algname{VR-MARINA}. That is, we replace line 8 in Algorithm~\ref{alg:vr_marina} with the following update rule: $g_i^{k+1} = \frac{1}{b}\sum_{j\in I_{i,k}}\nabla f_{\xi_{ij}^k}(x^{k+1})$ if $c_k = 1$, and $g_i^{k+1} = g^k + \cQ\left(\frac{1}{b'}\sum_{j\in I_{i,k}'}(\nabla f_{\xi_{ij}^k}(x^{k+1}) - \nabla f_{\xi_{ij}^k}(x^k))\right)$ otherwise, where $I_{i,k}, I_{i,k}'$ are the sets of the indices in the minibatches, $|I_{i,k}| = b$, $|I_{i,k}'| = b'$,  and $\xi_{ij}^k$ is independently sampled from $\cD_i$ for $i\in[n]$, $j\in[m]$ (see Algorithm~\ref{alg:vr_marina_online} in the Appendix).
Before we provide our convergence results in this setup, we reformulate Assumption~\ref{as:avg_smoothness} for the online case.
\begin{assumption}[Average $\cL$-smoothness]\label{as:avg_smoothness_online}
	For all $k\ge 0$ and $i\in[n]$ the minibatch stochastic gradients difference $\widetilde{\Delta}_i^k = \frac{1}{b'}\sum_{j\in I_{i,k}'}(\nabla f_{\xi_{ij}^k}(x^{k+1}) - \nabla f_{\xi_{ij}^k}(x^k))$ computed on the $i$-th worker satisfies $\EE\left[\widetilde{\Delta}_i^k\mid x^k,x^{k+1}\right] = \Delta_i^k$ and
	\begin{equation}
	\squeeze	\EE\left[\left\|\widetilde{\Delta}_i^k - \Delta_i^k\right\|^2\mid x^k,x^{k+1}\right] \le \frac{\cL_i^2}{b'}\|x^{k+1}-x^k\|^2\label{eq:avg_L_smoothness_online}
	\end{equation}
	with some $\cL_i \ge 0$, where $\Delta_i^k = \nabla f_i(x^{k+1}) - \nabla f_i(x^k)$.
\end{assumption}
Moreover, we assume that the variance of the stochastic gradients on all nodes is uniformly upper bounded.
\begin{assumption}\label{as:bounded_var}
	We assume that for all $i \in [n]$ there exists such constant $\sigma_i \in [0,+\infty)$ that for all $x\in\R^d$
	\begin{eqnarray}
		\EE_{\xi_i \sim\cD_i}\left[\nabla f_{\xi_i}(x)\right]	&=& \nabla f_i(x), \label{eq:unbiasedness}\\
		\EE_{\xi_i \sim\cD_i}\left[\left\|\nabla f_{\xi_i}(x) - \nabla f_i(x)\right\|^2\right] &\le& \sigma_i^2. \label{eq:bounded_var}
	\end{eqnarray}
\end{assumption}

Under these and previously introduced assumptions, we derive the following result.
\begin{theorem}\label{thm:main_result_non_cvx_online}
	Consider the online case \eqref{eq:main_problem}+\eqref{eq:f_i_expectation}. Let Assumptions~\ref{as:lower_bound},~\ref{as:L_smoothness},~\ref{as:avg_smoothness_online}~and~\ref{as:bounded_var} be satisfied. Then, after
	\begin{equation}
	\squeeze	K = \cO\left(\frac{\Delta_0}{\varepsilon^2}\left(L + \sqrt{\frac{1-p}{pn}\left(\omega L^2 + \frac{(1+\omega)\cL^2}{b'}\right)}\right)\right) \notag
	\end{equation}
	iterations with $\Delta_0 = f(x^0)-f_*$, $L^2 = \frac{1}{n}\sum_{i=1}^nL_i^2$, $\cL^2 = \frac{1}{n}\sum_{i=1}^n\cL_i^2$, the stepsize $\gamma \le \left(L + \sqrt{\left(\omega L^2 + \nicefrac{(1+\omega)\cL^2}{b'}\right)\nicefrac{(1-p)}{(pn)}}\right)^{-1},$
%	\begin{equation}
%		\gamma \le \frac{1}{L + \sqrt{\frac{1-p}{pn}\left(\omega L^2 + \frac{(1+\omega)\cL^2}{b'}\right)}},\notag
%	\end{equation}
	and $b = \Theta\left(\nicefrac{\sigma^2}{(n\varepsilon^2)}\right),$ $\sigma^2 = \frac{1}{n}\sum_{i=1}^n\sigma_i^2$,  
%	\begin{equation*}
%		b = \Theta\left(\frac{\sigma^2}{n\varepsilon^2}\right),\quad \sigma^2 = \frac{1}{n}\sum\limits_{i=1}^n\sigma_i^2
%	\end{equation*}
	\algname{VR-MARINA} produces a point $\hat x^K$ for which $\EE[\|\nabla f(\hat x^K)\|^2] \le \varepsilon^2$.
\end{theorem}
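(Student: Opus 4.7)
I would mirror the Lyapunov-function proof of Theorem~\ref{thm:main_result_non_cvx_finite_sums} and carry through the same three ingredients: a one-step recursion for the variance of the gradient estimator, a descent inequality for $f$, and a potential function combining them. The only new feature in the online setting is that on the iterations with $c_k=1$ the server now receives an unbiased stochastic minibatch of size $b$ per worker instead of the exact local gradient; this adds a $\sigma^2/(nb)$ term to the error of $g^{k+1}$, which is ultimately absorbed by the choice $b = \Theta(\sigma^2/(n\varepsilon^2))$.

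\textbf{Estimator-error recursion.} Let $V_k \eqdef \EE\|g^k - \nabla f(x^k)\|^2$. Conditioning on $(x^k, x^{k+1}, g^k)$ and then on the Bernoulli $c_k$, on $\{c_k=1\}$ the estimator $g^{k+1}$ is a mean of $nb$ independent unbiased stochastic gradients at $x^{k+1}$ (Assumption~\ref{as:bounded_var}), so independence across workers and samples gives conditional variance at most $\sigma^2/(nb)$. On $\{c_k=0\}$ the same computation as in the finite-sum case --- unbiasedness of $\cQ$, then Definition~\ref{def:quantization} to pick up $\omega\EE\|\widetilde\Delta_i^k\|^2$ per worker, then $L_i$-smoothness combined with Assumption~\ref{as:avg_smoothness_online} to bound $\EE\|\widetilde\Delta_i^k\|^2 \leq (L_i^2 + \cL_i^2/b')\|x^{k+1}-x^k\|^2$, then averaging over the $n$ independent workers --- yields an increment controlled by $\tfrac{1}{n}(\omega L^2 + (1+\omega)\cL^2/b')\|x^{k+1}-x^k\|^2$. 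Combining both branches and taking total expectation,
\begin{equation*}
V_{k+1} \leq (1-p)V_k + \tfrac{1-p}{n}\Bigl(\omega L^2 + \tfrac{(1+\omega)\cL^2}{b'}\Bigr)\EE\|x^{k+1}-x^k\|^2 + \tfrac{p\sigma^2}{nb}.
\end{equation*}

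\textbf{Descent and telescoping.} Standard $L_f$-smoothness of $f$ (with $L_f \leq L$), the identity $-\gamma\langle \nabla f(x^k), g^k\rangle = -\tfrac{\gamma}{2}\|\nabla f(x^k)\|^2 - \tfrac{\gamma}{2}\|g^k\|^2 + \tfrac{\gamma}{2}\|g^k - \nabla f(x^k)\|^2$, and the update $x^{k+1}-x^k = -\gamma g^k$ give
\begin{equation*}
\EE f(x^{k+1}) \leq \EE f(x^k) - \tfrac{\gamma}{2}\EE\|\nabla f(x^k)\|^2 - \tfrac{\gamma}{2}(1-\gamma L_f)\EE\|g^k\|^2 + \tfrac{\gamma}{2}V_k.
\end{equation*}
Defining $\Phi_k \eqdef \EE f(x^k) - f_* + \tfrac{\gamma}{2p}V_k$ and imposing the stated stepsize bound so that $\gamma L_f + \gamma^2(1-p)(\omega L^2 + (1+\omega)\cL^2/b')/(pn) \leq 1$, the $V_k$ and $\|g^k\|^2$ contributions cancel nonpositively and one obtains
\begin{equation*}
\Phi_{k+1} \leq \Phi_k - \tfrac{\gamma}{2}\EE\|\nabla f(x^k)\|^2 + \tfrac{\gamma \sigma^2}{2nb}.
\end{equation*}
Telescoping, using $\Phi_K \geq 0$, and dividing by $K$ yields $\EE\|\nabla f(\hat x^K)\|^2 \leq 2\Phi_0/(\gamma K) + \sigma^2/(nb)$. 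Choosing $b = \Theta(\sigma^2/(n\varepsilon^2))$ drives the second term below $\varepsilon^2/2$; the stated iteration count $K$, together with $\Phi_0 = O(\Delta_0)$ obtained by an initial large-batch estimate of $g^0$ (so that $V_0 = O(\varepsilon^2)$), then drives the first term below $\varepsilon^2/2$ as well.

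\textbf{Main obstacle.} The most delicate step is the estimator-error recursion: one must correctly pool independence across the $n$ workers \emph{and} across the $b$ (resp.\ $b'$) samples per worker on $\{c_k=1\}$ to obtain the $\sigma^2/(nb)$ scaling rather than $\sigma^2/b$, and on $\{c_k=0\}$ must decouple the minibatch variance from Assumption~\ref{as:avg_smoothness_online} and the compression variance from Definition~\ref{def:quantization} without double-counting the noise injected by quantizing a \emph{stochastic} (rather than deterministic) difference. Once that recursion is written down cleanly, the Lyapunov argument proceeds exactly as in Theorem~\ref{thm:main_result_non_cvx_finite_sums}, with the additive $p\sigma^2/(nb)$ term as the single new feature.
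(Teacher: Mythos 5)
Your proposal is correct and follows essentially the same route as the paper's proof in Section~\ref{sec:proof_of_thm_non_cvx_online}: the same estimator-variance recursion with the additional $\nicefrac{p\sigma^2}{(nb)}$ term, the same descent inequality (your polarization identity is exactly how Lemma~\ref{lem:lemma_2_page} is obtained), the same potential $\Phi_k = f(x^k)-f_*+\frac{\gamma}{2p}\|g^k-\nabla f(x^k)\|^2$, and the same stepsize condition. The only loose point is your claim $\Phi_0=\cO(\Delta_0)$: since $g^0$ is a size-$nb$ minibatch estimate, one only has $\EE[\Phi_0]\le \Delta_0+\frac{\gamma\sigma^2}{2pnb}$, and the paper instead carries the resulting $\frac{\sigma^2}{nb}\left(1+\frac{1}{pK}\right)$ term explicitly, which is why the full statement (Theorem~\ref{thm:main_result_non_cvx_online_appendix}) includes an extra additive $\cO(\nicefrac{1}{p})$ in $K$ --- a bookkeeping difference, not a change of method.
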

One can find the full statement of the theorem, together with its proof, in Section~\ref{sec:proof_of_thm_non_cvx_online} of the Appendix.
%\begin{remark}[About batchsizes dissimilarity]
%	Similarly to the finite sum case, our analysis can be easily extended to handle the version of \algname{VR-MARINA} with different batchsizes $b_1,\ldots,b_n$ and $b'_1,\ldots,b'_n$ on different workers, i.e., when $|I_{i,k}| = b_i$, $|I_{i,k}'| = b_i'$ for $i\in[n]$. In this case, the statement of Theorem~\ref{thm:main_result_non_cvx_online} remains the same with the small modificiation: instead of $\frac{\cL^2}{b'}$ the complexity bounds will have $\frac{1}{n}\sum_{i=1}^n\frac{\cL_i^2}{b_i'}$, and instead of the requirement $b = \Theta\left(\frac{\sigma^2}{n\varepsilon}\right)$ it wil have $\frac{1}{n^2}\sum_{i=1}^n\frac{\sigma_i^2}{b_i} = \Theta(\varepsilon^2)$.
%\end{remark}
\begin{corollary}\label{cor:main_result_non_cvx_online}
	Let the assumptions of Theorem~\ref{thm:main_result_non_cvx_online} hold and choose $p = \min\left\{\nicefrac{\zeta_{\cQ}}{d},\nicefrac{b'}{(b+b')}\right\}$, where $b'\le b$, $b = \Theta\left(\nicefrac{\sigma^2}{(n\varepsilon^2)}\right)$. If $\gamma \le \left(L + \sqrt{\left(\omega L^2 + \nicefrac{(1+\omega)\cL^2}{b'}\right)\nicefrac{\max\left\{\nicefrac{d}{\zeta_{\cQ}} - 1,\nicefrac{b}{b'}\right\}}{n}}\right)^{-1},$
%	\begin{equation*}
%		\gamma \le \frac{1}{L + \sqrt{\frac{\max\left\{\nicefrac{d}{\zeta_{\cQ}} - 1,\nicefrac{b}{b'}\right\}}{n}\left(\omega L^2 + \frac{(1+\omega)\cL^2}{b'}\right)}},
%	\end{equation*}
	then \algname{VR-MARINA} requires 
	\begin{eqnarray*}
	\squeeze	\cO\Bigg(\frac{\Delta_0}{\varepsilon^2}\Bigg(L\Bigg(1 + \sqrt{\frac{\omega}{n}\max\left\{\frac{d}{\zeta_{\cQ}} - 1,\frac{\sigma^2}{nb'\varepsilon^2}\right\}}\Bigg)&\\
		&\squeeze\hspace{-4.5cm} + \cL\sqrt{\frac{(1+\omega)}{nb'}\max\left\{\frac{d}{\zeta_{\cQ}} - 1,\frac{\sigma^2}{nb'\varepsilon^2}\right\}}\Bigg)\Bigg)
	\end{eqnarray*}
	iterations/communication rounds and $\cO(\zeta_{\cQ}K+ \nicefrac{\sigma^2}{(n\varepsilon^2)})$  
%	\begin{eqnarray*}
%		\cO\Bigg(\frac{\sigma^2}{n\varepsilon^2}+\frac{\Delta_0 L b'}{\varepsilon^2}&\\
%		&\hspace{-2.5cm}+ \frac{\Delta_0 L}{\varepsilon^2}\sqrt{\frac{\omega b'}{n}\max\left\{\left(\frac{d}{\zeta_{\cQ}} - 1\right)b',\frac{\sigma^2}{n\varepsilon^2}\right\}}\\
%		&\hspace{-1cm} + \frac{\Delta_0\cL}{\varepsilon^2}\sqrt{\frac{1+\omega}{n}\max\left\{\left(\frac{d}{\zeta_{\cQ}} - 1\right) b',\frac{\sigma^2}{n\varepsilon^2}\right\}}\Bigg)
%	\end{eqnarray*}
	stochastic oracle calls per node in expectation to achieve $\EE[\|\nabla f(\hat x^K)\|^2] \le \varepsilon^2$, and the expected total communication cost per worker is $\cO(d + \zeta_{\cQ}K)$.
%	\begin{eqnarray*}
%		\cO\Bigg(d+\frac{\Delta_0\zeta_{\cQ}}{\varepsilon^2}\Bigg(L\Bigg(1 + \sqrt{\frac{\omega}{n}\max\left\{\frac{d}{\zeta_{\cQ}} - 1,\frac{\sigma^2}{nb'\varepsilon^2}\right\}}\Bigg)&\\ 
%		&\hspace{-5.5cm}+ \cL\sqrt{\frac{1+\omega}{nb'}\max\left\{\frac{d}{\zeta_{\cQ}} - 1,\frac{\sigma^2}{nb'\varepsilon^2}\right\}}\Bigg)\Bigg)
%	\end{eqnarray*}
%	 under assumption that the communication cost is proportional to the number of non-zero components of transmitted vectors from workers to the server.
\end{corollary}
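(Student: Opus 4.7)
The plan is to specialize Theorem~\ref{thm:main_result_non_cvx_online} to the choice $p = \min\{\zeta_{\cQ}/d,\, b'/(b+b')\}$ with $b = \Theta(\sigma^2/(n\varepsilon^2))$, and then to count the expected number of stochastic oracle calls and the expected communication cost separately.

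The first step is to rewrite the theorem's iteration bound in terms of $1/p - 1$ rather than $(1-p)/p$. For the prescribed choice $p = \min\{a_1,a_2\}$ we have $1/p = \max\{1/a_1,1/a_2\}$, so
\[
\frac{1-p}{p} \;=\; \frac{1}{p} - 1 \;\le\; \max\left\{\frac{d}{\zeta_{\cQ}} - 1,\;\frac{b}{b'}\right\}.
\]
Plugging this into the stepsize constraint and the expression for $K$ in Theorem~\ref{thm:main_result_non_cvx_online}, and then substituting $b = \Theta(\sigma^2/(n\varepsilon^2))$ so that $b/b' = \Theta(\sigma^2/(n b' \varepsilon^2))$, yields exactly the stepsize condition and iteration complexity stated in the corollary.

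Next I would compute the expected per-iteration oracle cost per node. With probability $p$ a worker evaluates a batch of $b$ stochastic gradients at $x^{k+1}$; with probability $1-p$ it evaluates $\cO(b')$ stochastic gradients (one minibatch at each of $x^k$ and $x^{k+1}$). The design condition $p \le b'/(b+b')$ yields $p b \le b'$, so the expected per-iteration oracle cost is $\cO(b')$. Adding the one-time cost $\cO(b)$ for initializing $g^0$ via a batch of size $\Theta(b)$ gives a total of $\cO(b' K + b) = \cO\bigl(b' K + \sigma^2/(n\varepsilon^2)\bigr)$ per node. The communication count is analogous: per iteration a worker transmits either a dense vector (cost $d$, probability $p$) or $\cQ(\cdot)$ (expected cost $\zeta_{\cQ}$, probability $1-p$), and $p \le \zeta_{\cQ}/d$ forces $p d \le \zeta_{\cQ}$; together with the $\cO(d)$ initial broadcast this gives $\cO(d + \zeta_{\cQ} K)$.

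The only nontrivial point is the first step: I must confirm that the inequality $\frac{1-p}{p} \le \max\{d/\zeta_{\cQ}-1,\,b/b'\}$ can be substituted both inside the expression for $K$ and inside the stepsize constraint, which requires checking that the stepsize requirement from Theorem~\ref{thm:main_result_non_cvx_online} is monotone in $(1-p)/p$ (it is, because this quantity appears only under a square root with a positive coefficient). The rest is an expectation calculation and a direct substitution, with no new analytic ingredient beyond what Theorem~\ref{thm:main_result_non_cvx_online} already supplies.
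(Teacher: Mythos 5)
Your proposal is correct and takes essentially the same route as the paper's proof: the paper likewise uses that $p=\min\{\zeta_{\cQ}/d,\,b'/(b+b')\}$ gives $\frac{1-p}{p}=\max\{d/\zeta_{\cQ}-1,\,b/b'\}$, $pb+(1-p)b'\le 2b'$ and $pd+(1-p)\zeta_{\cQ}\le 2\zeta_{\cQ}$, and then substitutes these into the theorem's stepsize, iteration, oracle and communication expressions (with $\sqrt{a+b}\le\sqrt a+\sqrt b$). The only mismatch, your oracle count $\cO(b'K+\nicefrac{\sigma^2}{(n\varepsilon^2)})$ versus the statement's $\cO(\zeta_{\cQ}K+\nicefrac{\sigma^2}{(n\varepsilon^2)})$, is a typo in the main-text statement: the appendix proof indeed yields $b+K(pb+2(1-p)b')=\cO(b'K+\nicefrac{\sigma^2}{(n\varepsilon^2)})$, so your version matches the intended result.
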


Similarly to the finite-sum case, when $\omega = 0$ (no quantization) and $n = 1$, our bounds for iteration and oracle complexities for \algname{VR-MARINA} recover the bounds for \algname{PAGE} \cite{li2020unified}, which is optimal for online smooth non-convex optimization as well. That is, the dependence on $\varepsilon$ in the complexity bound for \algname{VR-MARINA} cannot be improved in the class of first-order stochastic methods. As previously, up to the differences in smoothness constants, the iteration and oracle complexities for \algname{VR-MARINA} benefit from an increase in the  number of workers $n$.

We provide the convergence results for \algname{VR-MARINA} in the online case under the Polyak-{\L}ojasiewicz condition, together with complete proofs, in Section~\ref{sec:proof_of_thm_pl_online} of the Appendix.

\begin{figure*}[t]
\centering
\includegraphics[width=0.24\textwidth]{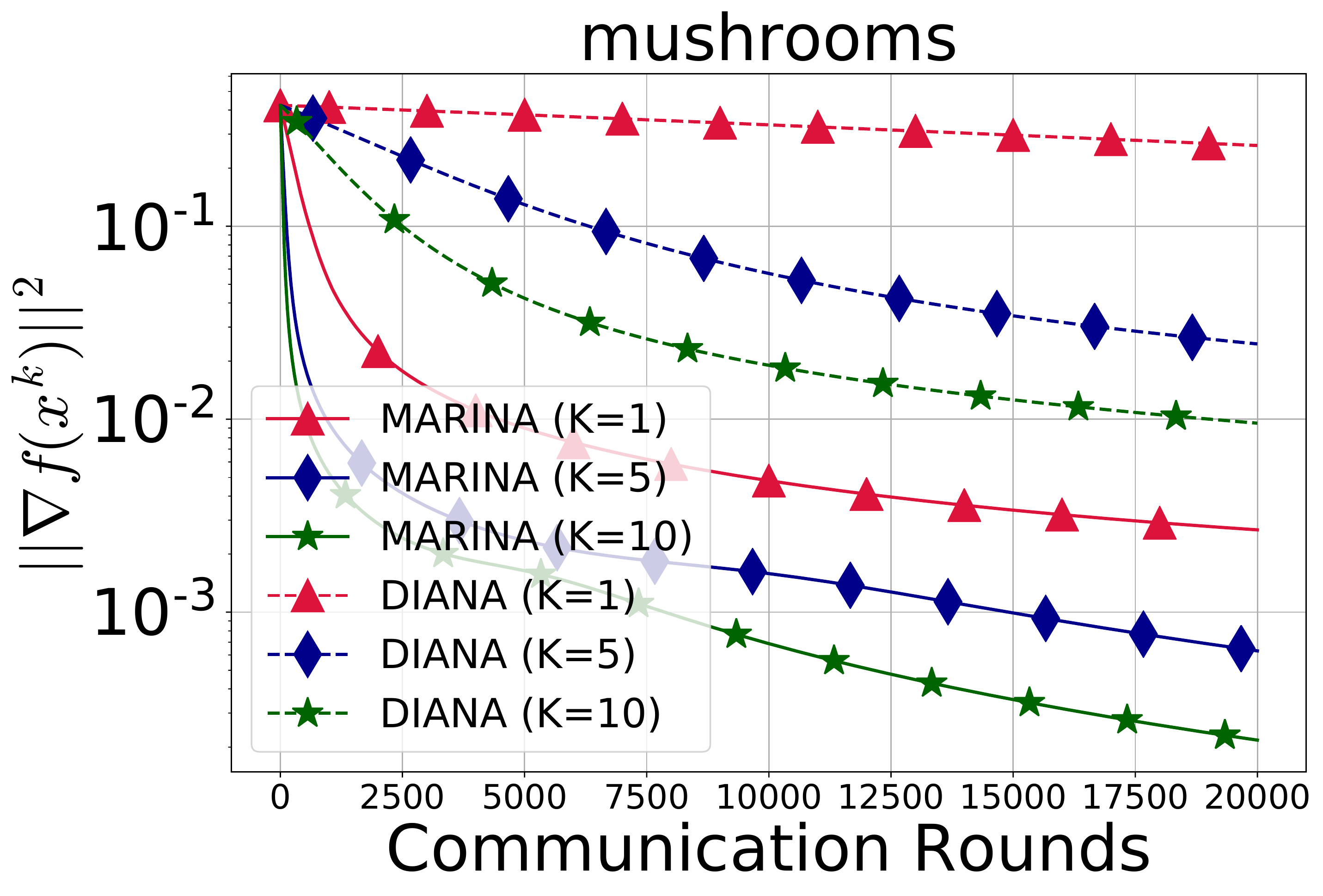}
\includegraphics[width=0.24\textwidth]{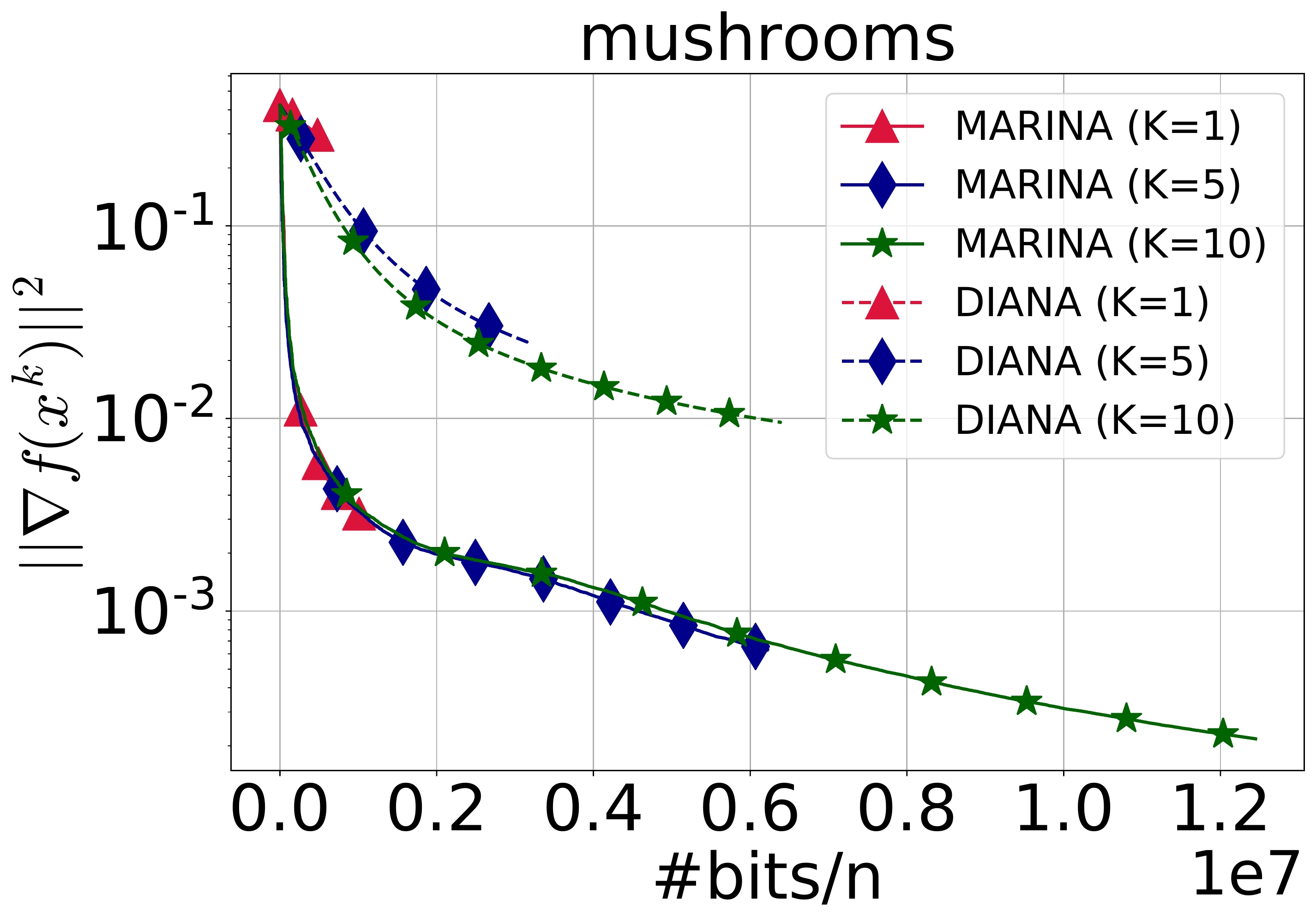}
\includegraphics[width=0.24\textwidth]{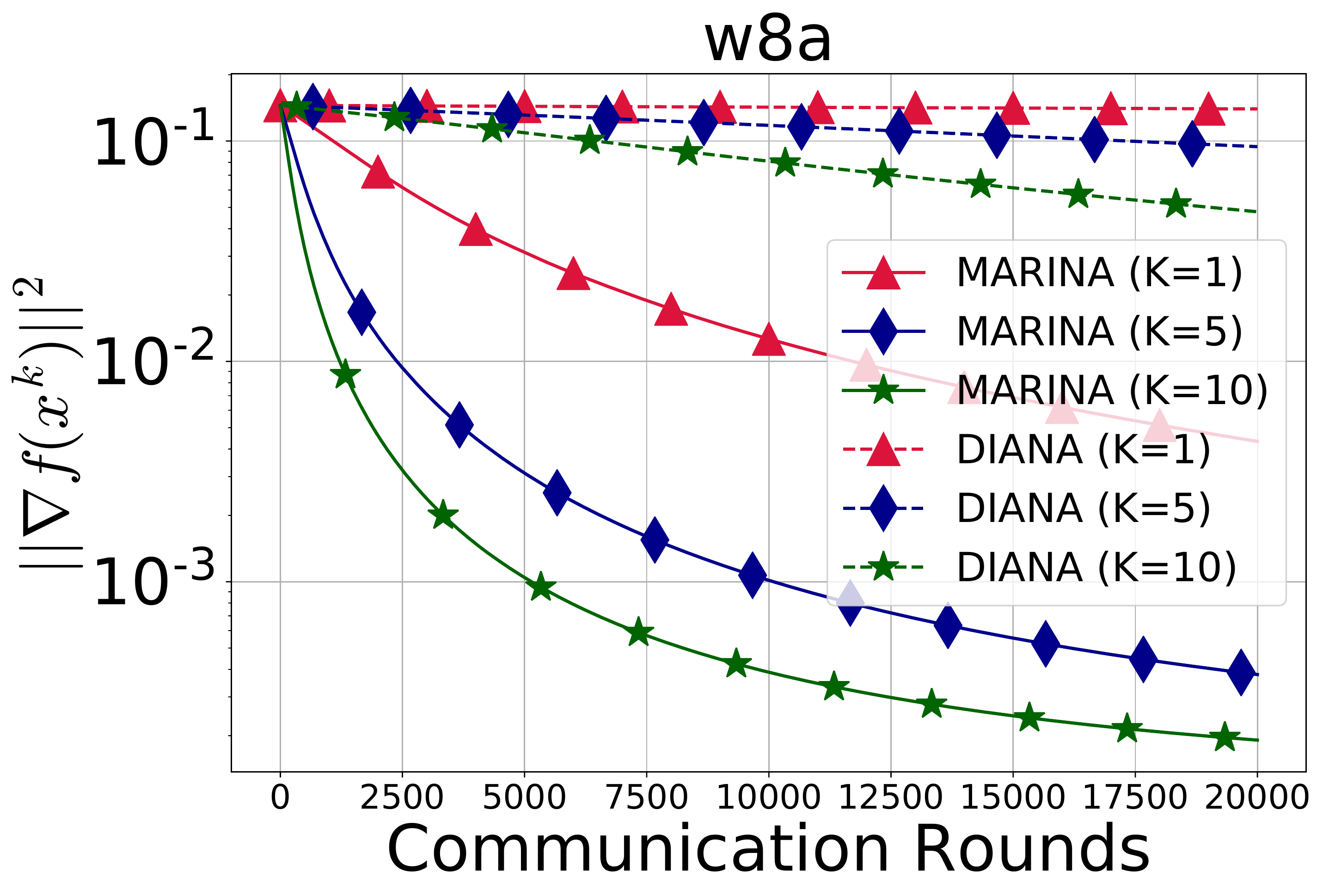}
\includegraphics[width=0.24\textwidth]{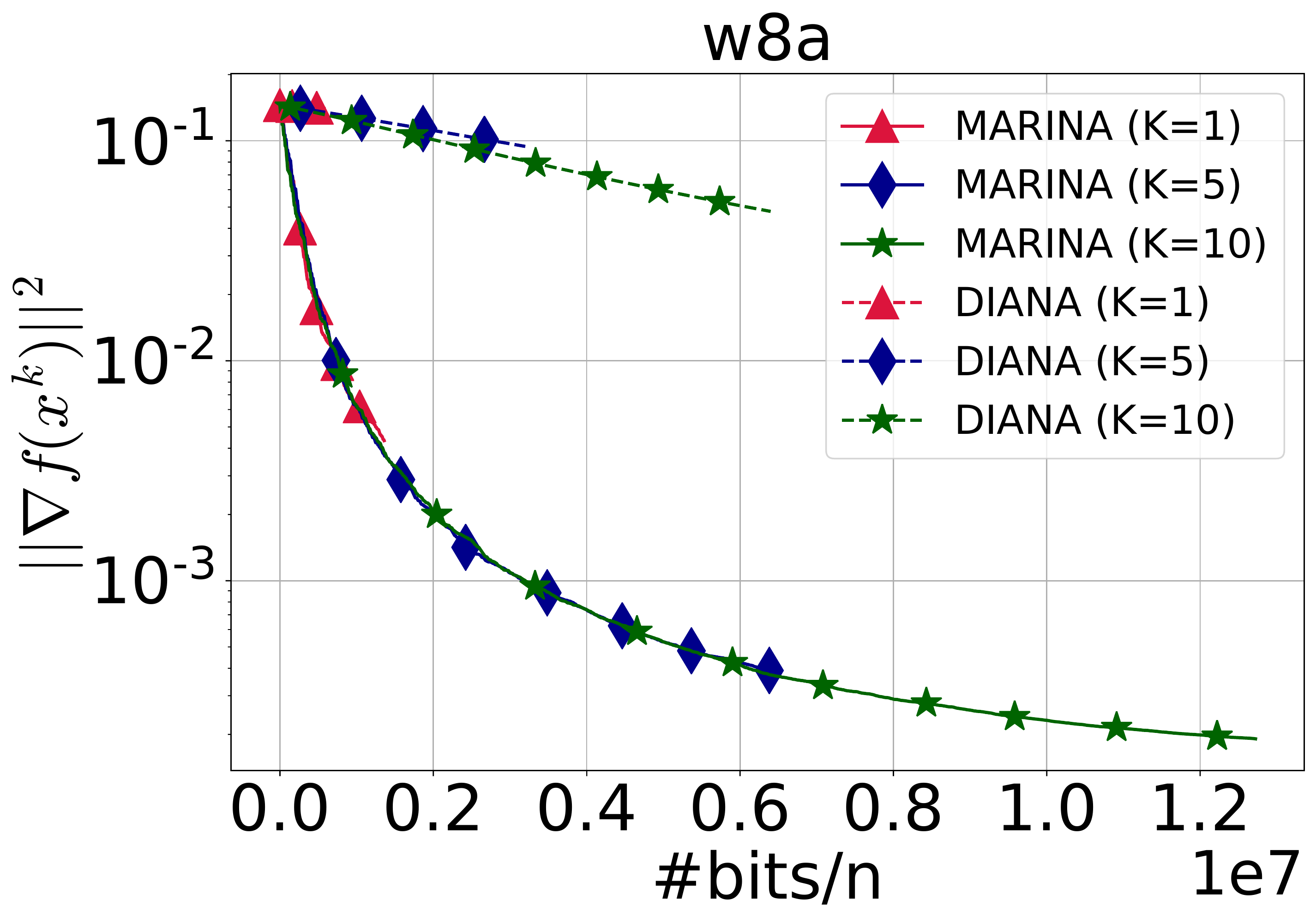}
\includegraphics[width=0.35\textwidth]{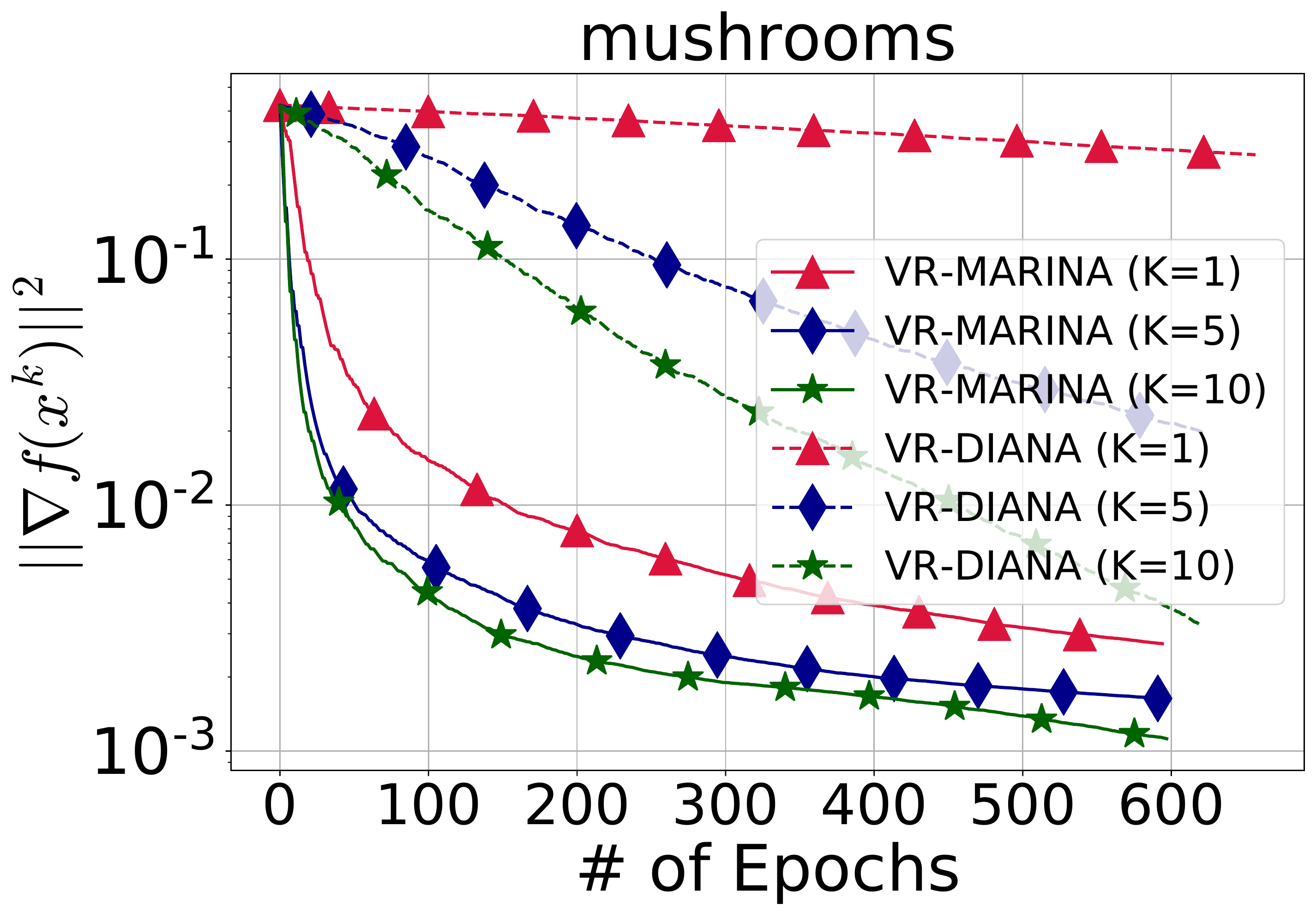}
\includegraphics[width=0.35\textwidth]{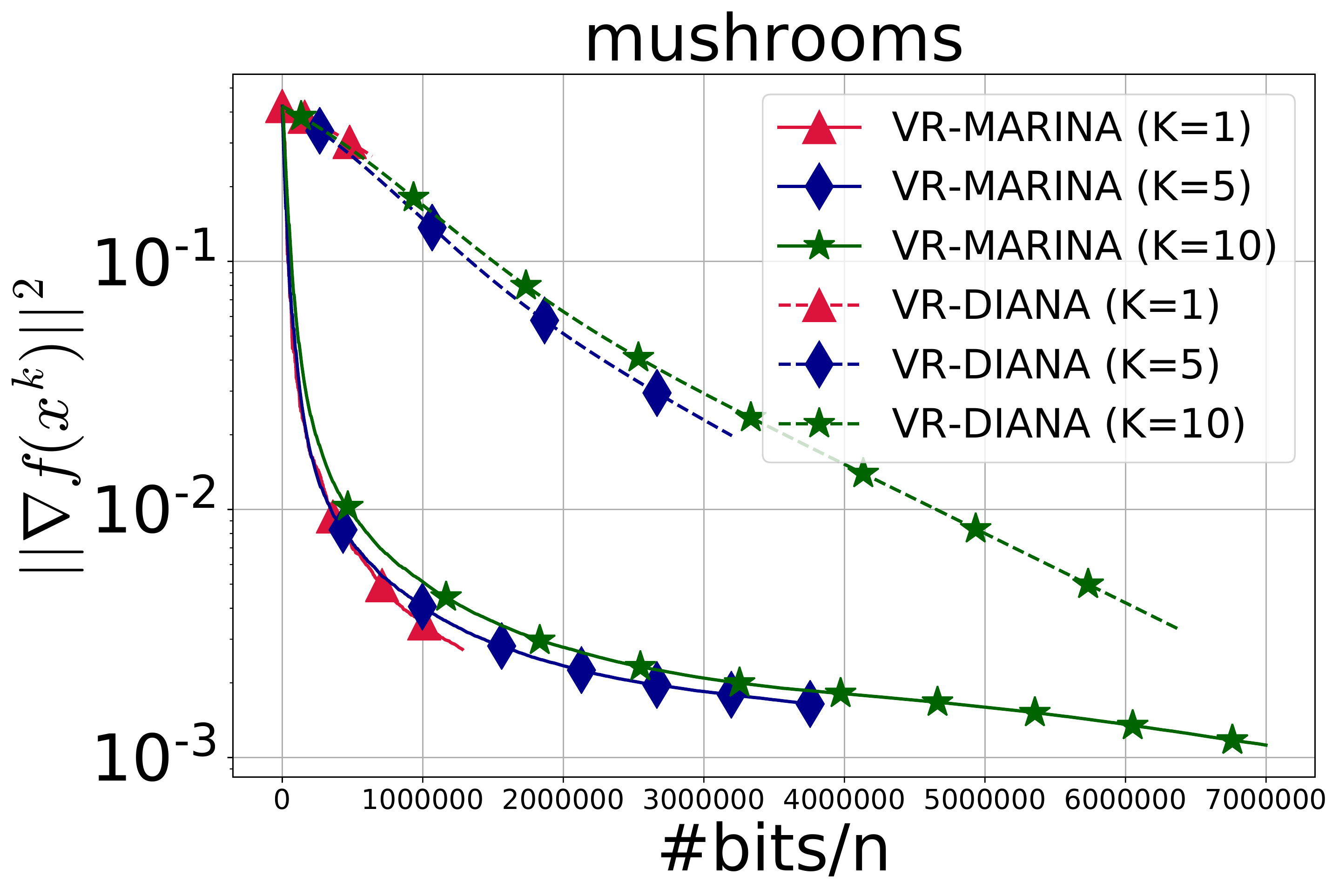}
\caption{Comparison of \algname{MARINA} with  \algname{DIANA}, and of \algname{VR-MARINA} with \algname{VR-DIANA}, on binary classification problem involving non-convex loss \eqref{eq:experiment_problem} with LibSVM data \cite{chang2011libsvm}. Parameter $n$ is chosen as per Tbl.~\ref{tbl:ns} in the Appendix. Stepsizes for the methods are chosen according to the theory and the batchsizes for \algname{VR-MARINA} and \algname{VR-DIANA} are $\sim \nicefrac{m}{100}$. In all cases, we used the RandK sparsification operator with K $\in \{1,5,10\}$.}
\label{fig:mushrooms_w8a_main}
\end{figure*}

\section{Partial Participation}\label{sec:pp}
Finally, we propose another modification of \algname{MARINA}. In particular, we prove an option for {\em partial participation} of the clients - a feature important in federated learning. The resulting method is called \algname{PP-MARINA} (see Algorithm~\ref{alg:pp_marina} in the Appendix). At each iteration of \algname{PP-MARINA}, the server receives the quantized gradient differences from $r$ clients with probability $1-p$, and aggregates full gradients from all clients with probability $p$, i.e., \algname{PP-MARINA} coincides with \algname{MARINA} up to the following difference: $g_i^{k+1} = \nabla f_i(x^{k+1})$, $g^{k+1} = \frac{1}{n}\sum_{i=1}^ng_i^{k+1}$ if $c_k = 1$, and $g_i^{k+1} = g^k + \cQ\left(\nabla f_{i}(x^{k+1}) - \nabla f_{i}(x^k))\right)$, $g^{k+1} = \frac{1}{r}\sum_{i_k\in I_k'}g_{i_k}^{k+1}$ otherwise, where $I_k'$ is the set of $r$ i.i.d.\ samples from the uniform distribution over $\{1,\ldots,n\}$. That is, if the probability $p$ is chosen to be small enough, then with high probability the server receives only quantized vectors from a subset of clients at each iteration.

%\subsection{Convergence Results for Generally Non-Convex Problems}
Below, we provide a convergence result for \algname{PP-MARINA} for smooth  non-convex problems.
\begin{theorem}\label{thm:main_result_non_cvx_pp}
	Let Assumptions~\ref{as:lower_bound}~and~\ref{as:L_smoothness} be satisfied. Then, after
	\begin{equation}
	\squeeze	K = \cO\left(\frac{\Delta_0 L}{\varepsilon^2}\left(1 + \sqrt{\frac{(1-p)(1+\omega)}{pr}}\right)\right) \notag
	\end{equation}
	iterations with $\Delta_0 = f(x^0)-f_*$, $L^2 = \frac{1}{n}\sum_{i=1}^nL_i^2$ and the stepsize $\gamma \le L^{-1}\left(1 + \sqrt{\nicefrac{(1-p)(1+\omega)}{(pr)}}\right)^{-1}$, 
%	\begin{equation}
%		\gamma \le \frac{1}{L\left(1 + \sqrt{\frac{(1-p)(1+\omega)}{pr}}\right)}\notag
%	\end{equation}
	\algname{PP-MARINA} produces  a point $\hat x^K$ for which  $\EE[\|\nabla f(\hat x^K)\|^2] \le \varepsilon^2$.
\end{theorem}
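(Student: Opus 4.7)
The plan is to adapt the proof strategy used for \algname{MARINA} (Theorem~\ref{thm:main_result_non_cvx}), with the only substantive change being the derivation of the conditional second-moment bound on $g^{k+1} - \nabla f(x^{k+1})$, since partial participation replaces the $\nicefrac{\omega}{n}$ factor by $\nicefrac{(1+\omega)}{r}$. First I would verify, by the law of total expectation, that $g^{k+1}$ remains conditionally unbiased on compression steps in the sense $\EE[g^{k+1}\mid x^k, g^k, c_k=0] = g^k + \nabla f(x^{k+1}) - \nabla f(x^k)$. Indeed, conditioning on the sampled set $I_k'$, unbiasedness of $\cQ$ gives $\EE[g^{k+1}\mid x^k,g^k,I_k'] = g^k + \frac{1}{r}\sum_{i\in I_k'}\Delta_i^k$ with $\Delta_i^k \eqdef \nabla f_i(x^{k+1}) - \nabla f_i(x^k)$, and then uniform i.i.d.\ sampling of $I_k'$ delivers the average over all $n$ clients.

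The key estimate is the conditional variance of the compressed update. Using the standard decomposition $\Var(g^{k+1}) = \EE[\Var(g^{k+1}\mid I_k')] + \Var(\EE[g^{k+1}\mid I_k'])$, I would bound the first (quantization) term by exploiting the independence of $\cQ$ across workers: conditionally on $I_k'$, it equals $\frac{1}{r^2}\sum_{i\in I_k'}\EE\|\cQ(\Delta_i^k) - \Delta_i^k\|^2 \le \frac{\omega}{r^2}\sum_{i\in I_k'}\|\Delta_i^k\|^2$, whose expectation over uniform $I_k'$ is $\frac{\omega}{rn}\sum_{i=1}^n \|\Delta_i^k\|^2$. The second (sampling) term is the variance of an $r$-sample mean from $\{\Delta_i^k\}_{i=1}^n$, which is at most $\frac{1}{rn}\sum_{i=1}^n\|\Delta_i^k\|^2$. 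Summing and applying $L_i$-smoothness of $f_i$ (Assumption~\ref{as:L_smoothness}) component-wise yields
\begin{equation*}
\EE\!\left[\|g^{k+1} - \nabla f(x^{k+1})\|^2 \,\big|\, x^k, g^k, c_k=0\right] \le \|g^k - \nabla f(x^k)\|^2 + \frac{(1+\omega) L^2 \gamma^2}{r}\|g^k\|^2,
\end{equation*}
where I used $x^{k+1}-x^k = -\gamma g^k$ and $L^2 = \frac{1}{n}\sum_i L_i^2$. Averaging over $c_k \sim \mathrm{Be}(p)$ (the full-gradient branch contributes zero variance), I obtain a recursion of the same functional form as in the proof of Theorem~\ref{thm:main_result_non_cvx}, with $\nicefrac{\omega}{n}$ replaced by $\nicefrac{(1+\omega)}{r}$.

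From this point on, the argument is identical to the \algname{MARINA} analysis. Using $L_f \le L$-smoothness of $f$, a standard descent lemma with the biased estimator gives $f(x^{k+1}) \le f(x^k) - \tfrac{\gamma}{2}\|\nabla f(x^k)\|^2 + \tfrac{\gamma}{2}\|g^k - \nabla f(x^k)\|^2 + L\gamma^2\|g^k\|^2$, and combining it with the variance recursion above yields a Lyapunov function $\Phi_k = f(x^k) - f_* + \frac{c \gamma}{p} \|g^k - \nabla f(x^k)\|^2$ with a carefully chosen constant $c = \Theta(1)$. Choosing $\gamma$ to be the largest value for which the coefficient in front of $\|g^k\|^2$ is non-positive, i.e.\ $\gamma \le L^{-1}(1 + \sqrt{(1-p)(1+\omega)/(pr)})^{-1}$, produces a telescoping bound $\frac{\gamma}{2K}\sum_{k=0}^{K-1}\EE\|\nabla f(x^k)\|^2 \le \frac{\Phi_0}{K}$, and sampling $\hat x^K$ uniformly at random from $\{x^k\}_{k=0}^{K-1}$ delivers the stated iteration count.

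The only genuinely new piece of work is the joint variance decomposition in the second paragraph; the main subtlety is correctly accounting for the independence of the quantization draws across sampled workers (even when the same worker is drawn multiple times in $I_k'$) and ensuring that the sampling contribution $\nicefrac{1}{r}$ and the quantization contribution $\nicefrac{\omega}{r}$ combine cleanly into the announced $\nicefrac{(1+\omega)}{r}$ factor. Once this estimate is in place, no new analytical ideas beyond those of Theorem~\ref{thm:main_result_non_cvx} are required.
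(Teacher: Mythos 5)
Your plan coincides with the paper's proof in all essential respects: the paper bounds $\EE\|g^{k+1}-\nabla f(x^{k+1})\|^2$ by exactly the quantity you derive, splitting each term $\cQ(\Delta_{i_k}^k)-\Delta^k$ into a quantization error and a client-sampling deviation (your law-of-total-variance conditioning on $I_k'$ is the same computation in different order), obtaining the factor $\frac{(1-p)(1+\omega)L^2}{r}\EE\|x^{k+1}-x^k\|^2$ plus the contraction $(1-p)\EE\|g^k-\nabla f(x^k)\|^2$, and then runs the same Lyapunov argument with $\Phi_k=f(x^k)-f_*+\frac{\gamma}{2p}\|g^k-\nabla f(x^k)\|^2$ and the same stepsize restriction. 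Your remark about treating the quantization draws of repeatedly sampled workers as independent also matches the paper's convention.

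The one concrete flaw is the descent inequality you invoke. You state
$f(x^{k+1}) \le f(x^k) - \tfrac{\gamma}{2}\|\nabla f(x^k)\|^2 + \tfrac{\gamma}{2}\|g^k-\nabla f(x^k)\|^2 + L\gamma^2\|g^k\|^2$,
which is true but has discarded the negative quadratic term. With this version, the coefficient multiplying $\|g^k\|^2$ in $\EE[\Phi_{k+1}]$ is $L\gamma^2+\frac{c\gamma(1-p)(1+\omega)L^2\gamma^2}{pr}>0$ for every $\gamma>0$, so the step ``choose $\gamma$ so that the coefficient in front of $\|g^k\|^2$ is non-positive'' cannot be carried out as written. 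The paper instead uses Lemma~\ref{lem:lemma_2_page}, i.e.
$f(x^{k+1}) \le f(x^k) - \tfrac{\gamma}{2}\|\nabla f(x^k)\|^2 - \left(\tfrac{1}{2\gamma}-\tfrac{L}{2}\right)\|x^{k+1}-x^k\|^2 + \tfrac{\gamma}{2}\|g^k-\nabla f(x^k)\|^2$,
whose retained term $-\left(\tfrac{1}{2\gamma}-\tfrac{L}{2}\right)\gamma^2\|g^k\|^2$ is precisely what absorbs the variance contribution $\frac{\gamma(1-p)(1+\omega)L^2}{2pr}\gamma^2\|g^k\|^2$ under the stated condition $\gamma \le L^{-1}\left(1+\sqrt{\nicefrac{(1-p)(1+\omega)}{(pr)}}\right)^{-1}$. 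Replacing your loosened inequality by this sharper one (no new idea is needed; it is the same lemma you implicitly appeal to) repairs the argument, and the rest of your proposal then goes through exactly as in the paper.
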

One can find the full statement of the theorem together with its proof in Section~\ref{sec:proof_of_thm_non_cvx_pp} of the appendix.
\begin{corollary}\label{cor:main_result_non_cvx_pp}
	Let the assumptions of Theorem~\ref{thm:main_result_non_cvx_pp} hold and choose $p = \nicefrac{\zeta_{\cQ}r}{(dn)}$, where $r\le n$. If $\gamma \le L^{-1}\left(1 + \sqrt{\nicefrac{(1+\omega)(dn-\zeta_{\cQ}r)}{(b'\zeta_{\cQ}r)}}\right)^{-1}$, 
%	\begin{equation*}
%		\gamma \le \frac{1}{L\left(1 + \sqrt{\frac{1+\omega}{b'}\left(\frac{dn}{\zeta_{\cQ}r}-1\right)}\right)},
%	\end{equation*}
	then \algname{PP-MARINA} requires 
	\begin{equation*}
	\squeeze	\cO\left(\frac{\Delta_0 L}{\varepsilon^2}\left(1 + \sqrt{\frac{1+\omega}{r}\left(\frac{dn}{\zeta_{\cQ}r}-1\right)}\right)\right)
	\end{equation*}
	iterations/communication rounds to achieve $\EE[\|\nabla f(\hat x^K)\|^2] \le \varepsilon^2$, and the expected total communication cost is $\cO\left(dn + \zeta_{\cQ}rK\right)$.
%	\begin{equation*}
%		\cO\left(dn+\frac{\Delta_0 L}{\varepsilon^2}\left(\zeta_{\cQ}r + \sqrt{(1+\omega)\zeta_{\cQ}\left(dn-\zeta_{\cQ}r\right)}\right)\right)
%	\end{equation*}
%	under assumption that the communication cost is proportional to the number of non-zero components of transmitted vectors from workers to the server.
\end{corollary}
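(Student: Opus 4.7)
The plan is to derive the corollary directly from Theorem~\ref{thm:main_result_non_cvx_pp} by substituting $p = \zeta_{\cQ}r/(dn)$ into its iteration and stepsize bounds, and then to estimate the expected total number of coordinates transmitted. First I would verify that this choice is admissible: since $\|\cQ(x)\|_0 \le d$ pointwise implies $\zeta_{\cQ}\le d$, and since $r \le n$ by assumption, we have $p = \zeta_{\cQ}r/(dn) \in (0,1]$, as required by Theorem~\ref{thm:main_result_non_cvx_pp}.

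Next I would simplify the key quantity $\sqrt{(1-p)(1+\omega)/(pr)}$ that appears in both the stepsize condition and the iteration count. With the chosen $p$, one has $1-p = (dn-\zeta_{\cQ}r)/(dn)$ and $pr = \zeta_{\cQ}r^2/(dn)$, whence
\begin{equation*}
\frac{(1-p)(1+\omega)}{pr} \;=\; \frac{(1+\omega)(dn-\zeta_{\cQ}r)}{\zeta_{\cQ}r^2} \;=\; \frac{1+\omega}{r}\left(\frac{dn}{\zeta_{\cQ}r}-1\right).
\end{equation*}
Substituting this identity back into the bounds of Theorem~\ref{thm:main_result_non_cvx_pp} reproduces the stepsize condition stated in the corollary and yields exactly the $\cO$-expression claimed for the number of iterations needed to reach $\EE[\|\nabla f(\hat x^K)\|^2] \le \varepsilon^2$.

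It then remains to estimate the expected total communication. At each iteration $k$, with probability $p$ all $n$ clients transmit a full dense gradient to the server (cost $dn$ coordinates), while with probability $1-p$ only $r$ sampled clients send quantized differences whose expected density is $\zeta_{\cQ}$ per client (expected cost $\zeta_{\cQ}r$ coordinates). Hence the expected per-iteration communication is
\begin{equation*}
p\cdot dn + (1-p)\cdot \zeta_{\cQ}r \;=\; \zeta_{\cQ}r + (1-p)\,\zeta_{\cQ}r \;\le\; 2\zeta_{\cQ}r,
\end{equation*}
using the identity $p\cdot dn = \zeta_{\cQ}r$ that motivates the specific choice of $p$. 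Adding the one-time $\cO(dn)$ cost of forming $g^0 = \nabla f(x^0)$ and multiplying the per-iteration estimate by $K$ yields the advertised total expected communication $\cO(dn + \zeta_{\cQ}r K)$.

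The argument is entirely mechanical and requires no new convergence analysis, since Theorem~\ref{thm:main_result_non_cvx_pp} already supplies the iteration bound for arbitrary $p\in(0,1]$; the only pitfall is keeping the algebra straight during the substitution. Conceptually, the choice $p = \zeta_{\cQ}r/(dn)$ is precisely the one that balances the dense and compressed communication regimes so that the expected per-iteration cost collapses to $\Theta(\zeta_{\cQ}r)$, which is what drives the $\zeta_{\cQ}rK$ term in the total and makes the bound tight.
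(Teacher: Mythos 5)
Your proposal is correct and takes essentially the same route as the paper's own proof: substitute $p = \nicefrac{\zeta_{\cQ}r}{(dn)}$ into the stepsize, iteration, and communication bounds of Theorem~\ref{thm:main_result_non_cvx_pp}, using the identity $\nicefrac{(1-p)}{p} = \nicefrac{dn}{(\zeta_{\cQ}r)}-1$ and the per-iteration bound $p\,dn + (1-p)\zeta_{\cQ}r \le 2\zeta_{\cQ}r$, exactly as the paper does. The only discrepancy is immaterial: the $b'$ in the main-text stepsize condition is a typo for $r$ (the appendix version of the corollary has $\nicefrac{(1+\omega)(dn-\zeta_{\cQ}r)}{(\zeta_{\cQ}r^2)}$ under the square root), and your algebra matches that intended expression.
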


When $r=n$, i.e., all clients participate in communication with the server at each iteration, the rate for \algname{PP-MARINA} recovers the rate for \algname{MARINA} under the assumption that $(1+\omega)(\nicefrac{d}{\zeta_{\cQ}}-1) = \cO(\omega(\nicefrac{d}{\zeta_{\cQ}}-1))$, which holds for a wide class of quantization operators, e.g., for identical quantization, RandK, and $\ell_p$-quantization. In general, the derived complexity is strictly better than previous state-of-the-art one (see Table~\ref{tab:comparison}).

We provide the convergence results for \algname{PP-MARINA} under the Polyak-{\L}ojasiewicz condition, together with complete proofs, in Section~\ref{sec:proof_of_thm_pl_pp} of the Appendix.

\section{Numerical Experiments}
\subsection{Binary Classification with Non-Convex Loss}\label{sec:experiments}
We conduct several numerical experiments\footnote{Our code is available at \url{https://github.com/burlachenkok/marina}.} on binary classification problem involving non-convex loss \cite{zhao2010convex} (used for two-layer neural networks) with LibSVM data \cite{chang2011libsvm} to justify the theoretical claims of the paper. That is, we consider the following optimization problem:
\begin{equation}
	\min\limits_{x\in\R^d}\left\{f(x) = \frac{1}{N}\sum\limits_{t=1}^N \ell(a_t^\top x, y_i)\right\},\label{eq:experiment_problem}
\end{equation}
where $\{a_t\} \in \R^d$, $y_i\in\{-1,1\}$ for all $t=1,\ldots,N$, and the function $\ell:\R^d \to \R$ is defined as
\begin{equation*}
	\ell(b,c) = \left(1 - \frac{1}{1+\exp(-bc)}\right)^2.
\end{equation*}
The distributed environment is simulated in Python 3.8 using \textsc{mpi4py} and other standard libraries. Additional details about the experimental setup together with extra experiments are deferred to Section~\ref{sec:extra_experiments} of the Appendix.

In our experiments, we compare \algname{MARINA} with the full-batch version of \algname{DIANA}, and then \algname{VR-MARINA} with \algname{VR-DIANA}. We exclude \algname{FedCOMGATE} and \algname{FedPATH} from this comparison since they have significantly worse oracle complexities (see Table~\ref{tab:comparison}). The results are presented in Fig.~\ref{fig:mushrooms_w8a_main}. As our theory predicts, the first row shows the superiority of \algname{MARINA} to \algname{DIANA} both in terms of iteration/communication complexity and the total number of transmitted bits to achieve the given accuracy. Next, to study the oracle complexity as well, we consider non-full-batched methods -- \algname{VR-MARINA} and \algname{VR-DIANA} -- since they have better oracle complexity than the full-batched methods in the finite-sum case. Again, the results presented in the second row justify that \algname{VR-MARINA} outperforms \algname{VR-DIANA} in terms of oracle complexity and the total number of transmitted bits to achieve the given accuracy.

\subsection{Image Classification}\label{sec:NN_experiments}
We also compared the performance of \algname{VR-MARINA} and \algname{VR-DIANA} on the training {\tt ResNet-18} \cite{he2016deep} at {\tt CIFAR100} \cite{krizhevsky2009learning} dataset. Formally, the optimization problem is
\begin{equation}
	\min\limits_{x\in \R^d}\left\{f(x) = \frac{1}{N}\sum\limits_{i=1}^N\ell(p(f(a_i, x)), y_i)\right\}, \label{eq:NN_problem}
\end{equation} 
where $\{(a_i,y_i)\}_{i=1}^N$ encode images and labels from {\tt CIFAR100} dataset, $f(a_i,x)$ is the output of {\tt ResNet-18} on image $a_i$ with weights $x$, $p$ is softmax function, and $\ell(\cdot,\cdot)$ is cross-entropy loss. The code is wrtitten in Python 3.9 using {\tt PyTorch 1.7}, and the distributed environment is simulated.

The results are presented in Fig.~\ref{fig:resnet_at_cifar100}. Again, \algname{VR-MARINA} converges significantly faster than \algname{VR-DIANA} both in terms of the oracle complexity and the total number of transmitted bits to achieve the given accuracy. See other details and observations in Section~\ref{sec:extra_experiments} of the Appendix.
\begin{figure}[h]
\centering
\includegraphics[width=0.23\textwidth]{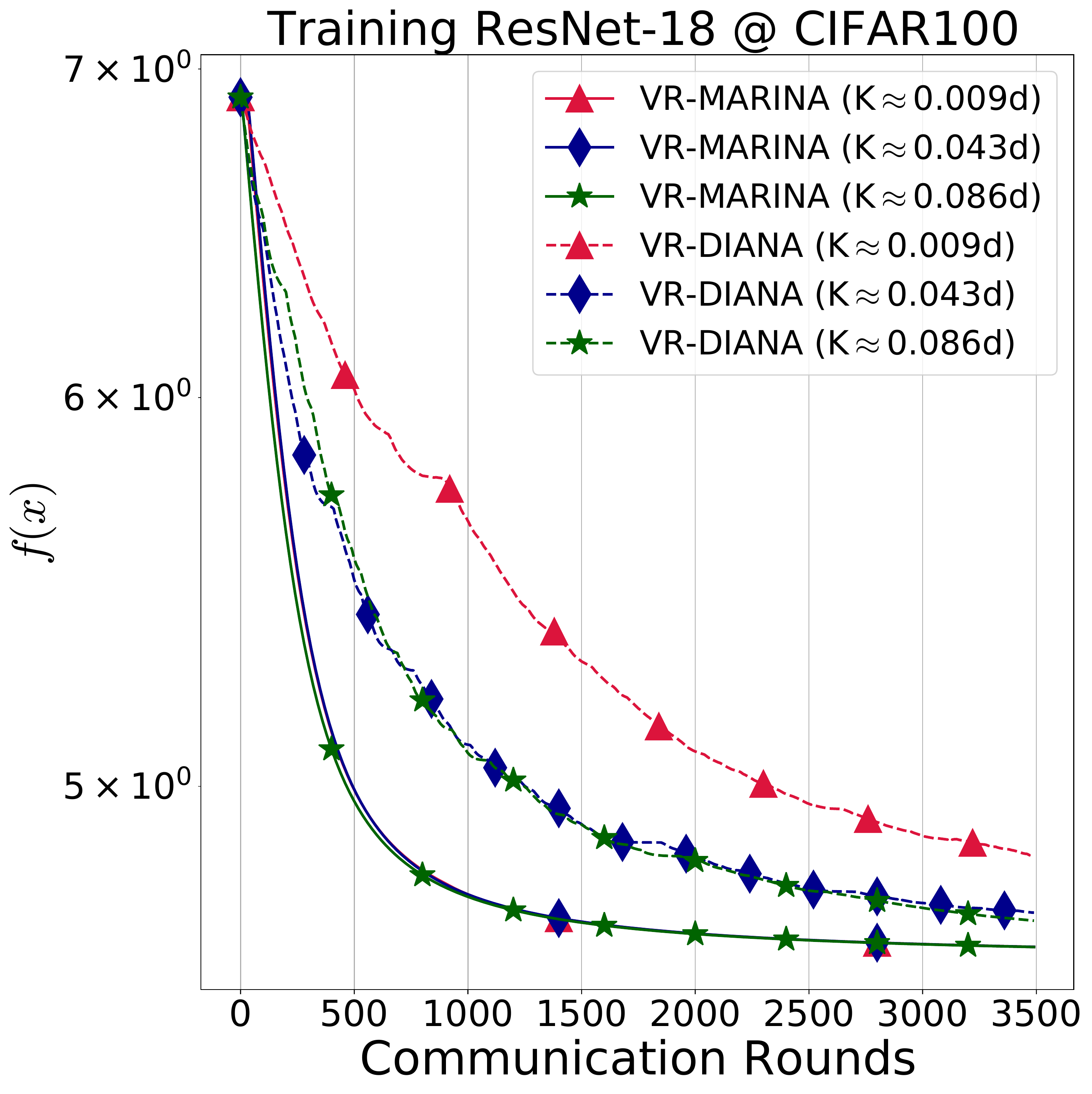}
\includegraphics[width=0.23\textwidth]{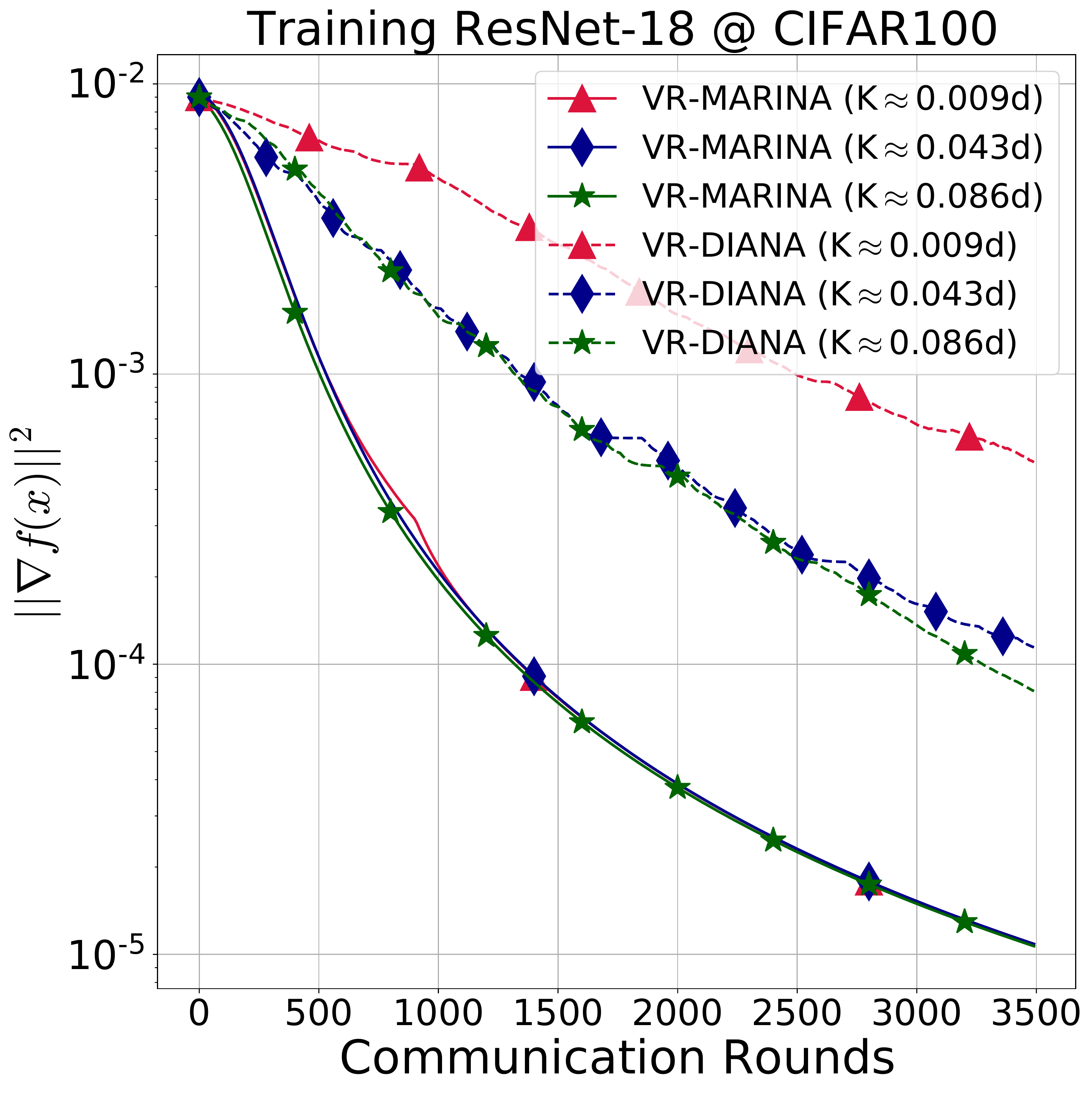}
\includegraphics[width=0.23\textwidth]{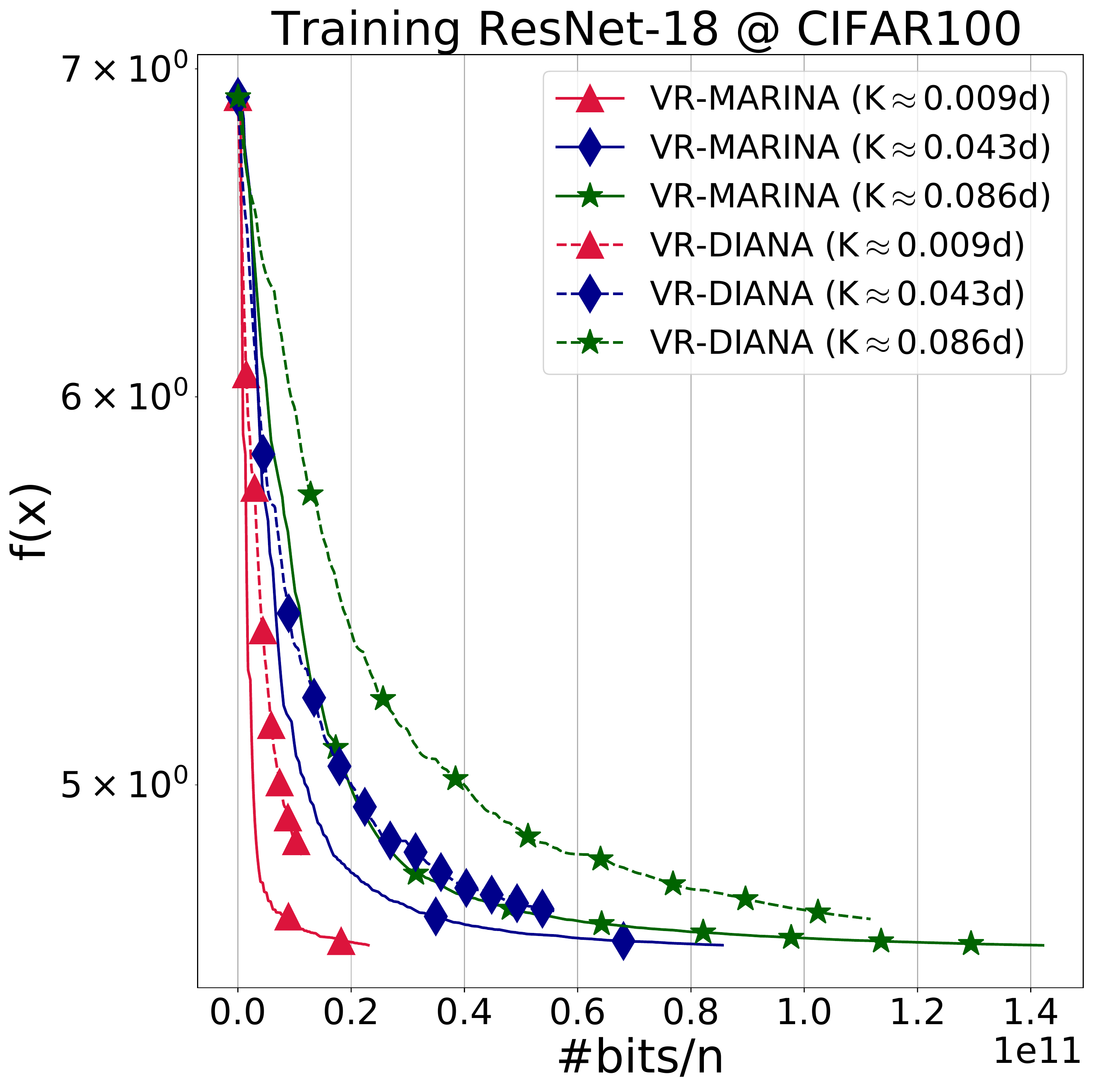}
\includegraphics[width=0.23\textwidth]{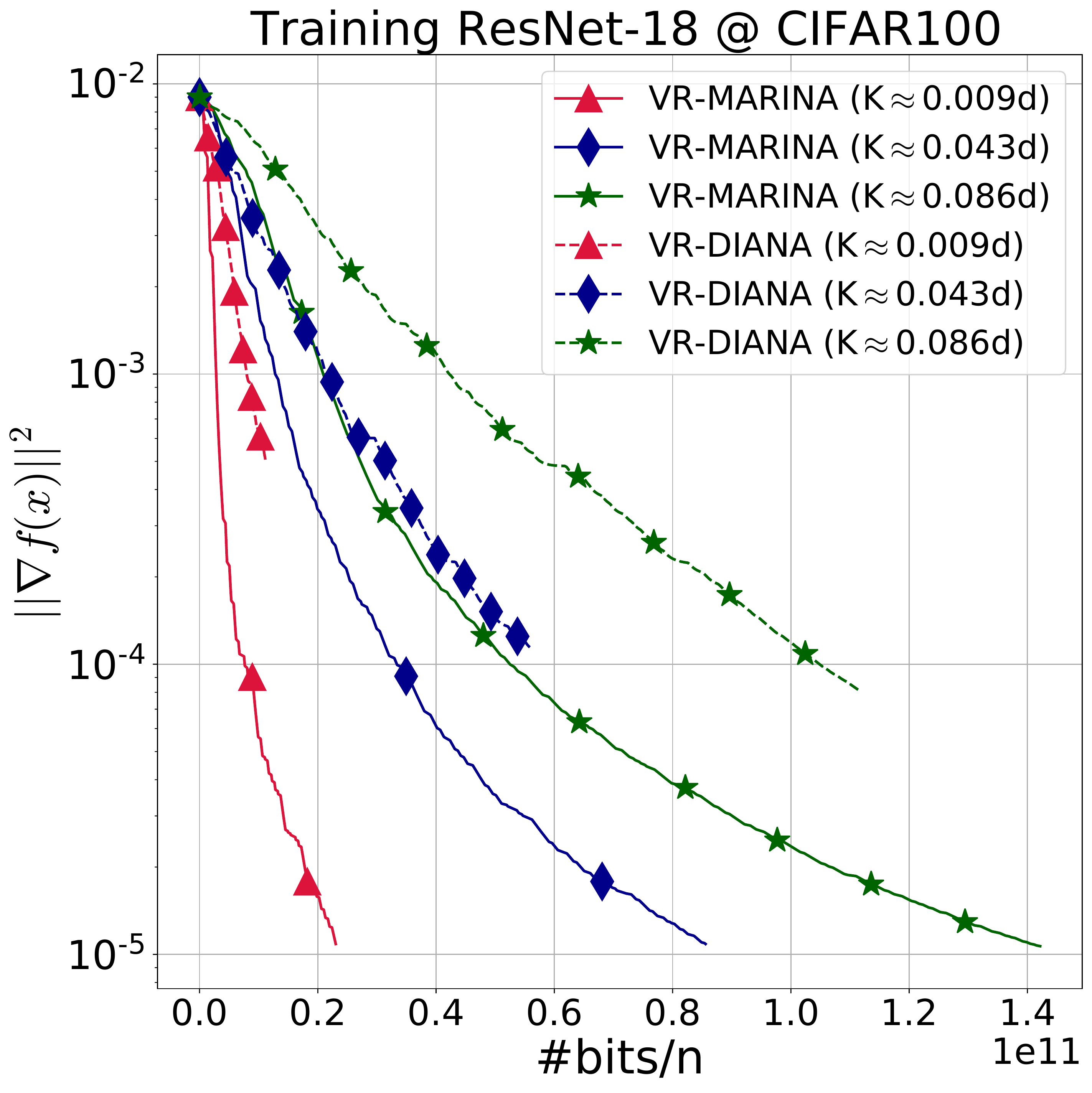}
\includegraphics[width=0.23\textwidth]{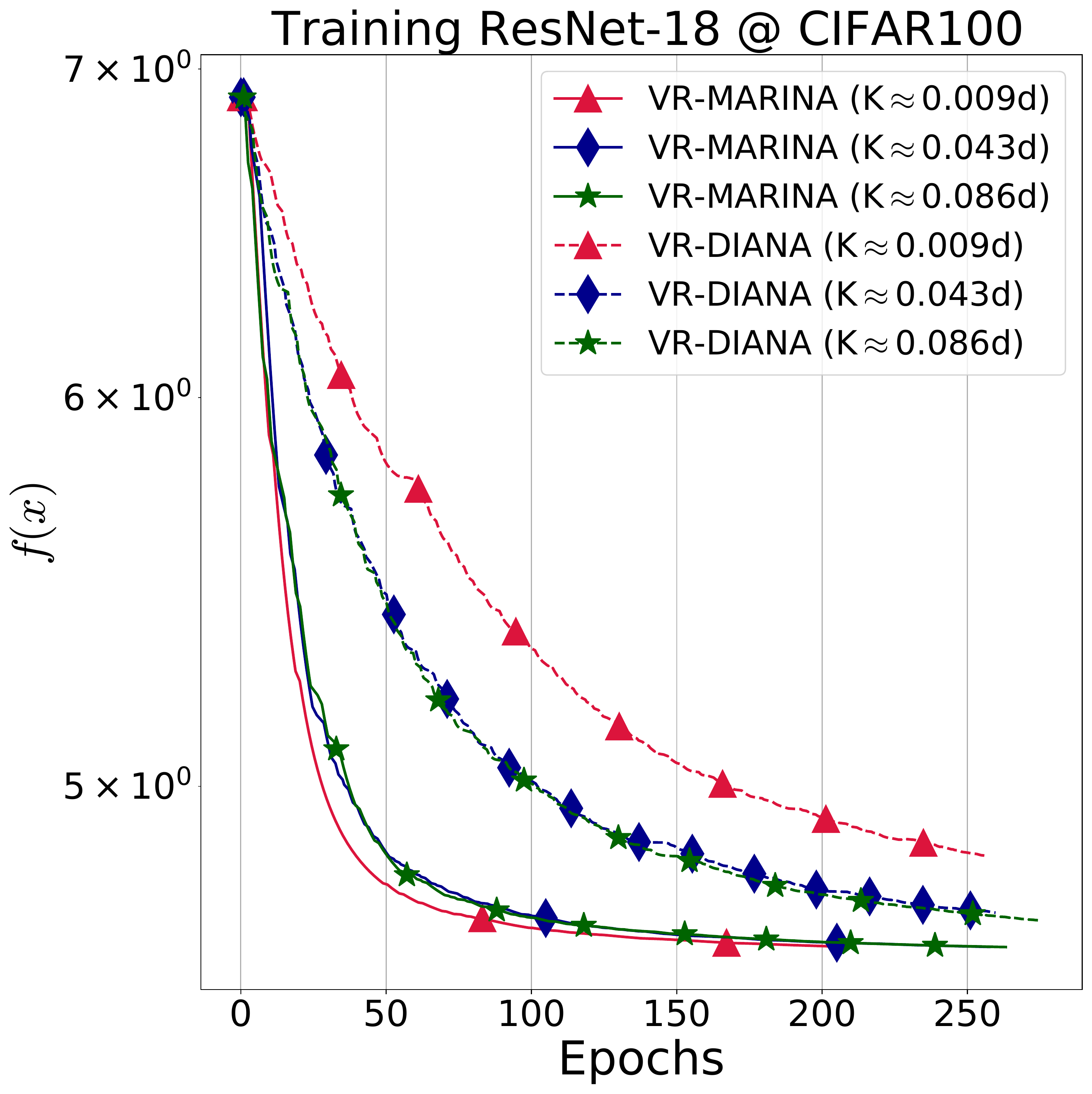}
\includegraphics[width=0.23\textwidth]{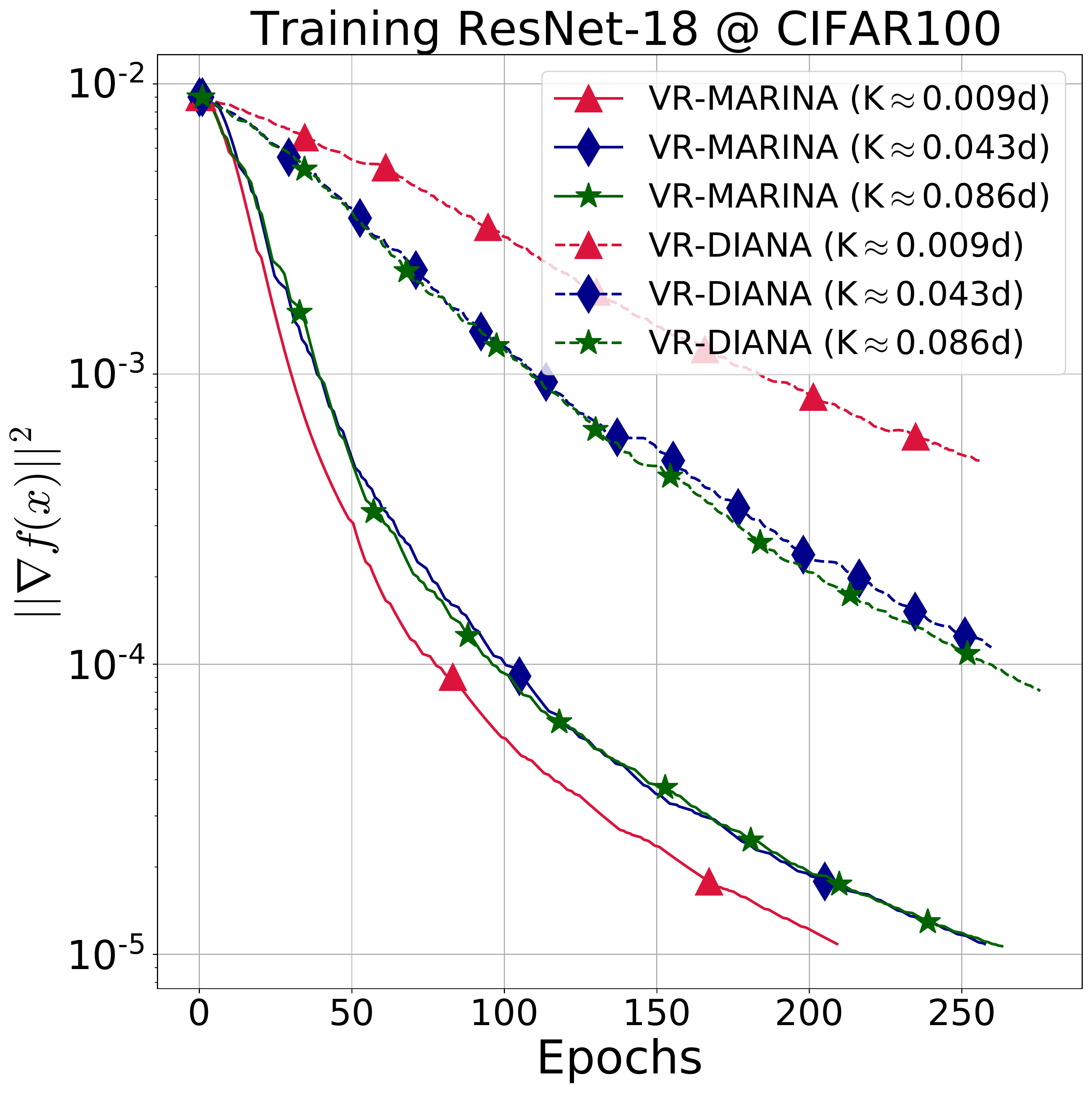}
\caption{Comparison of \algname{VR-MARINA} with \algname{VR-DIANA} on training {\tt ResNet-18} at {\tt CIFAR100} dataset. Number of workers equals $5$. Stepsizes for the methods were tuned and the batchsizes are $\sim \nicefrac{m}{50}$. In all cases, we used the RandK sparsification operator, the approximate values of $K$ are given in the legends ($d$ is dimension of the problem).}
\label{fig:resnet_at_cifar100}
\end{figure}

\section*{Acknowledgements}
The work of Peter Richt\'{a}rik, Eduard Gorbunov, Konstantin Burlachenko and Zhize Li was supported by KAUST Baseline Research Fund. The paper was written while E.~Gorbunov was a research intern at KAUST. The work of E.~Gorbunov in Sections~\ref{sec:intro},~\ref{sec:marina},~and~\ref{sec:marina_proofs} was also partially supported by the Ministry of Science and Higher Education of the Russian Federation (Goszadaniye) 075-00337-20-03, project No. 0714-2020-0005, and in Sections~\ref{sec:vr},~\ref{sec:pp},~\ref{sec:missing_proofs},~\ref{sec:pp_marina_proofs} -- by RFBR, project number 19-31-51001.
We thank Konstantin Mishchenko (KAUST) for a suggestion related to the experiments, Elena Bazanova (MIPT) for the suggestions about improving the text, and Slavom{\'{i}}r Hanzely (KAUST), Egor Shulgin (KAUST), and Alexander Tyurin (KAUST) for spotting the typos.

%\section{Conclusion}

% In the unusual situation where you want a paper to appear in the
% references without citing it in the main text, use \nocite
%\nocite{langley00}

\bibliography{refs}
\bibliographystyle{icml2021}

\clearpage
\onecolumn

\part*{Appendix}

\appendix

%\tableofcontents

%\section{A Brief Review of Other Approaches to Achieving Communication Efficiency}
%
%Besides communication compression, there are also different techniques aiming to reduce the overall communication cost of distributed methods. The most popular ones are based on decentralized communications and multiple local steps between communication rounds, where the second technique is very popular in Federated Learning \cite{konecny2016federated,kairouz2019advances}. One can find the state-of-the-art distributed optimization methods using these techniques and their combinations in \cite{lian2017can,karimireddy2020scaffold,li2019communication,koloskova2020unified}. Moreover, there exist results based on the combinations of communication compression with either decentralized communication, e.g., \algname{Choco-SGD} \cite{KoloskovaLSJ19decentralized}, or local updates, e.g., \algname{Qsparse-Local-SGD} \cite{basu2019qsparse}, \algname{FedCOMGATE} \cite{haddadpour2020federated}, \algname{FedSTEPH} \cite{das2020improved}, where in \cite{basu2019qsparse} the convergence rates were derived under assumption that the stochastic gradient has uniformly bounded second moment and the results for \algname{Choco-SGD}, \algname{FedCOMGATE}, \algname{FedSTEPH} were described either earlier in the text, or in Table~\ref{tab:comparison}.
%
%\clearpage
\section{Extra Experiments}\label{sec:extra_experiments}
\subsection{Binary Classification with Non-Convex Loss}
\subsubsection{Setup}
%In Section~\ref{sec:experiments}, we present the behavior of \algname{MARINA}, \algname{VR-MARINA}, \algname{DIANA}, and \algname{VR-DIANA} on the binary classification problem involving non-convex loss \cite{zhao2010convex}:
%\begin{equation}
%	\min\limits_{x\in\R^d}\left\{f(x) = \frac{1}{N}\sum\limits_{t=1}^N \ell(a_t^\top x, y_i) + \frac{\lambda}{2}\|x\|^2\right\},\label{eq:experiment_problem}
%\end{equation}
%where $\{a_t\} \in \R^d$, $y_i\in\{-1,1\}$ for all $t=1,\ldots,N$, $\lambda \ge 0$ is $\ell_2$-regularization parameter, and the function $\ell:\R^d \to \R$ is defined as
%\begin{equation*}
%	\ell(b,c) = \left(1 - \frac{1}{1+\exp(-bc)}\right)^2.
%\end{equation*}

In Section~\ref{sec:experiments}, we present the behavior of \algname{MARINA}, \algname{VR-MARINA}, \algname{DIANA}, and \algname{VR-DIANA} on the binary classification problem involving non-convex loss \cite{zhao2010convex}. The datasets were taken from LibSVM \cite{chang2011libsvm} and split into five equal parts among five clients (we excluded $N - 5\cdot\lfloor\nicefrac{N}{5}\rfloor$ last datapoints from each dataset), see the summary in Table~\ref{tbl:ns}.
\begin{table}[!h]
 \caption{Summary of the datasets and splitting of the data among clients (Figure~\ref{fig:mushrooms_w8a_main}).}
\label{tbl:ns}
\begin{center}
\begin{tabular}{|c|c|c|c|}
\hline
Dataset  & $n$ & $N$ (\# of datapoints) & $d$ (\# of features)   \\
 \hline
  \hline
\texttt{mushrooms} & 5 & 8 120 & 112   \\ \hline
\texttt{w8a} & 5  &49 745 & 300  \\ \hline
\texttt{phishing} & 5  &11 055 & 69  \\ \hline
\texttt{a9a} & 5  &32 560 & 124  \\ \hline
\end{tabular}
\end{center}
\end{table}

The code was written in Python 3.8 using \textsc{mpi4py} to emulate the distributed environment and then was executed on a machine with 48 cores, each is Intel(R) Xeon(R) Gold 6246 CPU 3.30GHz.

\subsubsection{Extra Experiments}
In this section, we provide additional numerical results on the comparison of \algname{MARINA}, \algname{VR-MARINA}, \algname{DIANA}, and \algname{VR-DIANA} on the problem \eqref{eq:experiment_problem}. Since one of the main goals of our experiments is to justify the theoretical findings of the paper, in the experiments, we used the stepsizes from the corresponding theoretical results for the methods (for \algname{DIANA} and \algname{VR-DIANA} the stepsizes were chosen according to \cite{horvath2019stochastic,li2020unified}). Next, to compute the stochastic gradients, we use batchsizes $= \max\{1, \nicefrac{m}{100}\}$ for \algname{VR-MARINA} and \algname{VR-DIANA}.

The results for the full-batched methods are reported in Figure~\ref{fig:full_batched_methods}, and the comparison of \algname{VR-MARINA} and \algname{VR-DIANA} is given in Figure~\ref{fig:vr_methods}. Clearly, in both cases, \algname{MARINA} and \algname{VR-MARINA} show faster convergence than the previous state-of-the-art methods, \algname{DIANA} and \algname{VR-DIANA}, for distributed non-convex optimization with compression in terms of $\|\nabla f(x^k)\|^2$ and $f(x^k)$ decrease w.r.t.\ the number of communication rounds, oracle calls per node and the total number of transferred bits from workers to the master.

\begin{figure*}[h!]
\centering
\includegraphics[width=0.24\textwidth]{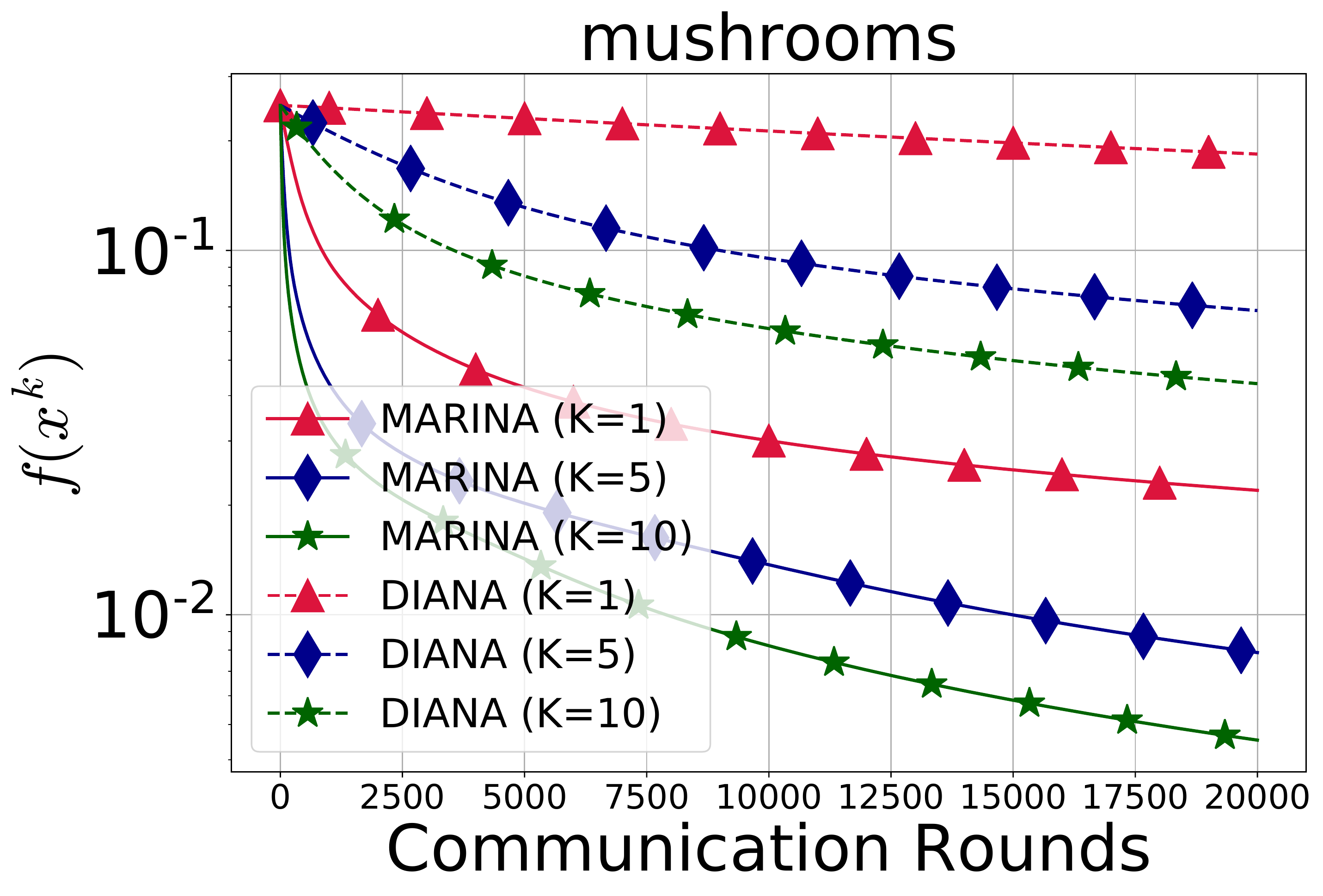}
\includegraphics[width=0.24\textwidth]{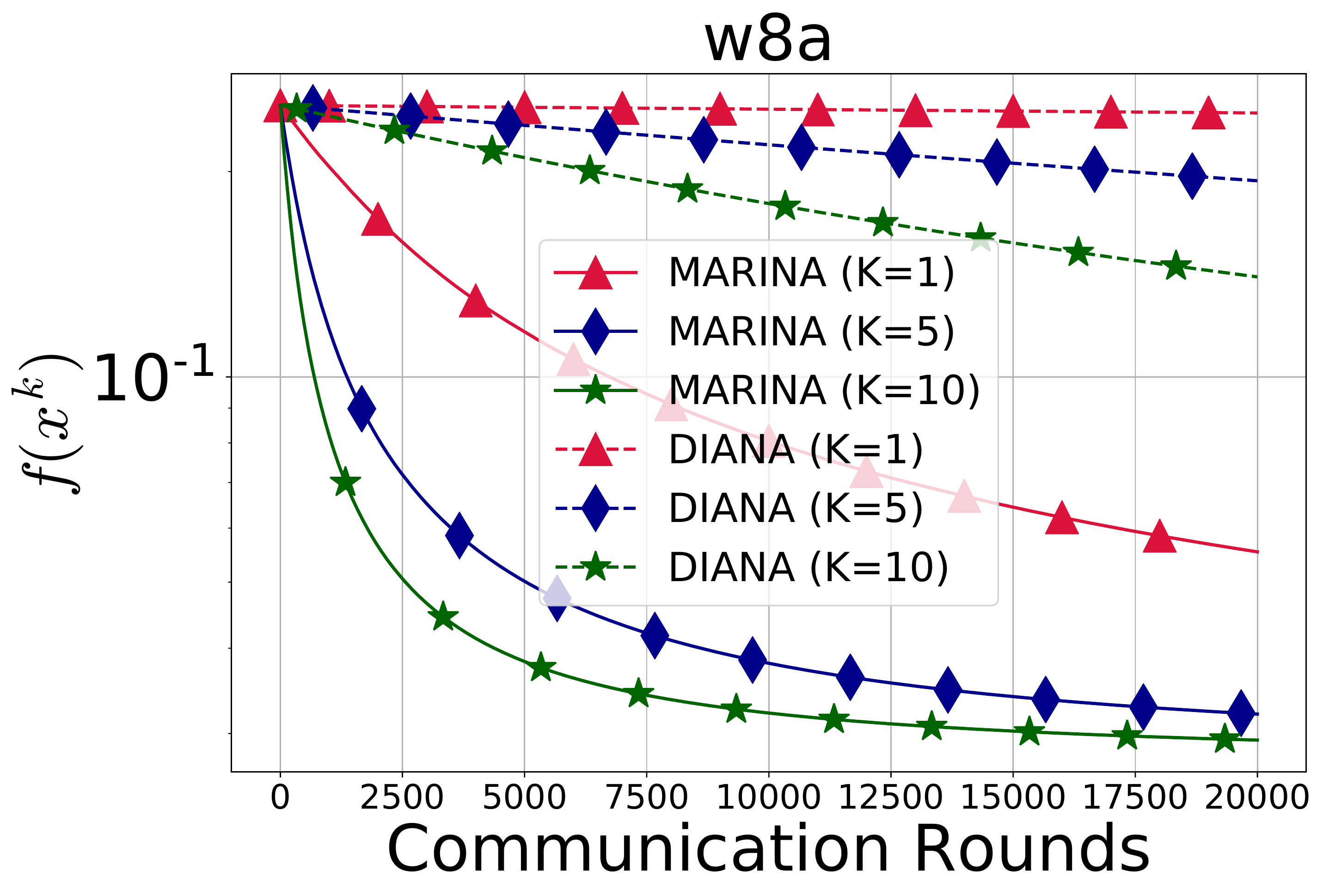}
\includegraphics[width=0.24\textwidth]{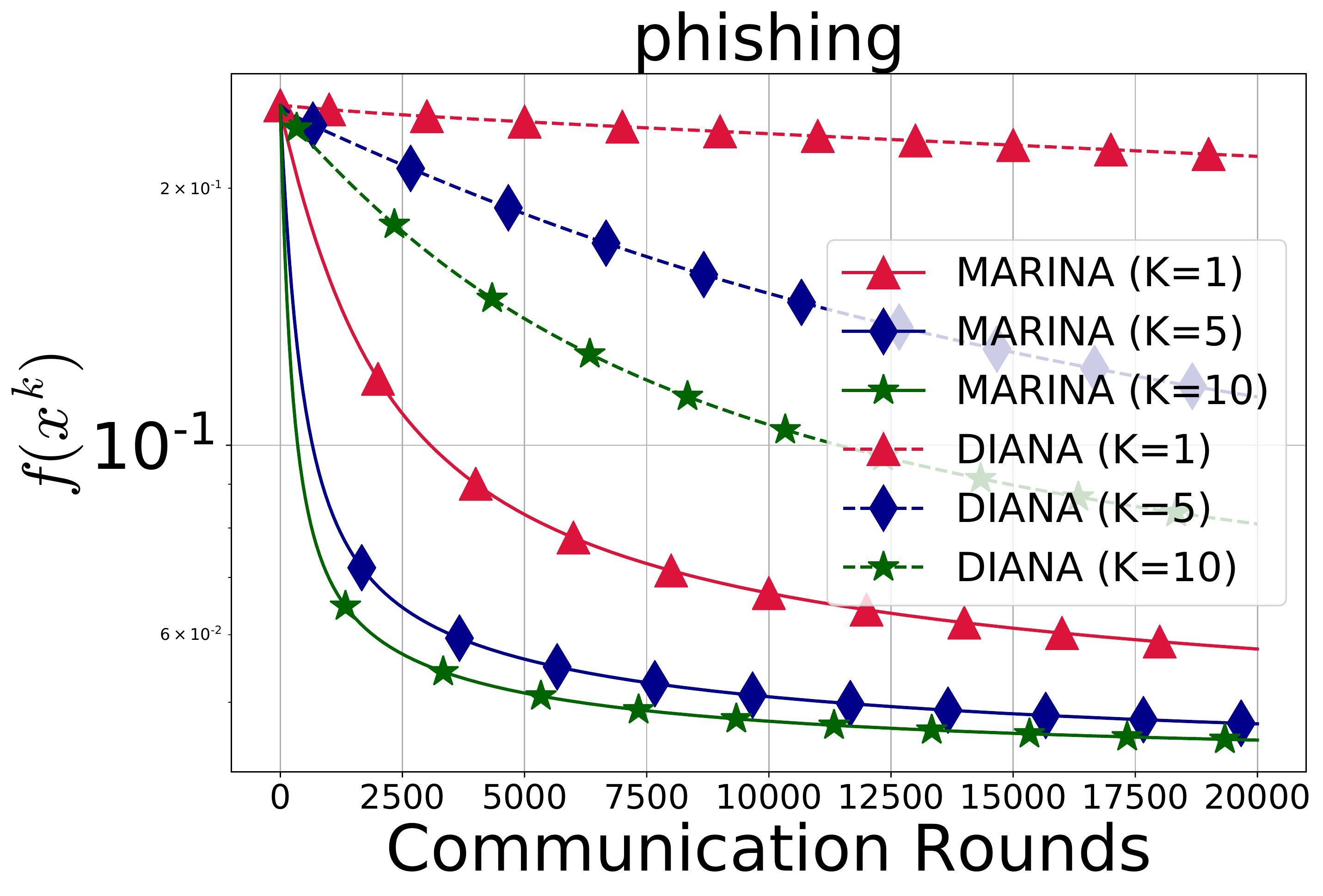}
\includegraphics[width=0.24\textwidth]{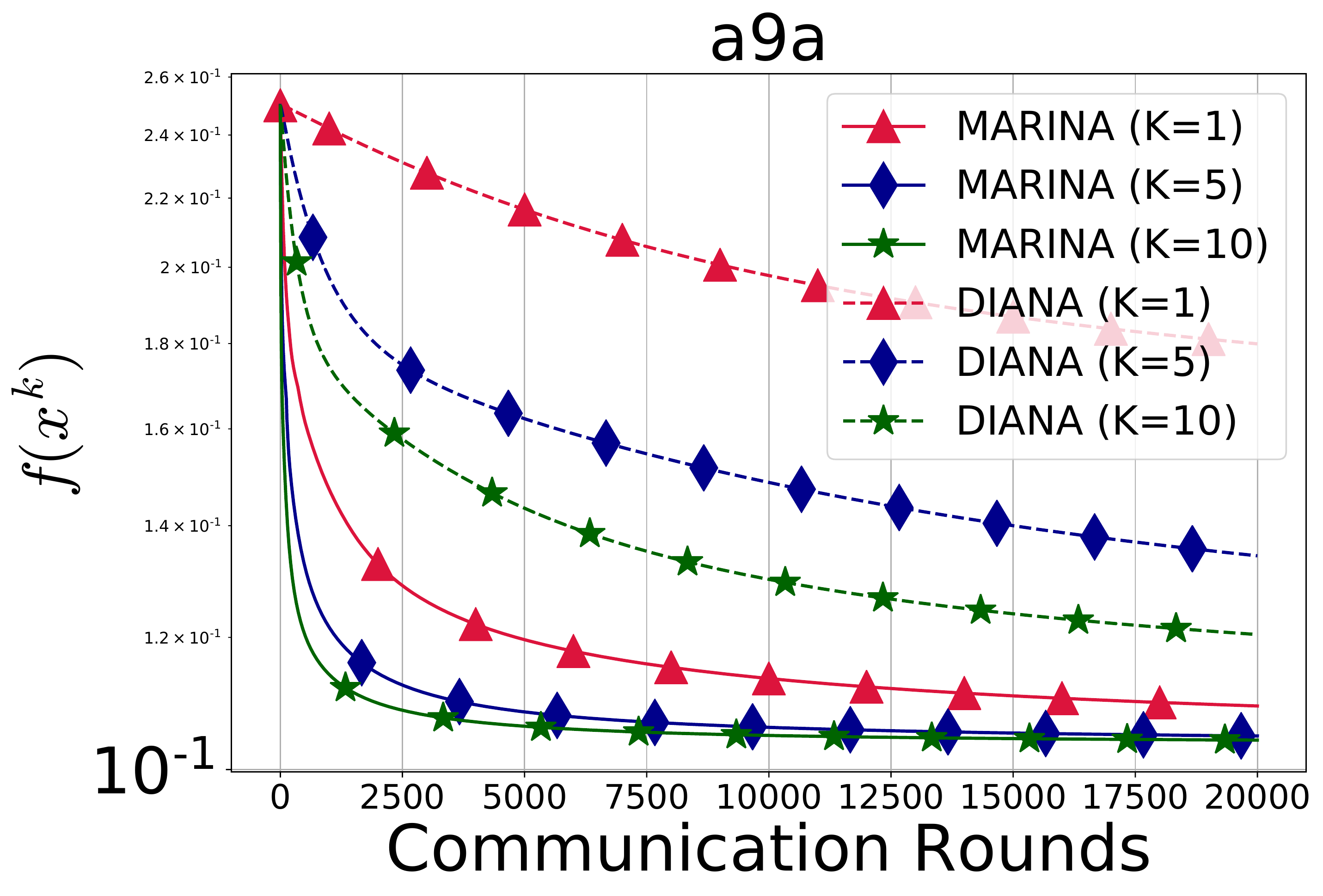}
\includegraphics[width=0.24\textwidth]{mushrooms_grad_norm_iters_marina_diana.pdf}
\includegraphics[width=0.24\textwidth]{w8a_grad_norm_iters_marina_diana.pdf}
\includegraphics[width=0.24\textwidth]{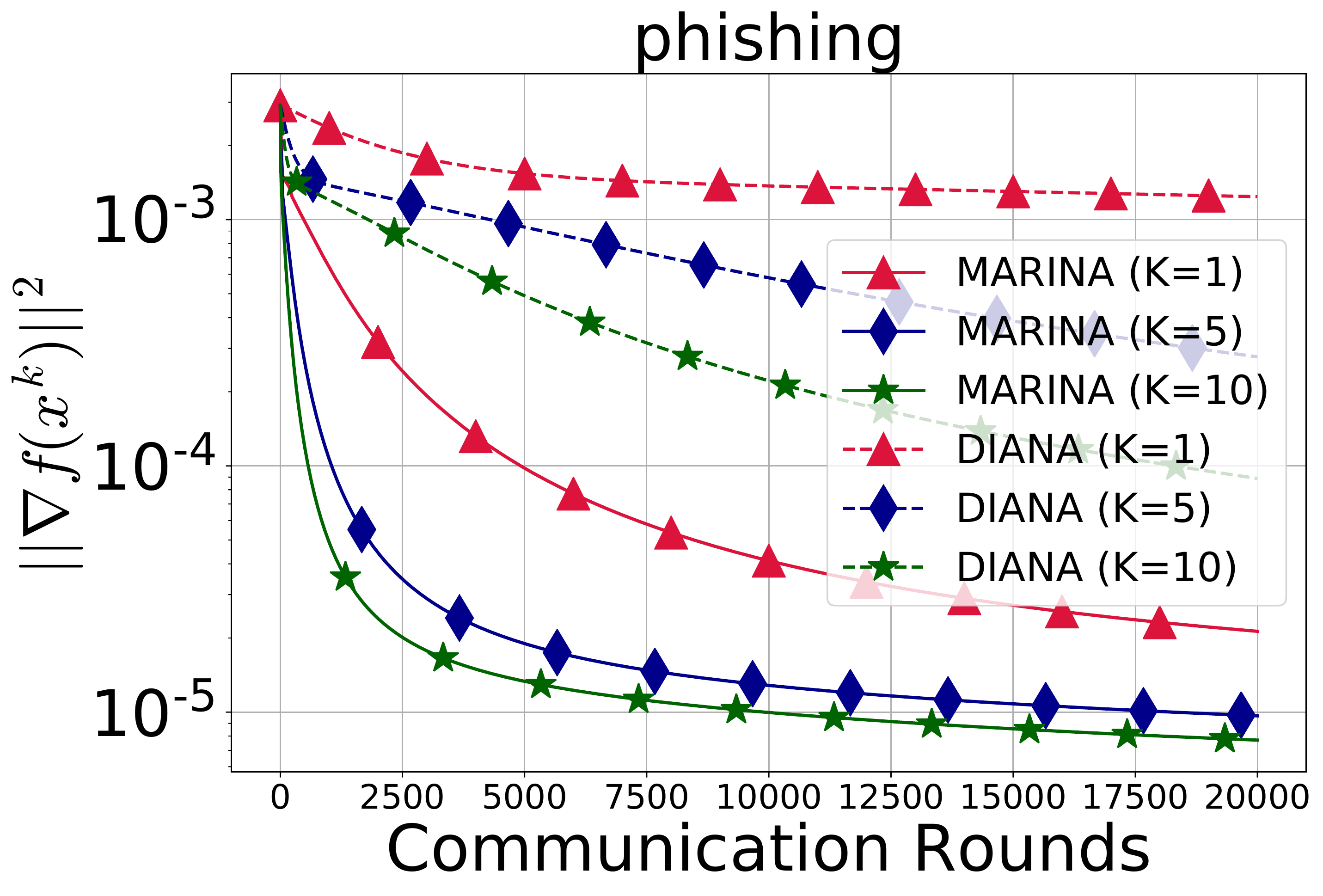}
\includegraphics[width=0.24\textwidth]{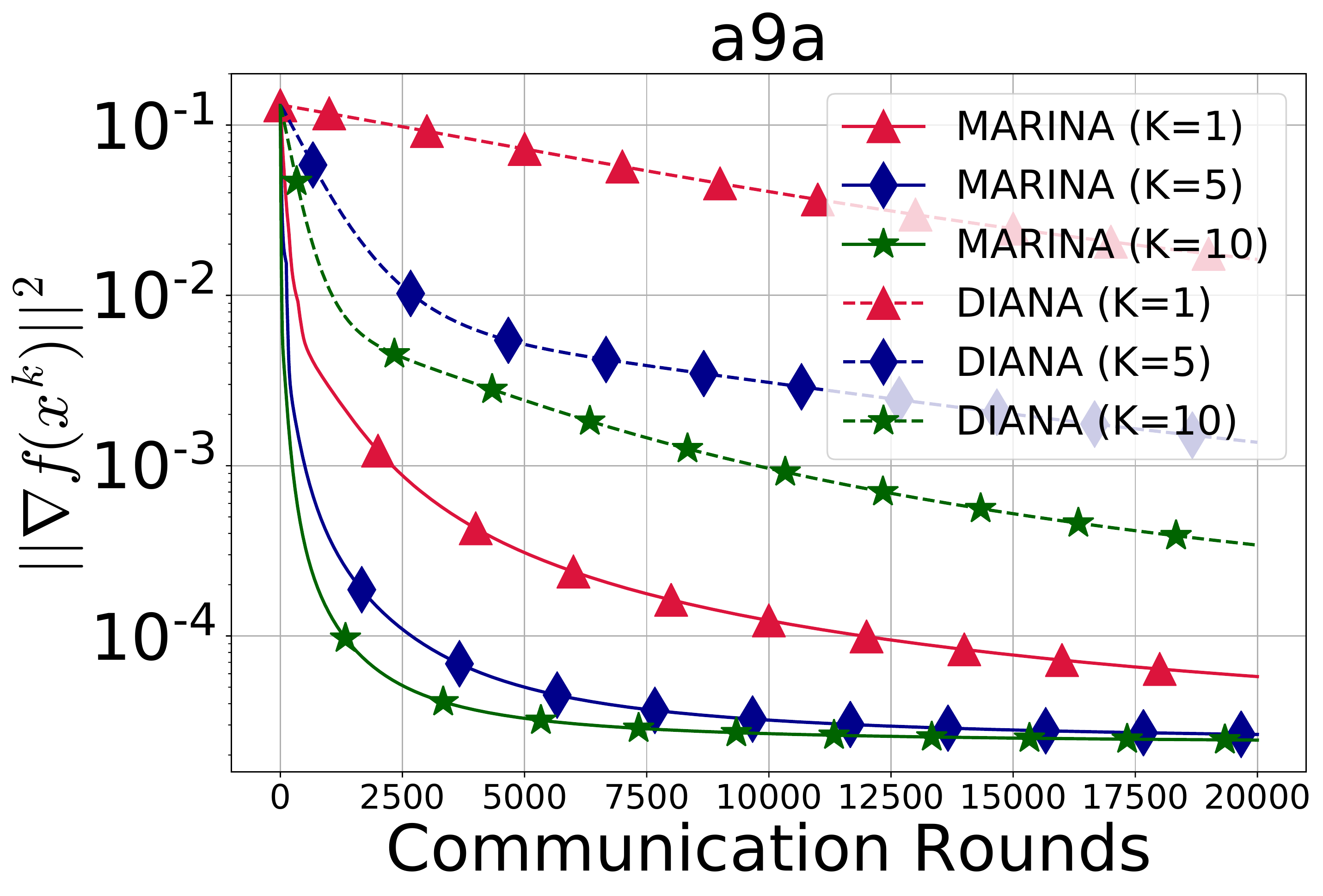}
\includegraphics[width=0.24\textwidth]{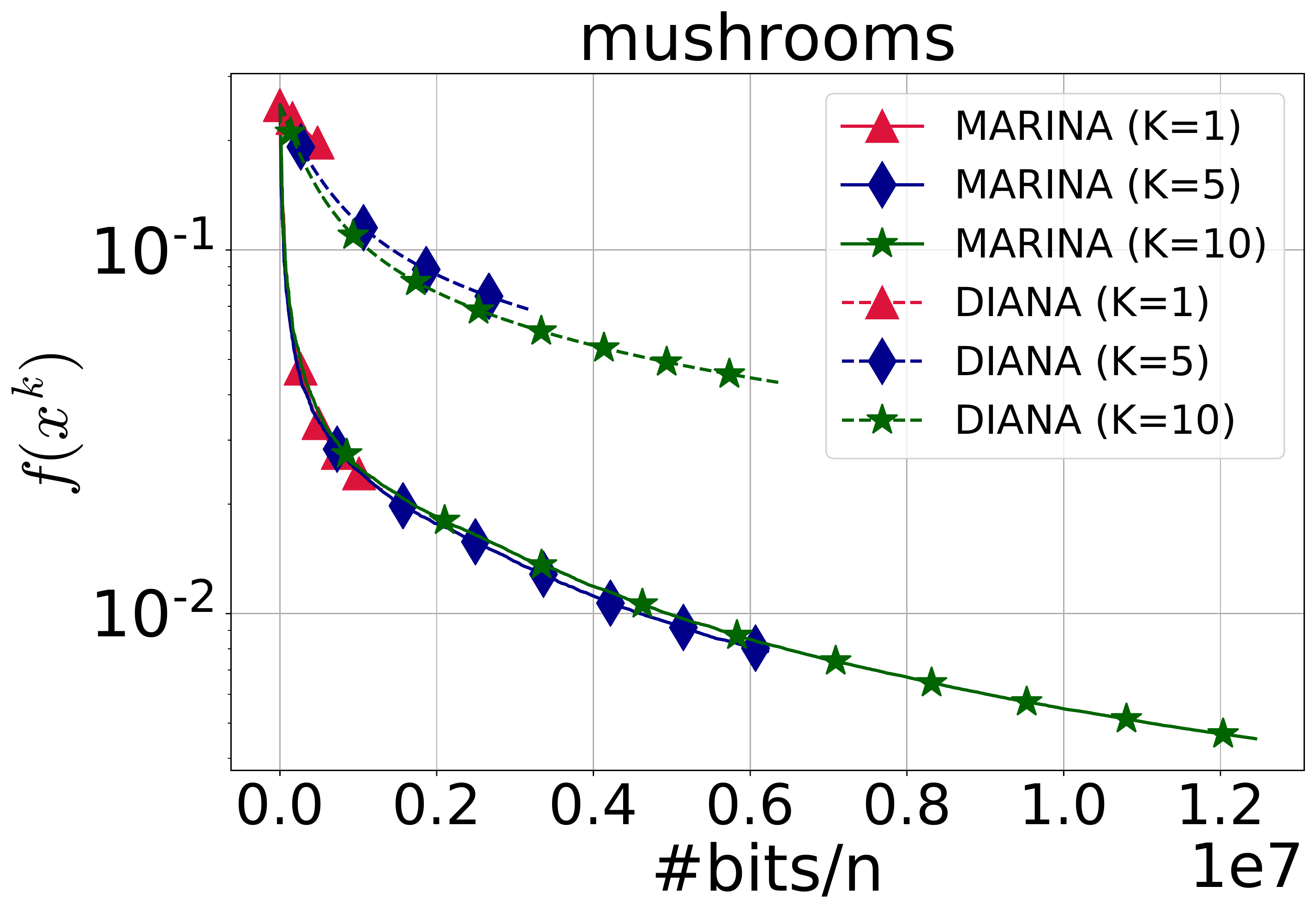}
\includegraphics[width=0.24\textwidth]{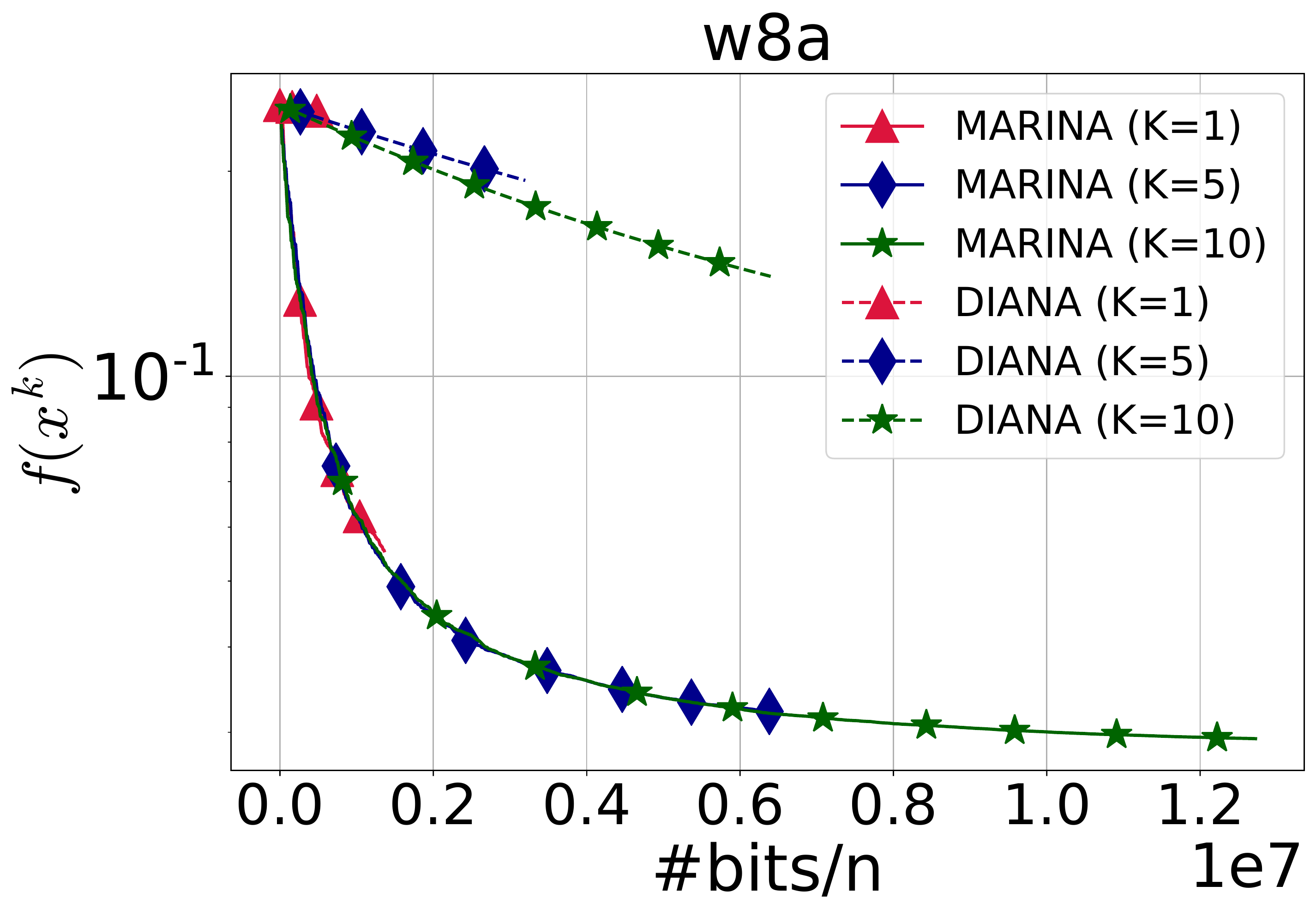}
\includegraphics[width=0.24\textwidth]{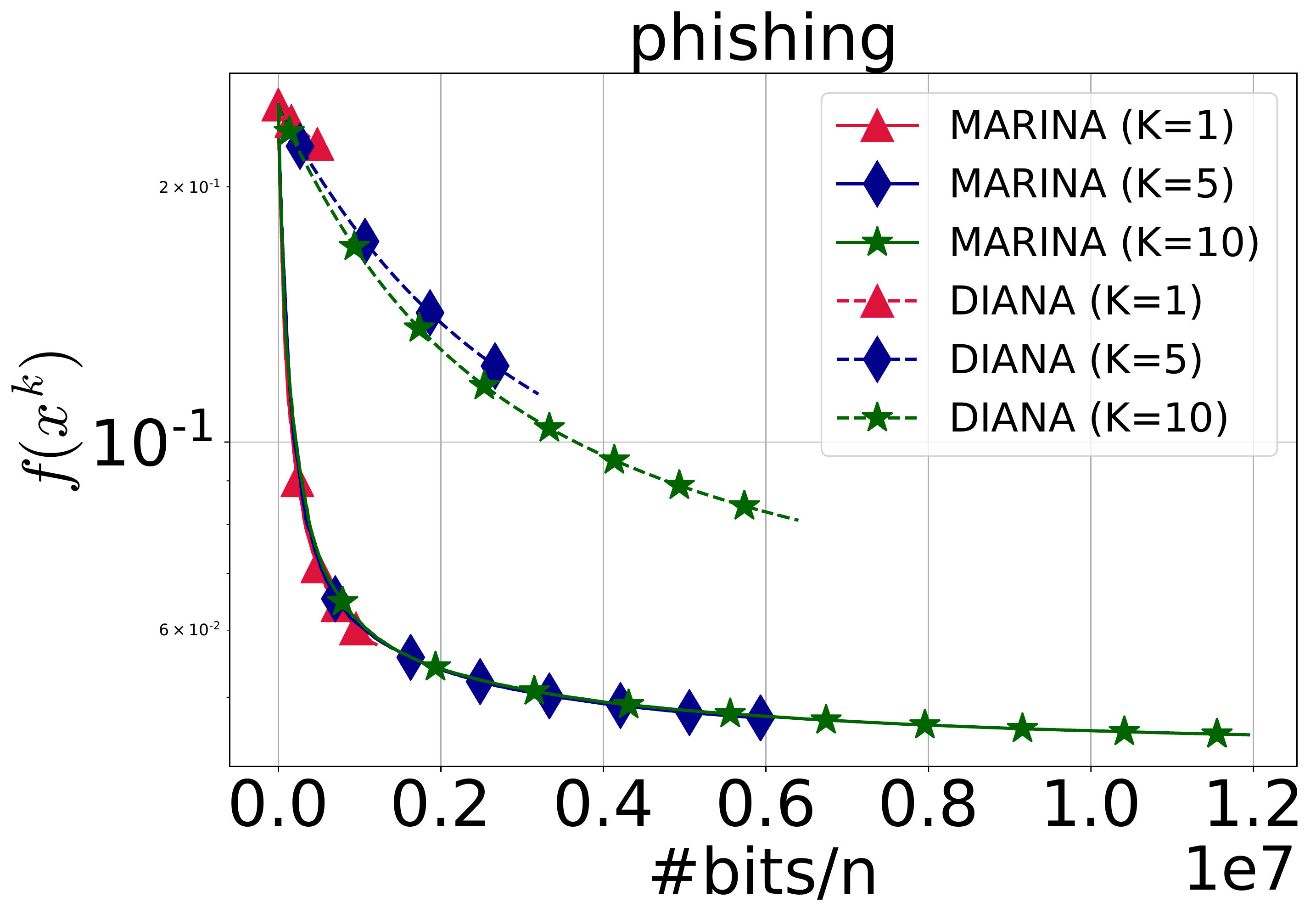}
\includegraphics[width=0.24\textwidth]{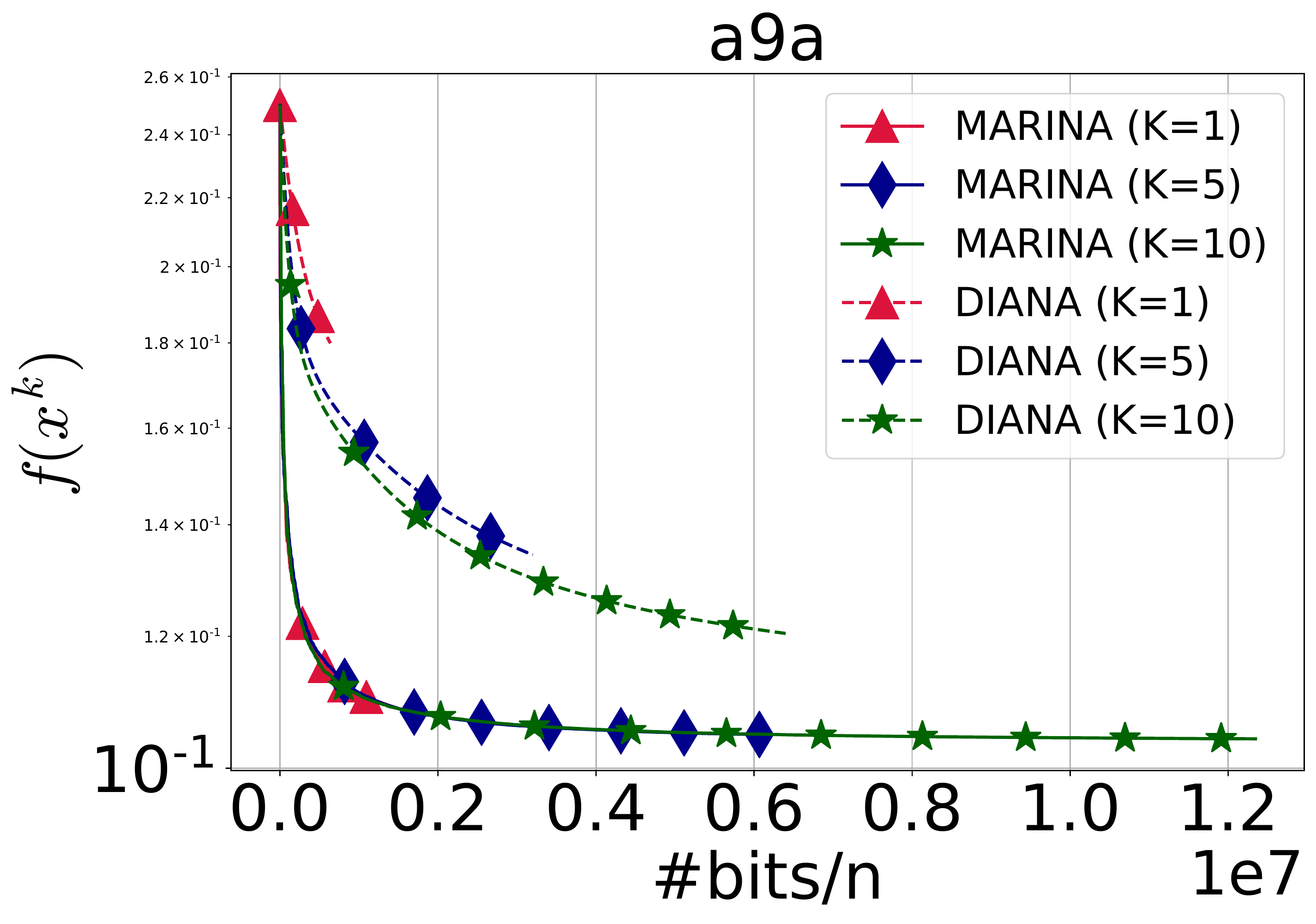}
\includegraphics[width=0.24\textwidth]{mushrooms_grad_norm_bits_marina_diana.pdf}
\includegraphics[width=0.24\textwidth]{w8a_grad_norm_bits_marina_diana.pdf}
\includegraphics[width=0.24\textwidth]{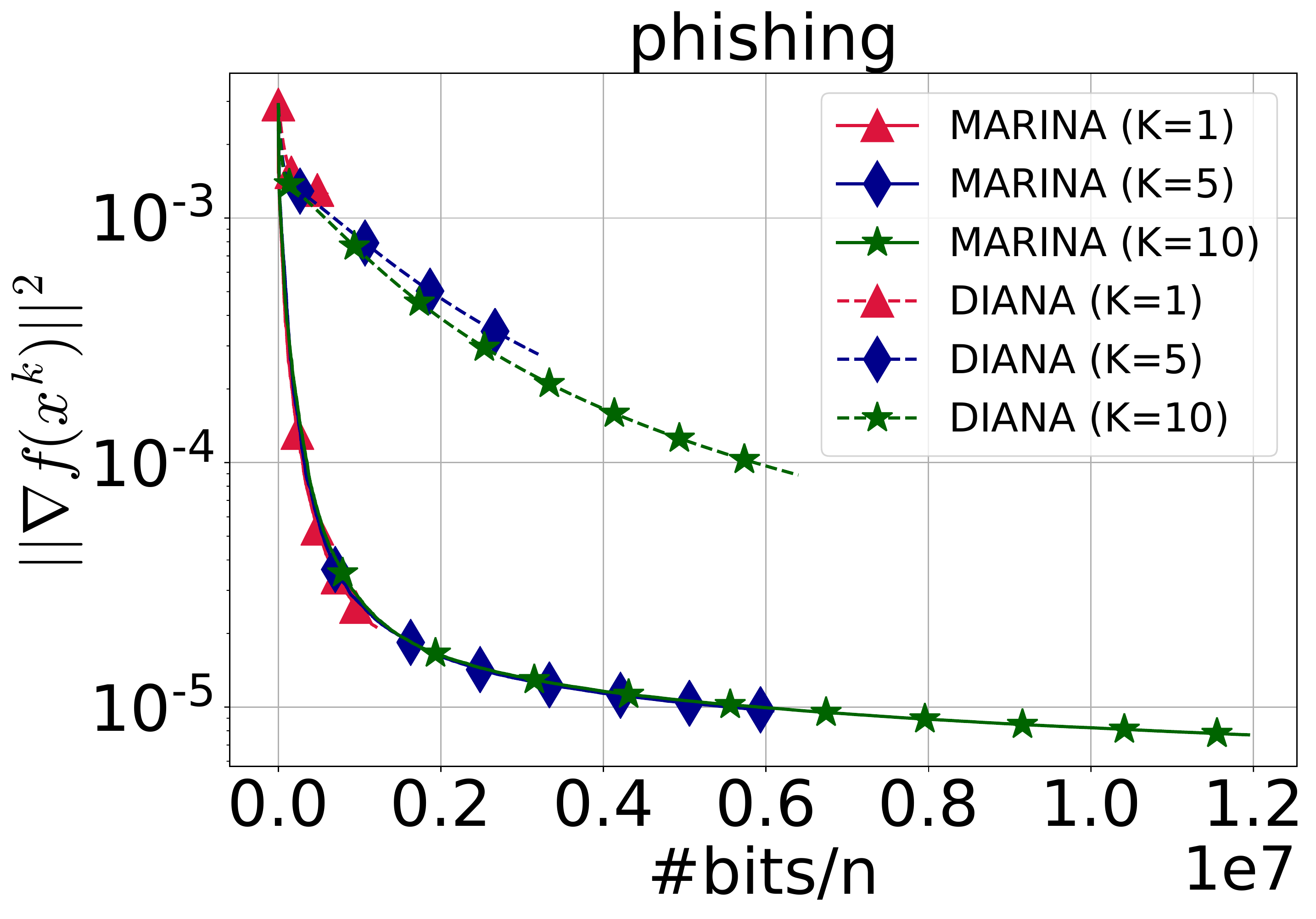}
\includegraphics[width=0.24\textwidth]{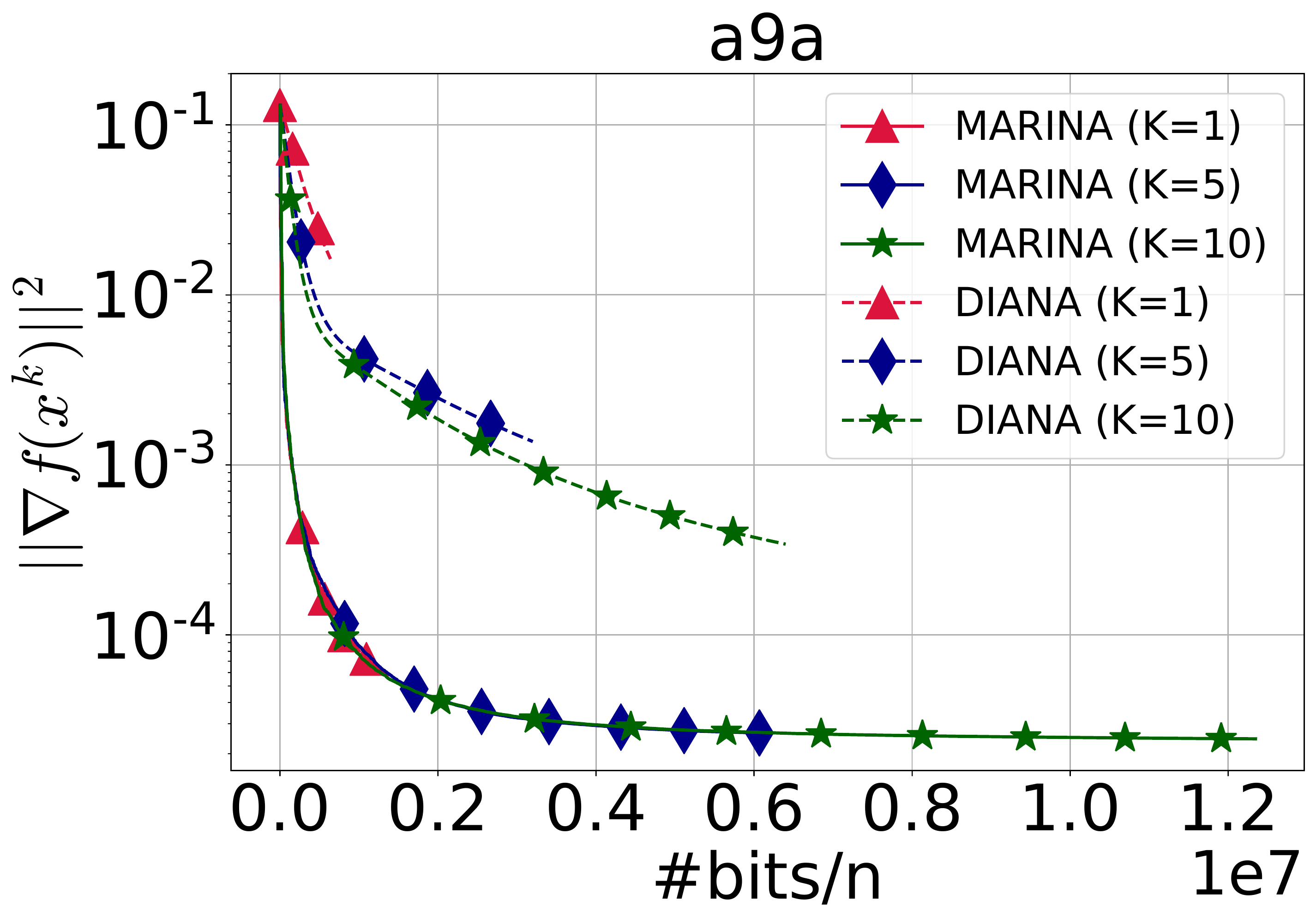}
\caption{Comparison of \algname{MARINA} with  \algname{DIANA} on binary classification problem involving non-convex loss \eqref{eq:experiment_problem} with LibSVM data \cite{chang2011libsvm}. Parameter $n$ is chosen as per Tbl.~\ref{tbl:ns} ($n = 5$). Stepsizes for the methods are chosen according to the theory. In all cases, we used the RandK sparsification operator with K $\in \{1,5,10\}$.}
\label{fig:full_batched_methods}
\end{figure*}

\begin{figure*}[t!]
\centering
\includegraphics[width=0.24\textwidth]{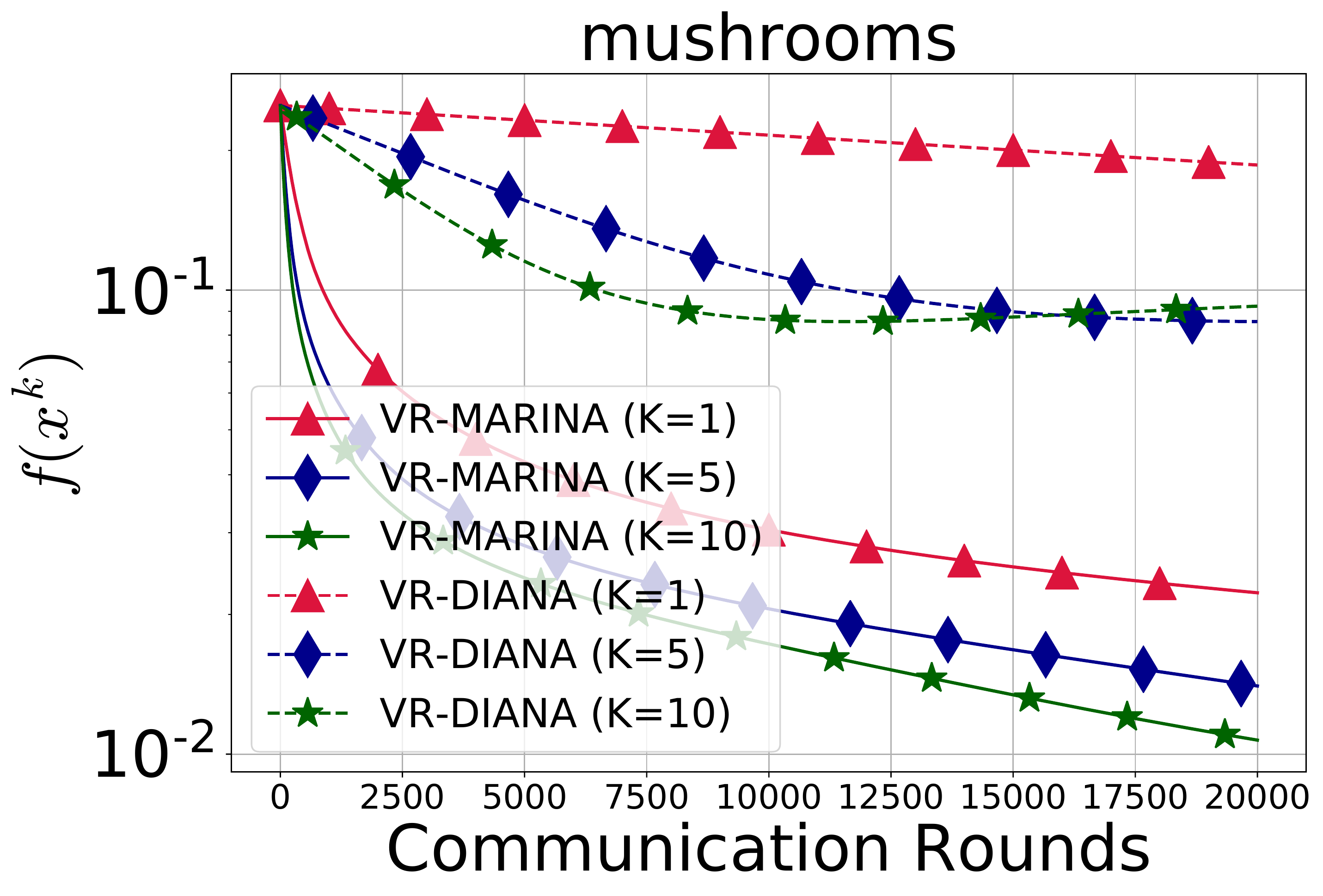}
\includegraphics[width=0.24\textwidth]{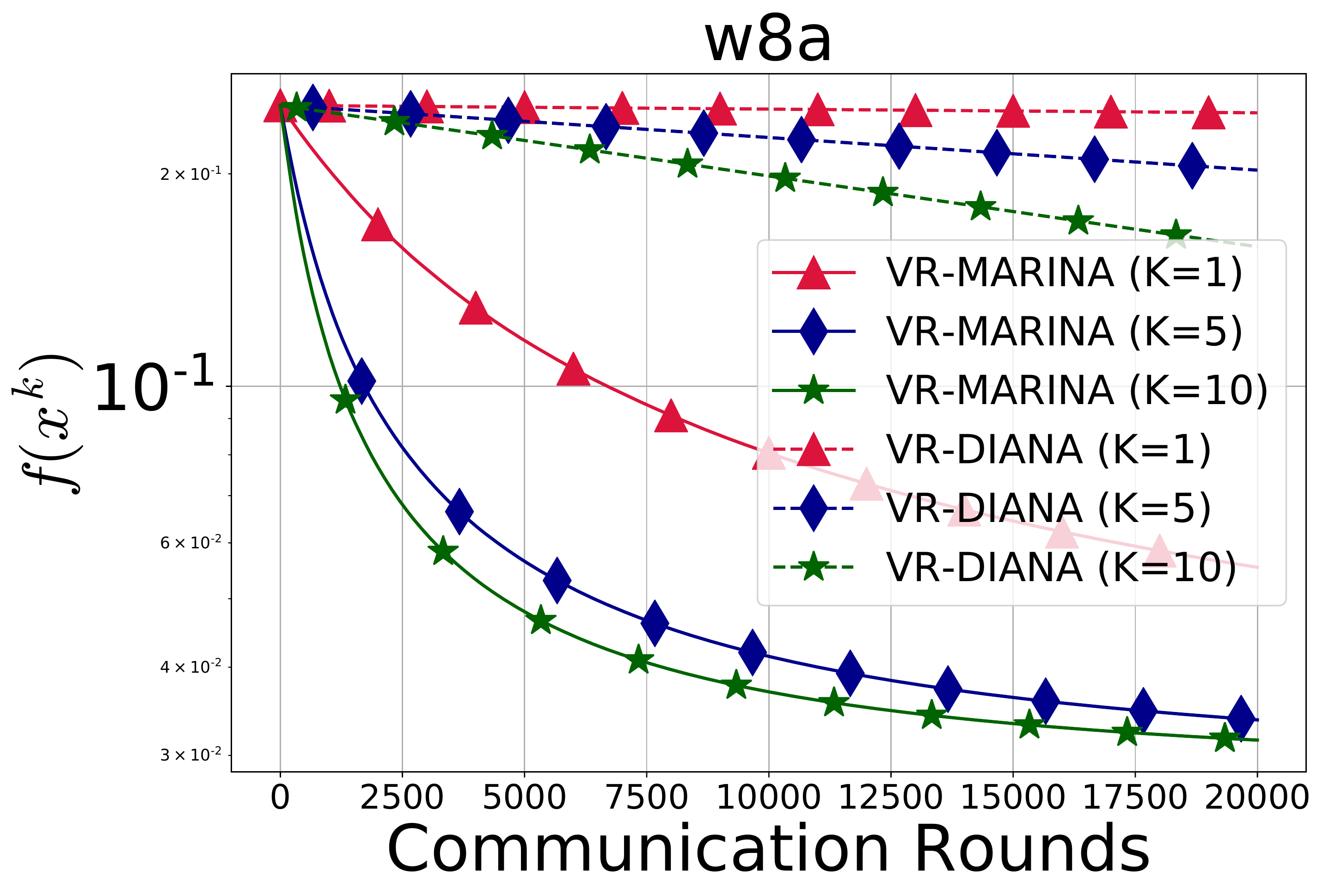}
\includegraphics[width=0.24\textwidth]{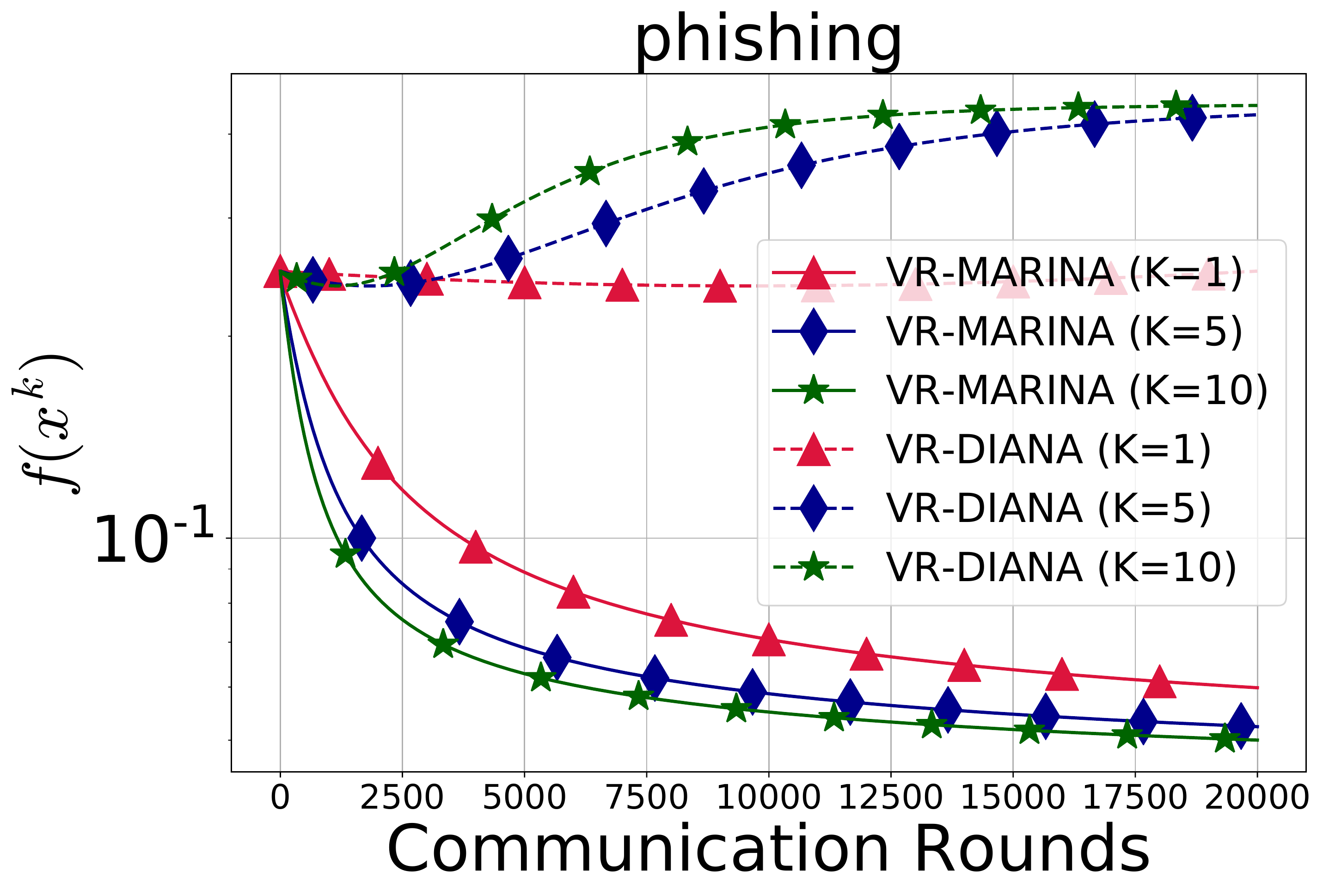}
\includegraphics[width=0.24\textwidth]{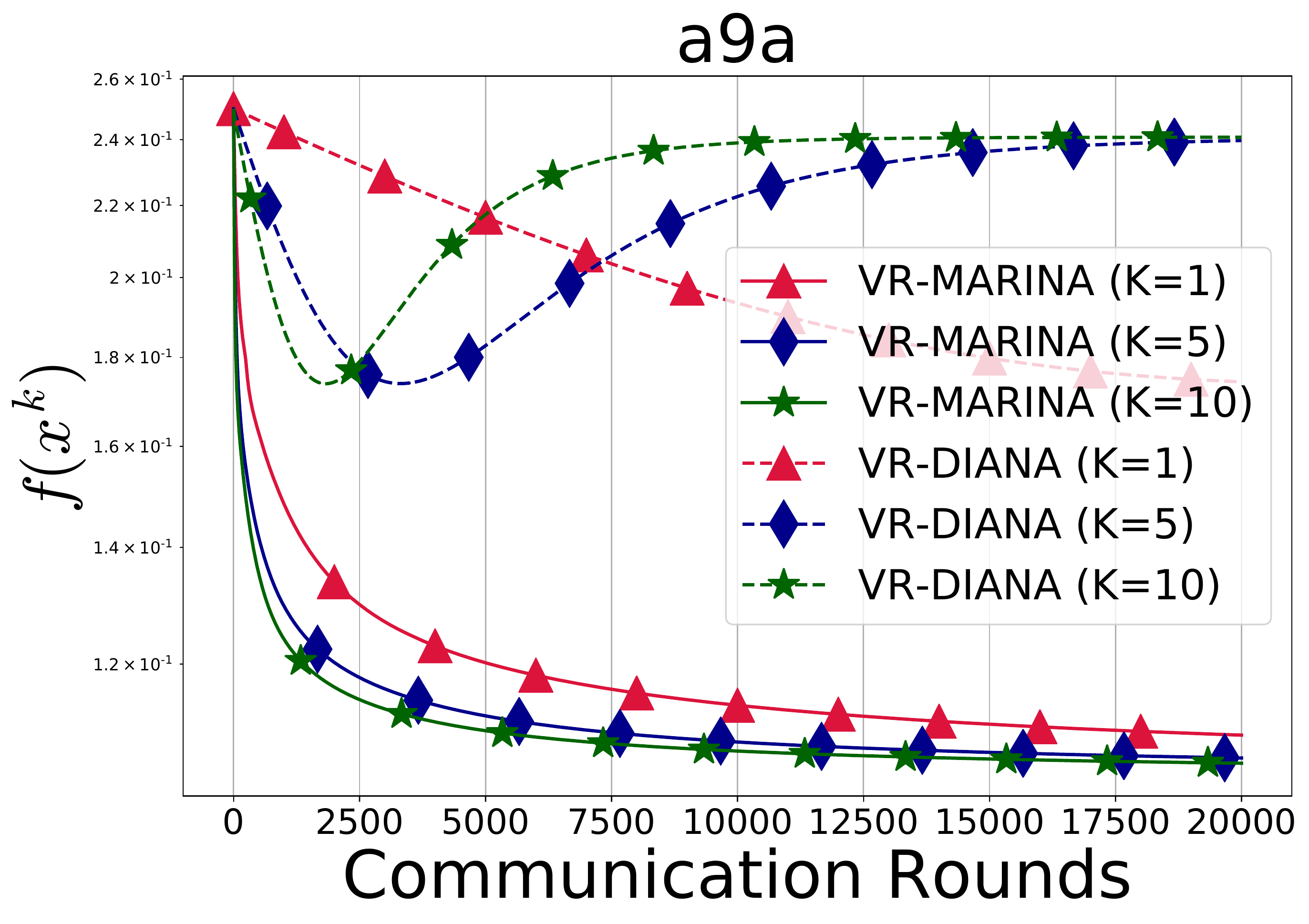}
\includegraphics[width=0.24\textwidth]{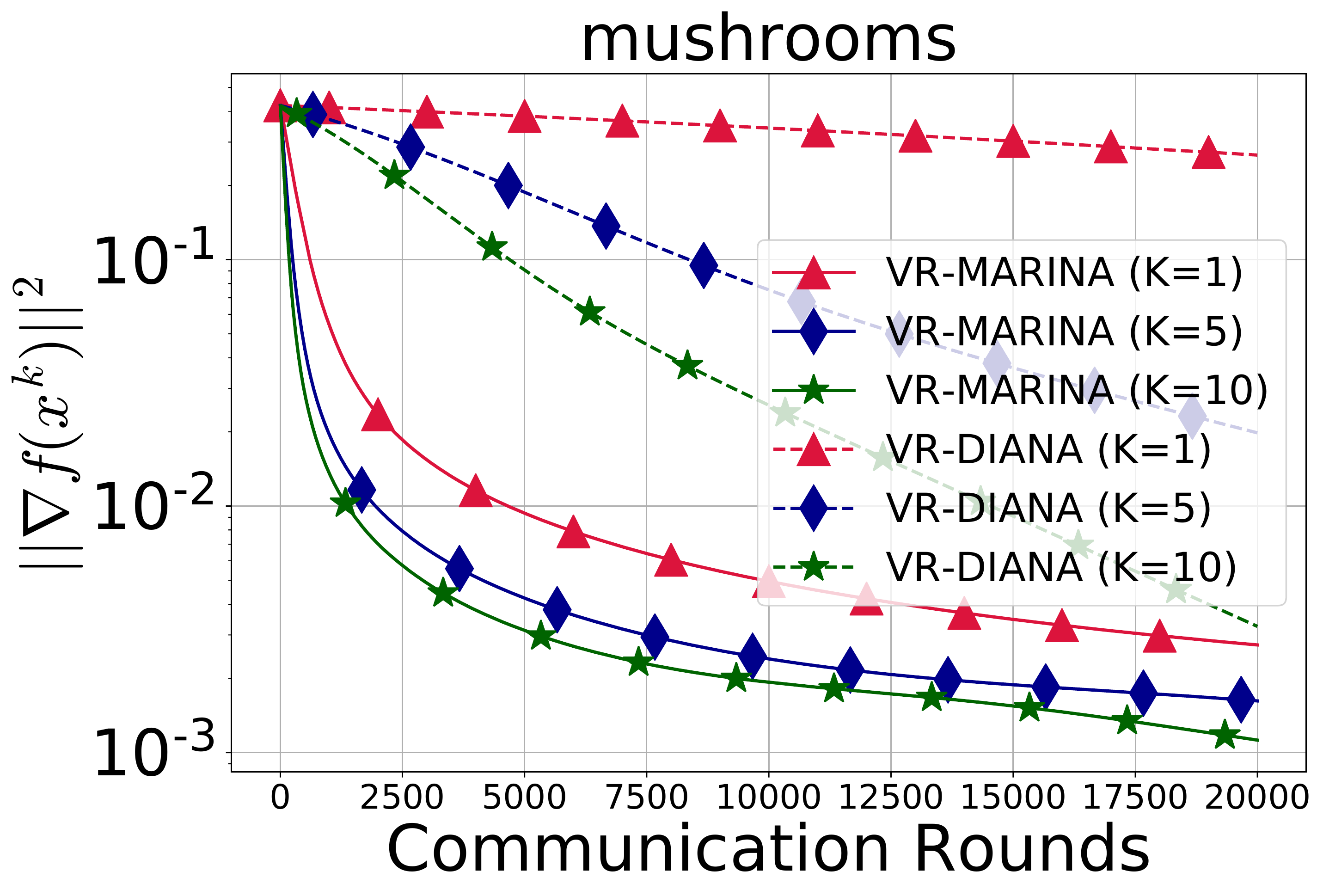}
\includegraphics[width=0.24\textwidth]{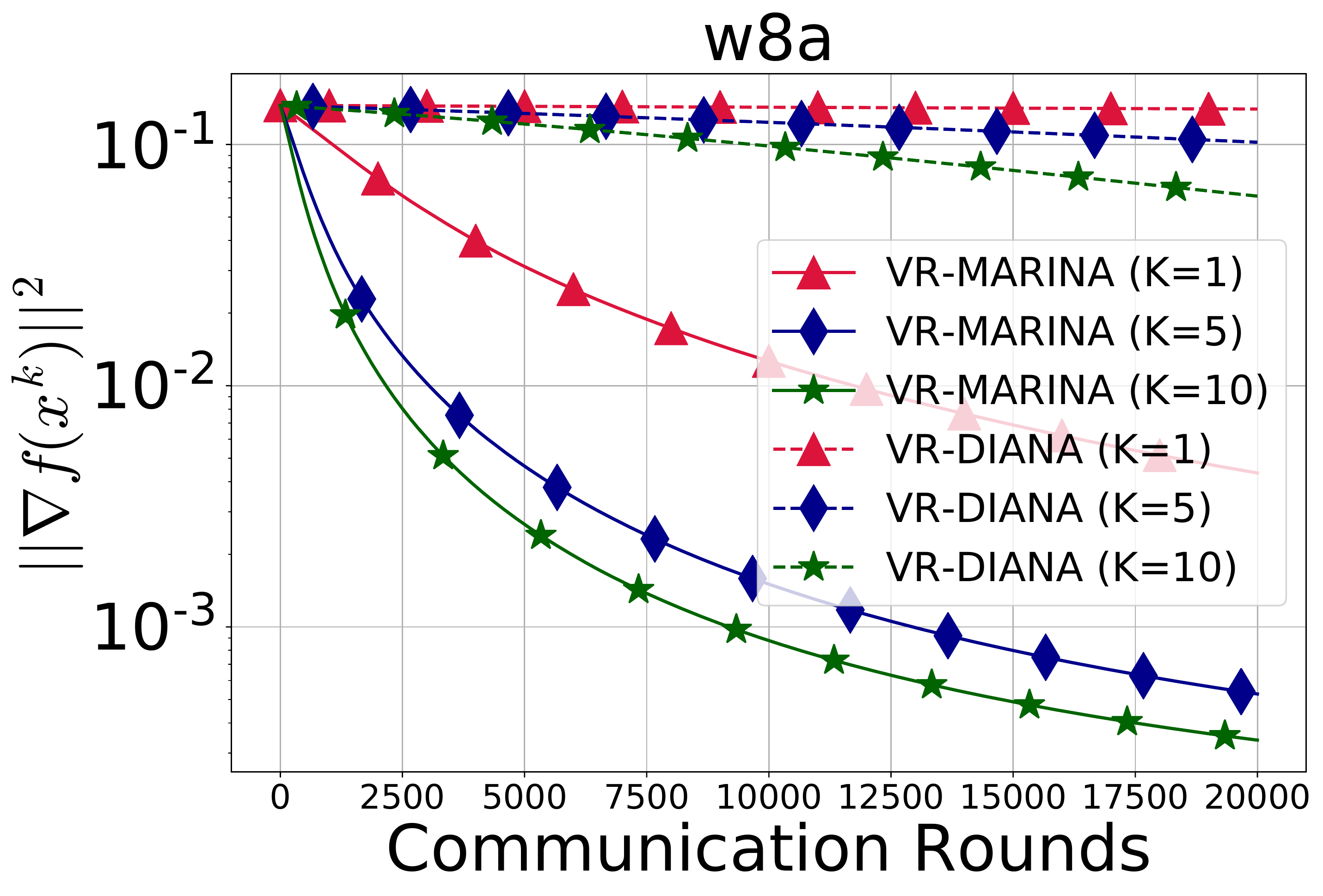}
\includegraphics[width=0.24\textwidth]{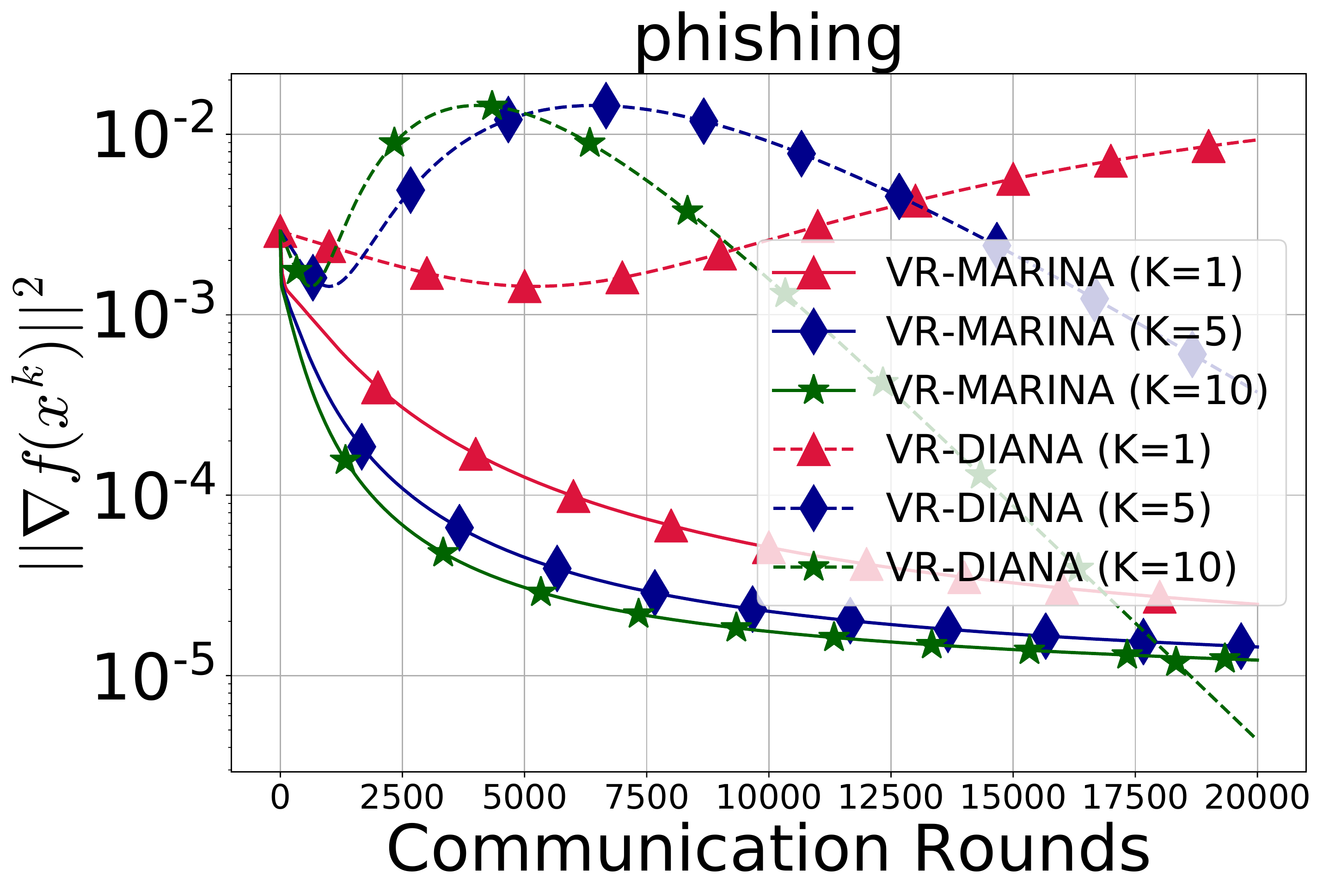}
\includegraphics[width=0.24\textwidth]{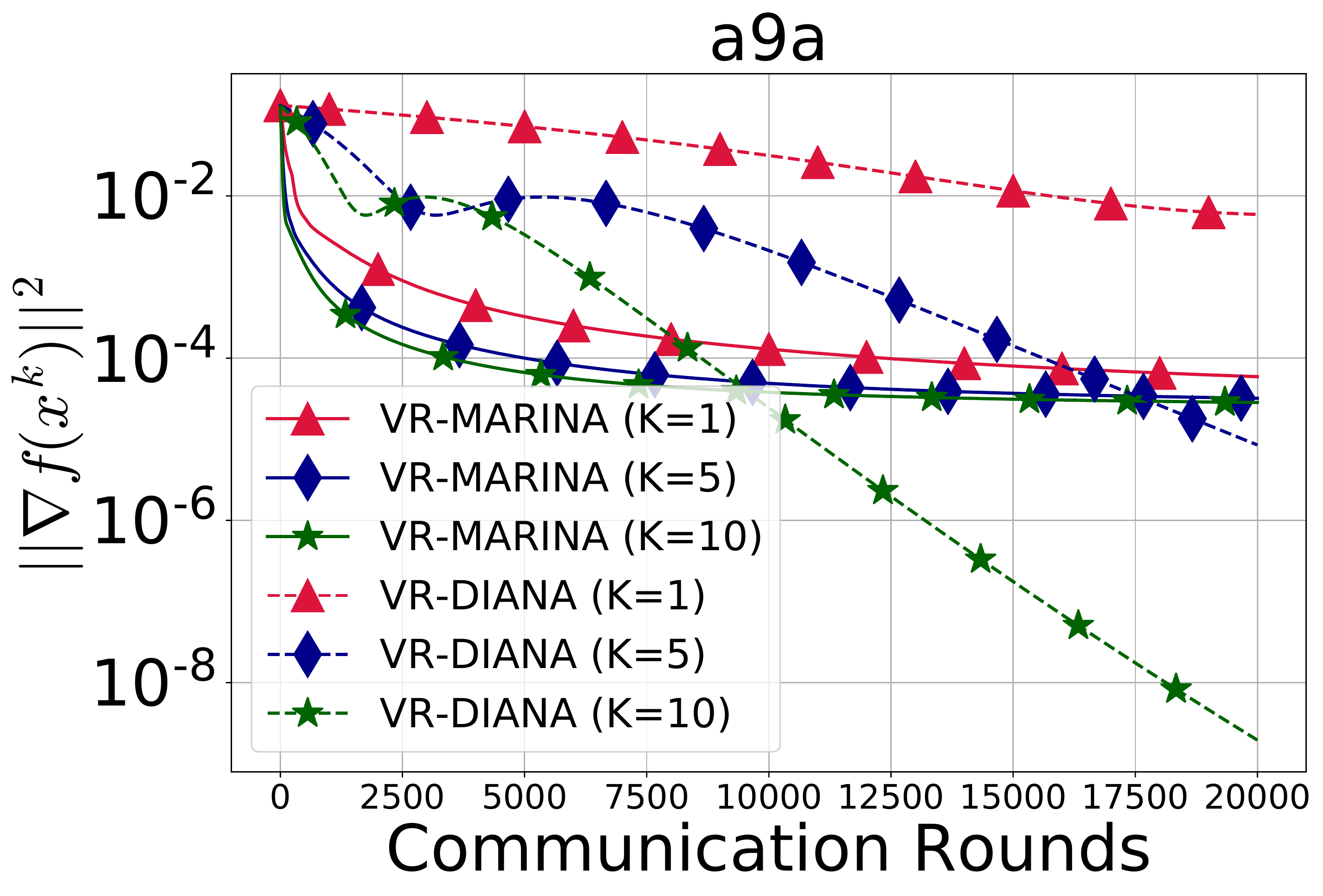}
\includegraphics[width=0.24\textwidth]{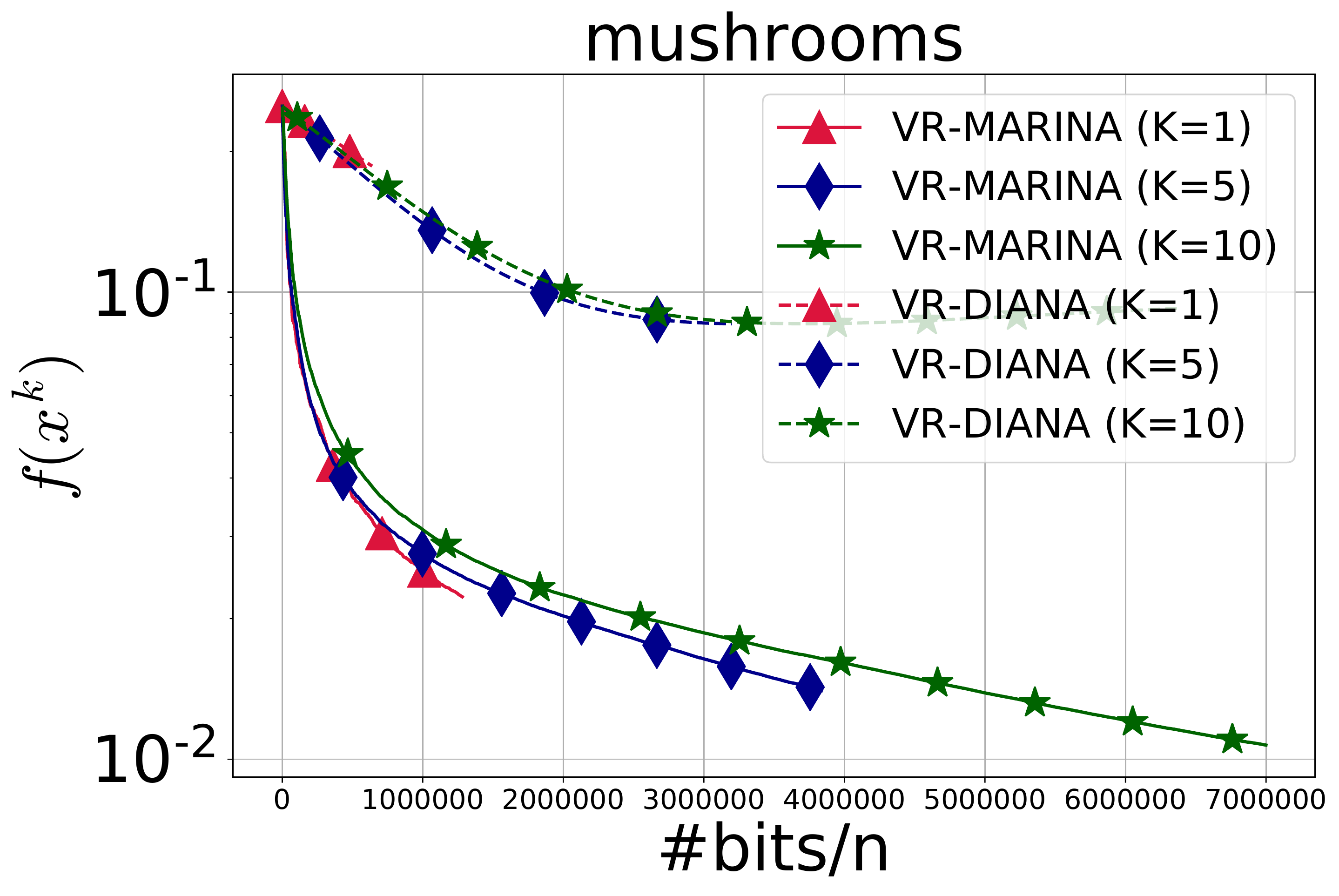}
\includegraphics[width=0.24\textwidth]{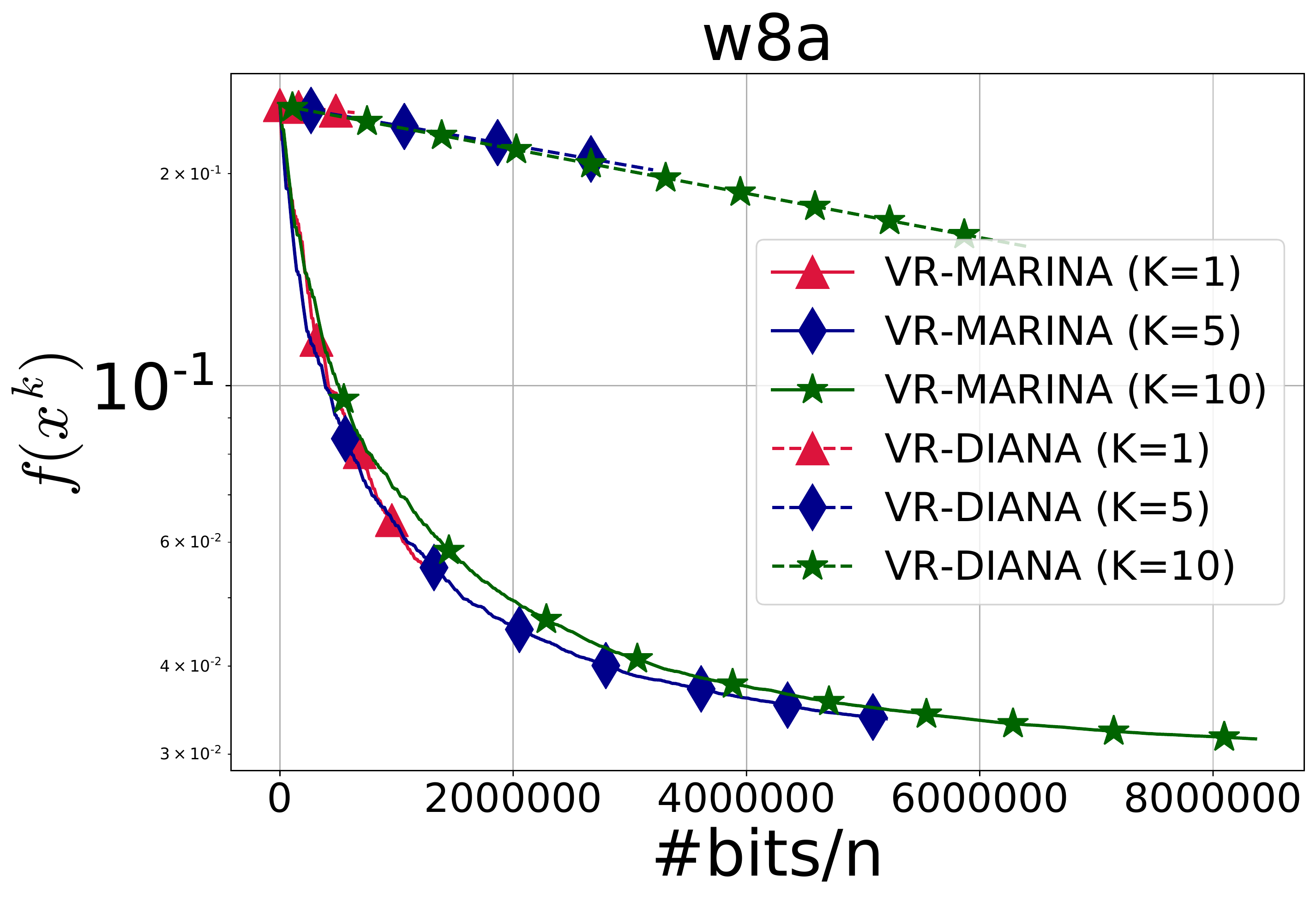}
\includegraphics[width=0.24\textwidth]{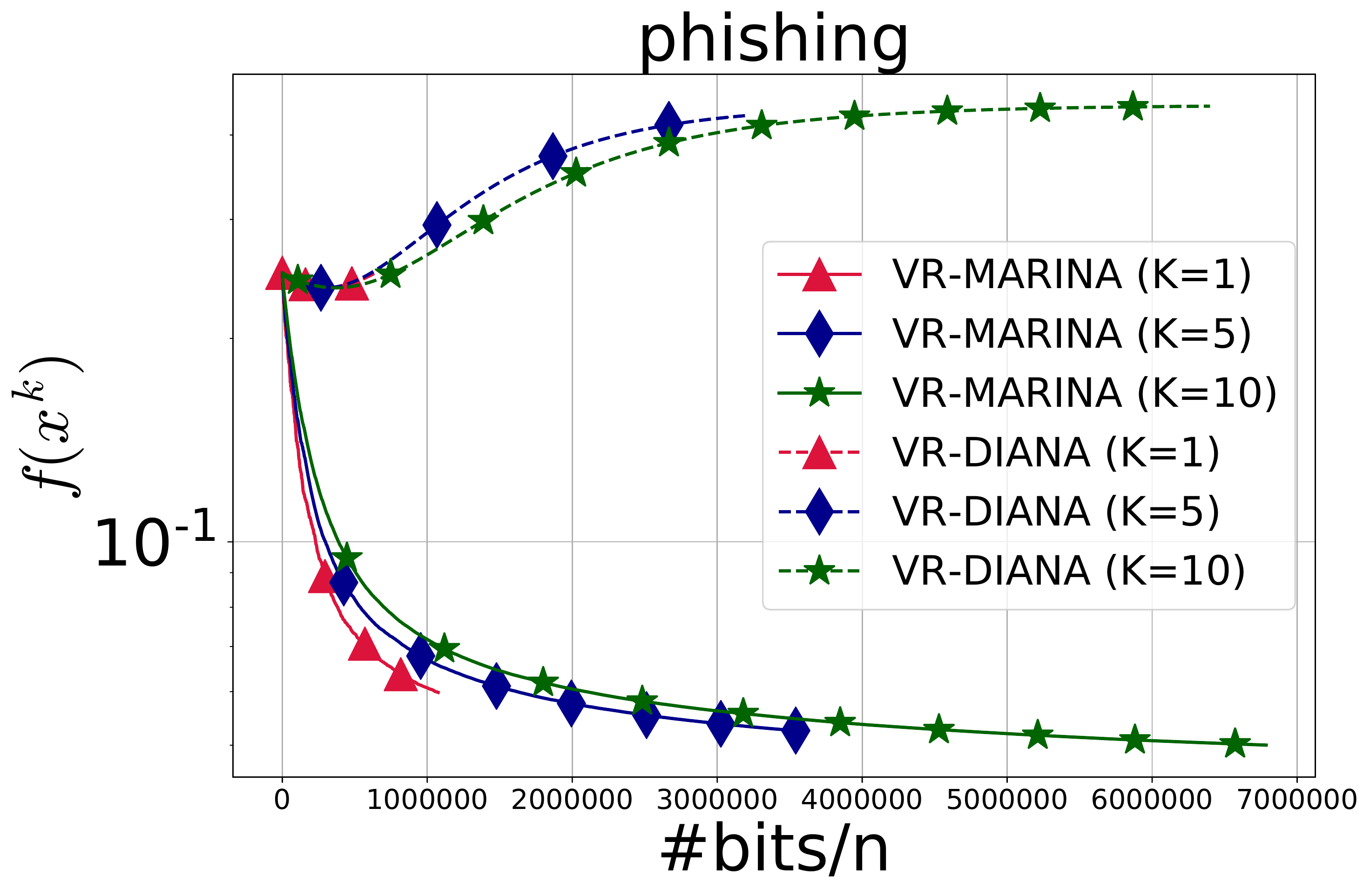}
\includegraphics[width=0.24\textwidth]{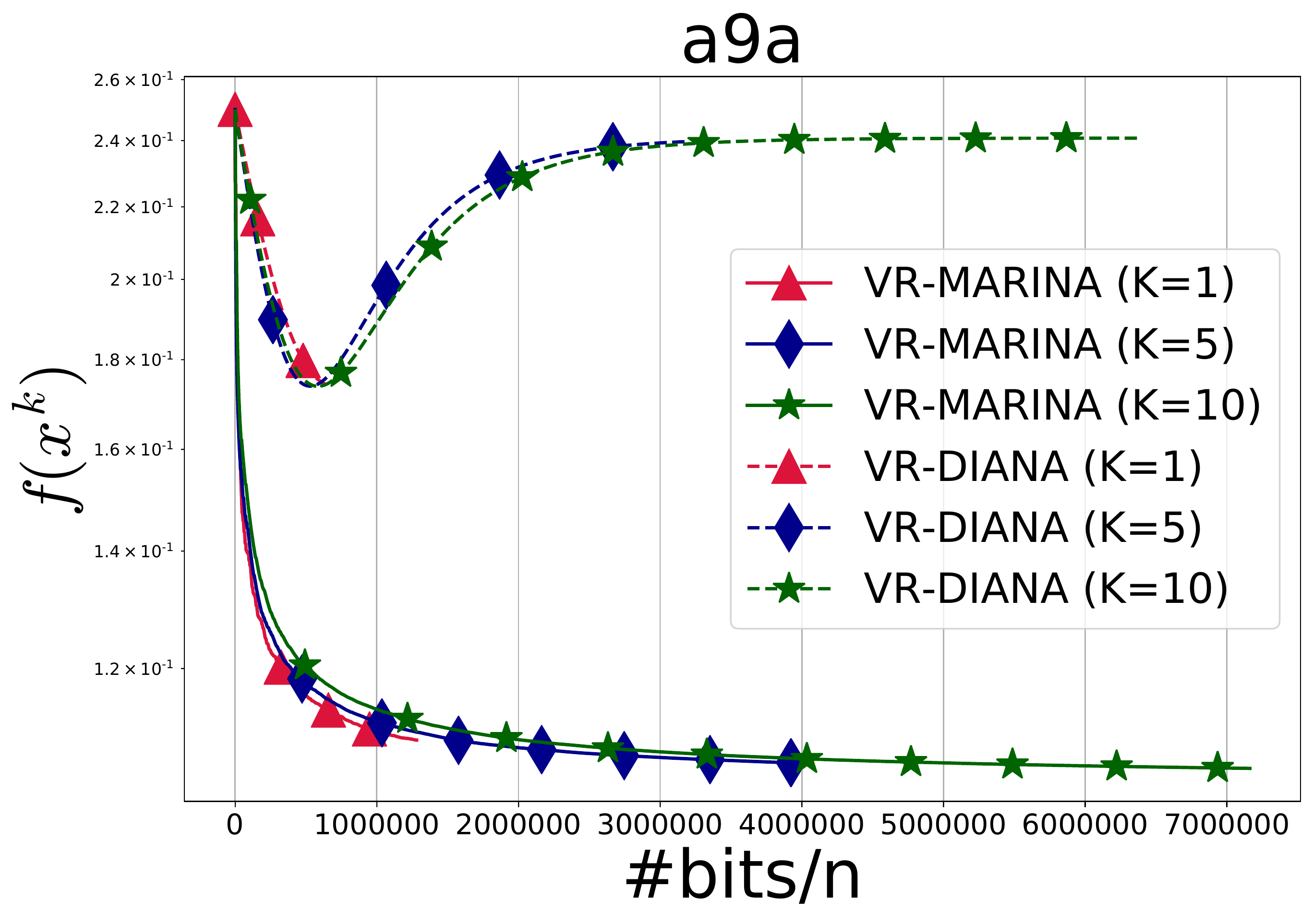}
\includegraphics[width=0.24\textwidth]{mushrooms_grad_norm_bits_vr_marina_vr_diana.pdf}
\includegraphics[width=0.24\textwidth]{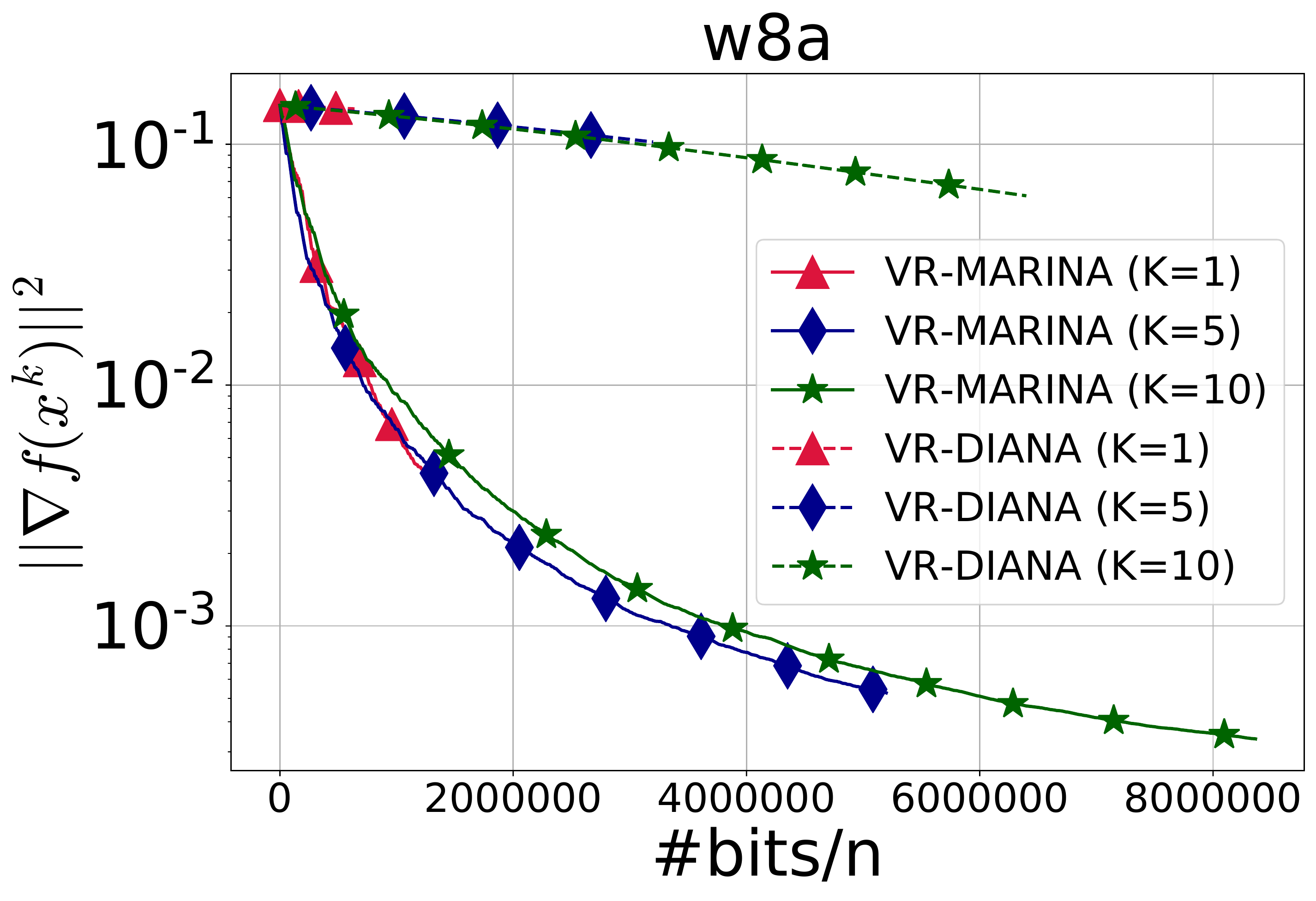}
\includegraphics[width=0.24\textwidth]{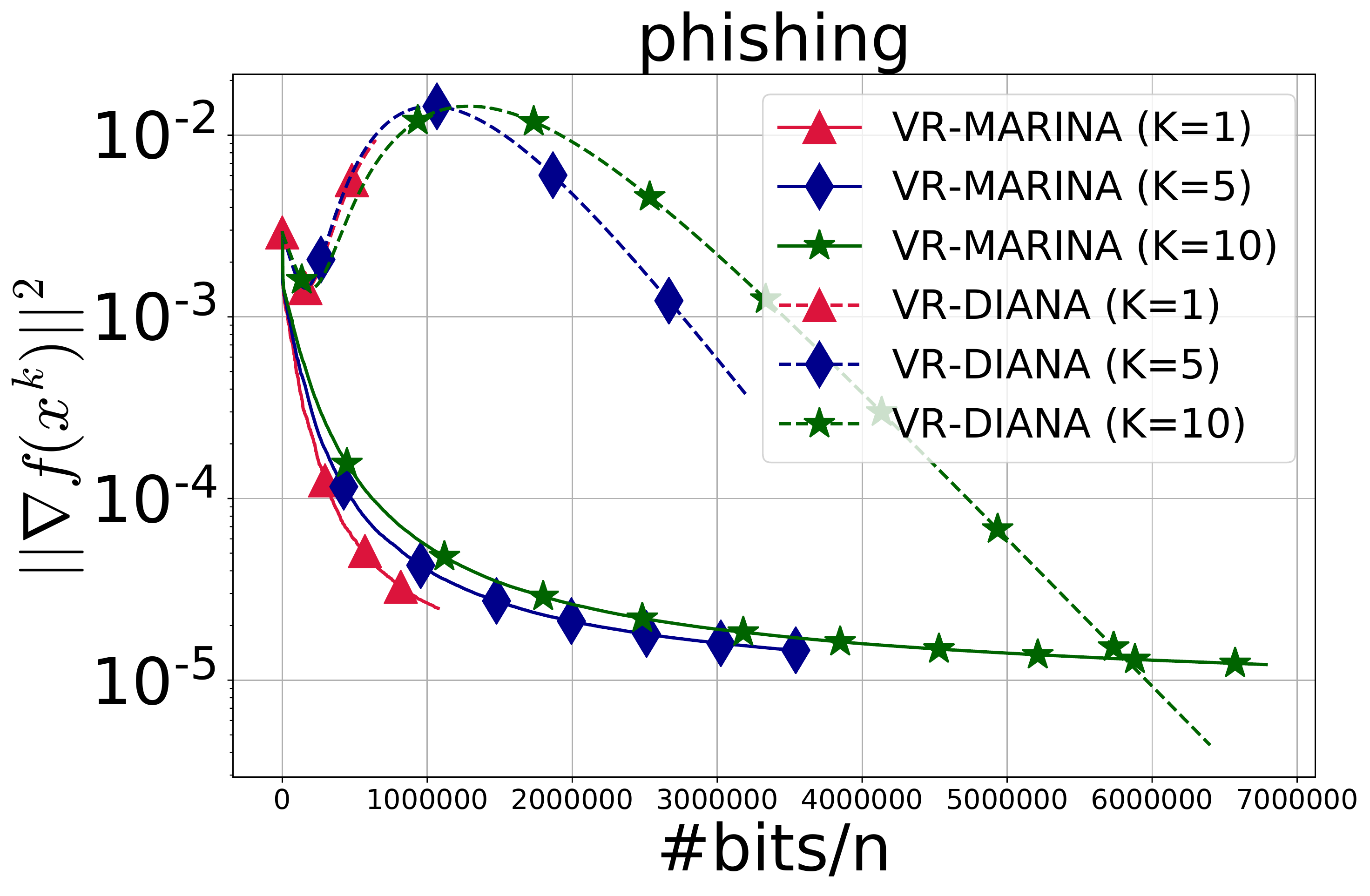}
\includegraphics[width=0.24\textwidth]{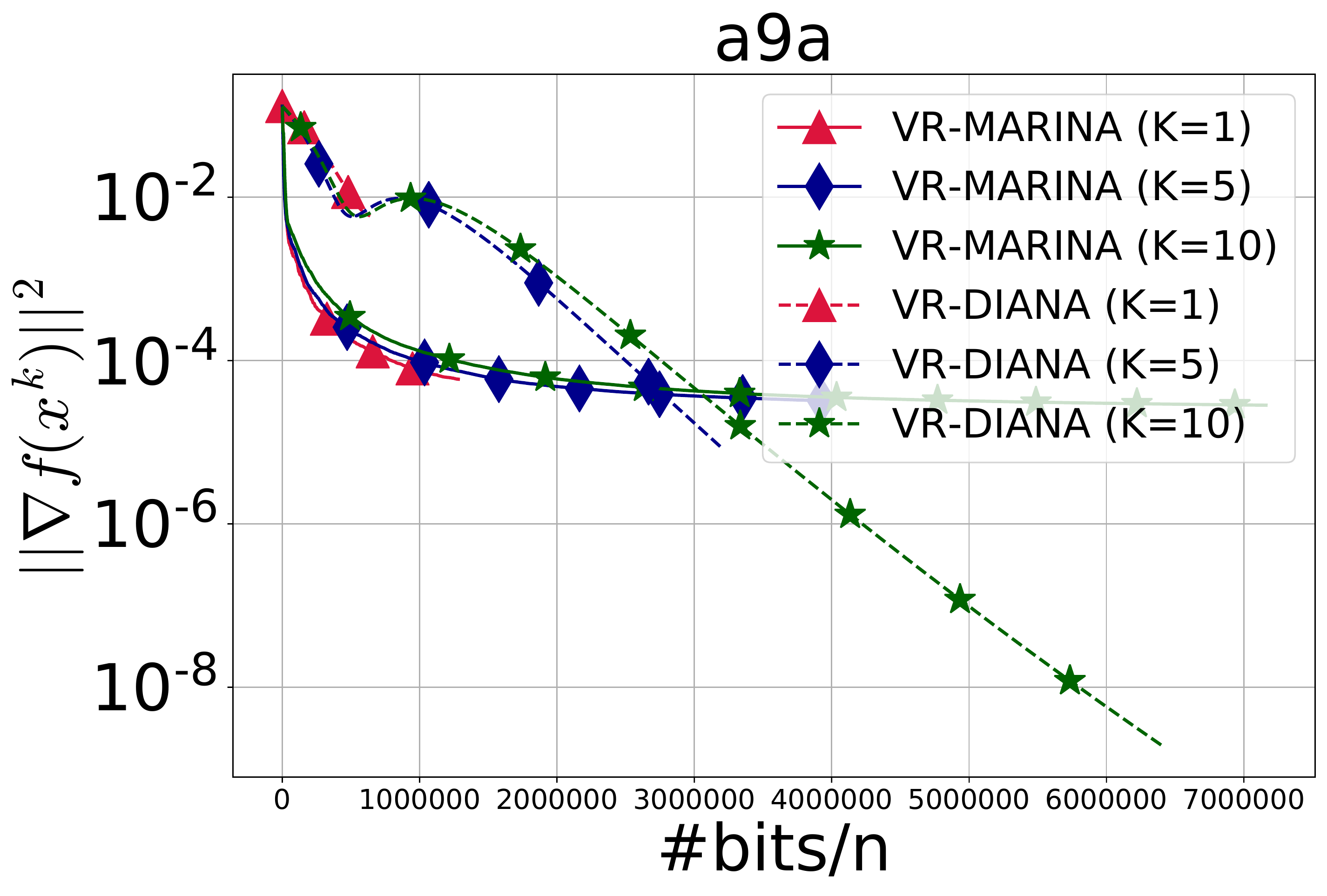}
\includegraphics[width=0.24\textwidth]{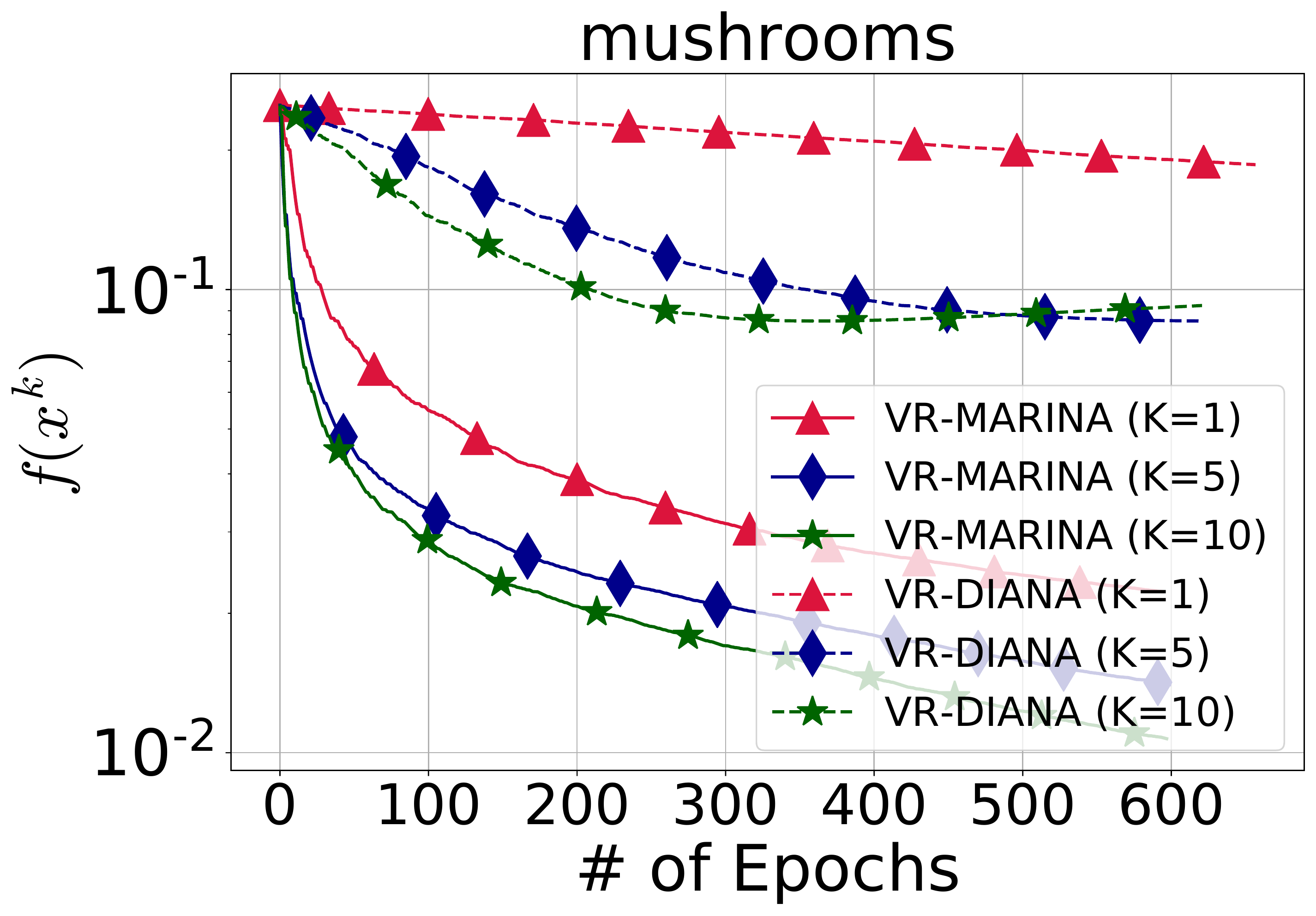}
\includegraphics[width=0.24\textwidth]{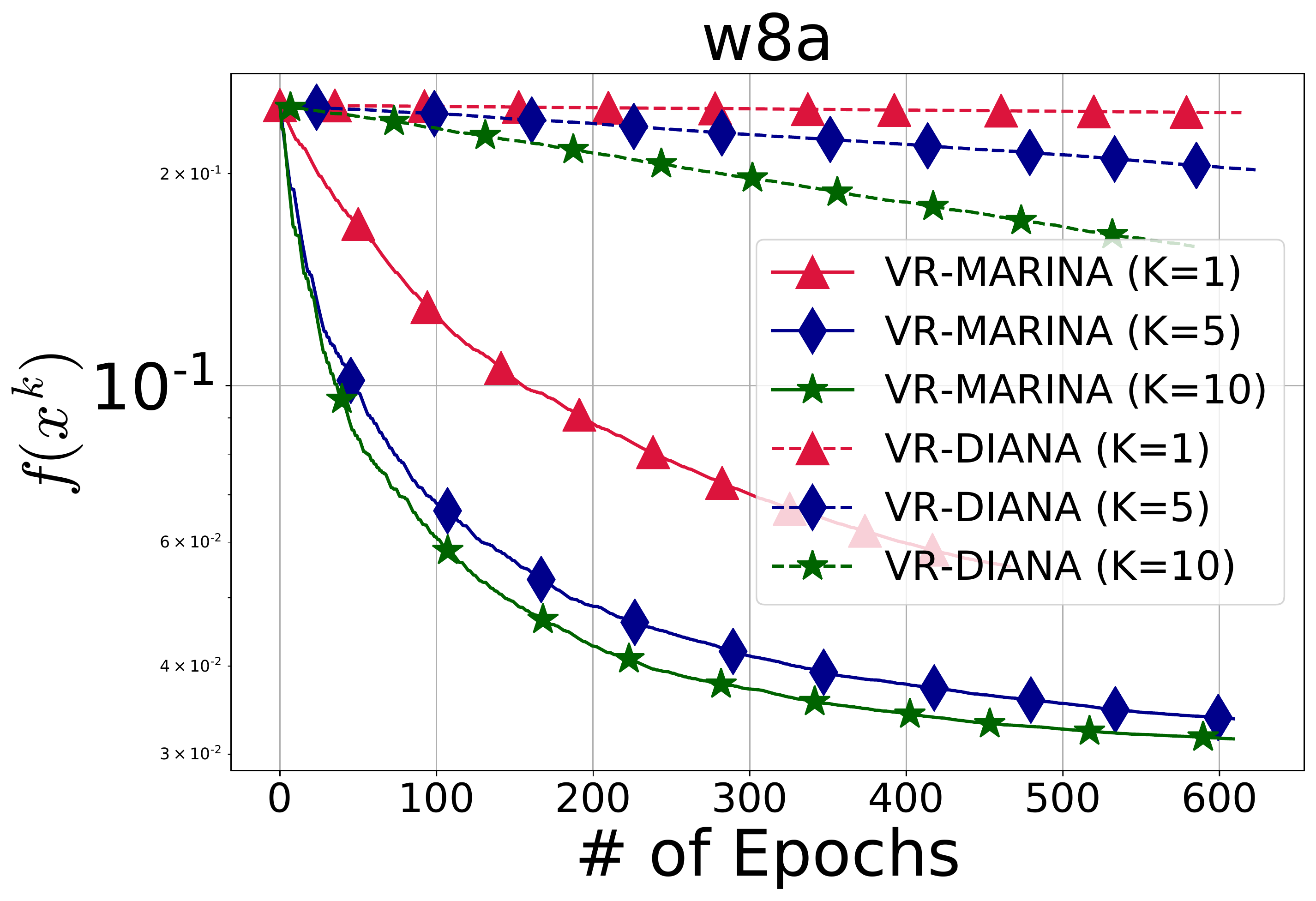}
\includegraphics[width=0.24\textwidth]{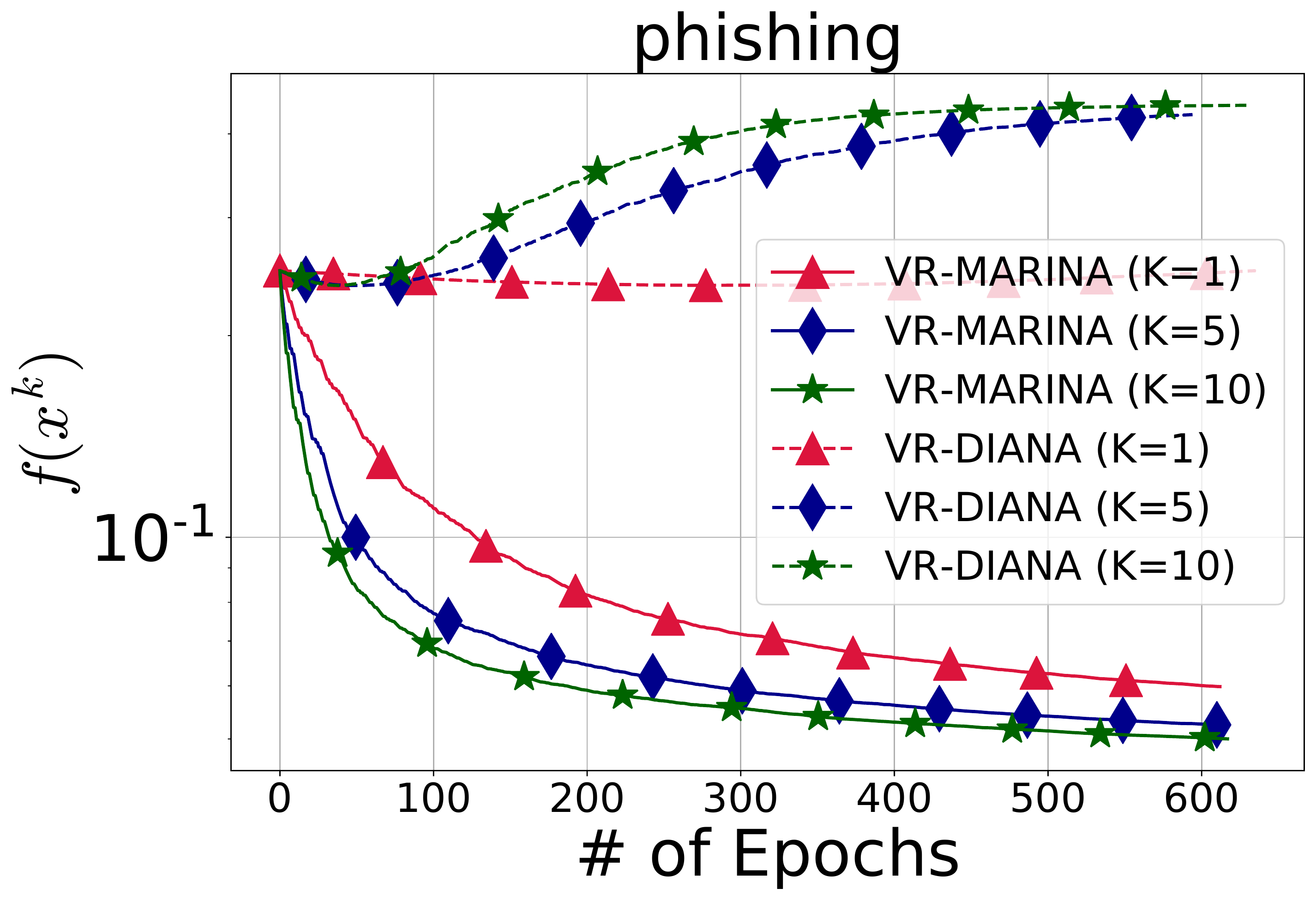}
\includegraphics[width=0.24\textwidth]{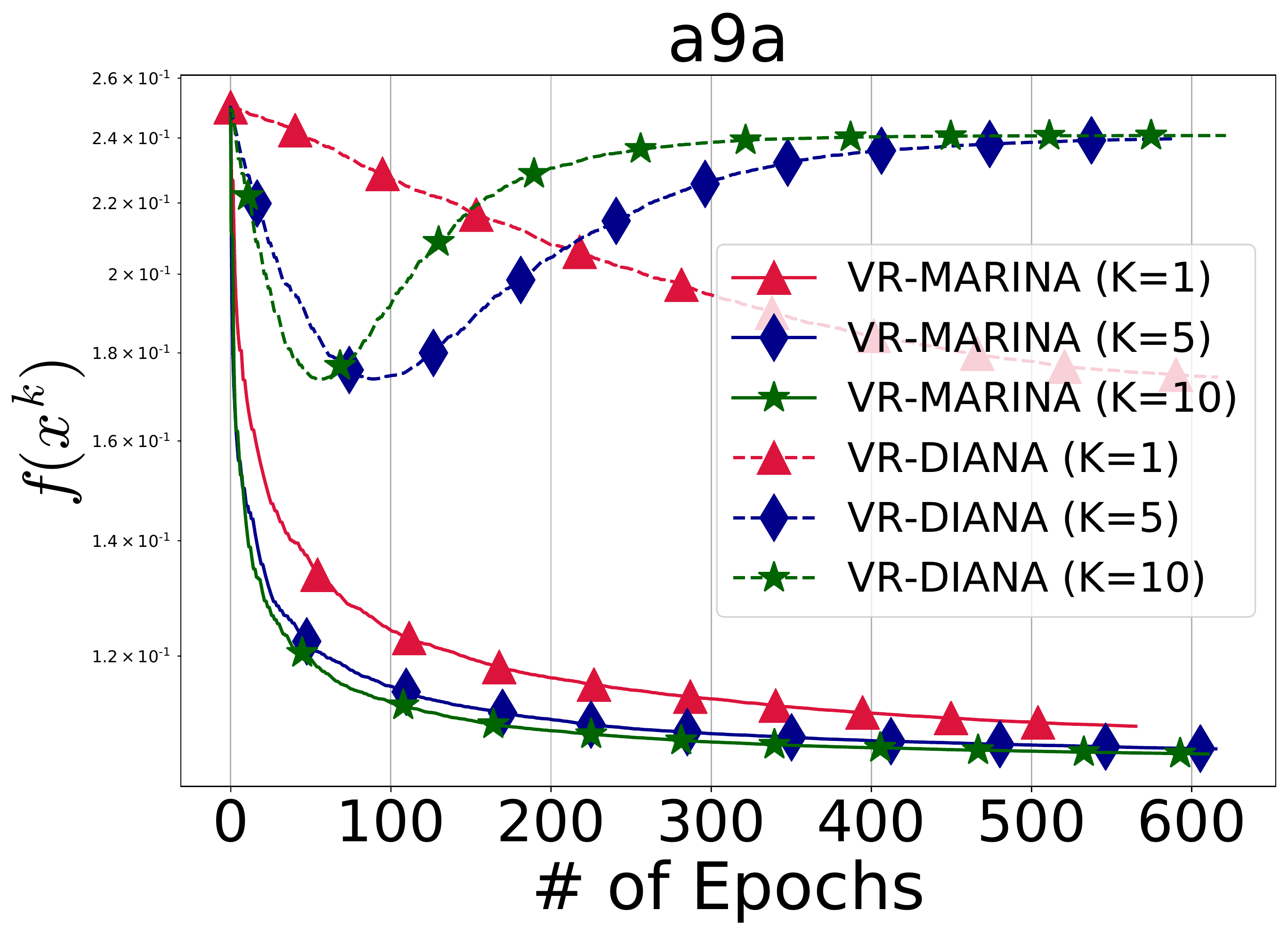}
\includegraphics[width=0.24\textwidth]{mushrooms_grad_norm_data_passes_vr_marina_vr_diana.pdf}
\includegraphics[width=0.24\textwidth]{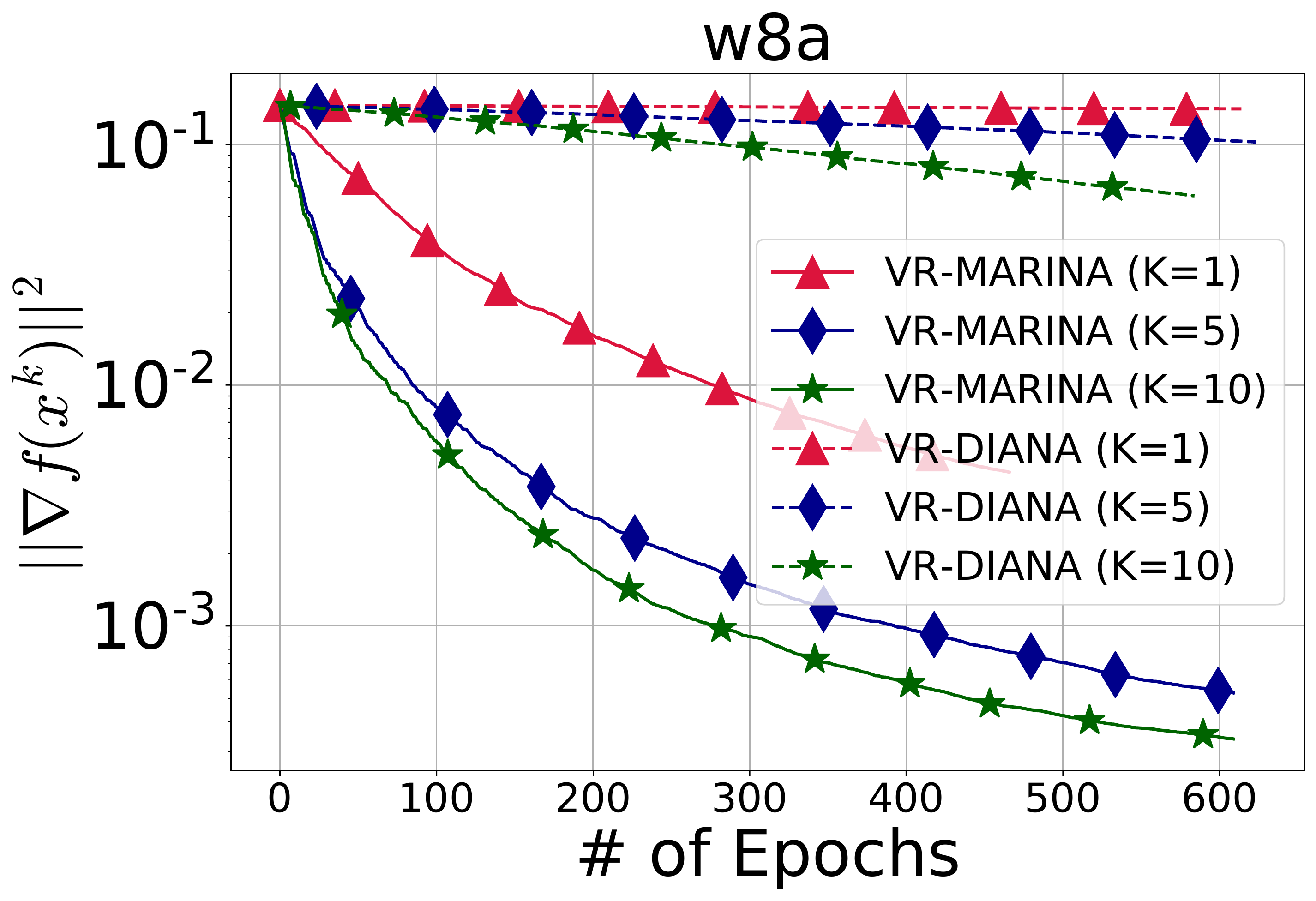}
\includegraphics[width=0.24\textwidth]{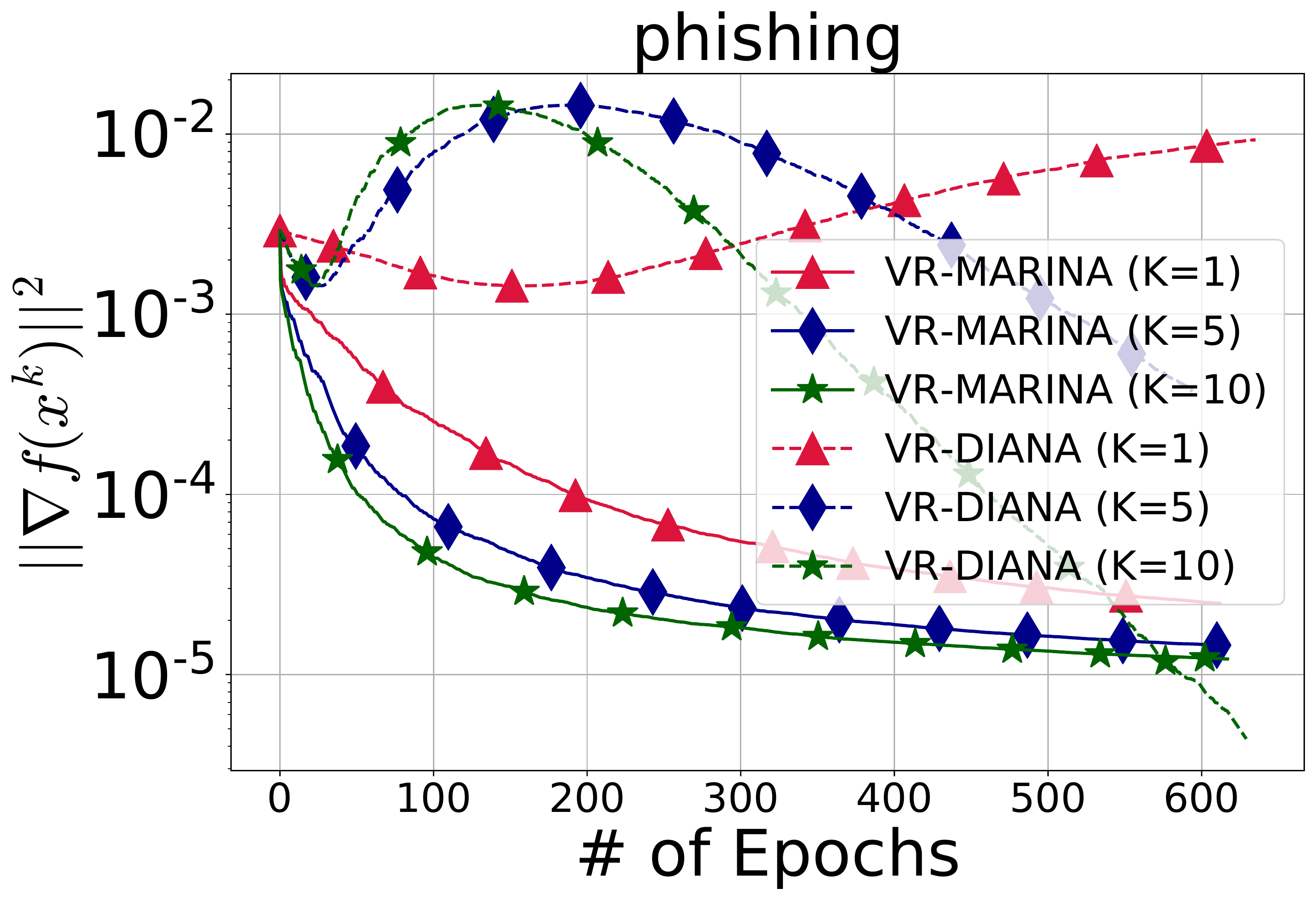}
\includegraphics[width=0.24\textwidth]{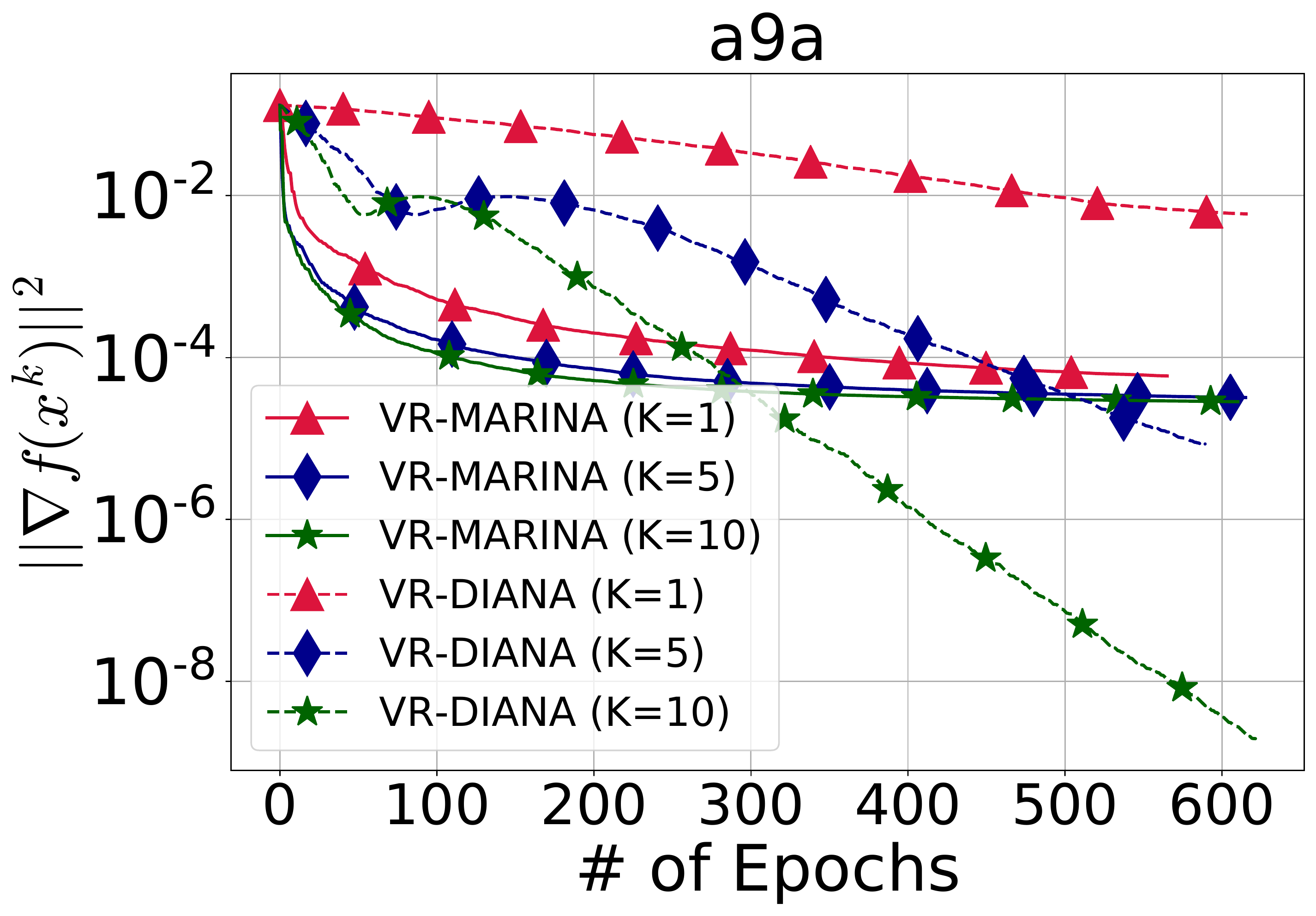}
\caption{Comparison of \algname{VR-MARINA} with  \algname{VR-DIANA} on binary classification problem involving non-convex loss \eqref{eq:experiment_problem} with LibSVM data \cite{chang2011libsvm}. Parameter $n$ is chosen as per Tbl.~\ref{tbl:ns} ($n = 5$). Stepsizes for the methods are chosen according to the theory and the batchsizes are $\sim \nicefrac{m}{100}$. In all cases, we used the RandK sparsification operator with K $\in \{1,5,10\}$.}
\label{fig:vr_methods}
\end{figure*}

We also tested \algname{MARINA} and \algname{DIANA} on \texttt{mushrooms} dataset with a bigger number of workers ($n=20$). The results are reported in Figure~\ref{fig:full_batched_methods_more_workers}. Similarly to the previous numerical tests, \algname{MARINA} shows its superiority to \algname{DIANA} with $n=20$ as well.

\begin{figure*}[t!]
\centering
\includegraphics[width=0.3\textwidth]{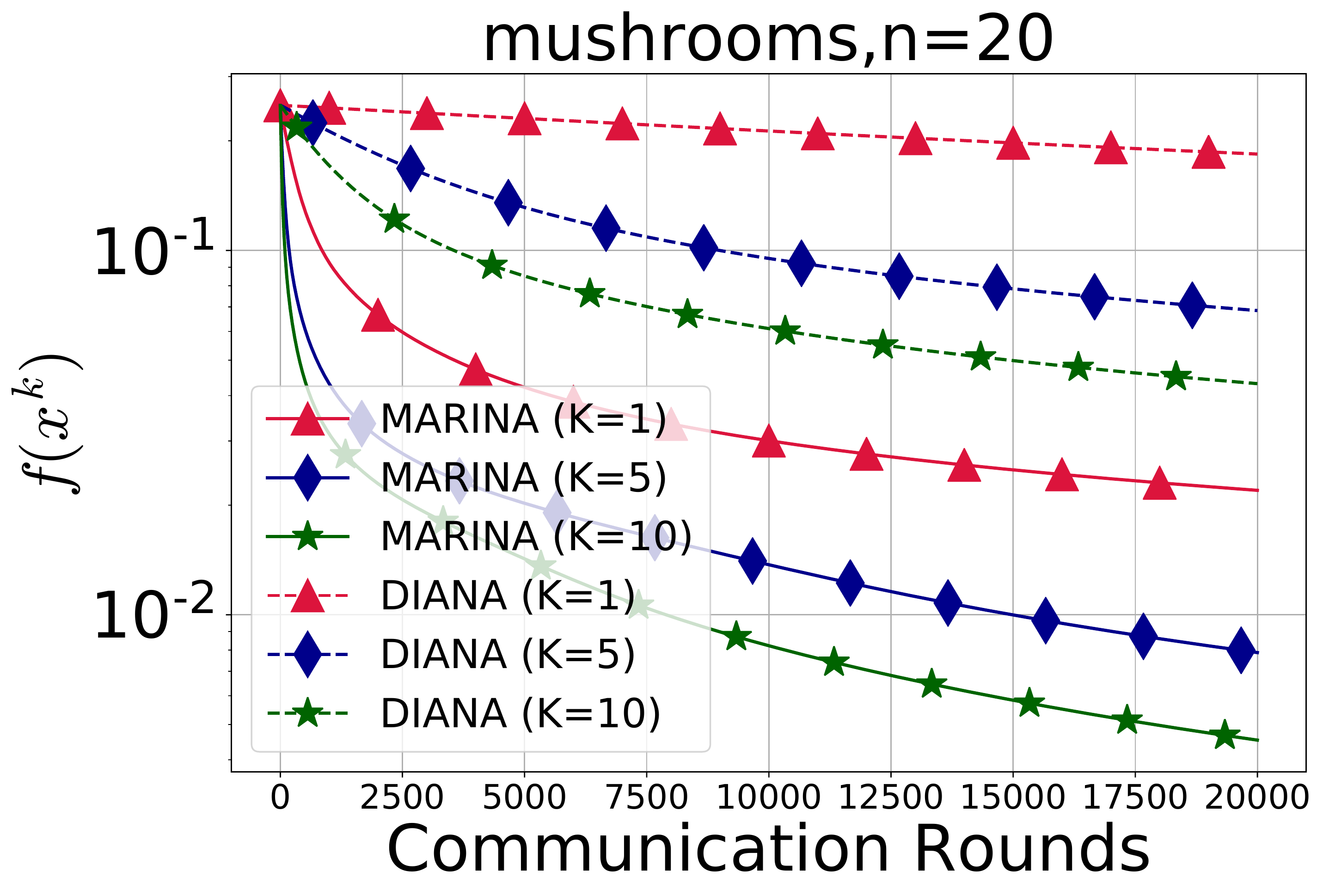}
\includegraphics[width=0.3\textwidth]{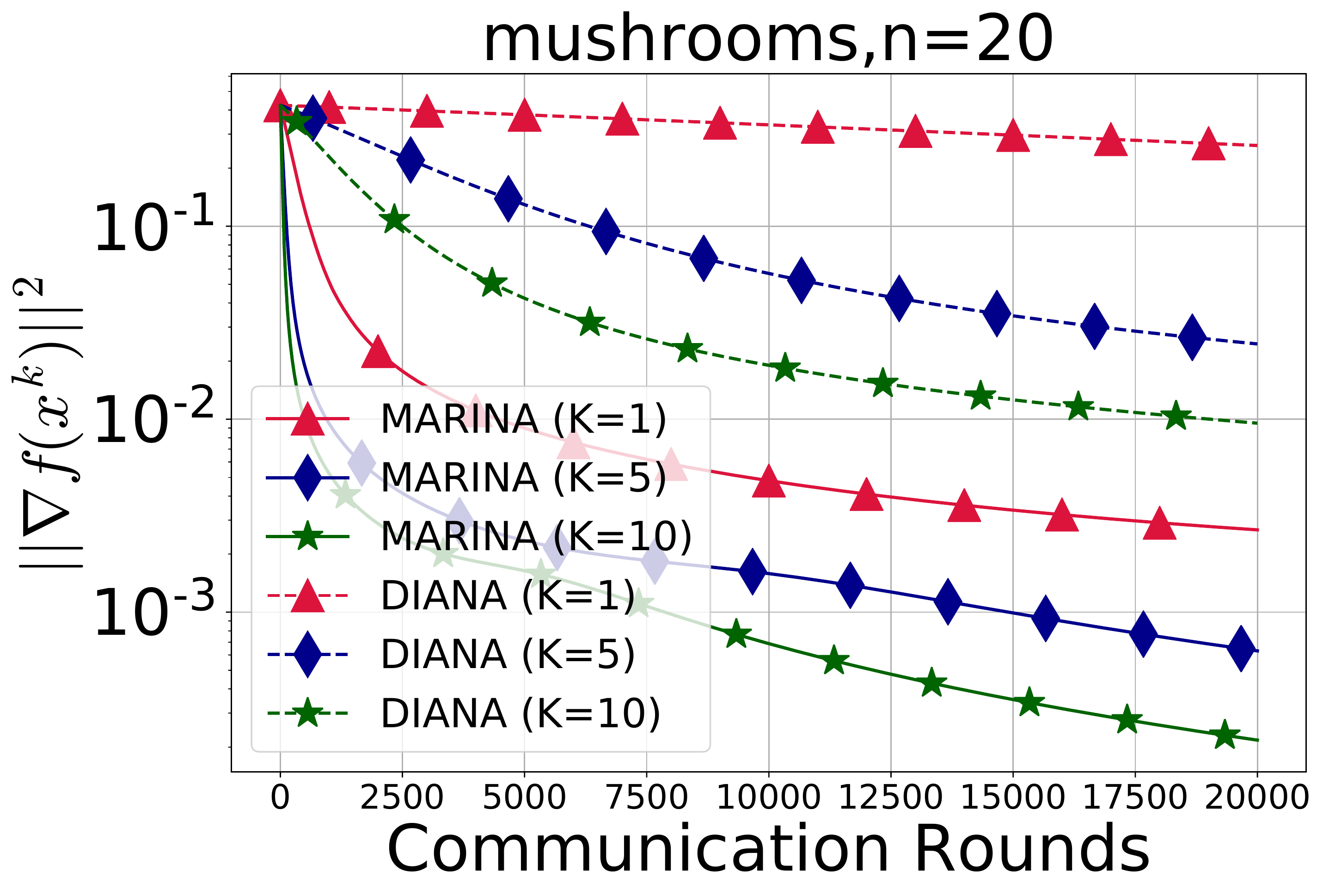}\\
\includegraphics[width=0.3\textwidth]{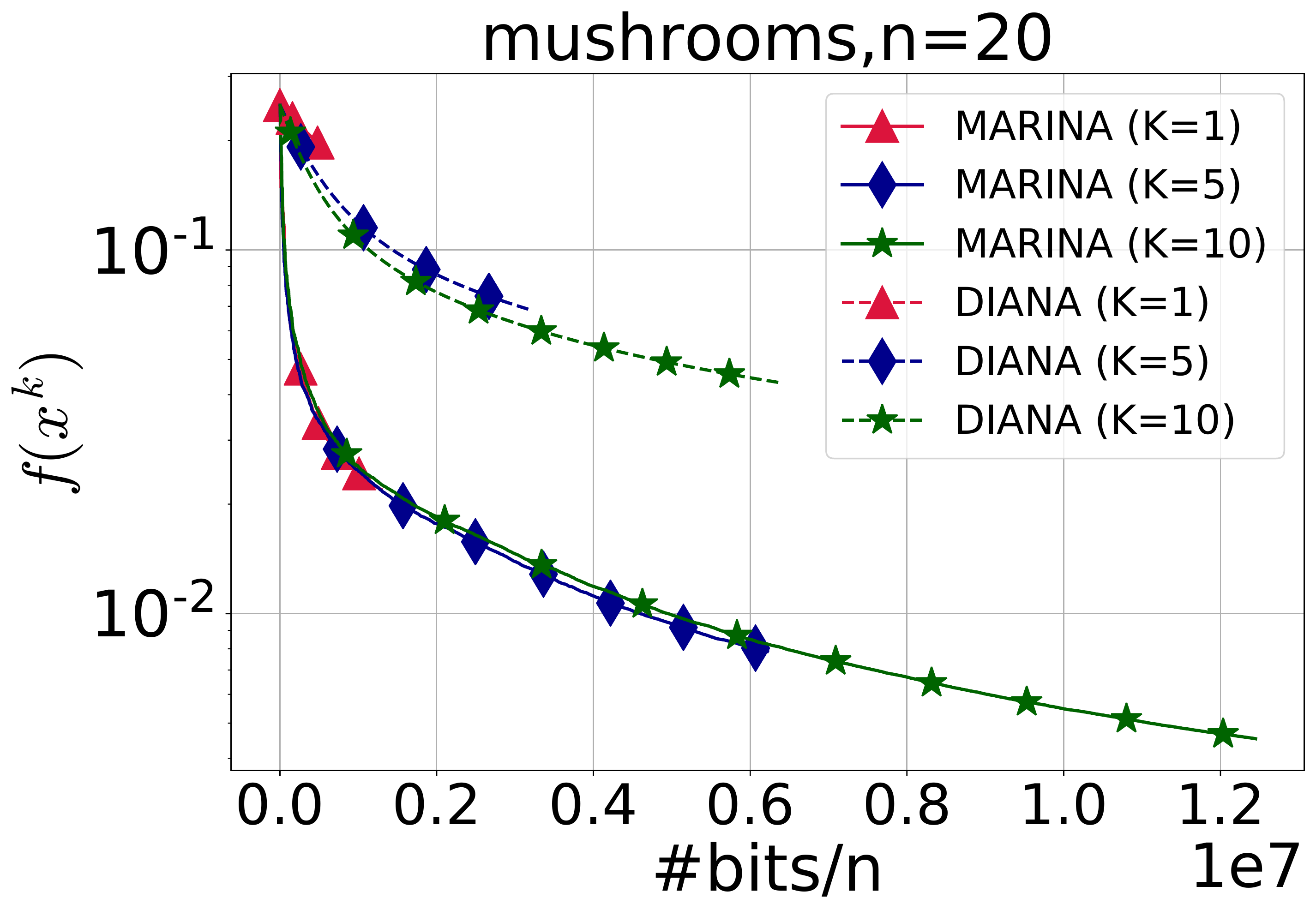}
\includegraphics[width=0.3\textwidth]{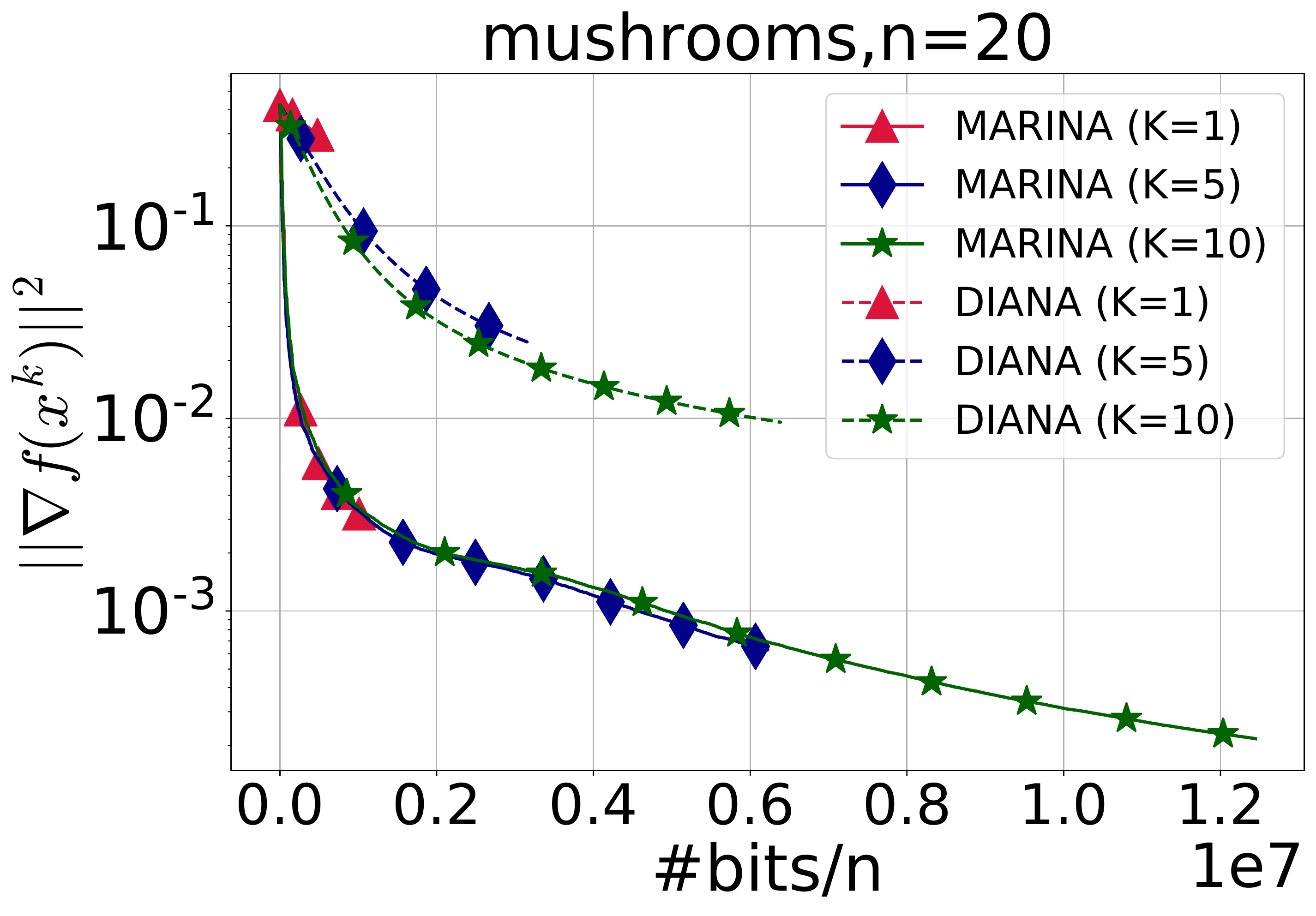}
\caption{Comparison of \algname{MARINA} with  \algname{DIANA} on binary classification problem involving non-convex loss \eqref{eq:experiment_problem} with \texttt{mushrooms} dataset and $n=20$ workers. Stepsizes for the methods are chosen according to the theory. In all cases, we used the RandK sparsification operator with K $\in \{1,5,10\}$.}
\label{fig:full_batched_methods_more_workers}
\end{figure*}

\subsection{Image Classification}

\subsubsection{Setup}
In Section~\ref{sec:NN_experiments}, we demonstrate the performance of \algname{VR-MARINA} and \algname{VR-DIANA} on training {\tt ResNet-18} at {\tt CIFAR100} dataset. {\tt ResNet-18} has $d =$ 11~689~512 parameters to train and {\tt CIFAR100} contains $N =$ 50~000 colored images. The dataset is split into $5$ parts among $5$ workers in such a way that the first four workers get 10~112 samples and the fifth one get 9~552 samples. The code was written in Python 3.9 using \textsc{PyTorch 1.7} and then was executed on a machine with NVIDIA GPU Geforce RTX 2080 Ti with 11 GByte onboard global GPU memory.

In all experiments, we use batchsize $= 256$ on each worker and tune the stepsizes for each method separately. That is, for each method  and for each choice of $K$ for RandK operator we run the method with stepsize $\gamma \in \{10^{-6}, 0.1, 0.2, 0.5, 1.0, 5.0\}$ to find the interval containing the best stepsize. Next, the obtained interal is split into $\sim 10$ equal parts and the method is run with corresponding stepsizes. Other parameters of the methods are chosen according to the theory. The summary of used parameters is given in Table~\ref{tbl:params_resnet}.

\begin{table}[!h]
 \caption{Summary of the parameters used in the experiments presented in Fig.~\ref{fig:resnet_at_cifar100} and Fig.~\ref{fig:resnet_at_cifar100_no_compr}. Stepsizes were tuned, batchsize $= 256$ on each worker, other parameters were picked according to the theory, except the last line, where $p$ for \algname{VR-MARINA} without compression was picked as for \algname{VR-MARINA} with RandK, $K = $ 100 000 compression operator.}
\label{tbl:params_resnet}
\begin{center}
\begin{tabular}{|c|c|c|c|}
\hline
Method  & RandK, $K = $ & $\gamma$ & $p$  \\
 \hline
  \hline
\algname{VR-MARINA} & 100 000 & $0.95$ & $0.008554$    \\ \hline
\algname{VR-MARINA} & 500 000 & $0.95$ & $0.024691$    \\ \hline
\algname{VR-MARINA} & 1 000 000 & $0.95$ & $0.024691$    \\ \hline
\algname{VR-DIANA} & 100 000 & $0.15$ & $0.025316$    \\ \hline
\algname{VR-DIANA} & 500 000 & $0.35$ & $0.025316$    \\ \hline
\algname{VR-DIANA} & 1 000 000 & $0.35$ & $0.025316$    \\ \hline
\algname{VR-MARINA} & 11 689 512 ($K = d$) & $3.5$ & $0.024691$    \\ \hline
\algname{VR-DIANA} & 11 689 512 ($K = d$) & $2.5$ & $0.025316$    \\ \hline
\algname{VR-MARINA} & 11 689 512 ($K = d$) & $3.5$ & $0.008554$    \\ \hline
\end{tabular}
\end{center}
\end{table}

\subsubsection{Extra Experiments}
Results presented in Fig.~\ref{fig:resnet_at_cifar100} show the superiority of \algname{VR-MARINA} to \algname{VR-DIANA} in training {\tt ResNet-18} at {\tt CIFAR100}. To emphasize the effect of compression we also run \algname{VR-MARINA} and \algname{VR-DIANA} without compression, see the results in Fig.~\ref{fig:resnet_at_cifar100_no_compr}. First of all, one con notice that the methods do benefit from compression: \algname{VR-MARINA} and \algname{VR-DIANA} with compression converge much faster than their non-comressed versions in terms of the total number of transmitted bits to achieve given accuracy.

Moreover, as Fig.~\ref{fig:resnet_at_cifar100} shows, \algname{VR-MARINA} with $K = $ 100 000 converges faster than \algname{VR-MARINA} with larger $K$ \textit{in terms of the epochs}. That is, the method with more aggresive compression requires less oracle calls to achieve the same accuracy. The reason of such an unusual behavior is the choice of $p$: when $K = $ 100 000 the theoretical choice of $p$ is much smaller than for $K = $ 500 000 and $K = $ 1 000 000. Therefore, in \algname{VR-MARINA} with $K = $ 100 000, the workers compute the full gradients more rarely than in the case of larger $K$. As the result, it turns out, that the total number of oracle calls needed to achieve given accuracy also smaller for $K =$ 100 000 than for larger $K$. Moreover, we see this phenomenon even without applying compression: \algname{VR-MARINA} without compression and with $p$ as in the experiment with \algname{VR-MARINA} with $K = $ 100 000 converges faster than \algname{VR-MARINA} without compression and with theoretical choice of $p$, which is the same as in the case when $K =$ 500 000, 1 000 000, see Table~\ref{tbl:params_resnet}.

\begin{figure}[h]
\centering
\includegraphics[width=0.33\textwidth]{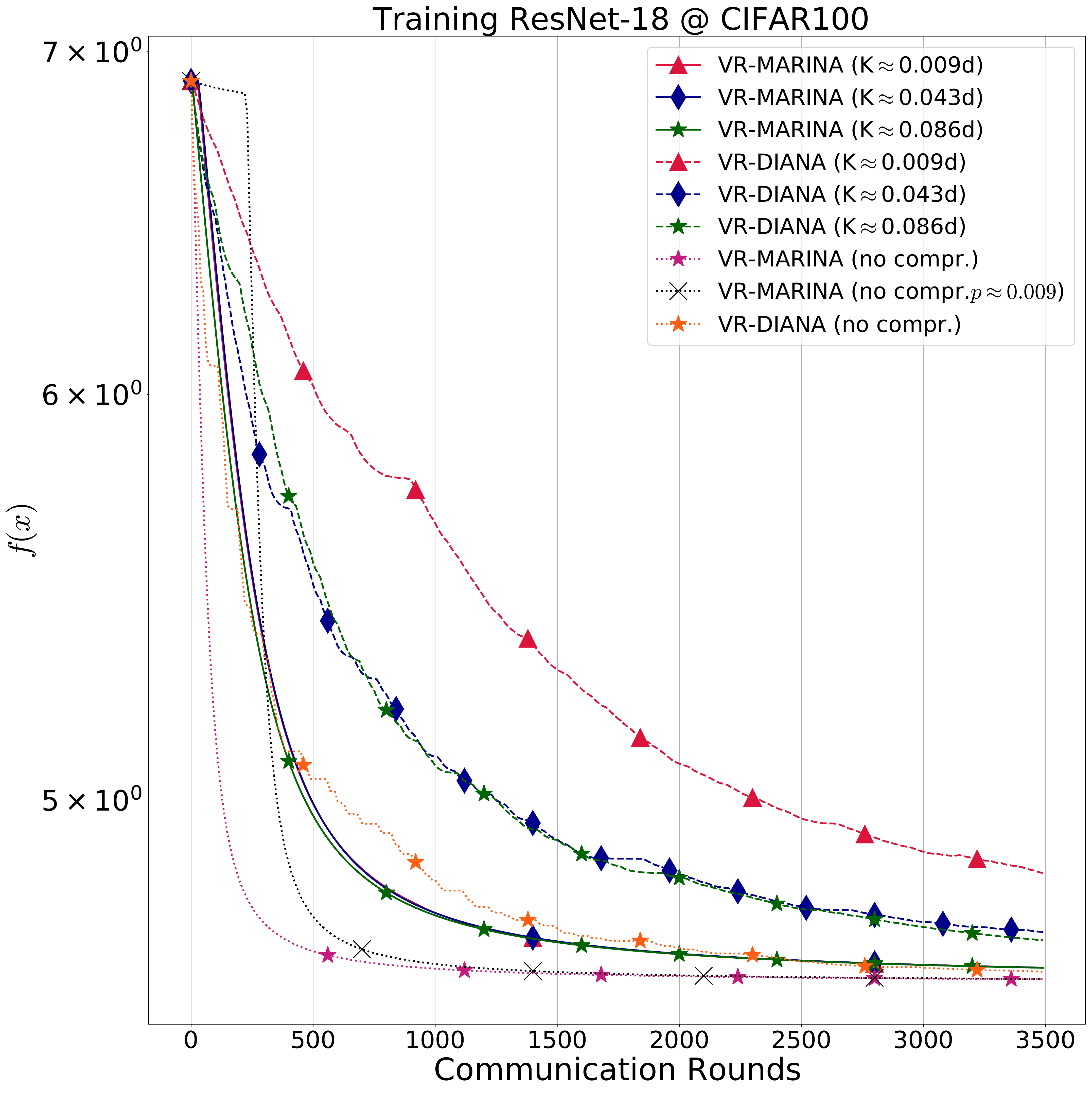}
\includegraphics[width=0.33\textwidth]{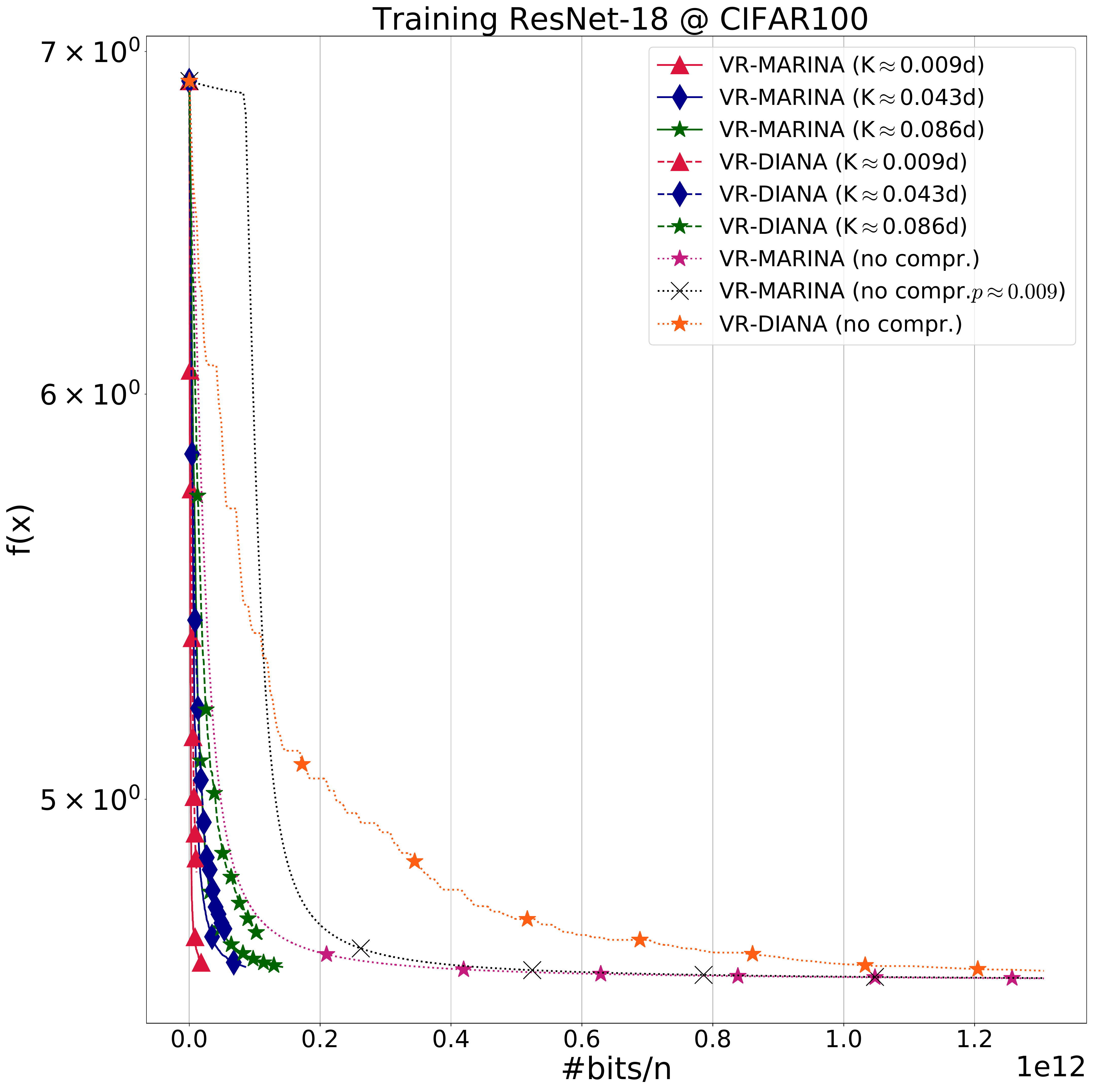}
\includegraphics[width=0.33\textwidth]{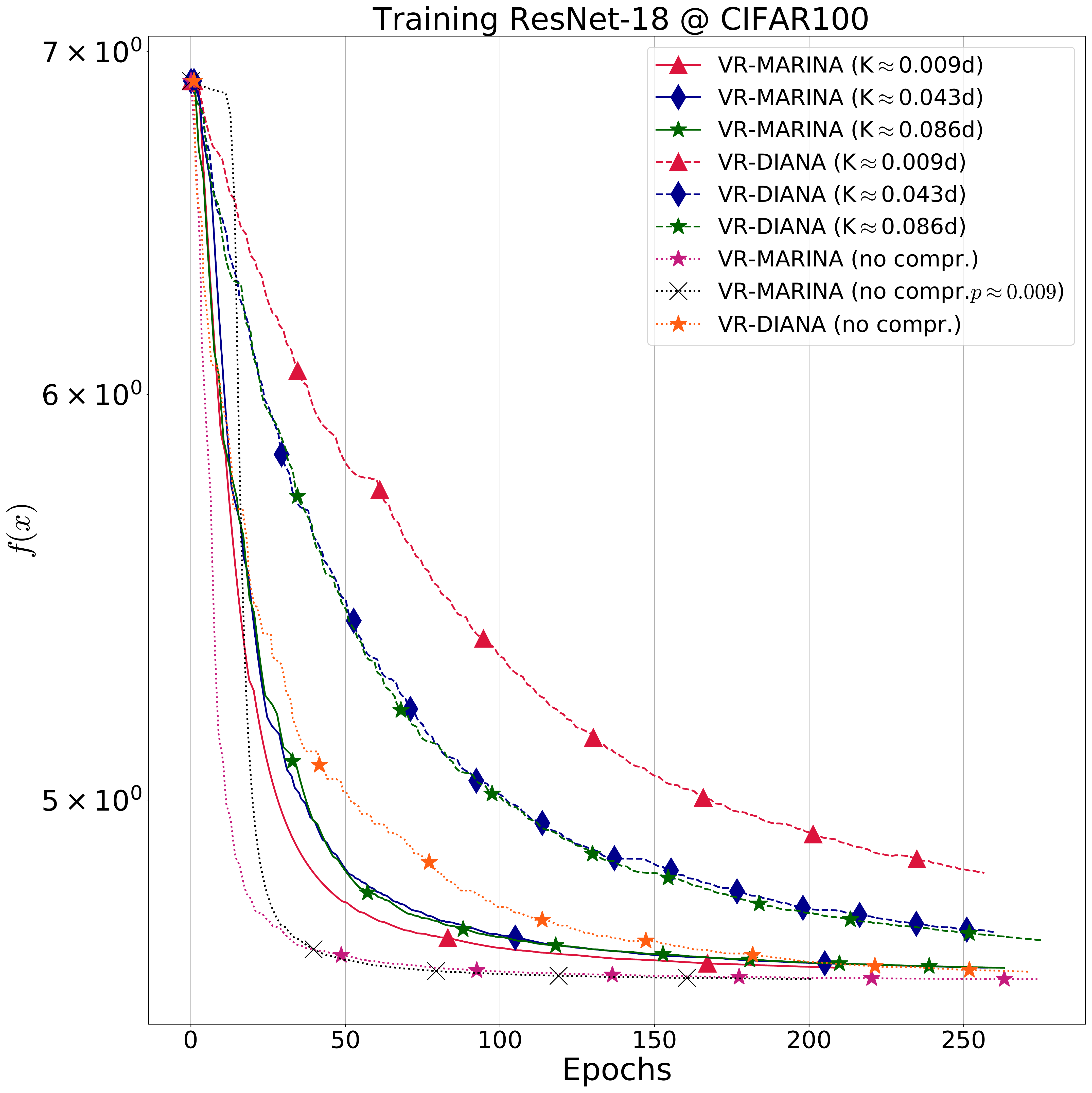}
\includegraphics[width=0.33\textwidth]{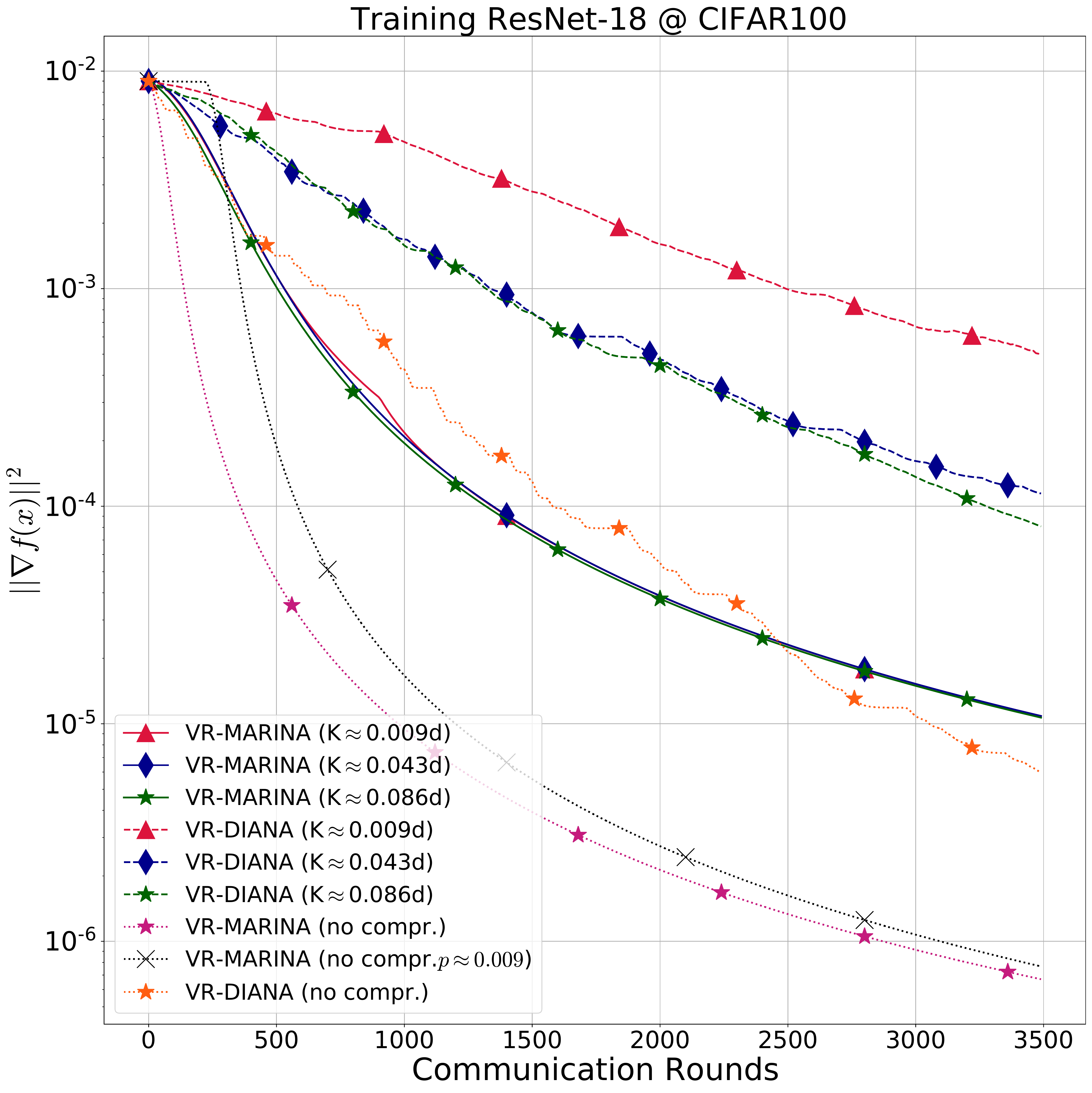}
\includegraphics[width=0.33\textwidth]{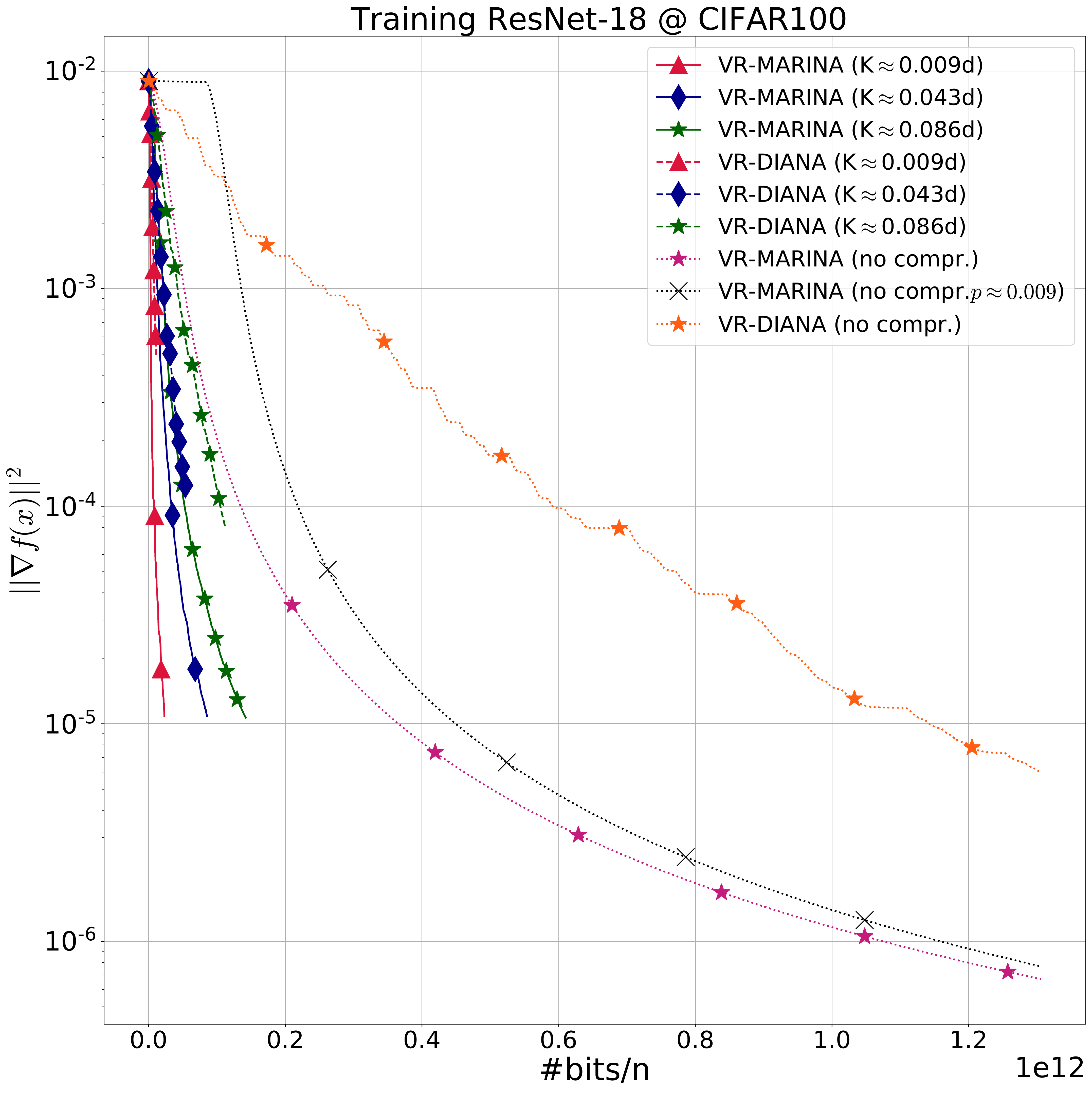}
\includegraphics[width=0.33\textwidth]{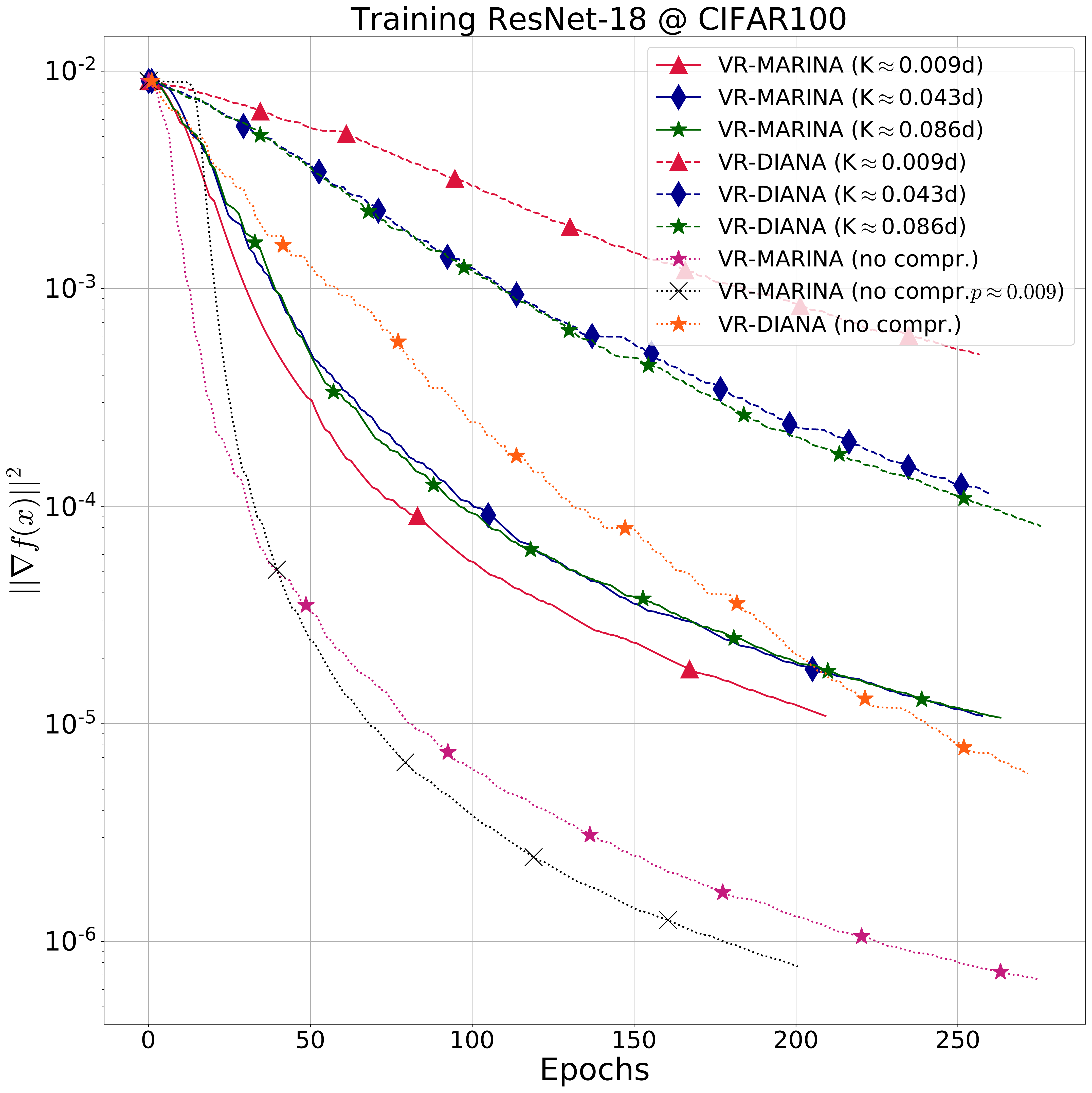}
\caption{Comparison of \algname{VR-MARINA} with \algname{VR-DIANA} on training {\tt ResNet-18} at {\tt CIFAR100} dataset. Number of workers equals $5$. Stepsizes for the methods were tuned and the batchsizes are $\sim \nicefrac{m}{50}$. We used the RandK sparsification operator, the approximate values of $K$ are given in the legends ($d$ is dimension of the problem). We also show the performance of \algname{VR-MARINA} and \algname{VR-DIANA} without compression. }
\label{fig:resnet_at_cifar100_no_compr}
\end{figure}

\clearpage
\section{Basic Facts and Auxiliary Results}\label{sec:basic_facts}
%\subsection{Quantizations}\label{sec:quantizations}
%\begin{definition}\label{def:quantization}
%	We say that stochastic mapping $\cQ(x):\R^d \to \R^d$ is a quantization operator/quantization if there exists such $\omega > 0$ that for any $x\in\R^d$ 
%	\begin{equation}
%		\EE\left[\cQ(x)\right] = x,\quad \EE\left[\|\cQ(x) - x\|^2\right] \le \omega\|x\|^2. \label{eq:quantization_def}
%	\end{equation}
%	For the given quantization operator $\cQ(x)$ we define the the expected density as
%	\begin{equation}
%		\zeta_{\cQ} = \sup\limits_{x\in\R^d}\EE\left[\left\|\cQ(x)\right\|_0\mid x\right],\label{eq:expected_density}
%	\end{equation}
%	where $\|y\|_0$ equals to the number of non-zeroth components of $y\in\R^d$.
%\end{definition}
%We notice that the expected density is well-defined for any quantization operator, since $\left\|\cQ(x)\right\|_0 \le d$.

\subsection{Useful Properties of Expectations}
\textbf{Variance decomposition.} For a random vector $\xi \in \R^d$ and any deterministic vector $x \in \R^d$, the variance can be decomposed as
\begin{equation}\label{eq:variance_decomposition}
	\EE\left[\left\|\xi - \EE\xi\right\|^2\right] = \EE\left[\|\xi-x\|^2\right] - \left\|\EE\xi - x\right\|^2
\end{equation}

\textbf{Tower property of mathematical expectation.} For random variables $\xi,\eta\in \R^d$, we have
\begin{equation}
	\EE\left[\xi\right] = \EE\left[\EE\left[\xi\mid \eta\right]\right]\label{eq:tower_property}
\end{equation}
under an assumption that all expectations in the expression above are well-defined.

\subsection{One Lemma}
In this section, we formulate a lemma from \cite{li2020page}, which holds in our settings as well. We omit the proof of this lemmas since it is identical to the one from \cite{li2020page}.
\begin{lemma}[Lemma~2 from \cite{li2020page}]\label{lem:lemma_2_page}
	Assume that function $f$ is $L$-smooth and $x^{k+1} = x^k - \gamma g^k$. Then 
	\begin{equation}
		f(x^{k+1}) \le f(x^k) - \frac{\gamma}{2}\|\nabla f(x^k)\|^2 - \left(\frac{1}{2\gamma} - \frac{L}{2}\right)\|x^{k+1}-x^k\|^2 + \frac{\gamma}{2}\|g^k - \nabla f(x^k)\|^2. \label{eq:key_inequality}
	\end{equation}
\end{lemma}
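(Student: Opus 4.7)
The plan is to derive the inequality by combining the standard descent lemma for $L$-smooth functions with a careful algebraic manipulation of the cross term. Since $f$ is $L$-smooth, we have
\begin{equation*}
	f(x^{k+1}) \le f(x^k) + \langle \nabla f(x^k), x^{k+1}-x^k\rangle + \frac{L}{2}\|x^{k+1}-x^k\|^2.
\end{equation*}
Using the update rule $x^{k+1} - x^k = -\gamma g^k$, the inner product becomes $-\gamma\langle \nabla f(x^k), g^k\rangle$. The next step is to rewrite this cross term in a form that isolates $\|\nabla f(x^k)\|^2$ and $\|g^k - \nabla f(x^k)\|^2$.

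To do this, I would apply the polarization-type identity $-\langle a,b\rangle = \tfrac{1}{2}\|a-b\|^2 - \tfrac{1}{2}\|a\|^2 - \tfrac{1}{2}\|b\|^2$ with $a = \nabla f(x^k)$ and $b = g^k$, after multiplying by $\gamma$. This gives
\begin{equation*}
	-\gamma\langle \nabla f(x^k), g^k\rangle = \frac{\gamma}{2}\|g^k - \nabla f(x^k)\|^2 - \frac{\gamma}{2}\|\nabla f(x^k)\|^2 - \frac{\gamma}{2}\|g^k\|^2.
\end{equation*}
The final algebraic observation is that $\|x^{k+1}-x^k\|^2 = \gamma^2\|g^k\|^2$, so $\tfrac{\gamma}{2}\|g^k\|^2 = \tfrac{1}{2\gamma}\|x^{k+1}-x^k\|^2$. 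Substituting this into the descent inequality and grouping the two $\|x^{k+1}-x^k\|^2$ terms (one coming with coefficient $-\tfrac{1}{2\gamma}$ and one with $+\tfrac{L}{2}$) yields exactly the claimed bound.

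There is no real obstacle here: the lemma is a routine deterministic identity that does not require any assumption on $g^k$ (it need not be a gradient or an unbiased estimator). The only subtlety worth stating explicitly is that we are not taking any expectation, so the inequality holds pointwise for arbitrary vectors $g^k$; the usefulness in later proofs comes from taking conditional expectation of the remainder term $\tfrac{\gamma}{2}\|g^k - \nabla f(x^k)\|^2$. The entire argument fits in three or four lines of algebra.
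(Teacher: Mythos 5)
Your proof is correct and is essentially the same argument as the one the paper defers to (the paper omits the proof, citing Lemma~2 of \cite{li2020page}, whose proof is precisely this combination of the $L$-smoothness descent inequality, the substitution $x^{k+1}-x^k=-\gamma g^k$, and the identity $-\langle a,b\rangle = \tfrac{1}{2}\|a-b\|^2-\tfrac{1}{2}\|a\|^2-\tfrac{1}{2}\|b\|^2$). Your added remark that the bound is a pointwise, deterministic inequality valid for an arbitrary vector $g^k$ is accurate and is exactly why the paper can apply it under expectations with its biased estimators.
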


\clearpage

\section{Missing Proofs for \algname{MARINA}}\label{sec:marina_proofs}
\subsection{Generally Non-Convex Problems}\label{sec:proof_of_thm_non_cvx}
In this section, we provide the full statement of Theorem~\ref{thm:main_result_non_cvx} together with the proof of this result.
\begin{theorem}[Theorem~\ref{thm:main_result_non_cvx}]\label{thm:main_result_non_cvx_appendix}
	Let Assumptions~\ref{as:lower_bound}~and~\ref{as:L_smoothness} be satisfied and 
	\begin{equation}
		\gamma \le \frac{1}{L\left(1 + \sqrt{\frac{(1-p)\omega}{pn}}\right)},\label{eq:gamma_bound_non_cvx_appendix}
	\end{equation}
	where $L^2 = \frac{1}{n}\sum_{i=1}^nL_i^2$. Then after $K$ iterations of \algname{MARINA} we have
	\begin{equation}
		\EE\left[\left\|\nabla f(\hat x^K)\right\|^2\right] \le \frac{2\Delta_0}{\gamma K}, \label{eq:main_res_non_cvx_appendix}
	\end{equation}
	where $\hat{x}^K$ is chosen uniformly at random from $x^0,\ldots,x^{K-1}$ and $\Delta_0 = f(x^0)-f_*$. That is, after
	\begin{equation}
		K = \cO\left(\frac{\Delta_0 L}{\varepsilon^2}\left(1 + \sqrt{\frac{(1-p)\omega}{pn}}\right)\right) \label{eq:main_res_2_non_cvx_appendix}
	\end{equation}
	iterations \algname{MARINA} produces such a point $\hat x^K$ that $\EE[\|\nabla f(\hat x^K)\|^2] \le \varepsilon^2$.
	Moreover, under an assumption that the communication cost is proportional to the number of non-zero components of transmitted vectors from workers to the server, we have that the expected total communication cost per worker equals
	\begin{equation}
		d + K(pd + (1-p)\zeta_{\cQ}) =  \cO\left(d+\frac{\Delta_0 L}{\varepsilon^2}\left(1 + \sqrt{\frac{(1-p)\omega}{pn}}\right)(pd + (1-p)\zeta_{\cQ})\right),\label{eq:main_res_4_non_cvx_appendix}
	\end{equation}
	where $\zeta_{\cQ}$ is the expected density of the quantization (see Def.~\ref{def:quantization}).
\end{theorem}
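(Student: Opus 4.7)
The plan is to combine the descent inequality from Lemma~\ref{lem:lemma_2_page} with a Lyapunov function that absorbs the conditional second moment $\EE[\|g^k - \nabla f(x^k)\|^2]$, which is nontrivial here because $g^k$ is a biased estimator of $\nabla f(x^k)$. Applying Lemma~\ref{lem:lemma_2_page} with the $L$-smoothness of $f$ (valid since $L_f^2 \le L^2 = \frac{1}{n}\sum_i L_i^2$), and taking total expectations, one obtains a per-step bound of the form
\begin{equation*}
\EE[f(x^{k+1})] \le \EE[f(x^k)] - \tfrac{\gamma}{2}\EE[\|\nabla f(x^k)\|^2] - \left(\tfrac{1}{2\gamma} - \tfrac{L}{2}\right)\EE[\|x^{k+1}-x^k\|^2] + \tfrac{\gamma}{2}\EE[\|g^k - \nabla f(x^k)\|^2].
\end{equation*}
The key step is therefore to control the variance sequence $\sigma_k^2 \eqdef \EE[\|g^k - \nabla f(x^k)\|^2]$ by a one-step recursion.

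To produce that recursion, I would condition on $x^k$, $x^{k+1}$ and $g^k$, split according to the Bernoulli outcome $c_k$. If $c_k=1$, then $g^{k+1} = \nabla f(x^{k+1})$ contributes $0$. If $c_k=0$, write $g^{k+1} - \nabla f(x^{k+1}) = (g^k - \nabla f(x^k)) + \frac{1}{n}\sum_i e_i^k$, where $e_i^k \eqdef \cQ(\nabla f_i(x^{k+1})-\nabla f_i(x^k)) - (\nabla f_i(x^{k+1})-\nabla f_i(x^k))$. By unbiasedness of $\cQ$, $\EE[e_i^k\mid x^k,x^{k+1}]=0$; by the independence of the $\cQ$'s across workers and the second-moment bound in Definition~\ref{def:quantization}, together with Assumption~\ref{as:L_smoothness}, one obtains
\begin{equation*}
\EE\left[\left\|\tfrac{1}{n}\sum_i e_i^k\right\|^2\,\Big|\,x^k,x^{k+1}\right] \le \tfrac{\omega}{n^2}\sum_i \|\nabla f_i(x^{k+1})-\nabla f_i(x^k)\|^2 \le \tfrac{\omega L^2}{n}\|x^{k+1}-x^k\|^2.
\end{equation*}
Combining the two cases and taking expectations yields $\sigma_{k+1}^2 \le (1-p)\sigma_k^2 + \tfrac{(1-p)\omega L^2}{n}\EE[\|x^{k+1}-x^k\|^2]$.

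Next I would combine this with the descent inequality through the Lyapunov function $\Phi^k \eqdef f(x^k) - f_* + \tfrac{\alpha\gamma}{2}\sigma_k^2$ with $\alpha = 1/p$. With this choice, the coefficient $(\tfrac{\gamma}{2} + \tfrac{\alpha\gamma(1-p)}{2} - \tfrac{\alpha\gamma}{2})$ of $\sigma_k^2$ in the one-step drift cancels exactly. The remaining coefficient of $\EE[\|x^{k+1}-x^k\|^2]$ becomes $-\left(\tfrac{1}{2\gamma} - \tfrac{L}{2} - \tfrac{\gamma(1-p)\omega L^2}{2pn}\right)$, which is $\le 0$ precisely when $\gamma L + \gamma^2 L^2 \tfrac{(1-p)\omega}{pn} \le 1$. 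This quadratic condition is implied by the hypothesized bound $\gamma \le L^{-1}(1+\sqrt{(1-p)\omega/(pn)})^{-1}$, since with $c \eqdef \sqrt{(1-p)\omega/(pn)}$ we have $\tfrac{1}{1+c}+\tfrac{c^2}{(1+c)^2} \le 1$.

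Finally, telescoping $\EE[\Phi^{k+1}] \le \EE[\Phi^k] - \tfrac{\gamma}{2}\EE[\|\nabla f(x^k)\|^2]$ from $k=0$ to $K-1$, using $g^0 = \nabla f(x^0)$ so that $\Phi^0 = \Delta_0$, and dividing by $K$, gives $\EE[\|\nabla f(\hat x^K)\|^2] \le 2\Delta_0/(\gamma K)$, which is~\eqref{eq:main_res_non_cvx_appendix}. The iteration-complexity bound~\eqref{eq:main_res_2_non_cvx_appendix} and the communication-cost estimate~\eqref{eq:main_res_4_non_cvx_appendix} then follow by plugging in the largest admissible $\gamma$ and using that each iteration transmits, in expectation, $pd + (1-p)\zeta_{\cQ}$ nonzero coordinates per worker, plus the initial dense broadcast of $\nabla f_i(x^0)$. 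The only real subtlety is the biased-estimator variance recursion; once the right Lyapunov function is identified, the rest is bookkeeping.
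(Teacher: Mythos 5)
Your proposal is correct and follows essentially the same route as the paper's proof: the descent inequality of Lemma~\ref{lem:lemma_2_page}, the variance recursion $\sigma_{k+1}^2 \le (1-p)\sigma_k^2 + \tfrac{(1-p)\omega L^2}{n}\EE[\|x^{k+1}-x^k\|^2]$ obtained from unbiasedness and independence of the compressors plus $L_i$-smoothness, the Lyapunov function $f(x^k)-f_*+\tfrac{\gamma}{2p}\|g^k-\nabla f(x^k)\|^2$, the same quadratic stepsize verification, and the same telescoping with $g^0=\nabla f(x^0)$. The bookkeeping for the iteration count and the expected communication cost also matches the paper.
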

\begin{proof}[Proof of Theorem~\ref{thm:main_result_non_cvx}]
	The scheme of the proof is similar to the proof of Theorem~1 from \cite{li2020page}. From Lemma~\ref{lem:lemma_2_page}, we have
	\begin{equation}
		\EE[f(x^{k+1})] \le \EE[f(x^k)] - \frac{\gamma}{2}\EE\left[\|\nabla f(x^k)\|^2\right] - \left(\frac{1}{2\gamma} - \frac{L}{2}\right)\EE\left[\|x^{k+1}-x^k\|^2\right] + \frac{\gamma}{2}\EE\left[\|g^k - \nabla f(x^k)\|^2\right]. \label{eq:non_cvx_technical_1}
	\end{equation}
	Next, we need to derive an upper bound for $\EE\left[\|g^{k+1}-\nabla f(x^{k+1})\|^2\right]$. By definition of $g^{k+1}$, we have
	\begin{equation}
		g^{k+1} = \begin{cases}\nabla f(x^{k+1})& \text{with probability } p,\\ g^k + \frac{1}{n}\sum\limits_{i = 1}^n\cQ\left(\nabla f_{i}(x^{k+1}) - \nabla f_{i}(x^k)\right)& \text{with probability } 1-p. \end{cases}\notag
	\end{equation}
	Using this, variance decomposition \eqref{eq:variance_decomposition} and tower property \eqref{eq:tower_property}, we derive:
	\begin{eqnarray}
		\EE\left[\|g^{k+1}-\nabla f(x^{k+1})\|^2\right] &\overset{\eqref{eq:tower_property}}{=}& (1-p)\EE\left[\left\|g^k + \frac{1}{n}\sum\limits_{i=1}^n \cQ\left(\nabla f_{i}(x^{k+1}) - \nabla f_{i}(x^k)\right) - \nabla f(x^{k+1})\right\|^2\right]\notag\\
		&\overset{\eqref{eq:tower_property},\eqref{eq:variance_decomposition}}{=}& (1-p)\EE\left[\left\|\frac{1}{n}\sum\limits_{i=1}^n \cQ\left(\nabla f_{i}(x^{k+1}) - \nabla f_{i}(x^k)\right) - \nabla f(x^{k+1}) + \nabla f(x^k)\right\|^2\right]\notag\\
		&&\quad + (1-p)\EE\left[\left\|g^k - \nabla f(x^k)\right\|^2\right].\notag
	\end{eqnarray}
	Since $\cQ\left(\nabla f_{1}(x^{k+1}) - \nabla f_{1}(x^k)\right),\ldots,\cQ\left(\nabla f_{n}(x^{k+1}) - \nabla f_{n}(x^k)\right)$ are independent random vectors for fixed $x^k$ and $x^{k+1}$ we have
	\begin{eqnarray*}
		\EE\left[\|g^{k+1}-\nabla f(x^{k+1})\|^2\right] &=& (1-p)\EE\left[\left\|\frac{1}{n}\sum\limits_{i=1}^n \left(\cQ\left(\nabla f_{i}(x^{k+1}) - \nabla f_{i}(x^k)\right) - \nabla f_i(x^{k+1}) + \nabla f_i(x^k)\right)\right\|^2\right]\\
		&&\quad + (1-p)\EE\left[\left\|g^k - \nabla f(x^k)\right\|^2\right]\\
		&=& \frac{1-p}{n^2}\sum\limits_{i=1}^n\EE\left[\left\|\cQ\left(\nabla f_{i}(x^{k+1}) - \nabla f_{i}(x^k)\right) - \nabla f_i(x^{k+1}) + \nabla f_i(x^k)\right\|^2\right]\\
		&&\quad + (1-p)\EE\left[\left\|g^k - \nabla f(x^k)\right\|^2\right]\\
		&\overset{\eqref{eq:quantization_def}}{\le}& \frac{(1-p)\omega}{n^2}\sum\limits_{i=1}^n\EE\left[\left\|\nabla f_i(x^{k+1}) - \nabla f_i(x^k)\right\|^2\right] + (1-p)\EE\left[\left\|g^k - \nabla f(x^k)\right\|^2\right].
	\end{eqnarray*}
	Using $L$-smoothness \eqref{eq:L_smoothness} of $f_i$ together with the tower property \eqref{eq:tower_property}, we obtain
	\begin{eqnarray}
		\EE\left[\|g^{k+1}-\nabla f(x^{k+1})\|^2\right] &\le& \frac{(1-p)\omega}{n^2}\sum\limits_{i=1}^nL_i^2\EE\left[\|x^{k+1} - x^k\|^2\right] + (1-p)\EE\left[\left\|g^k - \nabla f(x^k)\right\|^2\right]\notag\\
		&=&\frac{(1-p)\omega L^2}{n}\EE\left[\|x^{k+1}-x^k\|^2\right] + (1-p)\EE\left[\left\|g^k - \nabla f(x^k)\right\|^2\right].\label{eq:non_cvx_technical_2}
	\end{eqnarray}
	Next, we introduce a new notation: $\Phi_k = f(x^k) - f_* + \frac{\gamma}{2p}\|g^k - \nabla f(x^k)\|^2$. Using this and inequalities \eqref{eq:non_cvx_technical_1} and \eqref{eq:non_cvx_technical_2}, we establish the following inequality:
	\begin{eqnarray}
		\EE\left[\Phi_{k+1}\right] &\le& \EE\left[f(x^k) - f_* - \frac{\gamma}{2}\|\nabla f(x^k)\|^2 - \left(\frac{1}{2\gamma} - \frac{L}{2}\right)\|x^{k+1}-x^k\|^2 + \frac{\gamma}{2}\|g^k - \nabla f(x^k)\|^2\right]\notag\\
		&&\quad + \frac{\gamma}{2p}\EE\left[\frac{(1-p)\omega L^2}{n}\|x^{k+1}-x^k\|^2 + (1-p)\left\|g^k - \nabla f(x^k)\right\|^2\right] \notag\\
		&=& \EE\left[\Phi_k\right] - \frac{\gamma}{2}\EE\left[\|\nabla f(x^k)\|^2\right] + \left(\frac{\gamma(1-p)\omega L^2}{2pn} - \frac{1}{2\gamma} + \frac{L}{2}\right)\EE\left[\|x^{k+1}-x^k\|^2\right]\notag\\
		&\overset{\eqref{eq:gamma_bound_non_cvx_appendix}}{\le}& \EE\left[\Phi_k\right] - \frac{\gamma}{2}\EE\left[\|\nabla f(x^k)\|^2\right],\label{eq:non_cvx_technical_3}
	\end{eqnarray}
	where in the last inequality, we use $\frac{\gamma(1-p)\omega L^2}{2pn} - \frac{1}{2\gamma} + \frac{L}{2} \le 0$ following from \eqref{eq:gamma_bound_non_cvx_appendix}. Summing up inequalities \eqref{eq:non_cvx_technical_3} for $k=0,1,\ldots,K-1$ and rearranging the terms, we derive
	\begin{eqnarray}
		\frac{1}{K}\sum\limits_{k=0}^{K-1}\EE\left[\|\nabla f(x^k)\|^2\right] &\le& \frac{2}{\gamma K}\sum\limits_{k=0}^{K-1}\left(\EE[\Phi_k]-\EE[\Phi_{k+1}]\right) = \frac{2\left(\EE[\Phi_0]-\EE[\Phi_{K}]\right)}{\gamma K} = \frac{2\Delta_0}{\gamma K},\notag
	\end{eqnarray}
	since $g^0 = \nabla f(x^0)$ and $\Phi_{k+1} \ge 0$. Finally, using the tower property \eqref{eq:tower_property} and the definition of $\hat x^K$, we obtain \eqref{eq:main_res_non_cvx_appendix} that implies \eqref{eq:main_res_2_non_cvx_appendix} and \eqref{eq:main_res_4_non_cvx_appendix}.
\end{proof}

\begin{corollary}[Corollary~\ref{cor:main_result_non_cvx}]\label{cor:main_result_non_cvx_appendix}
	Let the assumptions of Theorem~\ref{thm:main_result_non_cvx} hold and $p = \frac{\zeta_{\cQ}}{d}$, where $\zeta_{\cQ}$ is the expected density of the quantization (see Def.~\ref{def:quantization}). If 
	\begin{equation*}
		\gamma \le \frac{1}{L\left(1 + \sqrt{\frac{\omega}{n}\left(\frac{d}{\zeta_{\cQ}}-1\right)}\right)},
	\end{equation*}
	then \algname{MARINA} requires 
	\begin{equation*}
		K = \cO\left(\frac{\Delta_0 L}{\varepsilon^2}\left(1 + \sqrt{\frac{\omega}{n}\left(\frac{d}{\zeta_{\cQ}}-1\right)}\right)\right)
	\end{equation*}
	iterations/communication rounds to achieve $\EE[\|\nabla f(\hat x^K)\|^2] \le \varepsilon^2$, and the expected total communication cost per worker is
	\begin{equation*}
		\cO\left(d+\frac{\Delta_0 L}{\varepsilon^2}\left(\zeta_{\cQ} + \sqrt{\frac{\omega\zeta_{\cQ}}{n}\left(d-\zeta_{\cQ}\right)}\right)\right)
	\end{equation*}
	 under an assumption that the communication cost is proportional to the number of non-zero components of transmitted vectors from workers to the server.
\end{corollary}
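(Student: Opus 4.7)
The plan is to derive the corollary as a direct specialization of Theorem~\ref{thm:main_result_non_cvx_appendix} under the specific choice $p = \zeta_{\cQ}/d$. The key algebraic identity that drives everything is
\[
\frac{1-p}{p} \;=\; \frac{1 - \zeta_{\cQ}/d}{\zeta_{\cQ}/d} \;=\; \frac{d}{\zeta_{\cQ}} - 1,
\]
so that the quantity $\sqrt{(1-p)\omega/(pn)}$ appearing in both the stepsize bound and the iteration count of Theorem~\ref{thm:main_result_non_cvx_appendix} immediately collapses to $\sqrt{(\omega/n)(d/\zeta_{\cQ}-1)}$. Substituting this into the stepsize condition \eqref{eq:gamma_bound_non_cvx_appendix} yields the bound on $\gamma$ stated in the corollary, and substituting it into \eqref{eq:main_res_2_non_cvx_appendix} yields the claimed iteration/communication-round complexity.

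For the total communication cost, the plan is to start from expression \eqref{eq:main_res_4_non_cvx_appendix}, namely $d + K(pd + (1-p)\zeta_{\cQ})$, and simplify the per-iteration cost using $p = \zeta_{\cQ}/d$. Specifically, $pd = \zeta_{\cQ}$ and $(1-p)\zeta_{\cQ} \le \zeta_{\cQ}$, so $pd + (1-p)\zeta_{\cQ} \le 2\zeta_{\cQ} = \cO(\zeta_{\cQ})$. Multiplying this by the iteration bound just derived gives a per-iteration contribution of order $\zeta_{\cQ}\bigl(1 + \sqrt{(\omega/n)(d/\zeta_{\cQ}-1)}\bigr)$, and pulling the $\zeta_{\cQ}$ inside the square root rewrites the second term as $\sqrt{\omega \zeta_{\cQ}(d-\zeta_{\cQ})/n}$, matching the stated bound.

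Since the corollary is essentially a bookkeeping step on top of Theorem~\ref{thm:main_result_non_cvx_appendix}, there is no real obstacle here; the only point requiring a little care is verifying that pushing $\zeta_{\cQ}$ under the square root in the communication cost produces exactly the expression $\sqrt{\omega \zeta_{\cQ}(d-\zeta_{\cQ})/n}$ rather than a slightly different rearrangement, and keeping consistent track of the additive $d$ from the initialization round (broadcast of $g^0 = \nabla f(x^0)$) alongside the $K$-dependent term. With these substitutions done in order, the proof is a few lines.
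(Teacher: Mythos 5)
Your proposal is correct and follows essentially the same route as the paper's own proof: both use the identity $\frac{1-p}{p}=\frac{d}{\zeta_{\cQ}}-1$ to specialize the stepsize and iteration bounds of Theorem~\ref{thm:main_result_non_cvx_appendix}, and both bound the per-round cost via $pd+(1-p)\zeta_{\cQ}\le 2\zeta_{\cQ}$ before absorbing $\zeta_{\cQ}$ into the square root to obtain $\sqrt{\nicefrac{\omega\zeta_{\cQ}(d-\zeta_{\cQ})}{n}}$. The algebra checks out, including the additive $d$ from the initial full-gradient round.
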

\begin{proof}[Proof of Corollary~\ref{cor:main_result_non_cvx}]
	The choice of $p = \frac{\zeta_{\cQ}}{d}$ implies
	\begin{eqnarray*}
		\frac{1-p}{p} &=& \frac{d}{\zeta_{\cQ}}-1,\\
		pd + (1-p)\zeta_{\cQ} &\le& \zeta_{\cQ} + \left(1 - \frac{\zeta_{\cQ}}{d}\right)\cdot\zeta_{\cQ} \le 2\zeta_{\cQ}.
	\end{eqnarray*}	
	Plugging these relations in \eqref{eq:gamma_bound_non_cvx_appendix}, \eqref{eq:main_res_2_non_cvx_appendix}, and \eqref{eq:main_res_4_non_cvx_appendix}, we get that if
	\begin{equation*}
		\gamma \le \frac{1}{L\left(1 + \sqrt{\frac{\omega}{n}\left(\frac{d}{\zeta_{\cQ}}-1\right)}\right)},
	\end{equation*}
	then \algname{MARINA} requires 
	\begin{eqnarray*}
		K &=& \cO\left(\frac{\Delta_0 L}{\varepsilon^2}\left(1 + \sqrt{\frac{(1-p)\omega}{pn}}\right)\right)\\
		&=& \cO\left(\frac{\Delta_0 L}{\varepsilon^2}\left(1 + \sqrt{\frac{\omega}{n}\left(\frac{d}{\zeta_{\cQ}}-1\right)}\right)\right)
	\end{eqnarray*}
	iterations/communication rounds in order to achieve $\EE[\|\nabla f(\hat x^K)\|^2] \le \varepsilon^2$, and the expected total communication cost per worker is
	\begin{eqnarray*}
		d + K(pd + (1-p)\zeta_{\cQ}) &=&  \cO\left(d+\frac{\Delta_0 L}{\varepsilon^2}\left(1 + \sqrt{\frac{(1-p)\omega}{pn}}\right)(pd + (1-p)\zeta_{\cQ})\right)\\
		&=&\cO\left(d+\frac{\Delta_0 L}{\varepsilon^2}\left(\zeta_{\cQ} + \sqrt{\frac{\omega\zeta_{\cQ}}{n}\left(d-\zeta_{\cQ}\right)}\right)\right)
	\end{eqnarray*}
	 under an assumption that the communication cost is proportional to the number of non-zero components of transmitted vectors from workers to the server.
\end{proof}

\subsection{Convergence Results Under Polyak-{\L}ojasiewicz condition}\label{sec:proof_of_thm_pl}
In this section, we provide the full statement of Theorem~\ref{thm:main_result_pl} together with the proof of this result.
\begin{theorem}[Theorem~\ref{thm:main_result_pl}]\label{thm:main_result_pl_appendix}
	Let Assumptions~\ref{as:lower_bound},~\ref{as:L_smoothness}~and~\ref{as:pl_condition} be satisfied and 
	\begin{equation}
		\gamma \le \min\left\{\frac{1}{L\left(1 + \sqrt{\frac{2(1-p)\omega}{pn}}\right)}, \frac{p}{2\mu}\right\},\label{eq:gamma_bound_pl_appendix}
	\end{equation}
	where $L^2 = \frac{1}{n}\sum_{i=1}^nL_i^2$. Then after $K$ iterations of \algname{MARINA} we have
	\begin{equation}
		\EE\left[f(x^K) - f(x^*)\right] \le (1-\gamma\mu)^K\Delta_0, \label{eq:main_res_pl_appendix}
	\end{equation}
	where $\Delta_0 = f(x^0)-f(x^*)$. That is, after
	\begin{equation}
		K = \cO\left(\max\left\{\frac{1}{p},\frac{L}{\mu}\left(1 + \sqrt{\frac{(1-p)\omega}{pn}}\right)\right\}\log\frac{\Delta_0}{\varepsilon}\right) \label{eq:main_res_2_pl_appendix}
	\end{equation}
	iterations \algname{MARINA} produces such a point $x^K$ that $\EE[f(x^K) - f(x^*)] \le \varepsilon$.
	Moreover, under an assumption that the communication cost is proportional to the number of non-zero components of transmitted vectors from workers to the server, we have that the expected total communication cost per worker equals
	\begin{equation}
		d + K(pd + (1-p)\zeta_{\cQ}) =  \cO\left(d+\max\left\{\frac{1}{p},\frac{L}{\mu}\left(1 + \sqrt{\frac{(1-p)\omega}{pn}}\right)\right\}(pd + (1-p)\zeta_{\cQ})\log\frac{\Delta_0}{\varepsilon}\right),\label{eq:main_res_4_pl_appendix}
	\end{equation}
	where $\zeta_{\cQ}$ is the expected density of the quantization (see Def.~\ref{def:quantization}).
\end{theorem}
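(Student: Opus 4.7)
The plan is to mirror the non-convex argument from the proof of Theorem~\ref{thm:main_result_non_cvx_appendix}, but inflate the Lyapunov function so that its variance term can absorb a geometric contraction factor $(1-\gamma\mu)$ propagated by the P{\L} condition. Specifically, I would work with the potential
\[
\Phi_k \;=\; f(x^k) - f(x^*) + \frac{c\gamma}{p}\,\bigl\|g^k - \nabla f(x^k)\bigr\|^2,
\]
for a constant $c \ge 1/2$ to be pinned down, and aim to prove $\EE[\Phi_{k+1}] \le (1-\gamma\mu)\EE[\Phi_k]$ for every $k$.

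The two analytical ingredients are already in hand. First, Lemma~\ref{lem:lemma_2_page} gives the descent estimate \eqref{eq:key_inequality}. Second, the variance recursion \eqref{eq:non_cvx_technical_2} derived inside the proof of Theorem~\ref{thm:main_result_non_cvx_appendix} bounds $\EE[\|g^{k+1}-\nabla f(x^{k+1})\|^2]$ by $\frac{(1-p)\omega L^2}{n}\EE[\|x^{k+1}-x^k\|^2] + (1-p)\EE[\|g^k-\nabla f(x^k)\|^2]$. Adding these two, I would obtain
\[
\EE[\Phi_{k+1}] \le \EE[f(x^k){-}f(x^*)] - \tfrac{\gamma}{2}\EE[\|\nabla f(x^k)\|^2] - A_\gamma \EE[\|x^{k+1}{-}x^k\|^2] + B_\gamma \EE[\|g^k{-}\nabla f(x^k)\|^2],
\]
with $A_\gamma = \tfrac{1}{2\gamma}-\tfrac{L}{2}-\tfrac{c\gamma(1-p)\omega L^2}{pn}$ and $B_\gamma = \tfrac{\gamma}{2}+\tfrac{c\gamma(1-p)}{p}$. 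The P{\L} inequality \eqref{eq:pl_condition} then converts the $\|\nabla f(x^k)\|^2$ term into the desired contraction $-\gamma\mu(f(x^k)-f(x^*))$.

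To close the recursion it then suffices to choose the parameters so that (a) $A_\gamma \ge 0$, and (b) $B_\gamma \le (1-\gamma\mu)\tfrac{c\gamma}{p}$. Requirement (b) simplifies to $c(p-\gamma\mu) \ge p/2$, which I would satisfy by taking $c=1$ together with the stepsize bound $\gamma \le p/(2\mu)$ from \eqref{eq:gamma_bound_pl_appendix}. Requirement (a) becomes $\gamma L + \tfrac{2\gamma^2(1-p)\omega L^2}{pn} \le 1$; substituting the bound $\gamma \le 1/(L(1+b))$ with $b = \sqrt{2(1-p)\omega/(pn)}$ and collecting terms yields $\tfrac{1+b+b^2}{(1+b)^2} \le 1$, which holds for all $b\ge 0$. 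This is precisely why the PL stepsize bound uses $\sqrt{2(1-p)\omega/(pn)}$ rather than the milder $\sqrt{(1-p)\omega/(pn)}$ of the non-convex theorem; managing this constant inflation is the main technical subtlety of the proof.

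Given $\EE[\Phi_{k+1}] \le (1-\gamma\mu)\EE[\Phi_k]$, iterating yields $\EE[\Phi_K] \le (1-\gamma\mu)^K \Phi_0$. Since the algorithm is initialized with $g^0 = \nabla f(x^0)$, we have $\Phi_0 = \Delta_0$, and the nonnegativity of the variance term gives $\EE[f(x^K)-f(x^*)] \le \EE[\Phi_K]$, producing \eqref{eq:main_res_pl_appendix}. To deduce the iteration complexity \eqref{eq:main_res_2_pl_appendix}, I would use $(1-\gamma\mu)^K \le \exp(-\gamma\mu K)$, solve $\gamma\mu K \ge \log(\Delta_0/\varepsilon)$, and substitute the worst-case value $\gamma = \min\{1/(L(1+\sqrt{2(1-p)\omega/(pn)})),\,p/(2\mu)\}$ from \eqref{eq:gamma_bound_pl_appendix}; the two cases of the maximum correspond directly to which side of the $\min$ is active. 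The communication-cost bound \eqref{eq:main_res_4_pl_appendix} then follows by the same accounting as in Theorem~\ref{thm:main_result_non_cvx_appendix}: each round transmits at most $pd + (1-p)\zeta_{\cQ}$ coordinates per worker in expectation, plus the initial broadcast of $\nabla f(x^0)$.
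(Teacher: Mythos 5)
Your proposal is correct and follows essentially the same route as the paper's proof: the same Lyapunov function $\Phi_k = f(x^k)-f(x^*)+\tfrac{\gamma}{p}\|g^k-\nabla f(x^k)\|^2$ (your $c=1$), the same two ingredients (Lemma~\ref{lem:lemma_2_page} combined with the P{\L} inequality, plus the variance recursion \eqref{eq:non_cvx_technical_2}), and the same two stepsize conditions — your (a) and (b) are exactly the inequalities $\tfrac{\gamma(1-p)\omega L^2}{pn}-\tfrac{1}{2\gamma}+\tfrac{L}{2}\le 0$ and $\tfrac{\gamma}{2}+\tfrac{\gamma}{p}(1-p)\le(1-\gamma\mu)\tfrac{\gamma}{p}$ used in the paper, including the correct explanation of the extra factor $2$ in \eqref{eq:gamma_bound_pl_appendix}. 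The unrolling, initialization $\Phi_0=\Delta_0$, iteration-count and communication-cost accounting also match the paper's argument.
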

\begin{proof}[Proof of Theorem~\ref{thm:main_result_pl}]
	The proof is very similar to the proof of Theorem~\ref{thm:main_result_non_cvx}. From Lemma~\ref{lem:lemma_2_page} and P{\L} condition, we have
	\begin{eqnarray}
		\EE[f(x^{k+1}) - f(x^*)] &\le& \EE[f(x^k) - f(x^*)] - \frac{\gamma}{2}\EE\left[\|\nabla f(x^k)\|^2\right] - \left(\frac{1}{2\gamma} - \frac{L}{2}\right)\EE\left[\|x^{k+1}-x^k\|^2\right]\notag\\
		&&\quad + \frac{\gamma}{2}\EE\left[\|g^k - \nabla f(x^k)\|^2\right]\notag\\
		&\overset{\eqref{eq:pl_condition}}{\le}& (1-\gamma\mu)\EE\left[f(x^k) - f(x^*)\right] - \left(\frac{1}{2\gamma} - \frac{L}{2}\right)\EE\left[\|x^{k+1}-x^k\|^2\right] + \frac{\gamma}{2}\EE\left[\|g^k - \nabla f(x^k)\|^2\right]. \notag
	\end{eqnarray}
	Using the same arguments as in the proof of \eqref{eq:non_cvx_technical_2}, we obtain
	\begin{eqnarray}
		\EE\left[\|g^{k+1}-\nabla f(x^{k+1})\|^2\right] &\le& \frac{(1-p)\omega L^2}{n}\EE\left[\|x^{k+1}-x^k\|^2\right] + (1-p)\EE\left[\left\|g^k - \nabla f(x^k)\right\|^2\right].\notag
	\end{eqnarray}
	Putting all together, we derive that the sequence $\Phi_k = f(x^k) - f(x^*) + \frac{\gamma}{p}\|g^k - \nabla f(x^k)\|^2$ satisfies
	\begin{eqnarray}
		\EE\left[\Phi_{k+1}\right] &\le& \EE\left[(1-\gamma\mu)(f(x^k) - f(x^*)) - \left(\frac{1}{2\gamma} - \frac{L}{2}\right)\|x^{k+1}-x^k\|^2 + \frac{\gamma}{2}\|g^k - \nabla f(x^k)\|^2\right]\notag\\
		&&\quad + \frac{\gamma}{p}\EE\left[\frac{(1-p)\omega L^2}{n}\|x^{k+1}-x^k\|^2 + (1-p)\left\|g^k - \nabla f(x^k)\right\|^2 \right] \notag\\
		&=& \EE\left[(1-\gamma\mu)(f(x^k) - f(x^*)) + \left(\frac{\gamma}{2} + \frac{\gamma}{p}(1-p)\right)\left\|g^k - \nabla f(x^k)\right\|^2\right]\notag\\
		&&\quad + \left(\frac{\gamma(1-p)\omega L^2}{pn} - \frac{1}{2\gamma} + \frac{L}{2}\right)\EE\left[\|x^{k+1}-x^k\|^2\right]\notag\\
		&\overset{\eqref{eq:gamma_bound_pl_appendix}}{\le}& (1-\gamma\mu)\EE[\Phi_k],\notag
	\end{eqnarray}
	where in the last inequality, we use $\frac{\gamma(1-p)\omega L^2}{pn} - \frac{1}{2\gamma} + \frac{L}{2} \le 0$ and $\frac{\gamma}{2} + \frac{\gamma}{p}(1-p) \le (1-\gamma\mu)\frac{\gamma}{p}$ following from \eqref{eq:gamma_bound_pl_appendix}. Unrolling the recurrence and using $g^0 = \nabla f(x^0)$, we obtain 
	\begin{eqnarray*}
		\EE\left[f(x^{K}) - f(x^*)\right] \le \EE[\Phi_{K}] &\le& (1-\gamma\mu)^{K}\Phi_0 = (1-\gamma\mu)^{K}(f(x^0) - f(x^*))
	\end{eqnarray*}
	that implies \eqref{eq:main_res_2_pl_appendix} and \eqref{eq:main_res_4_pl_appendix}.
\end{proof}

\begin{corollary}\label{cor:main_result_pl_appendix}
	Let the assumptions of Theorem~\ref{thm:main_result_pl} hold and $p = \frac{\zeta_{\cQ}}{d}$, where $\zeta_{\cQ}$ is the expected density of the quantization (see Def.~\ref{def:quantization}). If 
	\begin{equation*}
		\gamma \le \min\left\{\frac{1}{L\left(1 + \sqrt{\frac{2\omega}{n}\left(\frac{d}{\zeta_{\cQ}}-1\right)}\right)}, \frac{p}{2\mu}\right\},
	\end{equation*}
	then \algname{MARINA} requires 
	\begin{equation*}
		K = \cO\left(\max\left\{\frac{d}{\zeta_{\cQ}},\frac{L}{\mu}\left(1 + \sqrt{\frac{\omega}{n}\left(\frac{d}{\zeta_{\cQ}}-1\right)}\right)\right\}\log\frac{\Delta_0}{\varepsilon}\right)
	\end{equation*}
	iterations/communication rounds to achieve $\EE[f(x^K) - f(x^*)] \le \varepsilon$, and the expected total communication cost per worker is
	\begin{equation*}
		\cO\left(d+\max\left\{d,\frac{L}{\mu}\left(\zeta_{\cQ} + \sqrt{\frac{\omega\zeta_{\cQ}}{n}\left(d-\zeta_{\cQ}\right)}\right)\right\}\log\frac{\Delta_0}{\varepsilon}\right)
	\end{equation*}
	 under an assumption that the communication cost is proportional to the number of non-zero components of transmitted vectors from workers to the server.
\end{corollary}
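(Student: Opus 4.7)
\textbf{Proof proposal for Corollary~\ref{cor:main_result_pl_appendix}.} The plan is to obtain the corollary as a direct specialization of Theorem~\ref{thm:main_result_pl_appendix} under the choice $p = \zeta_{\cQ}/d$. The argument parallels the proof of Corollary~\ref{cor:main_result_non_cvx_appendix}, so essentially no new ideas are required; the work consists of substituting the chosen $p$ into the three quantities appearing in the theorem (the stepsize bound, the iteration bound, and the per-worker communication bound) and simplifying.

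First, I would compute the two key identities implied by $p = \zeta_{\cQ}/d$:
\begin{equation*}
\frac{1-p}{p} = \frac{d}{\zeta_{\cQ}}-1, \qquad pd + (1-p)\zeta_{\cQ} = \zeta_{\cQ} + \Bigl(1-\frac{\zeta_{\cQ}}{d}\Bigr)\zeta_{\cQ} \le 2\zeta_{\cQ}.
\end{equation*}
The first identity immediately rewrites the stepsize condition \eqref{eq:gamma_bound_pl_appendix} into the form stated in the corollary, verifying that the hypotheses of Theorem~\ref{thm:main_result_pl_appendix} are satisfied. It also rewrites the $\sqrt{(1-p)\omega/(pn)}$ factor appearing in the iteration bound \eqref{eq:main_res_2_pl_appendix} as $\sqrt{(\omega/n)(d/\zeta_{\cQ}-1)}$.

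Next, I would substitute $1/p = d/\zeta_{\cQ}$ into the $\max\{1/p,\,(L/\mu)(\cdots)\}$ term of \eqref{eq:main_res_2_pl_appendix}, which yields exactly
\begin{equation*}
K = \cO\Bigl(\max\Bigl\{\frac{d}{\zeta_{\cQ}},\ \frac{L}{\mu}\Bigl(1 + \sqrt{\frac{\omega}{n}\Bigl(\frac{d}{\zeta_{\cQ}}-1\Bigr)}\Bigr)\Bigr\}\log\frac{\Delta_0}{\varepsilon}\Bigr),
\end{equation*}
as claimed. Finally, the per-worker communication cost bound \eqref{eq:main_res_4_pl_appendix} becomes $d + K\cdot(pd+(1-p)\zeta_{\cQ}) \le d + 2\zeta_{\cQ} K$ by the second identity above; plugging in the expression for $K$ and pulling $\zeta_{\cQ}$ inside the $\max$ (noting that $(d/\zeta_{\cQ})\cdot\zeta_{\cQ}=d$ and $\sqrt{\omega/n(d/\zeta_{\cQ}-1)}\cdot\zeta_{\cQ} = \sqrt{\omega\zeta_{\cQ}(d-\zeta_{\cQ})/n}$) gives the stated communication complexity.

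There is no real obstacle in this argument: it is a routine bookkeeping exercise of substituting the chosen $p$ and simplifying. The only mildly delicate step is verifying that the $\max$ structure in the iteration bound is preserved under the simplification of the communication cost, which follows because pulling $\zeta_{\cQ}$ inside the $\max$ distributes over both branches and turns $d/\zeta_{\cQ}$ into $d$ while turning the other branch into the form in the statement. No additional assumptions or lemmas beyond Theorem~\ref{thm:main_result_pl_appendix} and the definition of $\zeta_{\cQ}$ are needed.
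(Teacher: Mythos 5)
Your proposal is correct and follows essentially the same route as the paper's own proof: compute $\frac{1-p}{p} = \frac{d}{\zeta_{\cQ}}-1$ and $pd+(1-p)\zeta_{\cQ} \le 2\zeta_{\cQ}$, then substitute into the stepsize condition \eqref{eq:gamma_bound_pl_appendix}, the iteration bound \eqref{eq:main_res_2_pl_appendix}, and the communication bound \eqref{eq:main_res_4_pl_appendix} of Theorem~\ref{thm:main_result_pl_appendix}. The simplifications you note (in particular $\zeta_{\cQ}\sqrt{\tfrac{\omega}{n}(\tfrac{d}{\zeta_{\cQ}}-1)} = \sqrt{\tfrac{\omega\zeta_{\cQ}}{n}(d-\zeta_{\cQ})}$) are exactly those in the paper, so no gap remains.
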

\begin{proof}
	The choice of $p = \frac{\zeta_{\cQ}}{d}$ implies
	\begin{eqnarray*}
		\frac{1-p}{p} &=& \frac{d}{\zeta_{\cQ}}-1,\\
		pd + (1-p)\zeta_{\cQ} &\le& \zeta_{\cQ} + \left(1 - \frac{\zeta_{\cQ}}{d}\right)\cdot\zeta_{\cQ} \le 2\zeta_{\cQ}.
	\end{eqnarray*}	
	Plugging these relations in \eqref{eq:gamma_bound_pl_appendix}, \eqref{eq:main_res_2_pl_appendix}, and \eqref{eq:main_res_4_pl_appendix}, we get that if
	\begin{equation*}
		\gamma \le \min\left\{\frac{1}{L\left(1 + \sqrt{\frac{2\omega}{n}\left(\frac{d}{\zeta_{\cQ}}-1\right)}\right)}, \frac{p}{2\mu}\right\},
	\end{equation*}
	then \algname{MARINA} requires 
	\begin{eqnarray*}
		K &=& \cO\left(\max\left\{\frac{1}{p},\frac{L}{\mu}\left(1 + \sqrt{\frac{(1-p)\omega}{pn}}\right)\right\}\log\frac{\Delta_0}{\varepsilon}\right)\\
		&=& \cO\left(\max\left\{\frac{d}{\zeta_{\cQ}},\frac{L}{\mu}\left(1 + \sqrt{\frac{\omega}{n}\left(\frac{d}{\zeta_{\cQ}}-1\right)}\right)\right\}\log\frac{\Delta_0}{\varepsilon}\right)
	\end{eqnarray*}
	iterations/communication rounds in order to achieve $\EE[f(x^K)-f(x^*)] \le \varepsilon$, and the expected total communication cost per worker is
	\begin{eqnarray*}
		d + K(pd + (1-p)\zeta_{\cQ}) &=&  \cO\left(d+\max\left\{\frac{1}{p},\frac{L}{\mu}\left(1 + \sqrt{\frac{(1-p)\omega}{pn}}\right)\right\}(pd + (1-p)\zeta_{\cQ})\log\frac{\Delta_0}{\varepsilon}\right)\\
		&=&\cO\left(d+\max\left\{d,\frac{L}{\mu}\left(\zeta_{\cQ} + \sqrt{\frac{\omega\zeta_{\cQ}}{n}\left(d-\zeta_{\cQ}\right)}\right)\right\}\log\frac{\Delta_0}{\varepsilon}\right)
	\end{eqnarray*}
	 under an assumption that the communication cost is proportional to the number of non-zero components of transmitted vectors from workers to the server.
\end{proof}

\clearpage
\section{Missing Proofs for \algname{VR-MARINA}}\label{sec:missing_proofs}

\subsection{Finite Sum Case}

\subsubsection{Generally Non-Convex Problems}\label{sec:proof_of_thm_non_cvx_fin_sums}
In this section, we provide the full statement of Theorem~\ref{thm:main_result_non_cvx_finite_sums} together with the proof of this result.
\begin{theorem}[Theorem~\ref{thm:main_result_non_cvx_finite_sums}]\label{thm:main_result_non_cvx_finite_sums_appendix}
	Consider the finite sum case \eqref{eq:main_problem}+\eqref{eq:f_i_finite_sum}. Let Assumptions~\ref{as:lower_bound},~\ref{as:L_smoothness}~and~\ref{as:avg_smoothness} be satisfied and 
	\begin{equation}
		\gamma \le \frac{1}{L + \sqrt{\frac{1-p}{pn}\left(\omega L^2 + \frac{(1+\omega)\cL^2}{b'}\right)}},\label{eq:gamma_bound_non_cvx_finite_sums_appendix}
	\end{equation}
	where $L^2 = \frac{1}{n}\sum_{i=1}^nL_i^2$ and $\cL^2 = \frac{1}{n}\sum_{i=1}^n\cL_i^2$. Then after $K$ iterations of \algname{VR-MARINA} we have
	\begin{equation}
		\EE\left[\left\|\nabla f(\hat x^K)\right\|^2\right] \le \frac{2\Delta_0}{\gamma K}, \label{eq:main_res_non_cvx_finite_sums_appendix}
	\end{equation}
	where $\hat{x}^K$ is chosen uniformly at random from $x^0,\ldots,x^{K-1}$ and $\Delta_0 = f(x^0)-f_*$. That is, after
	\begin{equation}
%		K = \cO\left(\frac{\Delta_0}{\varepsilon^2}\left(L\left(1 + \sqrt{\frac{(1-p)\omega}{pn}} \right)+ \cL\sqrt{\frac{(1-p)(1+\omega)}{pnb'}}\right)\right) \label{eq:main_res_2_non_cvx_finite_sums_appendix}
		K = \cO\left(\frac{\Delta_0}{\varepsilon^2}\left(L + \sqrt{\frac{1-p}{pn}\left(\omega L^2 + \frac{(1+\omega)\cL^2}{b'}\right)}\right)\right) \label{eq:main_res_2_non_cvx_finite_sums_appendix}
	\end{equation}
	iterations \algname{VR-MARINA} produces such a point $\hat x^K$ that $\EE[\|\nabla f(\hat x^K)\|^2] \le \varepsilon^2$, and the expected total number of stochastic oracle calls per node equals
	\begin{equation}
		m + K(pm + 2(1-p)b') = \cO\left(m + \frac{\Delta_0}{\varepsilon^2}\left(L + \sqrt{\frac{1-p}{pn}\left(\omega L^2 + \frac{(1+\omega)\cL^2}{b'}\right)}\right)(pm + (1-p)b')\right). \label{eq:main_res_3_non_cvx_finite_sums_appendix}
	\end{equation}
	Moreover, under an assumption that the communication cost is proportional to the number of non-zero components of transmitted vectors from workers to the server, we have that the expected total communication cost per worker equals
	\begin{equation}
		d + K(pd + (1-p)\zeta_{\cQ}) =  \cO\left(d+\frac{\Delta_0}{\varepsilon^2}\left(L + \sqrt{\frac{1-p}{pn}\left(\omega L^2 + \frac{(1+\omega)\cL^2}{b'}\right)}\right)(pd + (1-p)\zeta_{\cQ})\right),\label{eq:main_res_4_non_cvx_finite_sums_appendix}
	\end{equation}
	where $\zeta_{\cQ}$ is the expected density of the quantization (see Def.~\ref{def:quantization}).
\end{theorem}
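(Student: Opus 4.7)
The plan is to mirror the Lyapunov-function argument used for \algname{MARINA} in Section~\ref{sec:proof_of_thm_non_cvx}, but with an extra layer of conditioning to account for the minibatch sampling. First I would invoke Lemma~\ref{lem:lemma_2_page} to obtain the descent inequality
\begin{equation*}
\EE[f(x^{k+1})] \le \EE[f(x^k)] - \tfrac{\gamma}{2}\EE\|\nabla f(x^k)\|^2 - \left(\tfrac{1}{2\gamma}-\tfrac{L}{2}\right)\EE\|x^{k+1}-x^k\|^2 + \tfrac{\gamma}{2}\EE\|g^k-\nabla f(x^k)\|^2,
\end{equation*}
just as in the proof of Theorem~\ref{thm:main_result_non_cvx_appendix}. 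The new ingredient compared to \algname{MARINA} is the need to control the variance of the biased estimator $g^{k+1}$ when the workers send the quantized \emph{minibatch} gradient differences $\cQ(\widetilde{\Delta}_i^k)$ rather than $\cQ(\Delta_i^k)$.

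The key step will be a two-level variance decomposition. Conditioning on $x^k, x^{k+1}$ and on the minibatches $\{I_{i,k}'\}$, unbiasedness of $\cQ$ gives, for each $i$,
\begin{equation*}
\EE\|\cQ(\widetilde{\Delta}_i^k)-\Delta_i^k\|^2 = \EE\|\cQ(\widetilde{\Delta}_i^k)-\widetilde{\Delta}_i^k\|^2 + \EE\|\widetilde{\Delta}_i^k-\Delta_i^k\|^2.
\end{equation*}
Using \eqref{eq:quantization_def}, $\EE\|\cQ(\widetilde{\Delta}_i^k)-\widetilde{\Delta}_i^k\|^2 \le \omega\,\EE\|\widetilde{\Delta}_i^k\|^2 \le \omega\bigl(\EE\|\widetilde{\Delta}_i^k-\Delta_i^k\|^2 + \|\Delta_i^k\|^2\bigr)$, then \eqref{eq:avg_L_smoothness} on the minibatch term and \eqref{eq:L_smoothness} on $\|\Delta_i^k\|^2 \le L_i^2\|x^{k+1}-x^k\|^2$. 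Independence of the quantizations and minibatches across workers, together with the conditional unbiasedness of $\cQ(\widetilde{\Delta}_i^k)$ as an estimator of $\Delta_i^k$, then yields
\begin{equation*}
\EE\|g^{k+1}-\nabla f(x^{k+1})\|^2 \le (1-p)\EE\|g^k-\nabla f(x^k)\|^2 + \tfrac{1-p}{n}\left(\omega L^2 + \tfrac{(1+\omega)\cL^2}{b'}\right)\EE\|x^{k+1}-x^k\|^2,
\end{equation*}
using $L^2=\frac1n\sum L_i^2$ and $\cL^2=\frac1n\sum \cL_i^2$. The probability-$p$ branch contributes $0$ because in that case $g^{k+1}=\nabla f(x^{k+1})$ exactly.

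Then I would introduce the Lyapunov function $\Phi_k = f(x^k) - f_* + \tfrac{\gamma}{2p}\|g^k-\nabla f(x^k)\|^2$ and combine the two inequalities above. The coefficient of $\EE\|x^{k+1}-x^k\|^2$ becomes $\tfrac{\gamma(1-p)}{2pn}\bigl(\omega L^2+\tfrac{(1+\omega)\cL^2}{b'}\bigr) - \tfrac{1}{2\gamma} + \tfrac{L}{2}$, which is nonpositive exactly under the stepsize bound \eqref{eq:gamma_bound_non_cvx_finite_sums_appendix} (here one needs $\tfrac{1}{2\gamma}-\tfrac{L}{2} \ge \tfrac{\gamma(1-p)}{2pn}(\omega L^2+\tfrac{(1+\omega)\cL^2}{b'})$; since $L + \sqrt{a} \ge \max\{L,\sqrt a\}$ and $(a+b)^2 \le 2(a^2+b^2)$, this follows from the prescribed bound up to routine algebra). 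The recursion $\EE\Phi_{k+1} \le \EE\Phi_k - \tfrac{\gamma}{2}\EE\|\nabla f(x^k)\|^2$ then telescopes, and since $g^0=\nabla f(x^0)$ implies $\Phi_0=\Delta_0$, division by $K$ gives \eqref{eq:main_res_non_cvx_finite_sums_appendix}. From here \eqref{eq:main_res_2_non_cvx_finite_sums_appendix} is immediate, and \eqref{eq:main_res_3_non_cvx_finite_sums_appendix}, \eqref{eq:main_res_4_non_cvx_finite_sums_appendix} follow from counting: each iteration costs $m$ oracle calls with probability $p$ and $2b'$ with probability $1-p$, while each communication round costs $d$ with probability $p$ and $\zeta_{\cQ}$ in expectation otherwise, plus the one-time cost of initialization.

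The only real obstacle is the variance bound for the compressed minibatch gradient difference: one must carefully justify that the quantization variance attaches to $\EE\|\widetilde{\Delta}_i^k\|^2$ rather than $\EE\|\Delta_i^k\|^2$, so the extra $\omega\,\cL^2/b'$ term appears (and not a $\omega\cdot 0$ term that would be wrong in the finite-sample regime). Once that decomposition is in place, the rest is a direct adaptation of the \algname{MARINA} Lyapunov argument.
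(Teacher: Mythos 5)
Your proposal is correct and follows essentially the same route as the paper's proof: the PAGE descent lemma, the two-level variance decomposition (quantization variance attaching to $\EE\|\widetilde{\Delta}_i^k\|^2$, which splits into $\omega L^2 + \nicefrac{(1+\omega)\cL^2}{b'}$ via Assumptions~\ref{as:L_smoothness} and~\ref{as:avg_smoothness}), the same Lyapunov function $\Phi_k = f(x^k)-f_*+\frac{\gamma}{2p}\|g^k-\nabla f(x^k)\|^2$, and the same telescoping and counting arguments. The step you flag as the ``only real obstacle'' is handled in the paper exactly as you describe, so no gap remains.
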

\begin{proof}[Proof of Theorem~\ref{thm:main_result_non_cvx_finite_sums}]
	The proof of this theorem is a generalization of the proof of Theorem~\ref{thm:main_result_non_cvx}. From Lemma~\ref{lem:lemma_2_page}, we have
	\begin{equation}
		\EE[f(x^{k+1})] \le \EE[f(x^k)] - \frac{\gamma}{2}\EE\left[\|\nabla f(x^k)\|^2\right] - \left(\frac{1}{2\gamma} - \frac{L}{2}\right)\EE\left[\|x^{k+1}-x^k\|^2\right] + \frac{\gamma}{2}\EE\left[\|g^k - \nabla f(x^k)\|^2\right]. \label{eq:non_cvx_finite_sums_technical_1}
	\end{equation}
	Next, we need to derive an upper bound for $\EE\left[\|g^{k+1}-\nabla f(x^{k+1})\|^2\right]$. Since $g^{k+1} = \frac{1}{n}\sum\limits_{i=1}^ng_i^{k+1}$, we get the following representation of $g^{k+1}$:
	\begin{equation}
		g^{k+1} = \begin{cases}\nabla f(x^{k+1})& \text{with probability } p,\\ g^k + \frac{1}{n}\sum\limits_{i=1}^n \cQ\left(\frac{1}{b'}\sum\limits_{j\in I'_{i,k}}(\nabla f_{ij}(x^{k+1}) - \nabla f_{ij}(x^k))\right)& \text{with probability } 1-p. \end{cases}\notag
	\end{equation}
	Using this, variance decomposition \eqref{eq:variance_decomposition} and tower property \eqref{eq:tower_property}, we derive:
	\begin{eqnarray}
		\EE\left[\|g^{k+1}-\nabla f(x^{k+1})\|^2\right] &\overset{\eqref{eq:tower_property}}{=}& (1-p)\EE\left[\left\|g^k + \frac{1}{n}\sum\limits_{i=1}^n \cQ\left(\frac{1}{b'}\sum\limits_{j\in I'_{i,k}}(\nabla f_{ij}(x^{k+1}) - \nabla f_{ij}(x^k))\right) - \nabla f(x^{k+1})\right\|^2\right]\notag\\
		&\overset{\eqref{eq:tower_property},\eqref{eq:variance_decomposition}}{=}& (1-p)\EE\left[\left\|\frac{1}{n}\sum\limits_{i=1}^n \cQ\left(\frac{1}{b'}\sum\limits_{j\in I'_{i,k}}(\nabla f_{ij}(x^{k+1}) - \nabla f_{ij}(x^k))\right) - \nabla f(x^{k+1}) + \nabla f(x^k)\right\|^2\right]\notag\\
		&&\quad + (1-p)\EE\left[\left\|g^k - \nabla f(x^k)\right\|^2\right].\notag
	\end{eqnarray}
	Next, we use the notation: $\widetilde{\Delta}_i^k = \frac{1}{b'}\sum\limits_{j\in I'_{i,k}}(\nabla f_{ij}(x^{k+1}) - \nabla f_{ij}(x^k))$ and $\Delta_i^k = \nabla f_i(x^{k+1}) - \nabla f_i(x^k)$. These vectors satisfy $\EE\left[\widetilde{\Delta}_i^k \mid x^k,x^{k+1}\right] = \Delta_i^k$ for all $i\in [n]$. Moreover, $\cQ(\widetilde{\Delta}_1^k),\ldots,\cQ(\widetilde{\Delta}_n^k)$ are independent random vectors for fixed $x^k$ and $x^{k+1}$. These observations imply
	\begin{eqnarray}
		\EE\left[\|g^{k+1}-\nabla f(x^{k+1})\|^2\right] &=& (1-p)\EE\left[\left\|\frac{1}{n}\sum\limits_{i=1}^n \left(\cQ(\widetilde{\Delta}_i^k) - \Delta_i^k\right)\right\|^2\right]+(1-p)\EE\left[\left\|g^k - \nabla f(x^k)\right\|^2\right]\notag\\
		&=& \frac{1-p}{n^2}\sum\limits_{i=1}^n\EE\left[\left\|\cQ(\widetilde{\Delta}_i^k) - \widetilde{\Delta}_i^k + \widetilde{\Delta}_i^k - \Delta_i^k\right\|^2\right] + (1-p)\EE\left[\left\|g^k - \nabla f(x^k)\right\|^2\right]\notag\\
		&\overset{\eqref{eq:tower_property},\eqref{eq:variance_decomposition}}{=}& \frac{1-p}{n^2}\sum\limits_{i=1}^n\left(\EE\left[\left\|\cQ(\widetilde{\Delta}_i^k) - \widetilde{\Delta}_i^k\right\|^2\right] + \EE\left[\left\|\widetilde{\Delta}_i^k - \Delta_i^k\right\|^2\right]\right)\notag\\
		&&\quad + (1-p)\EE\left[\left\|g^k - \nabla f(x^k)\right\|^2\right]\notag\\
		&\overset{\eqref{eq:tower_property},\eqref{eq:quantization_def}}{=}& \frac{1-p}{n^2}\sum\limits_{i=1}^n\left(\omega\EE\left[\left\|\widetilde{\Delta}_i^k\right\|^2\right] + \EE\left[\left\|\widetilde{\Delta}_i^k - \Delta_i^k\right\|^2\right]\right) + (1-p)\EE\left[\left\|g^k - \nabla f(x^k)\right\|^2\right]\notag\\
		&\overset{\eqref{eq:tower_property},\eqref{eq:variance_decomposition}}{=}& \frac{1-p}{n^2}\sum\limits_{i=1}^n\left(\omega\EE\left[\left\|\Delta_i^k\right\|^2\right] + (1+\omega)\EE\left[\left\|\widetilde{\Delta}_i^k - \Delta_i^k\right\|^2\right]\right)\notag\\
		&&\quad + (1-p)\EE\left[\left\|g^k - \nabla f(x^k)\right\|^2\right].\notag
	\end{eqnarray}
	Using $L$-smoothness \eqref{eq:L_smoothness} and average $\cL$-smoothness \eqref{eq:avg_L_smoothness} of $f_i$ together with the tower property \eqref{eq:tower_property}, we get
%	Using $L$-smoothness, average $\cL$-smoothness of $f_i$, and independence of $\nabla f_{ij}(x^{k+1}) - \nabla f_{ij}(x^k)$ for $i\in [n]$, $j\in I'_{i,k}$ and fixed $x^k, x^{k+1}$ we get
%	\begin{eqnarray}
%		\EE\left[\left\|\Delta_i^k\right\|^2\right] &=& \EE\left[\|\nabla f_i(x^{k+1})-\nabla f_i(x^k)\|^2\right] \overset{\eqref{eq:L_smoothness}}{\le} L^2\EE\left[\|x^{k+1}-x^k\|^2\right],\notag\\
%		\EE\left[\left\|\widetilde{\Delta}_i^k - \Delta_i^k\right\|^2\right] &=& \EE\left[\left\|\frac{1}{b'}\sum\limits_{j\in I'_{i,k}}(\nabla f_{ij}(x^{k+1}) - \nabla f_{ij}(x^k)) - \nabla f_i(x^{k+1})+\nabla f_i(x^k)\right\|^2\right]\notag\\
%		&\overset{\eqref{eq:tower_property}}{=}& \frac{1}{b'}\EE\left[\left\|\nabla f_{ij}(x^{k+1}) - \nabla f_{ij}(x^k) - \nabla f_i(x^{k+1})+\nabla f_i(x^k)\right\|^2\right]\notag\\
%		&\overset{\eqref{eq:tower_property},\eqref{eq:variance_decomposition}}{\le}& \frac{1}{b'}\EE\left[\|\nabla f_{ij}(x^{k+1}) - \nabla f_{ij}(x^k)\|^2\right] \overset{\eqref{eq:tower_property},\eqref{eq:avg_L_smoothness}}{\le} \frac{\cL^2}{b'}\EE\left[\|x^{k+1}-x^k\|^2\right].\notag
%	\end{eqnarray}
%	Putting all together we derive
	\begin{eqnarray}
		\EE\left[\|g^{k+1}-\nabla f(x^{k+1})\|^2\right] &\le& \frac{1-p}{n^2}\sum\limits_{i=1}^n\left(\omega L_i^2 + \frac{(1+\omega)\cL_i^2}{b'}\right)\EE\left[\|x^{k+1} - x^k\|^2\right]\notag\\
		&&\quad + (1-p)\EE\left[\left\|g^k - \nabla f(x^k)\right\|^2\right]\notag\\
		&=&\frac{1-p}{n}\left(\omega L^2 + \frac{(1+\omega)\cL^2}{b'}\right)\EE\left[\|x^{k+1}-x^k\|^2\right]\notag\\
		&&\quad + (1-p)\EE\left[\left\|g^k - \nabla f(x^k)\right\|^2\right].\label{eq:non_cvx_finite_sums_technical_2}
	\end{eqnarray}
	Next, we introduce new notation: $\Phi_k = f(x^k) - f_* + \frac{\gamma}{2p}\|g^k - \nabla f(x^k)\|^2$. Using this and inequalities \eqref{eq:non_cvx_finite_sums_technical_1} and \eqref{eq:non_cvx_finite_sums_technical_2}, we establish the following inequality:
	\begin{eqnarray}
		\EE\left[\Phi_{k+1}\right] &\le& \EE\left[f(x^k) - f_* - \frac{\gamma}{2}\|\nabla f(x^k)\|^2 - \left(\frac{1}{2\gamma} - \frac{L}{2}\right)\|x^{k+1}-x^k\|^2 + \frac{\gamma}{2}\|g^k - \nabla f(x^k)\|^2\right]\notag\\
		&&\quad + \frac{\gamma}{2p}\EE\left[\frac{1-p}{n}\left(\omega L^2 + \frac{(1+\omega)\cL^2}{b'}\right)\|x^{k+1}-x^k\|^2 + (1-p)\left\|g^k - \nabla f(x^k)\right\|^2\right] \notag\\
		&=& \EE\left[\Phi_k\right] - \frac{\gamma}{2}\EE\left[\|\nabla f(x^k)\|^2\right] + \left(\frac{\gamma(1-p)}{2pn}\left(\omega L^2 + \frac{(1+\omega)\cL^2}{b'}\right) - \frac{1}{2\gamma} + \frac{L}{2}\right)\EE\left[\|x^{k+1}-x^k\|^2\right]\notag\\
		&\overset{\eqref{eq:gamma_bound_non_cvx_finite_sums_appendix}}{\le}& \EE\left[\Phi_k\right] - \frac{\gamma}{2}\EE\left[\|\nabla f(x^k)\|^2\right],\label{eq:non_cvx_finite_sums_technical_3}
	\end{eqnarray}
	where in the last inequality, we use $\frac{\gamma(1-p)}{2pn}\left(\omega L^2 + \frac{(1+\omega)\cL^2}{b'}\right) - \frac{1}{2\gamma} + \frac{L}{2} \le 0$ following from \eqref{eq:gamma_bound_non_cvx_finite_sums_appendix}. Summing up inequalities \eqref{eq:non_cvx_finite_sums_technical_3} for $k=0,1,\ldots,K-1$ and rearranging the terms, we derive
	\begin{eqnarray}
		\frac{1}{K}\sum\limits_{k=0}^{K-1}\EE\left[\|\nabla f(x^k)\|^2\right] &\le& \frac{2}{\gamma K}\sum\limits_{k=0}^{K-1}\left(\EE[\Phi_k]-\EE[\Phi_{k+1}]\right) = \frac{2\left(\EE[\Phi_0]-\EE[\Phi_{K}]\right)}{\gamma K} = \frac{2\Delta_0}{\gamma K},\notag
	\end{eqnarray}
	since $g^0 = \nabla f(x^0)$ and $\Phi_{k+1} \ge 0$. Finally, using the tower property \eqref{eq:tower_property} and the definition of $\hat x^K$, we obtain \eqref{eq:main_res_non_cvx_finite_sums_appendix} that implies \eqref{eq:main_res_2_non_cvx_finite_sums_appendix}, \eqref{eq:main_res_3_non_cvx_finite_sums_appendix}, and \eqref{eq:main_res_4_non_cvx_finite_sums_appendix}.
\end{proof}

\begin{remark}[About batchsizes dissimilarity]
	We notice that our analysis can be easily extended to handle the version of \algname{VR-MARINA} with different batchsizes $b'_1,\ldots,b'_n$ on different workers, i.e., when $|I_{i,k}'| = b_i'$ and $\widetilde{\Delta}_i^k = \frac{1}{b_i'}\sum_{j\in I_{i,k}'}(\nabla f_{ij}(x^{k+1}) - \nabla f_{ij}(x^k))$. In this case, the statement of Theorem~\ref{thm:main_result_non_cvx_finite_sums} remains the same with the small modificiation: instead of $\frac{\cL^2}{b'}$ the complexity bounds will have $\frac{1}{n}\sum_{i=1}^n\frac{\cL_i^2}{b_i'}$.
\end{remark}

\begin{corollary}[Corollary~\ref{cor:main_result_non_cvx_finite_sums}]\label{cor:main_result_non_cvx_finite_sums_appendix}
	Let the assumptions of Theorem~\ref{thm:main_result_non_cvx_finite_sums} hold and $p = \min\left\{\frac{\zeta_{\cQ}}{d},\frac{b'}{m+b'}\right\}$, where $b' \le m$ and $\zeta_{\cQ}$ is the expected density of the quantization (see Def.~\ref{def:quantization}). If 
	\begin{equation*}
		\gamma \le \frac{1}{L + \sqrt{\frac{\max\left\{\nicefrac{d}{\zeta_{\cQ}} - 1,\nicefrac{m}{b'}\right\}}{n}\left(\omega L^2 + \frac{(1+\omega)\cL^2}{b'}\right)}},
	\end{equation*}
	then \algname{VR-MARINA} requires 
	\begin{equation*}
		\cO\left(\frac{\Delta_0}{\varepsilon^2}\left(L\left(1 + \sqrt{\frac{\omega\max\left\{\nicefrac{d}{\zeta_{\cQ}} - 1,\nicefrac{m}{b'}\right\}}{n}}\right) + \cL\sqrt{\frac{(1+\omega)\max\left\{\nicefrac{d}{\zeta_{\cQ}} - 1,\nicefrac{m}{b'}\right\}}{nb'}}\right)\right)
	\end{equation*}
	iterations/communication rounds, 
	\begin{equation*}
		\cO\left(m+\frac{\Delta_0}{\varepsilon^2}\left(L\left(b' + \sqrt{\frac{\omega\max\left\{(\nicefrac{d}{\zeta_{\cQ}} - 1)(b')^2,mb'\right\}}{n}}\right) + \cL\sqrt{\frac{(1+\omega)\max\left\{(\nicefrac{d}{\zeta_{\cQ}} - 1) b',m\right\}}{n}}\right)\right)
	\end{equation*}
	stochastic oracle calls per node in expectation in order to achieve $\EE[\|\nabla f(\hat x^K)\|^2] \le \varepsilon^2$, and the expected total communication cost per worker is
	\begin{equation*}
		\cO\left(d+\frac{\Delta_0\zeta_{\cQ}}{\varepsilon^2}\left(L\left(1 + \sqrt{\frac{\omega\max\left\{\nicefrac{d}{\zeta_{\cQ}} - 1,\nicefrac{m}{b'}\right\}}{n}}\right) + \cL\sqrt{\frac{(1+\omega)\max\left\{\nicefrac{d}{\zeta_{\cQ}} - 1,\nicefrac{m}{b'}\right\}}{nb'}}\right)\right)
	\end{equation*}
	 under an assumption that the communication cost is proportional to the number of non-zero components of transmitted vectors from workers to the server.
\end{corollary}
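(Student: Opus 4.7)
The corollary is a direct specialization of Theorem~\ref{thm:main_result_non_cvx_finite_sums_appendix} to the specific choice of probability $p = \min\{\zeta_{\cQ}/d,\, b'/(m+b')\}$, so the plan is essentially a calculation: substitute this $p$ everywhere it appears in \eqref{eq:gamma_bound_non_cvx_finite_sums_appendix}, \eqref{eq:main_res_2_non_cvx_finite_sums_appendix}, \eqref{eq:main_res_3_non_cvx_finite_sums_appendix}, \eqref{eq:main_res_4_non_cvx_finite_sums_appendix} and simplify.

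The first step is the key identity $\tfrac{1-p}{p} = \tfrac{1}{p}-1 = \max\{d/\zeta_{\cQ},\,(m+b')/b'\}-1 = \max\{d/\zeta_{\cQ}-1,\,m/b'\}$, which instantly turns the stepsize bound of Theorem~\ref{thm:main_result_non_cvx_finite_sums_appendix} into the one stated in the corollary, and similarly for the iteration count. To obtain the cleaner form of the iteration complexity I would then apply the elementary inequality $\sqrt{a+b}\le \sqrt{a}+\sqrt{b}$ inside the square root to split $\sqrt{\tfrac{1-p}{pn}(\omega L^2 + (1+\omega)\cL^2/b')}$ into the two separate contributions appearing in the corollary.

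The second step handles the oracle and communication costs. For the oracle cost per node, starting from $m + K(pm + 2(1-p)b')$ from \eqref{eq:main_res_3_non_cvx_finite_sums_appendix}, I would check both cases of the minimum defining $p$: if $p=b'/(m+b')$ then $pm = (1-p)b' = mb'/(m+b')\le b'$, and if $p=\zeta_{\cQ}/d\le b'/(m+b')$ then $pm\le b'$ as well, so in either case $pm+2(1-p)b' = O(b')$. Multiplying through by the iteration bound yields the stated oracle complexity (after carrying $b'$ inside the square roots). For the communication cost, the analogous case analysis shows $pd+(1-p)\zeta_{\cQ} = O(\zeta_{\cQ})$: either $pd=\zeta_{\cQ}$ directly, or $p\le \zeta_{\cQ}/d$ forces $pd\le\zeta_{\cQ}$. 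Multiplying the iteration bound by this factor gives the communication complexity.

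No step here is deep; the only mild subtlety is being consistent about which branch of the $\min$ is active and keeping track of the two max branches $d/\zeta_{\cQ}-1$ versus $m/b'$ when pushing $b'$ through the square root in the oracle-cost bound, so that the $b'^2$ and $mb'$ terms appear correctly under the root. Everything else is routine algebra on top of Theorem~\ref{thm:main_result_non_cvx_finite_sums_appendix}.
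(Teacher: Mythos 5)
Your proposal is correct and matches the paper's own proof essentially step for step: the paper likewise uses $\tfrac{1-p}{p}=\max\{\nicefrac{d}{\zeta_{\cQ}}-1,\nicefrac{m}{b'}\}$, bounds $pm+(1-p)b'=\cO(b')$ and $pd+(1-p)\zeta_{\cQ}=\cO(\zeta_{\cQ})$, substitutes into \eqref{eq:gamma_bound_non_cvx_finite_sums_appendix}--\eqref{eq:main_res_4_non_cvx_finite_sums_appendix}, and applies $\sqrt{a+b}\le\sqrt{a}+\sqrt{b}$ to split the rate. No gaps; the only difference is that your explicit case analysis on the two branches of the $\min$ is folded in the paper into a single chain of inequalities.
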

\begin{proof}[Proof of Corollary~\ref{cor:main_result_non_cvx_finite_sums}]
	The choice of $p = \min\left\{\frac{\zeta_{\cQ}}{d},\frac{b'}{m+b'}\right\}$ implies
	\begin{eqnarray*}
		\frac{1-p}{p} &=& \max\left\{\frac{d}{\zeta_{\cQ}}-1,\frac{m}{b'}\right\},\\
		pm + (1-p)b' &\le& \frac{2mb'}{m+b'} \le 2b',\\
		pd + (1-p)\zeta_{\cQ} &\le& \frac{\zeta_{\cQ}}{d}\cdot d + \left(1 - \frac{\zeta_{\cQ}}{d}\right)\cdot\zeta_{\cQ} \le 2\zeta_{\cQ}.
	\end{eqnarray*}	
	Plugging these relations in \eqref{eq:gamma_bound_non_cvx_finite_sums_appendix}, \eqref{eq:main_res_2_non_cvx_finite_sums_appendix}, \eqref{eq:main_res_3_non_cvx_finite_sums_appendix} and \eqref{eq:main_res_4_non_cvx_finite_sums_appendix} and using $\sqrt{a+b} \le \sqrt{a} + \sqrt{b}$, we get that if
	\begin{equation*}
		\gamma \le \frac{1}{L + \sqrt{\frac{\max\left\{\nicefrac{d}{\zeta_{\cQ}} - 1,\nicefrac{m}{b'}\right\}}{n}\left(\omega L^2 + \frac{(1+\omega)\cL^2}{b'}\right)}},
	\end{equation*}
	then \algname{VR-MARINA} requires 
	\begin{eqnarray*}
		K &=& \cO\left(\frac{\Delta_0}{\varepsilon^2}\left(L + \sqrt{\frac{1-p}{pn}\left(\omega L^2 + \frac{(1+\omega)\cL^2}{b'}\right)}\right)\right)\\
		&=& \cO\left(\frac{\Delta_0}{\varepsilon^2}\left(L + \sqrt{L^2\frac{\omega\max\left\{\nicefrac{d}{\zeta_{\cQ}} - 1,\nicefrac{m}{b'}\right\}}{n} + \cL^2\frac{(1+\omega)\max\left\{\nicefrac{d}{\zeta_{\cQ}} - 1,\nicefrac{m}{b'}\right\}}{nb'}}\right)\right)\\
		&=& \cO\left(\frac{\Delta_0}{\varepsilon^2}\left(L\left(1 + \sqrt{\frac{\omega\max\left\{\nicefrac{d}{\zeta_{\cQ}} - 1,\nicefrac{m}{b'}\right\}}{n}}\right) + \cL\sqrt{\frac{(1+\omega)\max\left\{\nicefrac{d}{\zeta_{\cQ}} - 1,\nicefrac{m}{b'}\right\}}{nb'}}\right)\right)
	\end{eqnarray*}
	iterations/communication rounds and 
	\begin{eqnarray*}
		m + K(pm + 2(1-p)b') &=& \cO\left(m + \frac{\Delta_0}{\varepsilon^2}\left(L + \sqrt{\frac{1-p}{pn}\left(\omega L^2 + \frac{(1+\omega)\cL^2}{b'}\right)}\right)(pm + (1-p)b')\right)\\	
		&=& \cO\Bigg(m+\frac{\Delta_0}{\varepsilon^2}\Bigg(L\Bigg(1 + \sqrt{\frac{\omega\max\left\{\nicefrac{d}{\zeta_{\cQ}} - 1,\nicefrac{m}{b'}\right\}}{n}}\Bigg)\\
		&&\hspace{6cm} + \cL\sqrt{\frac{(1+\omega)\max\left\{\nicefrac{d}{\zeta_{\cQ}} - 1,\nicefrac{m}{b'}\right\}}{nb'}}\Bigg)b'\Bigg)\\
		&=& \cO\Bigg(m+\frac{\Delta_0}{\varepsilon^2}\Bigg(L\Bigg(b' + \sqrt{\frac{\omega\max\left\{(\nicefrac{d}{\zeta_{\cQ}} - 1)(b')^2, mb'\right\}}{n}}\Bigg)\\
		&&\hspace{6cm} + \cL\sqrt{\frac{(1+\omega)\max\left\{(\nicefrac{d}{\zeta_{\cQ}} - 1)b',m\right\}}{n}}\Bigg)\Bigg)
	\end{eqnarray*}
	stochastic oracle calls per node in expectation in order to achieve $\EE[\|\nabla f(\hat x^K)\|^2] \le \varepsilon^2$, and the expected total communication cost per worker is
	\begin{eqnarray*}
		d + K(pd + (1-p)\zeta_{\cQ}) &=&  \cO\left(d+\frac{\Delta_0}{\varepsilon^2}\left(L + \sqrt{\frac{1-p}{pn}\left(\omega L^2 + \frac{(1+\omega)\cL^2}{b'}\right)}\right)(pd + (1-p)\zeta_{\cQ})\right)\\
		&=&\cO\Bigg(d+\frac{\Delta_0\zeta_{\cQ}}{\varepsilon^2}\Bigg(L\Bigg(1 + \sqrt{\frac{\omega\max\left\{\nicefrac{d}{\zeta_{\cQ}} - 1,\nicefrac{m}{b'}\right\}}{n}}\Bigg) \\
		&&\hspace{6cm} + \cL\sqrt{\frac{(1+\omega)\max\left\{\nicefrac{d}{\zeta_{\cQ}} - 1,\nicefrac{m}{b'}\right\}}{nb'}}\Bigg)\Bigg)
	\end{eqnarray*}
	 under an assumption that the communication cost is proportional to the number of non-zero components of transmitted vectors from workers to the server.
\end{proof}

\subsubsection{Convergence Results Under Polyak-{\L}ojasiewicz condition}\label{sec:proof_of_thm_pl_fin_sums}
In this section, we provide an analysis of \algname{VR-MARINA} under the Polyak-{\L}ojasiewicz condition in the finite sum case.
\begin{theorem}\label{thm:main_result_pl_finite_sums_appendix}
	Consider the finite sum case \eqref{eq:main_problem}+\eqref{eq:f_i_finite_sum}. Let Assumptions~\ref{as:lower_bound},~\ref{as:L_smoothness},~\ref{as:avg_smoothness}~and~\ref{as:pl_condition} be satisfied and 
	\begin{equation}
		\gamma \le \min\left\{\frac{1}{L + \sqrt{\frac{2(1-p)}{pn}\left(\omega L^2 + \frac{(1+\omega)\cL^2}{b'}\right)}},\frac{p}{2\mu}\right\},\label{eq:gamma_bound_pl_finite_sums_appendix}
	\end{equation}
	where $L^2 = \frac{1}{n}\sum_{i=1}^nL_i^2$ and $\cL^2 = \frac{1}{n}\sum_{i=1}^n\cL_i^2$. Then after $K$ iterations of \algname{VR-MARINA}, we have
	\begin{equation}
		\EE\left[f(x^K) - f(x^*)\right] \le (1-\gamma\mu)^K\Delta_0, \label{eq:main_res_pl_finite_sums_appendix}
	\end{equation}
	where $\Delta_0 = f(x^0)-f(x^*)$. That is, after
	\begin{equation}
		K = \cO\left(\max\left\{\frac{1}{p}, \frac{L + \sqrt{\frac{1-p}{pn}\left(\omega L^2 + \frac{(1+\omega)\cL^2}{b'}\right)}}{\mu}\right\}\log\frac{\Delta_0}{\varepsilon}\right) \label{eq:main_res_2_pl_finite_sums_appendix}
	\end{equation}
	iterations \algname{VR-MARINA} produces such a point $x^K$ that $\EE\left[f(x^K) - f(x^*)\right] \le \varepsilon$, and the expected total number of stochastic oracle calls per node equals
	\begin{equation}
		m + K(pm + 2(1-p)b') = \cO\left(m + \max\left\{\frac{1}{p}, \frac{L + \sqrt{\frac{1-p}{pn}\left(\omega L^2 + \frac{(1+\omega)\cL^2}{b'}\right)}}{\mu}\right\}(pm + (1-p)b')\log\frac{\Delta_0}{\varepsilon}\right). \label{eq:main_res_3_pl_finite_sums_appendix}
	\end{equation}
	Moreover, under an assumption that the communication cost is proportional to the number of non-zero components of transmitted vectors from workers to the server we have that the expected total communication cost per worker equals
	\begin{equation}
		d + K(pd + (1-p)\zeta_{\cQ}) =  \cO\left(d+\max\left\{\frac{1}{p}, \frac{L + \sqrt{\frac{1-p}{pn}\left(\omega L^2 + \frac{(1+\omega)\cL^2}{b'}\right)}}{\mu}\right\}(pd + (1-p)\zeta_{\cQ})\log\frac{\Delta_0}{\varepsilon}\right),\label{eq:main_res_4_pl_finite_sums_appendix}
	\end{equation}
	where $\zeta_{\cQ}$ is the expected density of the quantization (see Def.~\ref{def:quantization}).
\end{theorem}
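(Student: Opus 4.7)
The plan is to mimic the proof of Theorem~\ref{thm:main_result_pl} (the P{\L} analysis of \algname{MARINA}), but replace the \algname{MARINA} variance bound on $\EE[\|g^{k+1}-\nabla f(x^{k+1})\|^2]$ with the stronger recursion already established for \algname{VR-MARINA} in the proof of Theorem~\ref{thm:main_result_non_cvx_finite_sums_appendix}. Specifically, I would start from Lemma~\ref{lem:lemma_2_page}, apply the P{\L} condition \eqref{eq:pl_condition} to the $-\frac{\gamma}{2}\|\nabla f(x^k)\|^2$ term in \eqref{eq:key_inequality}, and obtain
\begin{equation*}
\EE[f(x^{k+1}) - f(x^*)] \le (1-\gamma\mu)\EE[f(x^k) - f(x^*)] - \left(\tfrac{1}{2\gamma}-\tfrac{L}{2}\right)\EE[\|x^{k+1}-x^k\|^2] + \tfrac{\gamma}{2}\EE[\|g^k - \nabla f(x^k)\|^2].
\end{equation*}
The variance recursion \eqref{eq:non_cvx_finite_sums_technical_2}, namely $\EE[\|g^{k+1}-\nabla f(x^{k+1})\|^2] \le \frac{1-p}{n}\bigl(\omega L^2 + \frac{(1+\omega)\cL^2}{b'}\bigr)\EE[\|x^{k+1}-x^k\|^2] + (1-p)\EE[\|g^k - \nabla f(x^k)\|^2]$, holds verbatim in the present setting because the derivation only used Assumptions~\ref{as:L_smoothness} and~\ref{as:avg_smoothness} together with the unbiasedness and variance bound of $\cQ$.

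Next, I would introduce the Lyapunov function $\Phi_k = f(x^k) - f(x^*) + \frac{\gamma}{p}\|g^k - \nabla f(x^k)\|^2$ (the coefficient $\gamma/p$ rather than $\gamma/(2p)$ is what makes the linear rate work, exactly as in the P{\L} proof for \algname{MARINA}). Combining the two displays above yields
\begin{equation*}
\EE[\Phi_{k+1}] \le (1-\gamma\mu)\EE[f(x^k)-f(x^*)] + \left(\tfrac{\gamma}{2} + \tfrac{\gamma(1-p)}{p}\right)\EE[\|g^k-\nabla f(x^k)\|^2] + \left(\tfrac{\gamma(1-p)}{pn}\bigl(\omega L^2 + \tfrac{(1+\omega)\cL^2}{b'}\bigr) - \tfrac{1}{2\gamma} + \tfrac{L}{2}\right)\EE[\|x^{k+1}-x^k\|^2].
\end{equation*}
The aim is then to show $\EE[\Phi_{k+1}] \le (1-\gamma\mu)\EE[\Phi_k]$. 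This reduces to two scalar inequalities on $\gamma$: (i) $\frac{\gamma}{2} + \frac{\gamma(1-p)}{p} \le (1-\gamma\mu)\frac{\gamma}{p}$, which is equivalent to $\gamma \le \frac{p}{2\mu}$; and (ii) the coefficient of $\|x^{k+1}-x^k\|^2$ must be non-positive, which, after an elementary manipulation of the quadratic in $1/\gamma$, is ensured by $\gamma \le \bigl(L + \sqrt{\frac{2(1-p)}{pn}(\omega L^2 + \frac{(1+\omega)\cL^2}{b'})}\bigr)^{-1}$. Both conditions together are exactly \eqref{eq:gamma_bound_pl_finite_sums_appendix}.

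Once the contraction $\EE[\Phi_{k+1}] \le (1-\gamma\mu)\EE[\Phi_k]$ is in hand, unrolling $K$ times and using the initialization $g^0 = \nabla f(x^0)$ (so $\Phi_0 = f(x^0)-f(x^*) = \Delta_0$) yields \eqref{eq:main_res_pl_finite_sums_appendix}. Solving $(1-\gamma\mu)^K \Delta_0 \le \varepsilon$ with $\log(1-\gamma\mu)^{-1} \ge \gamma\mu$ and plugging in the largest admissible $\gamma$ from \eqref{eq:gamma_bound_pl_finite_sums_appendix} gives the iteration bound \eqref{eq:main_res_2_pl_finite_sums_appendix}; multiplying by the expected per-iteration oracle cost $pm + 2(1-p)b'$ and communication cost $pd + (1-p)\zeta_{\cQ}$ (and adding the one-time startup $m$ and $d$ coming from computing $g^0$) delivers \eqref{eq:main_res_3_pl_finite_sums_appendix} and \eqref{eq:main_res_4_pl_finite_sums_appendix}. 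The only substantive step is the algebraic verification of (i) and (ii), which is straightforward; the heart of the argument, the variance recursion, has already been done in Theorem~\ref{thm:main_result_non_cvx_finite_sums_appendix}, so the proof is essentially a transplant of the P{\L} Lyapunov argument from Section~\ref{sec:proof_of_thm_pl} with the sharper \algname{VR-MARINA} variance coefficient.
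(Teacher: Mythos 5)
Your proposal is correct and follows essentially the same route as the paper's own proof: Lemma~\ref{lem:lemma_2_page} plus the P{\L} condition, the variance recursion \eqref{eq:non_cvx_finite_sums_technical_2} transplanted unchanged, the Lyapunov function $\Phi_k = f(x^k)-f(x^*)+\frac{\gamma}{p}\|g^k-\nabla f(x^k)\|^2$, and the same two scalar stepsize conditions yielding the contraction $\EE[\Phi_{k+1}]\le(1-\gamma\mu)\EE[\Phi_k]$. The verification of conditions (i) and (ii) and the final unrolling are exactly as in the paper, so nothing is missing.
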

\begin{proof}
	The proof is very similar to the proof of Theorem~\ref{thm:main_result_non_cvx_finite_sums}. From Lemma~\ref{lem:lemma_2_page} and P{\L} condition, we have
	\begin{eqnarray}
		\EE[f(x^{k+1}) - f(x^*)] &\le& \EE[f(x^k) - f(x^*)] - \frac{\gamma}{2}\EE\left[\|\nabla f(x^k)\|^2\right] - \left(\frac{1}{2\gamma} - \frac{L}{2}\right)\EE\left[\|x^{k+1}-x^k\|^2\right]\notag\\
		&&\quad + \frac{\gamma}{2}\EE\left[\|g^k - \nabla f(x^k)\|^2\right]\notag\\
		&\overset{\eqref{eq:pl_condition}}{\le}& (1-\gamma\mu)\EE\left[f(x^k) - f(x^*)\right] - \left(\frac{1}{2\gamma} - \frac{L}{2}\right)\EE\left[\|x^{k+1}-x^k\|^2\right] + \frac{\gamma}{2}\EE\left[\|g^k - \nabla f(x^k)\|^2\right]. \notag
	\end{eqnarray}
	Using the same arguments as in the proof of \eqref{eq:non_cvx_finite_sums_technical_2}, we obtain
	\begin{eqnarray}
		\EE\left[\|g^{k+1}-\nabla f(x^{k+1})\|^2\right] &\le&\frac{1-p}{n}\left(\omega L^2 + \frac{(1+\omega)\cL^2}{b'}\right)\EE\left[\|x^{k+1}-x^k\|^2\right] + (1-p)\EE\left[\left\|g^k - \nabla f(x^k)\right\|^2\right].\notag
	\end{eqnarray}
	Putting all together we derive that the sequence $\Phi_k = f(x^k) - f(x^*) + \frac{\gamma}{p}\|g^k - \nabla f(x^k)\|^2$ satisfies
	\begin{eqnarray}
		\EE\left[\Phi_{k+1}\right] &\le& \EE\left[(1-\gamma\mu)(f(x^k) - f(x^*)) - \left(\frac{1}{2\gamma} - \frac{L}{2}\right)\|x^{k+1}-x^k\|^2 + \frac{\gamma}{2}\|g^k - \nabla f(x^k)\|^2\right]\notag\\
		&&\quad + \frac{\gamma}{p}\EE\left[\frac{1-p}{n}\left(\omega L^2 + \frac{(1+\omega)\cL^2}{b'}\right)\|x^{k+1}-x^k\|^2 + (1-p)\left\|g^k - \nabla f(x^k)\right\|^2\right] \notag\\
		&=& \EE\left[(1-\gamma\mu)(f(x^k) - f(x^*)) + \left(\frac{\gamma}{2} + \frac{\gamma}{p}(1-p)\right)\left\|g^k - \nabla f(x^k)\right\|^2\right]\notag\\
		&&\quad + \left(\frac{\gamma(1-p)}{pn}\left(\omega L^2 + \frac{(1+\omega)\cL^2}{b'}\right) - \frac{1}{2\gamma} + \frac{L}{2}\right)\EE\left[\|x^{k+1}-x^k\|^2\right]\notag\\
		&\overset{\eqref{eq:gamma_bound_pl_finite_sums_appendix}}{\le}& (1-\gamma\mu)\EE[\Phi_k], \notag
	\end{eqnarray}
	where in the last inequality we use $\frac{\gamma(1-p)}{pn}\left(\omega L^2 + \frac{(1+\omega)\cL^2}{b'}\right) - \frac{1}{2\gamma} + \frac{L}{2} \le 0$ and $\frac{\gamma}{2} + \frac{\gamma}{p}(1-p) \le (1-\gamma\mu)\frac{\gamma}{p}$ following from \eqref{eq:gamma_bound_pl_finite_sums_appendix}. Unrolling the recurrence and using $g^0 = \nabla f(x^0)$, we obtain 
	\begin{equation*}
		\EE\left[f(x^{k+1}) - f(x^*)\right] \le \EE[\Phi_{k+1}] \le (1-\gamma\mu)^{k+1}\Phi_0 = (1-\gamma\mu)^{k+1}(f(x^0) - f(x^*))
	\end{equation*}
	that implies \eqref{eq:main_res_2_pl_finite_sums_appendix}, \eqref{eq:main_res_3_pl_finite_sums_appendix}, and \eqref{eq:main_res_4_pl_finite_sums_appendix}.
\end{proof}

\begin{corollary}\label{cor:main_result_pl_finite_sums_appendix}
	Let the assumptions of Theorem~\ref{thm:main_result_pl_finite_sums_appendix} hold and $p = \min\left\{\frac{\zeta_{\cQ}}{d},\frac{b'}{m+b'}\right\}$, where $b' \le m$ and $\zeta_{\cQ}$ is the expected density of the quantization (see Def.~\ref{def:quantization}). If 
	\begin{equation*}
		\gamma \le \min\left\{\frac{1}{L + \sqrt{\frac{2\max\left\{\nicefrac{d}{\zeta_{\cQ}} - 1,\nicefrac{m}{b'}\right\}}{n}\left(\omega L^2 + \frac{(1+\omega)\cL^2}{b'}\right)}},\frac{p}{2\mu}\right\},
	\end{equation*}
	then \algname{VR-MARINA} requires 
	\begin{equation*}
		\cO\left(\max\left\{\frac{1}{p}, \frac{L}{\mu}\left(1 + \sqrt{\frac{\omega\max\left\{\nicefrac{d}{\zeta_{\cQ}} - 1,\nicefrac{m}{b'}\right\}}{n}}\right) + \frac{\cL}{\mu}\sqrt{\frac{(1+\omega)\max\left\{\nicefrac{d}{\zeta_{\cQ}} - 1,\nicefrac{m}{b'}\right\}}{nb'}}\right\}\log\frac{\Delta_0}{\varepsilon}\right)
	\end{equation*}
	iterations/communication rounds, 
	\begin{equation*}
		\cO\left(m+\max\left\{\frac{b'}{p}, \frac{L}{\mu}\left(b' + \sqrt{\frac{\omega\max\left\{(\nicefrac{d}{\zeta_{\cQ}} - 1)(b')^2,mb'\right\}}{n}}\right) + \frac{\cL}{\mu}\sqrt{\frac{(1+\omega)\max\left\{(\nicefrac{d}{\zeta_{\cQ}} - 1) b',m\right\}}{n}}\right\}\log\frac{\Delta_0}{\varepsilon}\right)
	\end{equation*}
	stochastic oracle calls per node in expectation to achieve $\EE[f(x^K)-f(x^*)] \le \varepsilon$, and the expected total communication cost per worker is
	\begin{equation*}
		\cO\left(d+\zeta_{\cQ}\max\left\{\frac{1}{p}, \frac{L}{\mu}\left(1 + \sqrt{\frac{\omega\max\left\{\nicefrac{d}{\zeta_{\cQ}} - 1,\nicefrac{m}{b'}\right\}}{n}}\right) + \frac{\cL}{\mu}\sqrt{\frac{(1+\omega)\max\left\{\nicefrac{d}{\zeta_{\cQ}} - 1,\nicefrac{m}{b'}\right\}}{nb'}}\right\}\log\frac{\Delta_0}{\varepsilon}\right)
	\end{equation*}
	 under an assumption that the communication cost is proportional to the number of non-zero components of transmitted vectors from workers to the server.
\end{corollary}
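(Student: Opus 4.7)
The proof is a direct specialization of Theorem~\ref{thm:main_result_pl_finite_sums_appendix} to the choice $p = \min\{\zeta_{\cQ}/d, b'/(m+b')\}$, and it proceeds in exact parallel with the proof of Corollary~\ref{cor:main_result_non_cvx_finite_sums_appendix} above: the only change is that the iteration count from \eqref{eq:main_res_2_pl_finite_sums_appendix} is used in place of \eqref{eq:main_res_2_non_cvx_finite_sums_appendix}, while the communication- and oracle-accounting identities are identical. So the plan is essentially algebraic bookkeeping, not a new analysis.

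First I would record three elementary identities/inequalities that follow from the choice of $p$: (i) $(1-p)/p = \max\{d/\zeta_{\cQ}-1,\, m/b'\}$; (ii) $pm + (1-p)b' \le 2b'$, since $p \le b'/(m+b')$ gives $pm \le b'$; and (iii) $pd + (1-p)\zeta_{\cQ} \le 2\zeta_{\cQ}$, since $p \le \zeta_{\cQ}/d$ gives $pd \le \zeta_{\cQ}$. With (i) in hand the stepsize bound \eqref{eq:gamma_bound_pl_finite_sums_appendix} becomes exactly the stepsize bound stated in the corollary. Substituting (i) into \eqref{eq:main_res_2_pl_finite_sums_appendix} and using $\sqrt{a+b} \le \sqrt{a}+\sqrt{b}$ to split $\sqrt{\omega L^2 + (1+\omega)\cL^2/b'}$ into $\sqrt{\omega}\,L + \sqrt{(1+\omega)/b'}\,\cL$ yields the claimed iteration/communication-round complexity.

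Next I would apply (ii) to the oracle count $m + K(pm + 2(1-p)b')$ from \eqref{eq:main_res_3_pl_finite_sums_appendix}, which gives an extra factor of $b'$ multiplying the inside-square-root terms in the iteration bound; distributing this factor through the maximum (using $b' \cdot \max\{d/\zeta_{\cQ}-1, m/b'\} = \max\{(d/\zeta_{\cQ}-1) b', m\}$ and the analogous identity for the $b'/\mu$ term) produces precisely the oracle bound in the statement. Finally, I would apply (iii) to $d + K(pd + (1-p)\zeta_{\cQ})$ from \eqref{eq:main_res_4_pl_finite_sums_appendix}, which just multiplies the iteration bound by $\zeta_{\cQ}$ and adds the additive $d$ term, yielding the stated communication cost.

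There is no real obstacle here: the hardest (and only nontrivial) step is keeping the maxima inside the square roots consistent when pulling the factor $b'$ inside — in particular, being careful that $b' \cdot (d/\zeta_{\cQ}-1) $ and $m$ enter under a single $\max$, and that the $L$-contribution carries $(b')^2$ while the $\cL$-contribution carries only $b'$, which is the discrepancy already present in the non-convex corollary.
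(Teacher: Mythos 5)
Your proposal is correct and follows essentially the same route as the paper's own proof: the same three consequences of the choice of $p$ (namely $(1-p)/p = \max\{\nicefrac{d}{\zeta_{\cQ}}-1,\nicefrac{m}{b'}\}$, $pm+(1-p)b'\le 2b'$, $pd+(1-p)\zeta_{\cQ}\le 2\zeta_{\cQ}$), plugged into the bounds of Theorem~\ref{thm:main_result_pl_finite_sums_appendix} with $\sqrt{a+b}\le\sqrt{a}+\sqrt{b}$, and the same bookkeeping when pulling the factor $b'$ inside the maxima for the oracle complexity. No gaps.
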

\begin{proof}
	The choice of $p = \min\left\{\frac{\zeta_{\cQ}}{d},\frac{b'}{m+b'}\right\}$ implies
	\begin{eqnarray*}
		\frac{1-p}{p} &=& \max\left\{\frac{d}{\zeta_{\cQ}}-1,\frac{m}{b'}\right\},\\
		pm + (1-p)b' &\le& \frac{2mb'}{m+b'} \le 2b',\\
		pd + (1-p)\zeta_{\cQ} &\le& \frac{\zeta_{\cQ}}{d}\cdot d + \left(1 - \frac{\zeta_{\cQ}}{d}\right)\cdot\zeta_{\cQ} \le 2\zeta_{\cQ}.
	\end{eqnarray*}	
	Plugging these relations in \eqref{eq:gamma_bound_pl_finite_sums_appendix}, \eqref{eq:main_res_2_pl_finite_sums_appendix}, \eqref{eq:main_res_3_pl_finite_sums_appendix} and \eqref{eq:main_res_4_pl_finite_sums_appendix} and using $\sqrt{a+b} \le \sqrt{a} + \sqrt{b}$, we get that if
	\begin{equation*}
		\gamma \le \min\left\{\frac{1}{L + \sqrt{\frac{2\max\left\{\nicefrac{d}{\zeta_{\cQ}} - 1,\nicefrac{m}{b'}\right\}}{n}\left(\omega L^2 + \frac{(1+\omega)\cL^2}{b'}\right)}},\frac{p}{2\mu}\right\},
	\end{equation*}
	then \algname{VR-MARINA} requires 
	\begin{eqnarray*}
		K &=& \cO\left(\max\left\{\frac{1}{p}, \frac{L + \sqrt{\frac{1-p}{pn}\left(\omega L^2 + \frac{(1+\omega)\cL^2}{b'}\right)}}{\mu}\right\}\log\frac{\Delta_0}{\varepsilon}\right)\\
		&=& \cO\left(\max\left\{\frac{1}{p}, \frac{L + \sqrt{L^2\frac{\omega\max\left\{\nicefrac{d}{\zeta_{\cQ}} - 1,\nicefrac{m}{b'}\right\}}{n} + \cL^2\frac{(1+\omega)\max\left\{\nicefrac{d}{\zeta_{\cQ}} - 1,\nicefrac{m}{b'}\right\}}{nb'}}}{\mu}\right\}\log\frac{\Delta_0}{\varepsilon}\right)\\
		&=& \cO\left(\max\left\{\frac{1}{p}, \frac{L}{\mu}\left(1 + \sqrt{\frac{\omega\max\left\{\nicefrac{d}{\zeta_{\cQ}} - 1,\nicefrac{m}{b'}\right\}}{n}}\right) + \frac{\cL}{\mu}\sqrt{\frac{(1+\omega)\max\left\{\nicefrac{d}{\zeta_{\cQ}} - 1,\nicefrac{m}{b'}\right\}}{nb'}}\right\}\log\frac{\Delta_0}{\varepsilon}\right)
	\end{eqnarray*}
	iterations/communication rounds and 
	\begin{eqnarray*}
		m + K(pm + 2(1-p)b') &=& \cO\left(m + \max\left\{\frac{1}{p}, \frac{L + \sqrt{\frac{1-p}{pn}\left(\omega L^2 + \frac{(1+\omega)\cL^2}{b'}\right)}}{\mu}\right\}(pm + (1-p)b')\log\frac{\Delta_0}{\varepsilon}\right)\\	
		&=& \cO\Bigg(m+\max\Bigg\{\frac{1}{p}, \frac{L}{\mu}\Bigg(1 + \sqrt{\frac{\omega\max\left\{\nicefrac{d}{\zeta_{\cQ}} - 1,\nicefrac{m}{b'}\right\}}{n}}\Bigg)\\
		&&\hspace{5cm} + \frac{\cL}{\mu}\sqrt{\frac{(1+\omega)\max\left\{\nicefrac{d}{\zeta_{\cQ}} - 1,\nicefrac{m}{b'}\right\}}{nb'}}\Bigg\}b'\log\frac{\Delta_0}{\varepsilon}\Bigg)\\
		&=& \cO\Bigg(m+\max\Bigg\{\frac{b'}{p}, \frac{L}{\mu}\Bigg(b' + \sqrt{\frac{\omega\max\left\{(\nicefrac{d}{\zeta_{\cQ}} - 1)(b')^2, mb'\right\}}{n}}\Bigg)\\
		&&\hspace{5cm} + \frac{\cL}{\mu}\sqrt{\frac{(1+\omega)\max\left\{(\nicefrac{d}{\zeta_{\cQ}} - 1)b',m\right\}}{n}}\Bigg\}\log\frac{\Delta_0}{\varepsilon}\Bigg)
	\end{eqnarray*}
	stochastic oracle calls per node in expectation in order to achieve $\EE[f(x^K) - f(x^*)] \le \varepsilon$, and the expected total communication cost per worker is
	\begin{eqnarray*}
		d + K(pd + (1-p)\zeta_{\cQ}) &=&  \cO\left(d+\max\left\{\frac{1}{p}, \frac{L + \sqrt{\frac{1-p}{pn}\left(\omega L^2 + \frac{(1+\omega)\cL^2}{b'}\right)}}{\mu}\right\}(pd + (1-p)\zeta_{\cQ})\log\frac{\Delta_0}{\varepsilon}\right)\\
		&=&\cO\Bigg(d+\zeta_{\cQ}\max\Bigg\{\frac{1}{p}, \frac{L}{\mu}\Bigg(1 + \sqrt{\frac{\omega\max\left\{\nicefrac{d}{\zeta_{\cQ}} - 1,\nicefrac{m}{b'}\right\}}{n}}\Bigg) \\
		&&\hspace{5cm} + \frac{\cL}{\mu}\sqrt{\frac{(1+\omega)\max\left\{\nicefrac{d}{\zeta_{\cQ}} - 1,\nicefrac{m}{b'}\right\}}{nb'}}\Bigg\}\log\frac{\Delta_0}{\varepsilon}\Bigg)
	\end{eqnarray*}
	 under an assumption that the communication cost is proportional to the number of non-zero components of transmitted vectors from workers to the server.
\end{proof}

\subsection{Online Case}

\begin{algorithm}[h]
   \caption{\algname{VR-MARINA}: online case}\label{alg:vr_marina_online}
\begin{algorithmic}[1]
   \STATE {\bfseries Input:} starting point $x^0$, stepsize $\gamma$, minibatch sizes $b$, $b' < b$, probability $p\in(0,1]$, number of iterations $K$
   \STATE Initialize $g^0 = \frac{1}{nb}\sum_{i=1}^n\sum_{j\in I_{i,0}}\nabla f_{\xi_{ij}^0}(x^{0})$, where $I_{i,0}$ is the set of the indices in the minibatch, $|I_{i,0}| = b$, and $\xi_{ij}^0$ is independently sampled from $\cD_i$ for $i\in[n]$, $j\in[m]$
   \FOR{$k=0,1,\ldots,K-1$}
   \STATE Sample $c_k \sim \text{Be}(p)$
   \STATE Broadcast $g^k$ to all workers
   \FOR{$i = 1,\ldots,n$ in parallel} 
   \STATE $x^{k+1} = x^k - \gamma g^k$
   \STATE Set $g_{i}^{k+1} = \begin{cases}\frac{1}{b}\sum_{j\in I_{i,k}}\nabla f_{\xi_{ij}^k}(x^{k+1}),& \text{if } c_k = 1,\\ g^k + \cQ\left(\frac{1}{b'}\sum_{j\in I_{i,k}'}(\nabla f_{\xi_{ij}^k}(x^{k+1}) - \nabla f_{\xi_{ij}^k}(x^k))\right),& \text{if } c_k = 0, \end{cases}$  
   %Set $g_i^{k+1} = \frac{1}{b}\sum_{j\in I_{i,k}}\nabla f_{\xi_{ij}^k}(x^{k+1})$ if $c_k = 1$, and $g_i^{k+1} = g^k + \cQ\left(\frac{1}{b'}\sum_{j\in I_{i,k}'}(\nabla f_{\xi_{ij}^k}(x^{k+1}) - \nabla f_{\xi_{ij}^k}(x^k))\right)$ otherwise, 
    where $I_{i,k}, I_{i,k}'$ are the sets of the indices in the minibatches, $|I_{i,k}| = b$, $|I_{i,k}'| = b'$,  and $\xi_{ij}^k$ is independently sampled from $\cD_i$ for $i\in[n]$, $j\in[m]$
%   \STATE $g_i^{k+1} = \begin{cases}\nabla f_i(x^{k+1})& \text{with probability } p,\\ g^k + \cQ\left(\frac{1}{b'}\sum\limits_{j\in I'}(\nabla f_{ij}(x^{k+1}) - \nabla f_{ij}(x^k))\right)& \text{with probability } 1-p \end{cases}$
   \ENDFOR
   \STATE $g^{k+1} = \frac{1}{n}\sum_{i=1}^ng_i^{k+1}$
   \ENDFOR
   \STATE {\bfseries Return:} $\hat x^K$ chosen uniformly at random from $\{x^k\}_{k=0}^{K-1}$
\end{algorithmic}
\end{algorithm}

\subsubsection{Generally Non-Convex Problems}\label{sec:proof_of_thm_non_cvx_online}

In this section, we provide the full statement of Theorem~\ref{thm:main_result_non_cvx_online} together with the proof of this result.
\begin{theorem}[Theorem~\ref{thm:main_result_non_cvx_online}]\label{thm:main_result_non_cvx_online_appendix}
	Consider the online case \eqref{eq:main_problem}+\eqref{eq:f_i_expectation}. Let Assumptions~\ref{as:lower_bound},~\ref{as:L_smoothness},~\ref{as:avg_smoothness_online}~and~\ref{as:bounded_var} be satisfied and 
	\begin{equation}
		\gamma \le \frac{1}{L + \sqrt{\frac{1-p}{pn}\left(\omega L^2 + \frac{(1+\omega)\cL^2}{b'}\right)}},\label{eq:gamma_bound_non_cvx_online_appendix}
	\end{equation}
	where $L^2 = \frac{1}{n}\sum_{i=1}^nL_i^2$ and $\cL^2 = \frac{1}{n}\sum_{i=1}^n\cL_i^2$. Then after $K$ iterations of \algname{VR-MARINA}, we have
	\begin{equation}
		\EE\left[\left\|\nabla f(\hat x^K)\right\|^2\right] \le \frac{2\Delta_0}{\gamma K} + \frac{\sigma^2}{nb}\left(1 + \frac{1}{p K}\right), \label{eq:main_res_non_cvx_online_appendix}
	\end{equation}
	where $\hat{x}^K$ is chosen uniformly at random from $x^0,\ldots,x^{K-1}$ and $\Delta_0 = f(x^0)-f_*$. That is, after
	\begin{equation}
%		K = \cO\left(\frac{\Delta_0}{\varepsilon^2}\left(L\left(1 + \sqrt{\frac{(1-p)\omega}{pn}} \right)+ \cL\sqrt{\frac{(1-p)(1+\omega)}{pnb'}}\right)\right) \label{eq:main_res_2_non_cvx_finite_sums_appendix}
		K = \cO\left(\frac{1}{p} + \frac{\Delta_0}{\varepsilon^2}\left(L + \sqrt{\frac{1-p}{pn}\left(\omega L^2 + \frac{(1+\omega)\cL^2}{b'}\right)}\right)\right) \label{eq:main_res_2_non_cvx_online_appendix}
	\end{equation}
	iterations with $b = \Theta(\frac{\sigma^2}{n\varepsilon^2})$ \algname{VR-MARINA} produces such a point $\hat x^K$ that $\EE[\|\nabla f(\hat x^K)\|^2] \le \varepsilon^2$, and the expected total number of stochastic oracle calls per node equals
	\begin{equation}
		b + K(pb + 2(1-p)b') = \cO\left(\frac{\sigma^2}{n\varepsilon^2} + \left(\frac{1}{p} + \frac{\Delta_0}{\varepsilon^2}\left(L + \sqrt{\frac{1-p}{pn}\left(\omega L^2 + \frac{(1+\omega)\cL^2}{b'}\right)}\right)\right)\left(p\frac{\sigma^2}{n\varepsilon^2} + (1-p)b'\right)\right). \label{eq:main_res_3_non_cvx_online_appendix}
	\end{equation}
	Moreover, under an assumption that the communication cost is proportional to the number of non-zero components of transmitted vectors from workers to the server we have that the expected total communication cost per worker equals
	\begin{equation}
		d + K(pd + (1-p)\zeta_{\cQ}) =  \cO\left(d + \left(\frac{1}{p} + \frac{\Delta_0}{\varepsilon^2}\left(L + \sqrt{\frac{1-p}{pn}\left(\omega L^2 + \frac{(1+\omega)\cL^2}{b'}\right)}\right)\right)(pd + (1-p)\zeta_{\cQ})\right),\label{eq:main_res_4_non_cvx_online_appendix}
	\end{equation}
	where $\zeta_{\cQ}$ is the expected density of the quantization (see Def.~\ref{def:quantization}).
\end{theorem}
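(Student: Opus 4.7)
The plan is to mirror the Lyapunov-function analysis of Theorem~\ref{thm:main_result_non_cvx_finite_sums_appendix} and adjust it for the fresh stochastic sampling on each node. First I would invoke Lemma~\ref{lem:lemma_2_page} to get the standard descent inequality
\begin{equation*}
\EE[f(x^{k+1})] \le \EE[f(x^k)] - \tfrac{\gamma}{2}\EE[\|\nabla f(x^k)\|^2] - \left(\tfrac{1}{2\gamma} - \tfrac{L}{2}\right)\EE[\|x^{k+1}-x^k\|^2] + \tfrac{\gamma}{2}\EE[\|g^k - \nabla f(x^k)\|^2],
\end{equation*}
which reduces the task to controlling the running noise $\EE[\|g^k-\nabla f(x^k)\|^2]$.

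Next I would derive a one-step recursion for this noise by conditioning on the Bernoulli $c_k$. When $c_k=0$, the argument from Theorem~\ref{thm:main_result_non_cvx_finite_sums_appendix} carries through almost verbatim: variance decomposition \eqref{eq:variance_decomposition}, independence of the $n$ workers, the quantization bound~\eqref{eq:quantization_def}, Assumption~\ref{as:L_smoothness} and the average smoothness~\eqref{eq:avg_L_smoothness_online}. When $c_k=1$, $g^{k+1}$ is the mean of $nb$ independent unbiased stochastic gradients, so Assumption~\ref{as:bounded_var} bounds its conditional variance by $\sigma^2/(nb)$. Combining both branches yields
\begin{equation*}
\EE[\|g^{k+1}-\nabla f(x^{k+1})\|^2] \le \tfrac{1-p}{n}\bigl(\omega L^2 + \tfrac{(1+\omega)\cL^2}{b'}\bigr)\EE[\|x^{k+1}-x^k\|^2] + (1-p)\EE[\|g^k-\nabla f(x^k)\|^2] + \tfrac{p\sigma^2}{nb}.
\end{equation*}

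I would then set $\Phi_k = f(x^k) - f_* + \tfrac{\gamma}{2p}\|g^k-\nabla f(x^k)\|^2$. Adding $\tfrac{\gamma}{2p}$ times the noise recursion to the descent inequality, the coefficient of $\|g^k-\nabla f(x^k)\|^2$ cancels exactly (it equals $\tfrac{\gamma}{2p}(-1+p+(1-p))=0$), and the stepsize bound~\eqref{eq:gamma_bound_non_cvx_online_appendix} makes the coefficient of $\|x^{k+1}-x^k\|^2$ nonpositive, producing
\begin{equation*}
\EE[\Phi_{k+1}] \le \EE[\Phi_k] - \tfrac{\gamma}{2}\EE[\|\nabla f(x^k)\|^2] + \tfrac{\gamma\sigma^2}{2nb}.
\end{equation*}
Telescoping from $k=0$ to $K-1$, using $\Phi_K\ge 0$ and $\EE[\Phi_0] \le \Delta_0 + \tfrac{\gamma\sigma^2}{2pnb}$ (since $g^0$ is a mean of $nb$ independent unbiased stochastic gradients across nodes), and dividing by $\gamma K/2$ yields~\eqref{eq:main_res_non_cvx_online_appendix}. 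The iteration bound~\eqref{eq:main_res_2_non_cvx_online_appendix} then follows by choosing $b=\Theta(\sigma^2/(n\varepsilon^2))$ so that the variance floor $\sigma^2/(nb)$ falls below $\varepsilon^2$, and~\eqref{eq:main_res_3_non_cvx_online_appendix},~\eqref{eq:main_res_4_non_cvx_online_appendix} follow by computing the expected per-iteration oracle and communication costs (on average $pb+(1-p)b'$ stochastic gradients and $pd+(1-p)\zeta_{\cQ}$ coordinates per worker, plus the initialization costs $b$ and $d$).

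The main obstacle is the correct accounting of the initial noise $\|g^0-\nabla f(x^0)\|^2$: unlike in the finite-sum case one cannot initialize $g^0=\nabla f(x^0)$ exactly, so the $\tfrac{1}{pK}$ term in~\eqref{eq:main_res_non_cvx_online_appendix} arises precisely from the variance of the initial minibatch propagating through the $\tfrac{\gamma}{2p}$ factor in $\Phi_0$; without this refinement one would lose the right dependence on $p$. A secondary subtlety is the interplay between $c_k$ and the minibatch samples: since $c_k$ is drawn independently of all stochastic indices, the two branches average cleanly by the tower property~\eqref{eq:tower_property}, but one must be careful on the $c_k=0$ branch to split the noise $\cQ(\widetilde{\Delta}_i^k)-\Delta_i^k$ into a quantization part and a minibatch part via variance decomposition so as to obtain the tight $\omega L^2 + (1+\omega)\cL^2/b'$ coefficient rather than a weaker one.
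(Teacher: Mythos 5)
Your proposal is correct and follows essentially the same route as the paper's proof: the descent inequality from Lemma~\ref{lem:lemma_2_page}, the noise recursion conditioned on $c_k$ with the extra $\nicefrac{p\sigma^2}{(nb)}$ term from Assumption~\ref{as:bounded_var}, the Lyapunov function $\Phi_k = f(x^k)-f_*+\frac{\gamma}{2p}\|g^k-\nabla f(x^k)\|^2$, and the telescoping argument in which the variance of the minibatch initialization $g^0$ produces exactly the $\nicefrac{1}{(pK)}$ term in \eqref{eq:main_res_non_cvx_online_appendix}. The only negligible discrepancy is in the oracle-cost accounting, where the paper counts $pb+2(1-p)b'$ stochastic gradients per iteration (the compressed step evaluates gradients at both $x^{k+1}$ and $x^k$), which does not affect the $\cO(\cdot)$ bounds.
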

\begin{proof}[Proof of Theorem~\ref{thm:main_result_non_cvx_online}]
	The proof follows the same steps as the proof of Theorem~\ref{thm:main_result_non_cvx_finite_sums}. From Lemma~\ref{lem:lemma_2_page}, we have
	\begin{equation}
		\EE[f(x^{k+1})] \le \EE[f(x^k)] - \frac{\gamma}{2}\EE\left[\|\nabla f(x^k)\|^2\right] - \left(\frac{1}{2\gamma} - \frac{L}{2}\right)\EE\left[\|x^{k+1}-x^k\|^2\right] + \frac{\gamma}{2}\EE\left[\|g^k - \nabla f(x^k)\|^2\right]. \label{eq:non_cvx_online_technical_1}
	\end{equation}
	Next, we need to derive an upper bound for $\EE\left[\|g^{k+1}-\nabla f(x^{k+1})\|^2\right]$. Since $g^{k+1} = \frac{1}{n}\sum\limits_{i=1}^ng_i^{k+1}$, we get the following representation of $g^{k+1}$:
	\begin{equation}
		g^{k+1} = \begin{cases}\frac{1}{nb}\sum\limits_{i=1}^n\sum\limits_{j\in I_{i,k}}\nabla f_{\xi_{ij}^k}(x^{k+1})& \text{with probability } p,\\ g^k + \frac{1}{n}\sum\limits_{i=1}^n \cQ\left(\frac{1}{b'}\sum\limits_{j\in I'_{i,k}}(\nabla f_{\xi_{ij}^k}(x^{k+1}) - \nabla f_{\xi_{ij}^k}(x^k))\right)& \text{with probability } 1-p. \end{cases}\notag
	\end{equation}
	Using this, variance decomposition \eqref{eq:variance_decomposition}, tower property \eqref{eq:tower_property}, and independence of $\xi_{ij}^k$ for $i\in[n]$, $j\in I_{i,k}$, we derive:
	\begin{eqnarray}
		\EE\left[\|g^{k+1}-\nabla f(x^{k+1})\|^2\right] &\overset{\eqref{eq:tower_property}}{=}& (1-p)\EE\left[\left\|g^k + \frac{1}{n}\sum\limits_{i=1}^n \cQ\left(\frac{1}{b'}\sum\limits_{j\in I'_{i,k}}(\nabla f_{ij}(x^{k+1}) - \nabla f_{ij}(x^k))\right) - \nabla f(x^{k+1})\right\|^2\right]\notag\\
		&&\quad + \frac{p}{n^2b^2}\EE\left[\left\|\sum\limits_{i=1}^n\sum\limits_{j\in I_{i,k}}\left(\nabla f_{\xi_{ij}^k}(x^{k+1}) - \nabla f(x^{k+1})\right)\right\|^2\right] \notag\\
		&\overset{\eqref{eq:tower_property},\eqref{eq:variance_decomposition}}{=}& (1-p)\EE\left[\left\|\frac{1}{n}\sum\limits_{i=1}^n \cQ\left(\frac{1}{b'}\sum\limits_{j\in I'_{i,k}}(\nabla f_{ij}(x^{k+1}) - \nabla f_{ij}(x^k))\right) - \nabla f(x^{k+1}) + \nabla f(x^k)\right\|^2\right]\notag\\
		&&\quad + (1-p)\EE\left[\left\|g^k - \nabla f(x^k)\right\|^2\right] + \frac{p}{n^2b^2}\sum\limits_{i=1}^n\sum\limits_{j\in I_{i,k}}\EE\left[\left\|\nabla f_{\xi_{ij}^k}(x^{k+1}) - \nabla f(x^{k+1})\right\|^2\right]\notag\\
		&\overset{\eqref{eq:tower_property},\eqref{eq:bounded_var}}{=}& (1-p)\EE\left[\left\|\frac{1}{n}\sum\limits_{i=1}^n \cQ\left(\frac{1}{b'}\sum\limits_{j\in I'_{i,k}}(\nabla f_{ij}(x^{k+1}) - \nabla f_{ij}(x^k))\right) - \nabla f(x^{k+1}) + \nabla f(x^k)\right\|^2\right]\notag\\
		&&\quad + (1-p)\EE\left[\left\|g^k - \nabla f(x^k)\right\|^2\right] + \frac{p\sigma^2}{nb},\notag
	\end{eqnarray}
	where $\sigma^2 = \frac{1}{n}\sum_{i=1}^n\sigma_i^2$. Applying the same arguments as in the proof of inequality \eqref{eq:non_cvx_finite_sums_technical_2}, we obtain
	\begin{eqnarray}
		\EE\left[\|g^{k+1}-\nabla f(x^{k+1})\|^2\right]	&\le&\frac{1-p}{n}\left(\omega L^2 + \frac{(1+\omega)\cL^2}{b'}\right)\EE\left[\|x^{k+1}-x^k\|^2\right]\notag\\
		&&\quad + (1-p)\EE\left[\left\|g^k - \nabla f(x^k)\right\|^2\right] + \frac{p\sigma^2}{nb}.\label{eq:non_cvx_online_technical_2}
	\end{eqnarray}
	Next, we introduce new notation: $\Phi_k = f(x^k) - f_* + \frac{\gamma}{2p}\|g^k - \nabla f(x^k)\|^2$. Using this and inequalities \eqref{eq:non_cvx_online_technical_1} and \eqref{eq:non_cvx_online_technical_2}, we establish the following inequality:
	\begin{eqnarray}
		\EE\left[\Phi_{k+1}\right] &\le& \EE\left[f(x^k) - f_* - \frac{\gamma}{2}\|\nabla f(x^k)\|^2 - \left(\frac{1}{2\gamma} - \frac{L}{2}\right)\|x^{k+1}-x^k\|^2 + \frac{\gamma}{2}\|g^k - \nabla f(x^k)\|^2\right]\notag\\
		&&\quad + \frac{\gamma}{2p}\EE\left[\frac{1-p}{n}\left(\omega L^2 + \frac{(1+\omega)\cL^2}{b'}\right)\|x^{k+1}-x^k\|^2 + (1-p)\left\|g^k - \nabla f(x^k)\right\|^2 + \frac{p\sigma^2}{nb}\right] \notag\\
		&=& \EE\left[\Phi_k\right] - \frac{\gamma}{2}\EE\left[\|\nabla f(x^k)\|^2\right] + \left(\frac{\gamma(1-p)}{2pn}\left(\omega L^2 + \frac{(1+\omega)\cL^2}{b'}\right) - \frac{1}{2\gamma} + \frac{L}{2}\right)\EE\left[\|x^{k+1}-x^k\|^2\right] + \frac{\gamma\sigma^2}{2nb}\notag\\
		&\overset{\eqref{eq:gamma_bound_non_cvx_online_appendix}}{\le}& \EE\left[\Phi_k\right] - \frac{\gamma}{2}\EE\left[\|\nabla f(x^k)\|^2\right] + \frac{\gamma\sigma^2}{2nb},\label{eq:non_cvx_online_technical_3}
	\end{eqnarray}
	where in the last inequality, we use $\frac{\gamma(1-p)}{2pn}\left(\omega L^2 + \frac{(1+\omega)\cL^2}{b'}\right) - \frac{1}{2\gamma} + \frac{L}{2} \le 0$ following from \eqref{eq:gamma_bound_non_cvx_online_appendix}. Summing up inequalities \eqref{eq:non_cvx_online_technical_3} for $k=0,1,\ldots,K-1$ and rearranging the terms, we derive
	\begin{eqnarray}
		\frac{1}{K}\sum\limits_{k=0}^{K-1}\EE\left[\|\nabla f(x^k)\|^2\right] &\le& \frac{2}{\gamma K}\sum\limits_{k=0}^{K-1}\left(\EE[\Phi_k]-\EE[\Phi_{k+1}]\right) + \frac{\sigma^2}{nb} = \frac{2\left(\EE[\Phi_0]-\EE[\Phi_{K}]\right)}{\gamma K} + \frac{\sigma^2}{nb} \leq \frac{2\Delta_0}{\gamma K} + \frac{\sigma^2}{nb}\left(1 + \frac{1}{p K}\right),\notag
	\end{eqnarray}
	since $g^0 = \frac{1}{nb}\sum_{i=1}^n\sum_{j\in I_{i,0}}\nabla f_{\xi_{ij}^0}(x^{0})$ and $\Phi_{K} \ge 0$. Finally, using the tower property \eqref{eq:tower_property} and the definition of $\hat x^K$, we obtain \eqref{eq:main_res_non_cvx_online_appendix} that implies \eqref{eq:main_res_2_non_cvx_online_appendix}, \eqref{eq:main_res_3_non_cvx_online_appendix}, and \eqref{eq:main_res_4_non_cvx_online_appendix}.
\end{proof}

\begin{remark}[About batchsizes dissimilarity]
	Similarly to the finite sum case, our analysis can be easily extended to handle the version of \algname{VR-MARINA} with different batchsizes $b_1,\ldots,b_n$ and $b'_1,\ldots,b'_n$ on different workers, i.e., when $|I_{i,k}| = b_i$, $|I_{i,k}'| = b_i'$ for $i\in[n]$. In this case, the statement of Theorem~\ref{thm:main_result_non_cvx_online} remains the same with the small modificiation: instead of $\frac{\cL^2}{b'}$ the complexity bounds will have $\frac{1}{n}\sum_{i=1}^n\frac{\cL_i^2}{b_i'}$, and instead of the requirement $b = \Theta\left(\frac{\sigma^2}{n\varepsilon}\right)$ it will have $\frac{1}{n^2}\sum_{i=1}^n\frac{\sigma_i^2}{b_i} = \Theta(\varepsilon^2)$.
\end{remark}

\begin{corollary}[Corollary~\ref{cor:main_result_non_cvx_online}]\label{cor:main_result_non_cvx_online_appendix}
	Let the assumptions of Theorem~\ref{thm:main_result_non_cvx_online} hold and $p = \min\left\{\frac{\zeta_{\cQ}}{d},\frac{b'}{b+b'}\right\}$, where $b'\le b$, $b = \Theta\left(\nicefrac{\sigma^2}{(n\varepsilon^2)}\right)$ and $\zeta_{\cQ}$ is the expected density of the quantization (see Def.~\ref{def:quantization}). If 
	\begin{equation*}
		\gamma \le \frac{1}{L + \sqrt{\frac{\max\left\{\nicefrac{d}{\zeta_{\cQ}} - 1,\nicefrac{b}{b'}\right\}}{n}\left(\omega L^2 + \frac{(1+\omega)\cL^2}{b'}\right)}},
	\end{equation*}
	then \algname{VR-MARINA} requires 
	\begin{eqnarray*}
		\cO\left(\max\left\{\frac{d}{\zeta_{\cQ}}, \frac{\sigma^2}{nb'\varepsilon^2}\right\} + \frac{\Delta_0}{\varepsilon^2}\left(L\left(1 + \sqrt{\frac{\omega}{n}\max\left\{\frac{d}{\zeta_{\cQ}} - 1,\frac{\sigma^2}{nb'\varepsilon^2}\right\}}\right) + \cL\sqrt{\frac{(1+\omega)}{nb'}\max\left\{\frac{d}{\zeta_{\cQ}} - 1,\frac{\sigma^2}{nb'\varepsilon^2}\right\}}\right)\right)
	\end{eqnarray*}
	iterations/communication rounds and 
	\begin{eqnarray*}
		\cO\left(\max\left\{\frac{db'}{\zeta_{\cQ}}, \frac{\sigma^2}{n\varepsilon^2}\right\} +\frac{\Delta_0 L b'}{\varepsilon^2}+ \frac{\Delta_0 L}{\varepsilon^2}\sqrt{\frac{\omega b'}{n}\max\left\{\left(\frac{d}{\zeta_{\cQ}} - 1\right)b',\frac{\sigma^2}{n\varepsilon^2}\right\}}+ \frac{\Delta_0\cL}{\varepsilon^2}\sqrt{\frac{1+\omega}{n}\max\left\{\left(\frac{d}{\zeta_{\cQ}} - 1\right) b',\frac{\sigma^2}{n\varepsilon^2}\right\}}\right)
	\end{eqnarray*}
	stochastic oracle calls per node in expectation to achieve $\EE[\|\nabla f(\hat x^K)\|^2] \le \varepsilon^2$, and the expected total communication cost per worker is
	\begin{eqnarray*}
		\cO\left(\max\left\{d, \frac{\sigma^2\zeta_{\cQ}}{nb'\varepsilon^2}\right\} +\frac{\Delta_0\zeta_{\cQ}}{\varepsilon^2}\left(L\left(1 + \sqrt{\frac{\omega}{n}\max\left\{\frac{d}{\zeta_{\cQ}} - 1,\frac{\sigma^2}{nb'\varepsilon^2}\right\}}\right)+ \cL\sqrt{\frac{1+\omega}{nb'}\max\left\{\frac{d}{\zeta_{\cQ}} - 1,\frac{\sigma^2}{nb'\varepsilon^2}\right\}}\right)\right)
	\end{eqnarray*}
	 under an assumption that the communication cost is proportional to the number of non-zero components of transmitted vectors from workers to the server.
\end{corollary}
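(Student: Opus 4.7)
The plan is to derive this corollary as a direct algebraic specialization of Theorem~\ref{thm:main_result_non_cvx_online_appendix}, closely paralleling the derivation already used for Corollary~\ref{cor:main_result_non_cvx_finite_sums_appendix} in the finite-sum case, with the role of $m$ played by the large batchsize $b = \Theta(\sigma^2/(n\varepsilon^2))$ that is needed to drive the variance term $\sigma^2/(nb)$ in \eqref{eq:main_res_non_cvx_online_appendix} below $\varepsilon^2$.

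First, I would compute the three structural quantities that control the theorem's bounds under the choice $p = \min\{\zeta_{\cQ}/d,\, b'/(b+b')\}$. A short case analysis (depending on which of the two arguments of the $\min$ is active) gives
\[
\frac{1-p}{p} = \max\left\{\frac{d}{\zeta_{\cQ}} - 1,\; \frac{b}{b'}\right\}, \qquad pb + (1-p)b' \le \frac{2bb'}{b+b'} \le 2b', \qquad pd + (1-p)\zeta_{\cQ} \le 2\zeta_{\cQ},
\]
where the last two bounds use $b' \le b$ and $\zeta_{\cQ} \le d$ respectively. Substituting $b = \Theta(\sigma^2/(n\varepsilon^2))$ turns the $b/b'$ term into $\sigma^2/(nb'\varepsilon^2)$, so that $\tfrac{1-p}{p}$ and $\tfrac{1}{p}$ become exactly the $\max$ expressions appearing in the corollary.

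Second, I would plug these quantities into the iteration bound \eqref{eq:main_res_2_non_cvx_online_appendix}. The additive $1/p$ term yields the leading $\max\{d/\zeta_{\cQ},\, \sigma^2/(nb'\varepsilon^2)\}$ summand. For the $\sqrt{\cdot}$ piece, I would use $\sqrt{a+b} \le \sqrt{a}+\sqrt{b}$ to split
\[
\sqrt{\frac{1-p}{pn}\left(\omega L^2 + \frac{(1+\omega)\cL^2}{b'}\right)} \le L\sqrt{\frac{\omega}{n}\max\left\{\tfrac{d}{\zeta_{\cQ}}-1,\tfrac{\sigma^2}{nb'\varepsilon^2}\right\}} + \cL\sqrt{\frac{1+\omega}{nb'}\max\left\{\tfrac{d}{\zeta_{\cQ}}-1,\tfrac{\sigma^2}{nb'\varepsilon^2}\right\}},
\]
which, after multiplication by $\Delta_0/\varepsilon^2$, matches the iteration-complexity line of the corollary. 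The oracle count follows from $b + K(pb + 2(1-p)b') \le b + \cO(Kb')$, i.e.\ $\cO(\sigma^2/(n\varepsilon^2)) + \cO(b' K)$; distributing $b'$ inside the square roots yields the stated bound. The communication cost follows analogously from $d + K(pd + (1-p)\zeta_{\cQ}) \le d + \cO(\zeta_{\cQ} K)$.

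There is no genuine obstacle here: the entire argument is algebraic bookkeeping that amounts to substituting a specific $p$ into Theorem~\ref{thm:main_result_non_cvx_online_appendix}. The one place to be careful is ensuring consistency between the additive $1/p$ term in the iteration bound and the additive $b$ term in the oracle bound, since under the chosen $p$ we have $b'/p \asymp b + b'$, so both contributions collapse to the same $\max\{db'/\zeta_{\cQ},\, \sigma^2/(n\varepsilon^2)\}$ expression; similarly one must verify that the $\max$ is correctly pulled out of the square roots so that the asymmetric form written in the corollary statement appears verbatim.
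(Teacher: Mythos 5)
Your proposal is correct and follows essentially the same route as the paper's own proof: derive $\tfrac{1-p}{p}=\max\{\tfrac{d}{\zeta_{\cQ}}-1,\tfrac{b}{b'}\}$, $pb+(1-p)b'\le 2b'$, and $pd+(1-p)\zeta_{\cQ}\le 2\zeta_{\cQ}$ from the choice of $p$, substitute into the bounds of Theorem~\ref{thm:main_result_non_cvx_online_appendix}, and split the square root via $\sqrt{a+b}\le\sqrt{a}+\sqrt{b}$, with the same bookkeeping of the $\nicefrac{1}{p}$ and $b$ contributions (giving $\max\{\nicefrac{db'}{\zeta_{\cQ}},\nicefrac{\sigma^2}{(n\varepsilon^2)}\}$ in the oracle bound) that the paper performs.
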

\begin{proof}[Proof of Corollary~\ref{cor:main_result_non_cvx_finite_sums}]
	The choice of $p = \min\left\{\frac{\zeta_{\cQ}}{d},\frac{b'}{b+b'}\right\}$ implies
	\begin{eqnarray*}
		\frac{1-p}{p} &=& \max\left\{\frac{d}{\zeta_{\cQ}}-1,\frac{b}{b'}\right\},\\
		pb + (1-p)b' &\le& \frac{2bb'}{b+b'} \le 2b',\\
		pd + (1-p)\zeta_{\cQ} &\le& \frac{\zeta_{\cQ}}{d}\cdot d + \left(1 - \frac{\zeta_{\cQ}}{d}\right)\cdot\zeta_{\cQ} \le 2\zeta_{\cQ}.
	\end{eqnarray*}	
	Plugging these relations in \eqref{eq:gamma_bound_non_cvx_online_appendix}, \eqref{eq:main_res_2_non_cvx_online_appendix}, \eqref{eq:main_res_3_non_cvx_online_appendix} and \eqref{eq:main_res_4_non_cvx_online_appendix} and using $\sqrt{a+b} \le \sqrt{a} + \sqrt{b}$, we get that if
	\begin{equation*}
		\gamma \le \frac{1}{L + \sqrt{\frac{\max\left\{\nicefrac{d}{\zeta_{\cQ}} - 1,\nicefrac{b}{b'}\right\}}{n}\left(\omega L^2 + \frac{(1+\omega)\cL^2}{b'}\right)}},
	\end{equation*}
	then \algname{VR-MARINA} requires 
	\begin{eqnarray*}
		K &=& \cO\left(\frac{1}{p}+ \frac{\Delta_0}{\varepsilon^2}\left(L + \sqrt{\frac{1-p}{pn}\left(\omega L^2 + \frac{(1+\omega)\cL^2}{b'}\right)}\right)\right)\\
		&=& \cO\left(\max\left\{\frac{d}{\zeta_{\cQ}}, \frac{\sigma^2}{nb'\varepsilon^2}\right\} + \frac{\Delta_0}{\varepsilon^2}\left(L + \sqrt{L^2\frac{\omega\max\left\{\nicefrac{d}{\zeta_{\cQ}} - 1,\nicefrac{b}{b'}\right\}}{n} + \cL^2\frac{(1+\omega)\max\left\{\nicefrac{d}{\zeta_{\cQ}} - 1,\nicefrac{b}{b'}\right\}}{nb'}}\right)\right)\\
		&=& \cO\Bigg(\max\left\{\frac{d}{\zeta_{\cQ}}, \frac{\sigma^2}{nb'\varepsilon^2}\right\} + \frac{\Delta_0}{\varepsilon^2}\Bigg(L\left(1 + \sqrt{\frac{\omega}{n}\max\left\{\frac{d}{\zeta_{\cQ}} - 1,\frac{\sigma^2}{nb'\varepsilon^2}\right\}}\right)\\
		&&\hspace{8cm} + \cL\sqrt{\frac{(1+\omega)}{nb'}\max\left\{\frac{d}{\zeta_{\cQ}} - 1,\frac{\sigma^2}{nb'\varepsilon^2}\right\}}\Bigg)\Bigg)
	\end{eqnarray*}
	iterations/communication rounds and 
	\begin{eqnarray*}
		b + K(pb + 2(1-p)b') &=& \cO\left(b + \frac{(1-p)b'}{p} + \frac{\Delta_0}{\varepsilon^2}\left(L + \sqrt{\frac{1-p}{pn}\left(\omega L^2 + \frac{(1+\omega)\cL^2}{b'}\right)}\right)(pb + (1-p)b')\right)\\	
		&=& \cO\Bigg(\max\left\{\frac{db'}{\zeta_{\cQ}}, b\right\} +\frac{\Delta_0}{\varepsilon^2}\Bigg(L\Bigg(1 + \sqrt{\frac{\omega\max\left\{\nicefrac{d}{\zeta_{\cQ}} - 1,\nicefrac{b}{b'}\right\}}{n}}\Bigg)\\
		&&\hspace{6cm} + \cL\sqrt{\frac{(1+\omega)\max\left\{\nicefrac{d}{\zeta_{\cQ}} - 1,\nicefrac{b}{b'}\right\}}{nb'}}\Bigg)b'\Bigg)\\
		&=& \cO\Bigg(\max\left\{\frac{db'}{\zeta_{\cQ}}, \frac{\sigma^2}{n\varepsilon^2}\right\} +\frac{\Delta_0}{\varepsilon^2}\Bigg(L\Bigg(b' + \sqrt{\frac{\omega b'}{n}\max\left\{\left(\frac{d}{\zeta_{\cQ}} - 1\right)b', \frac{\sigma^2}{n\varepsilon^2}\right\}}\Bigg) \\
		&&\hspace{6cm}+ \cL\sqrt{\frac{1+\omega}{n}\max\left\{\left(\frac{d}{\zeta_{\cQ}} - 1\right)b',\frac{\sigma^2}{n\varepsilon^2}\right\}}\Bigg)\Bigg)
	\end{eqnarray*}
	stochastic oracle calls per node in expectation to achieve $\EE[\|\nabla f(\hat x^K)\|^2] \le \varepsilon^2$, and the expected total communication cost per worker is
	\begin{eqnarray*}
		d + K(pd + (1-p)\zeta_{\cQ}) &=&  \cO\left(d + \frac{(1-p)\zeta_{\cQ}}{p} +\frac{\Delta_0}{\varepsilon^2}\left(L + \sqrt{\frac{1-p}{pn}\left(\omega L^2 + \frac{(1+\omega)\cL^2}{b'}\right)}\right)(pd + (1-p)\zeta_{\cQ})\right)\\
		&=&\cO\Bigg(\max\left\{d, \frac{\sigma^2\zeta_{\cQ}}{nb'\varepsilon^2}\right\} +\frac{\Delta_0\zeta_{\cQ}}{\varepsilon^2}\Bigg(L\Bigg(1 + \sqrt{\frac{\omega}{n}\max\left\{\frac{d}{\zeta_{\cQ}} - 1,\frac{\sigma^2}{nb'\varepsilon^2}\right\}}\Bigg)\\
		&&\hspace{6cm}+ \cL\sqrt{\frac{1+\omega}{nb'}\max\left\{\frac{d}{\zeta_{\cQ}} - 1,\frac{\sigma^2}{nb'\varepsilon^2}\right\}}\Bigg)\Bigg)
	\end{eqnarray*}
	 under an assumption that the communication cost is proportional to the number of non-zero components of transmitted vectors from workers to the server.
\end{proof}

\subsubsection{Convergence Results Under Polyak-{\L}ojasiewicz condition}\label{sec:proof_of_thm_pl_online}
In this section, we provide an analysis of \algname{VR-MARINA} under Polyak-{\L}ojasiewicz condition in the online case.
\begin{theorem}\label{thm:main_result_pl_online_appendix}
	Consider the finite sum case \eqref{eq:main_problem}+\eqref{eq:f_i_expectation}. Let Assumptions~\ref{as:lower_bound},~\ref{as:L_smoothness},~\ref{as:avg_smoothness_online},~\ref{as:pl_condition}~and~\ref{as:bounded_var} be satisfied and 
	\begin{equation}
		\gamma \le \min\left\{\frac{1}{L + \sqrt{\frac{2(1-p)}{pn}\left(\omega L^2 + \frac{(1+\omega)\cL^2}{b'}\right)}},\frac{p}{2\mu}\right\},\label{eq:gamma_bound_pl_online_appendix}
	\end{equation}
	where $L^2 = \frac{1}{n}\sum_{i=1}^nL_i^2$ and $\cL^2 = \frac{1}{n}\sum_{i=1}^n\cL_i^2$. Then after $K$ iterations of \algname{VR-MARINA}, we have
	\begin{equation}
		\EE\left[f(x^K) - f(x^*)\right] \le (1-\gamma\mu)^K\Delta_0 + \frac{\sigma^2}{nb\mu}, \label{eq:main_res_pl_online_appendix}
	\end{equation}
	where $\Delta_0 = f(x^0)-f(x^*)$. That is, after
	\begin{equation}
		K = \cO\left(\max\left\{\frac{1}{p}, \frac{L + \sqrt{\frac{1-p}{pn}\left(\omega L^2 + \frac{(1+\omega)\cL^2}{b'}\right)}}{\mu}\right\}\log\frac{\Delta_0}{\varepsilon}\right) \label{eq:main_res_2_pl_online_appendix}
	\end{equation}
	iterations with $b = \Theta\left(\frac{\sigma^2}{n\mu\varepsilon}\right)$ \algname{VR-MARINA} produces such a point $x^K$ that $\EE\left[f(x^K) - f(x^*)\right] \le \varepsilon$, and the expected total number of stochastic oracle calls per node equals
	\begin{equation}
		b + K(pb + 2(1-p)b') = \cO\left(m + \max\left\{\frac{1}{p}, \frac{L + \sqrt{\frac{1-p}{pn}\left(\omega L^2 + \frac{(1+\omega)\cL^2}{b'}\right)}}{\mu}\right\}(pb + (1-p)b')\log\frac{\Delta_0}{\varepsilon}\right). \label{eq:main_res_3_pl_online_appendix}
	\end{equation}
	Moreover, under an assumption that the communication cost is proportional to the number of non-zero components of transmitted vectors from workers to the server, we have that the expected total communication cost per worker equals
	\begin{equation}
		d + K(pd + (1-p)\zeta_{\cQ}) =  \cO\left(d+\max\left\{\frac{1}{p}, \frac{L + \sqrt{\frac{1-p}{pn}\left(\omega L^2 + \frac{(1+\omega)\cL^2}{b'}\right)}}{\mu}\right\}(pd + (1-p)\zeta_{\cQ})\log\frac{\Delta_0}{\varepsilon}\right),\label{eq:main_res_4_pl_online_appendix}
	\end{equation}
	where $\zeta_{\cQ}$ is the expected density of the quantization (see Def.~\ref{def:quantization}).
\end{theorem}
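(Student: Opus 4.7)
The plan is to combine the Polyak-{\L}ojasiewicz descent argument used in the proof of Theorem~\ref{thm:main_result_pl_finite_sums_appendix} with the extra sampling-noise term derived in the online analysis in the proof of Theorem~\ref{thm:main_result_non_cvx_online_appendix}. Concretely, I would start from Lemma~\ref{lem:lemma_2_page} applied to $x^{k+1} = x^k - \gamma g^k$, take expectations, and invoke the P{\L} condition \eqref{eq:pl_condition} to turn $-\frac{\gamma}{2}\EE\|\nabla f(x^k)\|^2$ into the contraction factor $(1-\gamma\mu)\EE[f(x^k)-f(x^*)]$, which yields
\begin{equation*}
\EE[f(x^{k+1})-f(x^*)] \le (1-\gamma\mu)\EE[f(x^k)-f(x^*)] - \left(\tfrac{1}{2\gamma}-\tfrac{L}{2}\right)\EE\|x^{k+1}-x^k\|^2 + \tfrac{\gamma}{2}\EE\|g^k-\nabla f(x^k)\|^2.
\end{equation*}

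Next I would control the bias $\EE\|g^{k+1}-\nabla f(x^{k+1})\|^2$ exactly as in the derivation of \eqref{eq:non_cvx_online_technical_2}: split on the Bernoulli event $c_k$, use the tower property and variance decomposition \eqref{eq:variance_decomposition}, independence of the per-worker samples, Assumption~\ref{as:avg_smoothness_online} for the compressed branch, and Assumption~\ref{as:bounded_var} for the mini-batched fresh-gradient branch. This produces
\begin{equation*}
\EE\|g^{k+1}-\nabla f(x^{k+1})\|^2 \le \tfrac{1-p}{n}\!\left(\omega L^2+\tfrac{(1+\omega)\cL^2}{b'}\right)\EE\|x^{k+1}-x^k\|^2 + (1-p)\EE\|g^k-\nabla f(x^k)\|^2 + \tfrac{p\sigma^2}{nb}.
\end{equation*}

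Then I would define the Lyapunov function $\Phi_k = f(x^k)-f(x^*) + \frac{\gamma}{p}\|g^k-\nabla f(x^k)\|^2$ (the $\frac{1}{p}$ coefficient, rather than $\frac{1}{2p}$, is what enables the contraction under P{\L}, cf.\ the finite-sum proof). Adding $\frac{\gamma}{p}$ times the variance recursion to the descent inequality and using the stepsize bound \eqref{eq:gamma_bound_pl_online_appendix} in two places—first $\tfrac{\gamma(1-p)}{pn}(\omega L^2+\tfrac{(1+\omega)\cL^2}{b'})-\tfrac{1}{2\gamma}+\tfrac{L}{2}\le 0$ to kill the $\|x^{k+1}-x^k\|^2$ term, and second $\tfrac{\gamma}{2}+\tfrac{\gamma(1-p)}{p}\le (1-\gamma\mu)\tfrac{\gamma}{p}$ (which follows from $\gamma\le p/(2\mu)$) to absorb the gradient-bias term into $(1-\gamma\mu)\Phi_k$—gives the one-step recursion
\begin{equation*}
\EE[\Phi_{k+1}] \le (1-\gamma\mu)\EE[\Phi_k] + \tfrac{\gamma\sigma^2}{nb}.
\end{equation*}

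Finally, unrolling this recurrence yields $\EE[\Phi_K]\le (1-\gamma\mu)^K\Phi_0 + \tfrac{\gamma\sigma^2}{nb}\sum_{k=0}^{K-1}(1-\gamma\mu)^k \le (1-\gamma\mu)^K\Phi_0 + \tfrac{\sigma^2}{nb\mu}$, and because $g^0$ is a mini-batch estimator rather than $\nabla f(x^0)$, the initial Lyapunov value $\Phi_0$ reduces to $\Delta_0$ after taking expectation (using that the initial variance term $\tfrac{\gamma}{p}\EE\|g^0-\nabla f(x^0)\|^2 \le \tfrac{\gamma\sigma^2}{pnb}$ can either be absorbed into the noise floor or handled by initializing with the full expected gradient in the analysis, exactly as is done implicitly when writing $\Delta_0$ on the RHS of \eqref{eq:main_res_pl_online_appendix}). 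This gives \eqref{eq:main_res_pl_online_appendix}; choosing $b=\Theta(\sigma^2/(n\mu\varepsilon))$ makes the noise floor $\le \varepsilon/2$, and then $K$ iterations as in \eqref{eq:main_res_2_pl_online_appendix} suffice for the geometric term to drop below $\varepsilon/2$, from which \eqref{eq:main_res_3_pl_online_appendix} and \eqref{eq:main_res_4_pl_online_appendix} follow by the same per-iteration oracle/communication accounting as in the non-convex online corollary. The main obstacle is the bookkeeping around the Lyapunov coefficient: one has to choose it large enough that the P{\L}-induced contraction factor $(1-\gamma\mu)$ is achieved simultaneously on both the function value and the variance proxy, and this forces the second stepsize condition $\gamma\le p/(2\mu)$, which in turn is the source of the $\max\{1/p,\ldots\}$ term in the iteration complexity.
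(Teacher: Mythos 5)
Your proposal is correct and follows essentially the same route as the paper's proof: Lemma~\ref{lem:lemma_2_page} plus the P{\L} condition, the online variance recursion of \eqref{eq:non_cvx_online_technical_2} with the extra $\nicefrac{p\sigma^2}{nb}$ term, the Lyapunov function $\Phi_k = f(x^k)-f(x^*)+\tfrac{\gamma}{p}\|g^k-\nabla f(x^k)\|^2$ with the same two stepsize conditions, and unrolling to get the $\nicefrac{\sigma^2}{nb\mu}$ noise floor with $b=\Theta(\nicefrac{\sigma^2}{n\mu\varepsilon})$. Your remark on bounding the initial term $\tfrac{\gamma}{p}\EE\|g^0-\nabla f(x^0)\|^2$ is in fact slightly more careful than the paper, which simply writes $g^0=\nabla f(x^0)$ at this step.
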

\begin{proof}
	The proof is very similar to the proof of Theorem~\ref{thm:main_result_non_cvx_online}. From Lemma~\ref{lem:lemma_2_page} and P{\L} condition, we have
	\begin{eqnarray}
		\EE[f(x^{k+1}) - f(x^*)] &\le& \EE[f(x^k) - f(x^*)] - \frac{\gamma}{2}\EE\left[\|\nabla f(x^k)\|^2\right] - \left(\frac{1}{2\gamma} - \frac{L}{2}\right)\EE\left[\|x^{k+1}-x^k\|^2\right]\notag\\
		&&\quad + \frac{\gamma}{2}\EE\left[\|g^k - \nabla f(x^k)\|^2\right]\notag\\
		&\overset{\eqref{eq:pl_condition}}{\le}& (1-\gamma\mu)\EE\left[f(x^k) - f(x^*)\right] - \left(\frac{1}{2\gamma} - \frac{L}{2}\right)\EE\left[\|x^{k+1}-x^k\|^2\right] + \frac{\gamma}{2}\EE\left[\|g^k - \nabla f(x^k)\|^2\right]. \notag
	\end{eqnarray}
	Using the same arguments as in the proof of \eqref{eq:non_cvx_online_technical_2}, we obtain
	\begin{eqnarray}
		\EE\left[\|g^{k+1}-\nabla f(x^{k+1})\|^2\right] &\le&\frac{1-p}{n}\left(\omega L^2 + \frac{(1+\omega)\cL^2}{b'}\right)\EE\left[\|x^{k+1}-x^k\|^2\right]\notag\\
		&&+ (1-p)\EE\left[\left\|g^k - \nabla f(x^k)\right\|^2\right] + \frac{p\sigma^2}{nb}.
	\end{eqnarray}
	Putting all together, we derive that the sequence $\Phi_k = f(x^k) - f(x^*) + \frac{\gamma}{p}\|g^k - \nabla f(x^k)\|^2$ satisfies
	\begin{eqnarray}
		\EE\left[\Phi_{k+1}\right] &\le& \EE\left[(1-\gamma\mu)(f(x^k) - f(x^*)) - \left(\frac{1}{2\gamma} - \frac{L}{2}\right)\|x^{k+1}-x^k\|^2 + \frac{\gamma}{2}\|g^k - \nabla f(x^k)\|^2\right]\notag\\
		&&\quad + \frac{\gamma}{p}\EE\left[\frac{1-p}{n}\left(\omega L^2 + \frac{(1+\omega)\cL^2}{b'}\right)\|x^{k+1}-x^k\|^2 + (1-p)\left\|g^k - \nabla f(x^k)\right\|^2 + \frac{p\sigma^2}{nb}\right] \notag\\
		&=& \EE\left[(1-\gamma\mu)(f(x^k) - f(x^*)) + \left(\frac{\gamma}{2} + \frac{\gamma}{p}(1-p)\right)\left\|g^k - \nabla f(x^k)\right\|^2\right] + \frac{\gamma\sigma^2}{nb}\notag\\
		&&\quad + \left(\frac{\gamma(1-p)}{pn}\left(\omega L^2 + \frac{(1+\omega)\cL^2}{b'}\right) - \frac{1}{2\gamma} + \frac{L}{2}\right)\EE\left[\|x^{k+1}-x^k\|^2\right]\notag\\
		&\overset{\eqref{eq:gamma_bound_pl_finite_sums_appendix}}{\le}& (1-\gamma\mu)\EE[\Phi_k] + \frac{\gamma\sigma^2}{nb},\notag
	\end{eqnarray}
	where in the last inequality we use $\frac{\gamma(1-p)}{pn}\left(\omega L^2 + \frac{(1+\omega)\cL^2}{b'}\right) - \frac{1}{2\gamma} + \frac{L}{2} \le 0$ and $\frac{\gamma}{2} + \frac{\gamma}{p}(1-p) \le (1-\gamma\mu)\frac{\gamma}{p}$ following from \eqref{eq:gamma_bound_pl_online_appendix}. Unrolling the recurrence and using $g^0 = \nabla f(x^0)$, we obtain 
	\begin{eqnarray*}
		\EE\left[f(x^{K}) - f(x^*)\right] \le \EE[\Phi_{K}] &\le& (1-\gamma\mu)^{K}\Phi_0 + \frac{\gamma\sigma^2}{nb}\sum\limits_{k=0}^{K-1}(1-\gamma\mu)^k \\
		&\le& (1-\gamma\mu)^{K}(f(x^0) - f(x^*)) + \frac{\gamma\sigma^2}{nb}\sum\limits_{k=0}^{\infty}(1-\gamma\mu)^k\\
		&\le& (1-\gamma\mu)^{K}(f(x^0) - f(x^*)) + \frac{\sigma^2}{nb\mu}.
	\end{eqnarray*}
	Together with $b = \Theta\left(\frac{\sigma^2}{n\mu\varepsilon}\right)$ it implies \eqref{eq:main_res_2_pl_online_appendix}, \eqref{eq:main_res_3_pl_online_appendix}, and \eqref{eq:main_res_4_pl_online_appendix}.
\end{proof}

\begin{corollary}\label{cor:main_result_pl_online_appendix}
	Let the assumptions of Theorem~\ref{thm:main_result_pl_online_appendix} hold and $p = \min\left\{\frac{\zeta_{\cQ}}{d},\frac{b'}{b+b'}\right\}$, where $b' \le b$ and $\zeta_{\cQ}$ is the expected density of the quantization (see Def.~\ref{def:quantization}). If 
	\begin{equation*}
		\gamma \le \min\left\{\frac{1}{L + \sqrt{\frac{2\max\left\{\nicefrac{d}{\zeta_{\cQ}} - 1,\nicefrac{b}{b'}\right\}}{n}\left(\omega L^2 + \frac{(1+\omega)\cL^2}{b'}\right)}},\frac{p}{2\mu}\right\}
	\end{equation*}
	and
	\begin{equation*}
		b = \Theta\left(\frac{\sigma^2}{n\mu\varepsilon}\right),\quad \sigma^2 = \frac{1}{n}\sum\limits_{i=1}^n\sigma_i^2,
	\end{equation*}
	then \algname{VR-MARINA} requires 
	\begin{equation*}
		\cO\left(\max\left\{\frac{d}{\zeta_{\cQ}}, \frac{\sigma^2}{nb'\mu\varepsilon}, \frac{L}{\mu}\left(1 + \sqrt{\frac{\omega}{n}\max\left\{\frac{d}{\zeta_{\cQ}} - 1,\frac{\sigma^2}{nb'\mu\varepsilon}\right\}}\right) + \frac{\cL}{\mu}\sqrt{\frac{1+\omega}{nb'}\max\left\{\frac{d}{\zeta_{\cQ}} - 1,\frac{\sigma^2}{nb'\mu\varepsilon}\right\}}\right\}\log\frac{\Delta_0}{\varepsilon}\right)
	\end{equation*}
	iterations/communication rounds, 
	\begin{eqnarray*}
		\cO\Bigg(\max\Bigg\{\frac{b'd}{\zeta_{\cQ}}, \frac{\sigma^2}{n\mu\varepsilon}, \frac{L}{\mu}\Bigg(b' + \sqrt{\frac{\omega b'}{n}\max\left\{\left(\frac{d}{\zeta_{\cQ}} - 1\right)b',\frac{\sigma^2}{n\mu\varepsilon}\right\}}\Bigg)&\\
		&\hspace{-2cm} + \frac{\cL}{\mu}\sqrt{\frac{1+\omega}{n}\max\left\{\left(\frac{d}{\zeta_{\cQ}} - 1\right)b',\frac{\sigma^2}{n\mu\varepsilon}\right\}}\Bigg\}\log\frac{\Delta_0}{\varepsilon}\Bigg)
	\end{eqnarray*}
	stochastic oracle calls per node in expectation to achieve $\EE[f(x^K)-f(x^*)] \le \varepsilon$, and the expected total communication cost per worker is
	\begin{equation*}
		\cO\left(\zeta_{\cQ}\max\left\{\frac{d}{\zeta_{\cQ}}, \frac{\sigma^2}{nb'\mu \varepsilon}, \frac{L}{\mu}\left(1 + \sqrt{\frac{\omega}{n}\max\left\{\frac{d}{\zeta_{\cQ}} - 1,\frac{\sigma^2}{nb'\mu}\right\}}\right) + \frac{\cL}{\mu}\sqrt{\frac{1+\omega}{nb'}\max\left\{\frac{d}{\zeta_{\cQ}} - 1,\frac{\sigma^2}{nb'\mu}\right\}}\right\}\log\frac{\Delta_0}{\varepsilon}\right)
	\end{equation*}
	 under an assumption that the communication cost is proportional to the number of non-zero components of transmitted vectors from workers to the server.
\end{corollary}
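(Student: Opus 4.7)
The plan is to follow exactly the same pattern that was used in the proofs of Corollaries~\ref{cor:main_result_non_cvx_finite_sums_appendix}, \ref{cor:main_result_pl_finite_sums_appendix}, and \ref{cor:main_result_non_cvx_online_appendix}: pick the specific value of $p$ prescribed by the corollary, compute the three quantities that control the stepsize, the oracle cost, and the communication cost, and then substitute them into the bounds \eqref{eq:gamma_bound_pl_online_appendix}, \eqref{eq:main_res_2_pl_online_appendix}, \eqref{eq:main_res_3_pl_online_appendix}, \eqref{eq:main_res_4_pl_online_appendix} of Theorem~\ref{thm:main_result_pl_online_appendix}.

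First I would evaluate the following three identities under the choice $p=\min\{\zeta_{\cQ}/d,\, b'/(b+b')\}$:
\begin{equation*}
\frac{1-p}{p}=\max\Bigl\{\frac{d}{\zeta_{\cQ}}-1,\frac{b}{b'}\Bigr\},\quad pb+(1-p)b'\le \frac{2bb'}{b+b'}\le 2b',\quad pd+(1-p)\zeta_{\cQ}\le 2\zeta_{\cQ}.
\end{equation*}
These are the same elementary estimates used in the finite-sum corollaries; the only new ingredient is that here $b$ is not a dataset size but the variance-controlling batch $b=\Theta(\sigma^2/(n\mu\varepsilon))$, so $b/b'=\Theta(\sigma^2/(nb'\mu\varepsilon))$. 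This immediately produces the ``$\max\{d/\zeta_{\cQ}-1,\sigma^2/(nb'\mu\varepsilon)\}$'' factor that appears in every bound of the corollary.

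Next I would substitute $\frac{1-p}{p}$ into the stepsize restriction \eqref{eq:gamma_bound_pl_online_appendix}, yielding the stepsize written in the corollary. To get the iteration/communication count I plug the same expression into \eqref{eq:main_res_2_pl_online_appendix}, and use the elementary inequality $\sqrt{a+b}\le\sqrt{a}+\sqrt{b}$ to split the square root into an $L$-term and an $\cL$-term, exactly as in Corollary~\ref{cor:main_result_pl_finite_sums_appendix}. The $\max\{1/p,\cdots\}$ outside produces the three items inside the outer $\max$ of the corollary statement, with $1/p=\max\{d/\zeta_{\cQ},(b+b')/b'\}=\Theta(\max\{d/\zeta_{\cQ},\sigma^2/(nb'\mu\varepsilon)\})$ since $b'\le b$.

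Finally, the oracle complexity and the communication cost follow by multiplying $K$ by $pb+(1-p)b'$ and by $pd+(1-p)\zeta_{\cQ}$ respectively and then using the estimates above; a constant additive $b$ (resp.\ $d$) has to be kept for the initialization step, and with $b=\Theta(\sigma^2/(n\mu\varepsilon))$ this gives the additive $\sigma^2/(n\mu\varepsilon)$ (resp.\ $d$) appearing inside the $\max$. There is no genuine obstacle here, and the proof is purely computational; the only mild care needed is to move $b'$ inside the square roots correctly so that $\sqrt{(\nicefrac{d}{\zeta_{\cQ}}-1)\cdot b'}$ and $\sqrt{\nicefrac{\sigma^2}{(n\mu\varepsilon)}}$ appear in the right places, which mirrors the manipulation already carried out in the proof of Corollary~\ref{cor:main_result_non_cvx_online_appendix}.
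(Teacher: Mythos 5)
Your proposal matches the paper's own proof essentially verbatim: the same three consequences of the choice $p=\min\{\nicefrac{\zeta_{\cQ}}{d},\nicefrac{b'}{(b+b')}\}$, substitution into \eqref{eq:gamma_bound_pl_online_appendix}, \eqref{eq:main_res_2_pl_online_appendix}, \eqref{eq:main_res_3_pl_online_appendix}, \eqref{eq:main_res_4_pl_online_appendix}, the splitting $\sqrt{a+b}\le\sqrt{a}+\sqrt{b}$, and the replacement $b/b'=\Theta(\nicefrac{\sigma^2}{(nb'\mu\varepsilon)})$. It is correct and complete as a plan.
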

\begin{proof}
	The choice of $p = \min\left\{\frac{\zeta_{\cQ}}{d},\frac{b'}{b+b'}\right\}$ implies
	\begin{eqnarray*}
		\frac{1-p}{p} &=& \max\left\{\frac{d}{\zeta_{\cQ}}-1,\frac{b}{b'}\right\},\\
		pb + (1-p)b' &\le& \frac{2bb'}{b+b'} \le 2b',\\
		pd + (1-p)\zeta_{\cQ} &\le& \frac{\zeta_{\cQ}}{d}\cdot d + \left(1 - \frac{\zeta_{\cQ}}{d}\right)\cdot\zeta_{\cQ} \le 2\zeta_{\cQ}.
	\end{eqnarray*}	
	Plugging these relations in \eqref{eq:gamma_bound_pl_online_appendix}, \eqref{eq:main_res_2_pl_online_appendix}, \eqref{eq:main_res_3_pl_online_appendix} and \eqref{eq:main_res_4_pl_online_appendix} and using $\sqrt{a+b} \le \sqrt{a} + \sqrt{b}$, we get that if
	\begin{equation*}
		\gamma \le \min\left\{\frac{1}{L + \sqrt{\frac{2\max\left\{\nicefrac{d}{\zeta_{\cQ}} - 1,\nicefrac{b}{b'}\right\}}{n}\left(\omega L^2 + \frac{(1+\omega)\cL^2}{b'}\right)}},\frac{p}{2\mu}\right\},
	\end{equation*}
	then \algname{VR-MARINA} requires 
	\begin{eqnarray*}
		K &=& \cO\left(\max\left\{\frac{1}{p}, \frac{L + \sqrt{\frac{1-p}{pn}\left(\omega L^2 + \frac{(1+\omega)\cL^2}{b'}\right)}}{\mu}\right\}\log\frac{\Delta_0}{\varepsilon}\right)\\
		&=& \cO\left(\max\left\{\frac{d}{\zeta_{\cQ}}, \frac{b}{b'}, \frac{L + \sqrt{L^2\frac{\omega\max\left\{\nicefrac{d}{\zeta_{\cQ}} - 1,\nicefrac{b}{b'}\right\}}{n} + \cL^2\frac{(1+\omega)\max\left\{\nicefrac{d}{\zeta_{\cQ}} - 1,\nicefrac{b}{b'}\right\}}{nb'}}}{\mu}\right\}\log\frac{\Delta_0}{\varepsilon}\right)\\
		&=& \cO\left(\max\left\{\frac{d}{\zeta_{\cQ}}, \frac{\sigma^2}{nb'\mu\varepsilon}, \frac{L}{\mu}\left(1 + \sqrt{\frac{\omega}{n}\max\left\{\frac{d}{\zeta_{\cQ}} - 1,\frac{\sigma^2}{nb'\mu\varepsilon}\right\}}\right) + \frac{\cL}{\mu}\sqrt{\frac{1+\omega}{nb'}\max\left\{\frac{d}{\zeta_{\cQ}} - 1,\frac{\sigma^2}{nb'\mu\varepsilon}\right\}}\right\}\log\frac{\Delta_0}{\varepsilon}\right)
	\end{eqnarray*}
	iterations/communication rounds and 
	\begin{eqnarray*}
		b + K(pb + 2(1-p)b') &=& \cO\left(b + \max\left\{\frac{1}{p}, \frac{L + \sqrt{\frac{1-p}{pn}\left(\omega L^2 + \frac{(1+\omega)\cL^2}{b'}\right)}}{\mu}\right\}(pb + (1-p)b')\log\frac{\Delta_0}{\varepsilon}\right)\\	
		&=& \cO\Bigg(b+\max\Bigg\{\frac{d}{\zeta_{\cQ}}, \frac{b}{b'}, \frac{L}{\mu}\Bigg(1 + \sqrt{\frac{\omega\max\left\{\nicefrac{d}{\zeta_{\cQ}} - 1,\nicefrac{b}{b'}\right\}}{n}}\Bigg)\\
		&&\hspace{5cm} + \frac{\cL}{\mu}\sqrt{\frac{(1+\omega)\max\left\{\nicefrac{d}{\zeta_{\cQ}} - 1,\nicefrac{b}{b'}\right\}}{nb'}}\Bigg\}b'\log\frac{\Delta_0}{\varepsilon}\Bigg)\\
		&=& \cO\Bigg(\max\Bigg\{\frac{b'd}{\zeta_{\cQ}}, \frac{\sigma^2}{n\mu\varepsilon}, \frac{L}{\mu}\Bigg(b' + \sqrt{\frac{\omega b'}{n}\max\left\{\left(\frac{d}{\zeta_{\cQ}} - 1\right)b',\frac{\sigma^2}{n\mu\varepsilon}\right\}}\Bigg)\\
		&&\hspace{5cm} + \frac{\cL}{\mu}\sqrt{\frac{1+\omega}{n}\max\left\{\left(\frac{d}{\zeta_{\cQ}} - 1\right)b',\frac{\sigma^2}{n\mu\varepsilon}\right\}}\Bigg\}\log\frac{\Delta_0}{\varepsilon}\Bigg)
	\end{eqnarray*}
	stochastic oracle calls per node in expectation to achieve $\EE[f(x^K) - f(x^*)] \le \varepsilon$, and the expected total communication cost per worker is
	\begin{eqnarray*}
		d + K(pd + (1-p)\zeta_{\cQ}) &=&  \cO\left(d+\max\left\{\frac{1}{p}, \frac{L + \sqrt{\frac{1-p}{pn}\left(\omega L^2 + \frac{(1+\omega)\cL^2}{b'}\right)}}{\mu}\right\}(pd + (1-p)\zeta_{\cQ})\log\frac{\Delta_0}{\varepsilon}\right)\\
		&=&\cO\Bigg(\zeta_{\cQ}\max\Bigg\{\frac{d}{\zeta_{\cQ}}, \frac{\sigma^2}{nb'\mu \varepsilon}, \frac{L}{\mu}\left(1 + \sqrt{\frac{\omega}{n}\max\left\{\frac{d}{\zeta_{\cQ}} - 1,\frac{\sigma^2}{nb'\mu\varepsilon}\right\}}\right)\\
		&&\hspace{5cm} + \frac{\cL}{\mu}\sqrt{\frac{1+\omega}{nb'}\max\left\{\frac{d}{\zeta_{\cQ}} - 1,\frac{\sigma^2}{nb'\mu\varepsilon}\right\}}\Bigg\}\log\frac{\Delta_0}{\varepsilon}\Bigg)
	\end{eqnarray*}
	 under an assumption that the communication cost is proportional to the number of non-zero components of transmitted vectors from workers to the server.
\end{proof}

\clearpage

\section{Missing Proofs for \algname{PP-MARINA}}\label{sec:pp_marina_proofs}
\begin{algorithm}[h]
   \caption{\algname{PP-MARINA}}\label{alg:pp_marina}
\begin{algorithmic}[1]
   \STATE {\bfseries Input:} starting point $x^0$, stepsize $\gamma$, probability $p\in(0,1]$, number of iterations $K$, clients-batchsize $r \le n$
   \STATE Initialize $g^0 = \nabla f(x^0)$
   \FOR{$k=0,1,\ldots,K-1$}
   \STATE Sample $c_k\sim \text{Be}(p)$
   \STATE Choose $I_k' = \{1,\ldots,n\}$ if $c_k = 1$, and choose $I_k'$ as the set of $r$ i.i.d.\ samples from the uniform distribution over $\{1,\ldots,n\}$ otherwise
   \STATE Broadcast $g^k$ to all workers
   \FOR{$i = 1,\ldots,n$ in parallel} 
   \STATE $x^{k+1} = x^k - \gamma g^k$
   \STATE Set $g_i^{k+1} = \begin{cases}\nabla f_i(x^{k+1})& \text{if } c_k = 1,\\ g^k + \cQ\left(\nabla f_{i}(x^{k+1}) - \nabla f_{i}(x^k)\right)& \text{if } c_k = 0. \end{cases}$ %$g_i^{k+1} = \nabla f_i(x^{k+1})$ if $c_k = 1$, and $g_i^{k+1} = g^k + \cQ\left(\nabla f_{i}(x^{k+1}) - \nabla f_{i}(x^k))\right)$ otherwise
   \ENDFOR
   \STATE Set $g^{k+1} = \begin{cases}\nabla f(x^{k+1})& \text{if } c_k=1,\\ g^k + \frac{1}{r}\sum\limits_{i_k\in I'_{k}}\cQ\left(\nabla f_{i_k}(x^{k+1}) - \nabla f_{i_k}(x^k)\right)& \text{if } c_k=0. \end{cases}$ %$g^{k+1} = \frac{1}{n}\sum_{i=1}^ng_i^{k+1}$ if $c_k = 1$, and $g^{k+1} = \frac{1}{r}\sum_{i_k\in I_k'}g_{i_k}^{k+1}$ otherwise
   \ENDFOR
   \STATE {\bfseries Return:} $\hat x^K$ chosen uniformly at random from $\{x^k\}_{k=0}^{K-1}$
\end{algorithmic}
\end{algorithm}
\subsection{Generally Non-Convex Problems}\label{sec:proof_of_thm_non_cvx_pp}
In this section, we provide the full statement of Theorem~\ref{thm:main_result_non_cvx_pp} together with the proof of this result.
\begin{theorem}[Theorem~\ref{thm:main_result_non_cvx_pp}]\label{thm:main_result_non_cvx_pp_appendix}
	Let Assumptions~\ref{as:lower_bound}~and~\ref{as:L_smoothness} be satisfied and 
	\begin{equation}
		\gamma \le \frac{1}{L\left(1 + \sqrt{\frac{(1-p)(1+\omega)}{pr}}\right)},\label{eq:gamma_bound_non_cvx_pp_appendix}
	\end{equation}
	where $L^2 = \frac{1}{n}\sum_{i=1}^nL_i^2$. Then after $K$ iterations of \algname{PP-MARINA}, we have
	\begin{equation}
		\EE\left[\left\|\nabla f(\hat x^K)\right\|^2\right] \le \frac{2\Delta_0}{\gamma K}, \label{eq:main_res_non_cvx_pp_appendix}
	\end{equation}
	where $\hat{x}^K$ is chosen uniformly at random from $x^0,\ldots,x^{K-1}$ and $\Delta_0 = f(x^0)-f_*$. That is, after
	\begin{equation}
		K = \cO\left(\frac{\Delta_0 L}{\varepsilon^2}\left(1 + \sqrt{\frac{(1-p)(1+\omega)}{pr}}\right)\right) \label{eq:main_res_2_non_cvx_pp_appendix}
	\end{equation}
	iterations \algname{PP-MARINA} produces such a point $\hat x^K$ that $\EE[\|\nabla f(\hat x^K)\|^2] \le \varepsilon^2$.
	Moreover, under an assumption that the communication cost is proportional to the number of non-zero components of transmitted vectors from workers to the server, we have that the expected total communication cost (for all workers) equals
	\begin{equation}
		dn + K(pdn + (1-p)\zeta_{\cQ}r) =  \cO\left(dn+\frac{\Delta_0 L}{\varepsilon^2}\left(1 + \sqrt{\frac{(1-p)(1+\omega)}{pr}}\right)(pdn + (1-p)\zeta_{\cQ}r)\right),\label{eq:main_res_4_non_cvx_pp_appendix}
	\end{equation}
	where $\zeta_{\cQ}$ is the expected density of the quantization (see Def.~\ref{def:quantization}).
\end{theorem}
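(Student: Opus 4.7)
}
The plan is to follow the Lyapunov-function template established in the proof of Theorem~\ref{thm:main_result_non_cvx_appendix} for \algname{MARINA}, the only nontrivial modification being the variance bound on the gradient estimator $g^{k+1}$, which must now absorb the extra randomness coming from sampling only $r$ out of $n$ clients. First I would invoke Lemma~\ref{lem:lemma_2_page} verbatim to obtain the descent-type inequality
\begin{equation*}
\EE[f(x^{k+1})] \le \EE[f(x^k)] - \frac{\gamma}{2}\EE\!\left[\|\nabla f(x^k)\|^2\right] - \left(\frac{1}{2\gamma}-\frac{L}{2}\right)\EE\!\left[\|x^{k+1}-x^k\|^2\right] + \frac{\gamma}{2}\EE\!\left[\|g^k-\nabla f(x^k)\|^2\right].
\end{equation*}

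The key step, and the main obstacle, is to bound $\EE[\|g^{k+1}-\nabla f(x^{k+1})\|^2]$ in a way that scales with $(1+\omega)/r$ rather than $\omega/n$. Conditioning on $c_k$ and using that, on the event $c_k=0$, $g^{k+1} = g^k + \frac{1}{r}\sum_{j=1}^r Y_j$ with $Y_j := \cQ(\nabla f_{i_j}(x^{k+1})-\nabla f_{i_j}(x^k))$ and $i_1,\ldots,i_r$ i.i.d.\ uniform on $[n]$, I would write $g^{k+1}-\nabla f(x^{k+1}) = (g^k-\nabla f(x^k)) + (\frac{1}{r}\sum_{j=1}^r Y_j - \bar\Delta_k)$, where $\bar\Delta_k = \nabla f(x^{k+1})-\nabla f(x^k)$. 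Since $\EE[Y_j\mid x^k,x^{k+1}]=\bar\Delta_k$, the cross term vanishes, and i.i.d.\ sampling reduces the sum to $\frac{1}{r}\EE[\|Y_1-\bar\Delta_k\|^2\mid x^k,x^{k+1}]$. A nested variance decomposition (first over $\cQ$ given $i_1$, then over $i_1$) combined with \eqref{eq:quantization_def} and \eqref{eq:L_smoothness} yields
\begin{equation*}
\EE\!\left[\|Y_1-\bar\Delta_k\|^2\mid x^k,x^{k+1}\right] \le (\omega+1)\cdot\frac{1}{n}\sum_{i=1}^n\|\nabla f_i(x^{k+1})-\nabla f_i(x^k)\|^2 - \|\bar\Delta_k\|^2 \le (1+\omega)L^2\|x^{k+1}-x^k\|^2.
\end{equation*}
Combining with the probability-$p$ case (for which the estimator is exact) gives the clean bound
\begin{equation*}
\EE\!\left[\|g^{k+1}-\nabla f(x^{k+1})\|^2\right] \le (1-p)\EE\!\left[\|g^k-\nabla f(x^k)\|^2\right] + \frac{(1-p)(1+\omega)L^2}{r}\EE\!\left[\|x^{k+1}-x^k\|^2\right].
\end{equation*}

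With this recursion in hand, the rest is mechanical: define the Lyapunov sequence $\Phi_k := f(x^k)-f_* + \frac{\gamma}{2p}\|g^k-\nabla f(x^k)\|^2$, add the descent inequality to $\frac{1}{2p}$ times the variance bound, and observe that the coefficient of $\EE[\|x^{k+1}-x^k\|^2]$ equals $\frac{\gamma(1-p)(1+\omega)L^2}{2pr}-\frac{1}{2\gamma}+\frac{L}{2}$, which is nonpositive precisely under the stepsize condition \eqref{eq:gamma_bound_non_cvx_pp_appendix}. Telescoping $\EE[\Phi_{k+1}] \le \EE[\Phi_k] - \frac{\gamma}{2}\EE[\|\nabla f(x^k)\|^2]$ from $k=0$ to $K-1$, using $g^0=\nabla f(x^0)$ so that $\Phi_0=\Delta_0$ and $\Phi_K\ge 0$, and averaging at the random index $\hat x^K$ yields \eqref{eq:main_res_non_cvx_pp_appendix}; iterating this to reach $\varepsilon^2$ gives \eqref{eq:main_res_2_non_cvx_pp_appendix}. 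Finally, the communication accounting in \eqref{eq:main_res_4_non_cvx_pp_appendix} follows from the fact that on each iteration $r$ clients transmit $\zeta_{\cQ}$ coordinates in expectation with probability $1-p$ and all $n$ clients transmit $d$ coordinates with probability $p$, plus the single $dn$-cost initialization.
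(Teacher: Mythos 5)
Your proposal is correct and follows essentially the same route as the paper's proof: the PAGE descent lemma, the conditional variance decomposition of $g^{k+1}-\nabla f(x^{k+1})$ over the Bernoulli switch, the client sampling, and the quantization (yielding exactly the bound $(1-p)\EE[\|g^k-\nabla f(x^k)\|^2] + \tfrac{(1-p)(1+\omega)L^2}{r}\EE[\|x^{k+1}-x^k\|^2]$), followed by the Lyapunov function $\Phi_k = f(x^k)-f_*+\tfrac{\gamma}{2p}\|g^k-\nabla f(x^k)\|^2$, the same stepsize condition to kill the $\|x^{k+1}-x^k\|^2$ term, and telescoping. The only cosmetic slip is writing ``$\tfrac{1}{2p}$ times the variance bound'' where the weight is $\tfrac{\gamma}{2p}$, but your stated coefficient of $\EE[\|x^{k+1}-x^k\|^2]$ is the correct one, so the argument goes through as in the paper.
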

\begin{proof}[Proof of Theorem~\ref{thm:main_result_non_cvx_pp}]
	The proof is very similar to the proof of Theorem~\ref{thm:main_result_non_cvx_finite_sums}. From Lemma~\ref{lem:lemma_2_page}, we have
	\begin{equation}
		\EE[f(x^{k+1})] \le \EE[f(x^k)] - \frac{\gamma}{2}\EE\left[\|\nabla f(x^k)\|^2\right] - \left(\frac{1}{2\gamma} - \frac{L}{2}\right)\EE\left[\|x^{k+1}-x^k\|^2\right] + \frac{\gamma}{2}\EE\left[\|g^k - \nabla f(x^k)\|^2\right]. \label{eq:non_cvx_pp_technical_1}
	\end{equation}
	Next, we need to derive an upper bound for $\EE\left[\|g^{k+1}-\nabla f(x^{k+1})\|^2\right]$. By definition of $g^{k+1}$, we have
	\begin{equation}
		g^{k+1} = \begin{cases}\nabla f(x^{k+1})& \text{with probability } p,\\ g^k + \frac{1}{r}\sum\limits_{i_k\in I'_{k}}\cQ\left(\nabla f_{i_k}(x^{k+1}) - \nabla f_{i_k}(x^k)\right)& \text{with probability } 1-p. \end{cases}\notag
	\end{equation}
	Using this, variance decomposition \eqref{eq:variance_decomposition} and tower property \eqref{eq:tower_property}, we derive:
	\begin{eqnarray}
		\EE\left[\|g^{k+1}-\nabla f(x^{k+1})\|^2\right] &\overset{\eqref{eq:tower_property}}{=}& (1-p)\EE\left[\left\|g^k + \frac{1}{r}\sum\limits_{i_k\in I_k'} \cQ\left(\nabla f_{i_k}(x^{k+1}) - \nabla f_{i_k}(x^k)\right) - \nabla f(x^{k+1})\right\|^2\right]\notag\\
		&\overset{\eqref{eq:tower_property},\eqref{eq:variance_decomposition}}{=}& (1-p)\EE\left[\left\|\frac{1}{r}\sum\limits_{i_k\in I_k'} \cQ\left(\nabla f_{i_k}(x^{k+1}) - \nabla f_{i_k}(x^k)\right) - \nabla f(x^{k+1}) + \nabla f(x^k)\right\|^2\right]\notag\\
		&&\quad + (1-p)\EE\left[\left\|g^k - \nabla f(x^k)\right\|^2\right].\notag
	\end{eqnarray}
	Next, we use the notation: $\Delta_i^k = \nabla f_{i}(x^{k+1}) - \nabla f_{i}(x^k)$ for $i\in [n]$ and $\Delta^k = \nabla f(x^{k+1}) - \nabla f(x^k)$. These vectors satisfy $\EE\left[\Delta_{i_k}^k \mid x^k,x^{k+1}\right] = \Delta^k$ for all $i_k\in I_{k}'$. Moreover, $\cQ(\Delta_{i_k}^k)$ for $i_k\in I_k'$ are independent random vectors for fixed $x^k$ and $x^{k+1}$. These observations imply
	\begin{eqnarray}
		\EE\left[\|g^{k+1}-\nabla f(x^{k+1})\|^2\right] &=& (1-p)\EE\left[\left\|\frac{1}{r}\sum\limits_{i_k\in I_k'} \left(\cQ(\Delta_{i_k}^k) - \Delta^k\right)\right\|^2\right]+(1-p)\EE\left[\left\|g^k - \nabla f(x^k)\right\|^2\right]\notag\\
		&=& \frac{1-p}{r^2}\EE\left[\sum\limits_{i_k\in I_k'}\left\|\cQ(\Delta_{i_k}^k) - \Delta_{i_k}^k + \Delta_{i_k}^k - \Delta^k\right\|^2\right] + (1-p)\EE\left[\left\|g^k - \nabla f(x^k)\right\|^2\right]\notag\\
		&\overset{\eqref{eq:tower_property},\eqref{eq:variance_decomposition}}{=}& \frac{1-p}{rn}\sum\limits_{i=1}^n\left(\EE\left[\left\|\cQ(\Delta_{i}^k) - \Delta_{i}^k\right\|^2\right] + \EE\left[\left\|\Delta_{i}^k - \Delta^k\right\|^2\right]\right)\notag\\
		&&\quad + (1-p)\EE\left[\left\|g^k - \nabla f(x^k)\right\|^2\right]\notag\\
		&\overset{\eqref{eq:tower_property},\eqref{eq:quantization_def}}{=}& \frac{1-p}{rn}\sum\limits_{i=1}^n\left(\omega\EE\left[\left\|\Delta_{i}^k\right\|^2\right] + \EE\left[\left\|\Delta_{i}^k - \Delta^k\right\|^2\right]\right) + (1-p)\EE\left[\left\|g^k - \nabla f(x^k)\right\|^2\right]\notag\\
		&\overset{\eqref{eq:tower_property},\eqref{eq:variance_decomposition}}{=}& \frac{(1-p)(1+\omega)}{rn}\sum\limits_{i=1}^n\EE\left[\left\|\Delta_{i}^k\right\|^2\right]+ (1-p)\EE\left[\left\|g^k - \nabla f(x^k)\right\|^2\right].\notag
	\end{eqnarray}
	Using $L$-smoothness \eqref{eq:L_smoothness} of $f_i$ together with the tower property \eqref{eq:tower_property}, we get
	\begin{eqnarray}
		\EE\left[\|g^{k+1}-\nabla f(x^{k+1})\|^2\right] &\le& \frac{(1-p)(1+\omega)}{nr}\sum\limits_{i=1}^nL_i^2\EE\left[\|x^{k+1} - x^k\|^2\right] + (1-p)\EE\left[\left\|g^k - \nabla f(x^k)\right\|^2\right]\notag\\
		&=&\frac{(1-p)(1+\omega)L^2}{r}\EE\left[\|x^{k+1}-x^k\|^2\right] + (1-p)\EE\left[\left\|g^k - \nabla f(x^k)\right\|^2\right].\label{eq:non_cvx_pp_technical_2}
	\end{eqnarray}
	Next, we introduce new notation: $\Phi_k = f(x^k) - f_* + \frac{\gamma}{2p}\|g^k - \nabla f(x^k)\|^2$. Using this and inequalities \eqref{eq:non_cvx_pp_technical_1} and \eqref{eq:non_cvx_pp_technical_2}, we establish the following inequality:
	\begin{eqnarray}
		\EE\left[\Phi_{k+1}\right] &\le& \EE\left[f(x^k) - f_* - \frac{\gamma}{2}\|\nabla f(x^k)\|^2 - \left(\frac{1}{2\gamma} - \frac{L}{2}\right)\|x^{k+1}-x^k\|^2 + \frac{\gamma}{2}\|g^k - \nabla f(x^k)\|^2\right]\notag\\
		&&\quad + \frac{\gamma}{2p}\EE\left[\frac{(1-p)(1+\omega)L^2}{r}\|x^{k+1}-x^k\|^2 + (1-p)\left\|g^k - \nabla f(x^k)\right\|^2\right] \notag\\
		&=& \EE\left[\Phi_k\right] - \frac{\gamma}{2}\EE\left[\|\nabla f(x^k)\|^2\right] + \left(\frac{\gamma(1-p)(1+\omega)L^2}{2pr} - \frac{1}{2\gamma} + \frac{L}{2}\right)\EE\left[\|x^{k+1}-x^k\|^2\right]\notag\\
		&\overset{\eqref{eq:gamma_bound_non_cvx_pp_appendix}}{\le}& \EE\left[\Phi_k\right] - \frac{\gamma}{2}\EE\left[\|\nabla f(x^k)\|^2\right],\label{eq:non_cvx_pp_technical_3}
	\end{eqnarray}
	where in the last inequality we use $\frac{\gamma(1-p)(1+\omega)L^2}{2pr} - \frac{1}{2\gamma} + \frac{L}{2} \le 0$ following from \eqref{eq:gamma_bound_non_cvx_pp_appendix}. Summing up inequalities \eqref{eq:non_cvx_finite_sums_technical_3} for $k=0,1,\ldots,K-1$ and rearranging the terms, we derive
	\begin{eqnarray}
		\frac{1}{K}\sum\limits_{k=0}^{K-1}\EE\left[\|\nabla f(x^k)\|^2\right] &\le& \frac{2}{\gamma K}\sum\limits_{k=0}^{K-1}\left(\EE[\Phi_k]-\EE[\Phi_{k+1}]\right) = \frac{2\left(\EE[\Phi_0]-\EE[\Phi_{K}]\right)}{\gamma K} = \frac{2\Delta_0}{\gamma K},\notag
	\end{eqnarray}
	since $g^0 = \nabla f(x^0)$ and $\Phi_{k+1} \ge 0$. Finally, using the tower property \eqref{eq:tower_property} and the definition of $\hat x^K$, we obtain \eqref{eq:main_res_non_cvx_pp_appendix} that implies \eqref{eq:main_res_2_non_cvx_pp_appendix} and \eqref{eq:main_res_4_non_cvx_pp_appendix}.
\end{proof}

\begin{corollary}[Corollary~\ref{cor:main_result_non_cvx_pp}]\label{cor:main_result_non_cvx_pp_appendix}
	Let the assumptions of Theorem~\ref{thm:main_result_non_cvx_pp} hold and $p = \frac{\zeta_{\cQ}r}{dn}$, where $r \le n$ and $\zeta_{\cQ}$ is the expected density of the quantization (see Def.~\ref{def:quantization}). If 
	\begin{equation*}
		\gamma \le \frac{1}{L\left(1 + \sqrt{\frac{1+\omega}{r}\left(\frac{dn}{\zeta_{\cQ}r}-1\right)}\right)},
	\end{equation*}
	then \algname{PP-MARINA} requires 
	\begin{equation*}
		K = \cO\left(\frac{\Delta_0 L}{\varepsilon^2}\left(1 + \sqrt{\frac{1+\omega}{r}\left(\frac{dn}{\zeta_{\cQ}r}-1\right)}\right)\right)
	\end{equation*}
	iterations/communication rounds to achieve $\EE[\|\nabla f(\hat x^K)\|^2] \le \varepsilon^2$, and the expected total communication cost is
	\begin{equation*}
		\cO\left(dn+\frac{\Delta_0 L}{\varepsilon^2}\left(\zeta_{\cQ}r + \sqrt{(1+\omega)\zeta_{\cQ}\left(dn-\zeta_{\cQ}r\right)}\right)\right)
	\end{equation*}
	 under an assumption that the communication cost is proportional to the number of non-zero components of transmitted vectors from workers to the server.
\end{corollary}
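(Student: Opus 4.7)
The plan is to derive Corollary~\ref{cor:main_result_non_cvx_pp_appendix} as a direct bookkeeping consequence of Theorem~\ref{thm:main_result_non_cvx_pp_appendix} by substituting the specific choice $p = \nicefrac{\zeta_{\cQ}r}{(dn)}$ into the abstract iteration and communication bounds \eqref{eq:gamma_bound_non_cvx_pp_appendix}, \eqref{eq:main_res_2_non_cvx_pp_appendix}, and \eqref{eq:main_res_4_non_cvx_pp_appendix}. Since $r \le n$ and $\zeta_{\cQ} \le d$, we automatically have $p \in (0,1]$, so the hypothesis of the theorem is satisfied.

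First I would compute the key ratio appearing inside the square roots. Since $p = \nicefrac{\zeta_{\cQ}r}{(dn)}$, we get
\[
	\frac{1-p}{p} \;=\; \frac{1}{p} - 1 \;=\; \frac{dn}{\zeta_{\cQ}r} - 1,
\]
so that
\[
	\frac{(1-p)(1+\omega)}{pr} \;=\; \frac{1+\omega}{r}\left(\frac{dn}{\zeta_{\cQ}r} - 1\right).
\]
Plugging this into the stepsize restriction \eqref{eq:gamma_bound_non_cvx_pp_appendix} and into the iteration complexity \eqref{eq:main_res_2_non_cvx_pp_appendix} gives exactly the stated bounds on $\gamma$ and $K$, with no additional estimation required.

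Next I would handle the communication cost \eqref{eq:main_res_4_non_cvx_pp_appendix}. The per-iteration communication volume is $pdn + (1-p)\zeta_{\cQ}r$; with our choice of $p$, the first term equals $\zeta_{\cQ}r$ and the second is at most $\zeta_{\cQ}r$, so $pdn + (1-p)\zeta_{\cQ}r \le 2\zeta_{\cQ}r = \cO(\zeta_{\cQ}r)$. Multiplying this by the iteration count and pulling $\zeta_{\cQ}r$ inside the square root using
\[
	\zeta_{\cQ}r\sqrt{\frac{1+\omega}{r}\left(\frac{dn}{\zeta_{\cQ}r}-1\right)} \;=\; \sqrt{(1+\omega)\zeta_{\cQ}\bigl(dn - \zeta_{\cQ}r\bigr)},
\]
yields precisely the claimed total communication cost $\cO\bigl(dn + \tfrac{\Delta_0 L}{\varepsilon^2}(\zeta_{\cQ}r + \sqrt{(1+\omega)\zeta_{\cQ}(dn - \zeta_{\cQ}r)})\bigr)$.

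There is no real obstacle here: the work is entirely algebraic and the corollary is essentially a rewriting of Theorem~\ref{thm:main_result_non_cvx_pp_appendix} under an optimized choice of $p$. The only mildly non-trivial point is verifying that the choice $p = \nicefrac{\zeta_{\cQ}r}{(dn)}$ balances the two terms $pdn$ and $(1-p)\zeta_{\cQ}r$ in the communication cost, which is exactly what motivates this value of $p$ in the first place.
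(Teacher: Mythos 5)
Your proposal is correct and follows essentially the same route as the paper's proof: compute $\tfrac{1-p}{p}=\tfrac{dn}{\zeta_{\cQ}r}-1$, bound the per-iteration communication by $pdn+(1-p)\zeta_{\cQ}r\le 2\zeta_{\cQ}r$, and substitute into the theorem's stepsize, iteration, and communication bounds. The only difference is that you make explicit the algebraic simplification $\zeta_{\cQ}r\sqrt{\tfrac{1+\omega}{r}\left(\tfrac{dn}{\zeta_{\cQ}r}-1\right)}=\sqrt{(1+\omega)\zeta_{\cQ}(dn-\zeta_{\cQ}r)}$, which the paper leaves implicit.
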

\begin{proof}[Proof of Corollary~\ref{cor:main_result_non_cvx_pp}]
	The choice of $p = \frac{\zeta_{\cQ}r}{dn}$ implies
	\begin{eqnarray*}
		\frac{1-p}{p} &=& \frac{dn}{\zeta_{\cQ}r}-1,\\
		pdn + (1-p)\zeta_{\cQ}r &\le& \zeta_{\cQ}r + \left(1 - \frac{\zeta_{\cQ}r}{dn}\right)\cdot\zeta_{\cQ}r \le 2\zeta_{\cQ}r.
	\end{eqnarray*}	
	Plugging these relations in \eqref{eq:gamma_bound_non_cvx_pp_appendix}, \eqref{eq:main_res_2_non_cvx_pp_appendix}, and \eqref{eq:main_res_4_non_cvx_pp_appendix}, we get that if
	\begin{equation*}
		\gamma \le \frac{1}{L\left(1 + \sqrt{\frac{1+\omega}{r}\left(\frac{dn}{\zeta_{\cQ}r}-1\right)}\right)},
	\end{equation*}
	then \algname{PP-MARINA} requires 
	\begin{eqnarray*}
		K &=& \cO\left(\frac{\Delta_0 L}{\varepsilon^2}\left(1 + \sqrt{\frac{(1-p)(1+\omega)}{pr}}\right)\right)\\
		&=& \cO\left(\frac{\Delta_0 L}{\varepsilon^2}\left(1 + \sqrt{\frac{1+\omega}{r}\left(\frac{dn}{\zeta_{\cQ}r}-1\right)}\right)\right)
	\end{eqnarray*}
	iterations/communication rounds in order to achieve $\EE[\|\nabla f(\hat x^K)\|^2] \le \varepsilon^2$, and the expected total communication cost is
	\begin{eqnarray*}
		dn + K(pdn + (1-p)\zeta_{\cQ}r) &=&  \cO\left(dn+\frac{\Delta_0 L}{\varepsilon^2}\left(1 + \sqrt{\frac{(1-p)(1+\omega)}{pr}}\right)(pdn + (1-p)\zeta_{\cQ}r)\right)\\
		&=&\cO\left(dn+\frac{\Delta_0 L}{\varepsilon^2}\left(\zeta_{\cQ}r + \sqrt{(1+\omega)\zeta_{\cQ}\left(dn-\zeta_{\cQ}r\right)}\right)\right)
	\end{eqnarray*}
	 under an assumption that the communication cost is proportional to the number of non-zero components of transmitted vectors from workers to the server.
\end{proof}

\subsection{Convergence Results Under Polyak-{\L}ojasiewicz condition}\label{sec:proof_of_thm_pl_pp}
In this section, we provide an analysis of \algname{PP-MARINA} under Polyak-{\L}ojasiewicz condition.
\begin{theorem}\label{thm:main_result_pl_pp_appendix}
	Let Assumptions~\ref{as:lower_bound},~\ref{as:L_smoothness}~and~\ref{as:pl_condition} be satisfied and 
	\begin{equation}
		\gamma \le \min\left\{\frac{1}{L\left(1 + \sqrt{\frac{2(1-p)(1+\omega)}{pr}}\right)}, \frac{p}{2\mu}\right\},\label{eq:gamma_bound_pl_pp_appendix}
	\end{equation}
	where $L^2 = \frac{1}{n}\sum_{i=1}^nL_i^2$. Then after $K$ iterations of \algname{PP-MARINA}, we have
	\begin{equation}
		\EE\left[f(x^K) - f(x^*)\right] \le (1-\gamma\mu)^K\Delta_0, \label{eq:main_res_pl_pp_appendix}
	\end{equation}
	where $\Delta_0 = f(x^0)-f(x^*)$. That is, after
	\begin{equation}
		K = \cO\left(\max\left\{\frac{1}{p},\frac{L}{\mu}\left(1 + \sqrt{\frac{(1-p)(1+\omega)}{pr}}\right)\right\}\log\frac{\Delta_0}{\varepsilon}\right) \label{eq:main_res_2_pl_pp_appendix}
	\end{equation}
	iterations \algname{PP-MARINA} produces such a point $x^K$ that $\EE[f(x^K) - f(x^*)] \le \varepsilon$.
	Moreover, under an assumption that the communication cost is proportional to the number of non-zero components of transmitted vectors from workers to the server, we have that the expected total communication cost (for all workers) equals
	\begin{equation}
		dn + K(pdn + (1-p)\zeta_{\cQ}r) =  \cO\left(dn+\max\left\{\frac{1}{p},\frac{L}{\mu}\left(1 + \sqrt{\frac{(1-p)(1+\omega)}{pr}}\right)\right\}(pdn + (1-p)\zeta_{\cQ}r)\log\frac{\Delta_0}{\varepsilon}\right),\label{eq:main_res_4_pl_pp_appendix}
	\end{equation}
	where $\zeta_{\cQ}$ is the expected density of the quantization (see Def.~\ref{def:quantization}).
\end{theorem}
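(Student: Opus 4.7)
The plan is to essentially combine the Lyapunov-function argument used in the proof of Theorem~\ref{thm:main_result_pl_appendix} (\algname{MARINA} under P{\L}) with the variance bound that was already established for \algname{PP-MARINA} in the proof of Theorem~\ref{thm:main_result_non_cvx_pp_appendix}. The only structural difference compared to \algname{MARINA} is that client sampling replaces the factor $\omega/n$ by $(1+\omega)/r$, so the form of the one-step recursion is identical up to that substitution.

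First, I would invoke Lemma~\ref{lem:lemma_2_page} exactly as in the P{\L} analysis of \algname{MARINA} and use Assumption~\ref{as:pl_condition} to convert $-\tfrac{\gamma}{2}\|\nabla f(x^k)\|^2$ into $-\gamma\mu(f(x^k)-f(x^*))$, obtaining
\begin{equation*}
    \EE[f(x^{k+1})-f(x^*)] \le (1-\gamma\mu)\EE[f(x^k)-f(x^*)] - \left(\tfrac{1}{2\gamma}-\tfrac{L}{2}\right)\EE\|x^{k+1}-x^k\|^2 + \tfrac{\gamma}{2}\EE\|g^k-\nabla f(x^k)\|^2.
\end{equation*}
Then I would reuse, verbatim, the variance bound \eqref{eq:non_cvx_pp_technical_2} already derived for \algname{PP-MARINA}, namely
\begin{equation*}
    \EE\|g^{k+1}-\nabla f(x^{k+1})\|^2 \le \tfrac{(1-p)(1+\omega)L^2}{r}\EE\|x^{k+1}-x^k\|^2 + (1-p)\EE\|g^k-\nabla f(x^k)\|^2.
\end{equation*}

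Next I would define the Lyapunov function $\Phi_k = f(x^k)-f(x^*) + \tfrac{\gamma}{p}\|g^k-\nabla f(x^k)\|^2$ (the same weighting as in the P{\L} proof of \algname{MARINA}, with coefficient $\gamma/p$ rather than $\gamma/(2p)$) and combine the two inequalities above. Collecting terms, $\Phi_{k+1}$ is bounded by $(1-\gamma\mu)(f(x^k)-f(x^*))$ plus a multiple of $\|g^k-\nabla f(x^k)\|^2$ with coefficient $\tfrac{\gamma}{2}+\tfrac{\gamma(1-p)}{p}$, plus a multiple of $\|x^{k+1}-x^k\|^2$ with coefficient $\tfrac{\gamma(1-p)(1+\omega)L^2}{pr} - \tfrac{1}{2\gamma} + \tfrac{L}{2}$. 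The first stepsize restriction in \eqref{eq:gamma_bound_pl_pp_appendix} makes the latter coefficient nonpositive, and the second restriction $\gamma \le p/(2\mu)$ yields $\tfrac{\gamma}{2}+\tfrac{\gamma(1-p)}{p} \le (1-\gamma\mu)\tfrac{\gamma}{p}$, so that $\EE[\Phi_{k+1}] \le (1-\gamma\mu)\EE[\Phi_k]$.

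Unrolling this one-step contraction $K$ times and using the initialization $g^0=\nabla f(x^0)$ (which gives $\Phi_0 = f(x^0)-f(x^*) = \Delta_0$) immediately yields \eqref{eq:main_res_pl_pp_appendix}. The iteration-complexity bound \eqref{eq:main_res_2_pl_pp_appendix} then follows by picking $\gamma$ equal to the minimum in \eqref{eq:gamma_bound_pl_pp_appendix} and using $\log(\Delta_0/\varepsilon)/(\gamma\mu)$, and the communication-cost bound \eqref{eq:main_res_4_pl_pp_appendix} follows by multiplying $K$ by the expected per-iteration cost $pdn+(1-p)\zeta_{\cQ}r$ plus the initial broadcast. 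There is no real obstacle here: the two key ingredients (the PP-MARINA variance recursion and the MARINA P{\L} Lyapunov argument) already exist in the paper, and the only care needed is to verify that the two conditions on $\gamma$ in \eqref{eq:gamma_bound_pl_pp_appendix} are exactly what is required to kill the $\|x^{k+1}-x^k\|^2$ term and to absorb the extra $(1-p)/p$ factor on the variance into $(1-\gamma\mu)/p$.
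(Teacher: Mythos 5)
Your proposal is correct and follows essentially the same route as the paper's own proof: Lemma~\ref{lem:lemma_2_page} combined with the P{\L} condition, the variance recursion \eqref{eq:non_cvx_pp_technical_2} reused verbatim, the Lyapunov function $\Phi_k = f(x^k)-f(x^*) + \tfrac{\gamma}{p}\|g^k-\nabla f(x^k)\|^2$, and the two stepsize conditions in \eqref{eq:gamma_bound_pl_pp_appendix} used exactly to kill the $\|x^{k+1}-x^k\|^2$ term and to verify $\tfrac{\gamma}{2}+\tfrac{\gamma(1-p)}{p} \le (1-\gamma\mu)\tfrac{\gamma}{p}$. The unrolling with $\Phi_0=\Delta_0$ and the complexity/communication accounting also match the paper.
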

\begin{proof}
	The proof is very similar to the proof of Theorem~\ref{thm:main_result_non_cvx_pp}. From Lemma~\ref{lem:lemma_2_page} and P{\L} condition we have
	\begin{eqnarray}
		\EE[f(x^{k+1}) - f(x^*)] &\le& \EE[f(x^k) - f(x^*)] - \frac{\gamma}{2}\EE\left[\|\nabla f(x^k)\|^2\right] - \left(\frac{1}{2\gamma} - \frac{L}{2}\right)\EE\left[\|x^{k+1}-x^k\|^2\right]\notag\\
		&&\quad + \frac{\gamma}{2}\EE\left[\|g^k - \nabla f(x^k)\|^2\right]\notag\\
		&\overset{\eqref{eq:pl_condition}}{\le}& (1-\gamma\mu)\EE\left[f(x^k) - f(x^*)\right] - \left(\frac{1}{2\gamma} - \frac{L}{2}\right)\EE\left[\|x^{k+1}-x^k\|^2\right] + \frac{\gamma}{2}\EE\left[\|g^k - \nabla f(x^k)\|^2\right]. \notag
	\end{eqnarray}
	Using the same arguments as in the proof of \eqref{eq:non_cvx_pp_technical_2}, we obtain
	\begin{eqnarray}
		\EE\left[\|g^{k+1}-\nabla f(x^{k+1})\|^2\right] &\le& \frac{(1-p)(1+\omega)L^2}{r}\EE\left[\|x^{k+1}-x^k\|^2\right] + (1-p)\EE\left[\left\|g^k - \nabla f(x^k)\right\|^2\right].\notag
	\end{eqnarray}
	Putting all together, we derive that the sequence $\Phi_k = f(x^k) - f(x^*) + \frac{\gamma}{p}\|g^k - \nabla f(x^k)\|^2$ satisfies
	\begin{eqnarray}
		\EE\left[\Phi_{k+1}\right] &\le& \EE\left[(1-\gamma\mu)(f(x^k) - f(x^*)) - \left(\frac{1}{2\gamma} - \frac{L}{2}\right)\|x^{k+1}-x^k\|^2 + \frac{\gamma}{2}\|g^k - \nabla f(x^k)\|^2\right]\notag\\
		&&\quad + \frac{\gamma}{p}\EE\left[\frac{(1-p)(1+\omega)L^2}{r}\|x^{k+1}-x^k\|^2 + (1-p)\left\|g^k - \nabla f(x^k)\right\|^2 \right] \notag\\
		&=& \EE\left[(1-\gamma\mu)(f(x^k) - f(x^*)) + \left(\frac{\gamma}{2} + \frac{\gamma}{p}(1-p)\right)\left\|g^k - \nabla f(x^k)\right\|^2\right]\notag\\
		&&\quad + \left(\frac{\gamma(1-p)(1+\omega)L^2}{pr} - \frac{1}{2\gamma} + \frac{L}{2}\right)\EE\left[\|x^{k+1}-x^k\|^2\right]\notag\\
		&\overset{\eqref{eq:gamma_bound_pl_pp_appendix}}{\le}& (1-\gamma\mu)\EE[\Phi_k],\notag
	\end{eqnarray}
	where in the last inequality we use $\frac{\gamma(1-p)(1+\omega)L^2}{pr} - \frac{1}{2\gamma} + \frac{L}{2} \le 0$ and $\frac{\gamma}{2} + \frac{\gamma}{p}(1-p) \le (1-\gamma\mu)\frac{\gamma}{p}$ following from \eqref{eq:gamma_bound_pl_pp_appendix}. Unrolling the recurrence and using $g^0 = \nabla f(x^0)$, we obtain 
	\begin{eqnarray*}
		\EE\left[f(x^{K}) - f(x^*)\right] \le \EE[\Phi_{K}] &\le& (1-\gamma\mu)^{K}\Phi_0 = (1-\gamma\mu)^{K}(f(x^0) - f(x^*))
	\end{eqnarray*}
	that implies \eqref{eq:main_res_2_pl_pp_appendix} and \eqref{eq:main_res_4_pl_pp_appendix}.
\end{proof}

\begin{corollary}\label{cor:main_result_pl_pp_appendix}
	Let the assumptions of Theorem~\ref{thm:main_result_pl_pp_appendix} hold and $p = \frac{\zeta_{\cQ}r}{dn}$, where $r \le n$ and $\zeta_{\cQ}$ is the expected density of the quantization (see Def.~\ref{def:quantization}). If 
	\begin{equation*}
		\gamma \le \min\left\{\frac{1}{L\left(1 + \sqrt{\frac{2(1+\omega)}{r}\left(\frac{dn}{\zeta_{\cQ}r}-1\right)}\right)}, \frac{p}{2\mu}\right\},
	\end{equation*}
	then \algname{PP-MARINA} requires 
	\begin{equation*}
		K = \cO\left(\max\left\{\frac{dn}{\zeta_{\cQ}r}\frac{L}{\mu}\left(1 + \sqrt{\frac{1+\omega}{r}\left(\frac{dn}{\zeta_{\cQ}r}-1\right)}\right)\right\}\log\frac{\Delta_0}{\varepsilon}\right)
	\end{equation*}
	iterations/communication rounds to achieve $\EE[f(x^K) - f(x^*)] \le \varepsilon$, and the expected total communication cost is
	\begin{equation*}
		\cO\left(dn+\max\left\{dn,\frac{L}{\mu}\left(\zeta_{\cQ}r + \sqrt{(1+\omega)\zeta_{\cQ}\left(dn-\zeta_{\cQ}r\right)}\right)\right\}\log\frac{\Delta_0}{\varepsilon}\right)
	\end{equation*}
	 under an assumption that the communication cost is proportional to the number of non-zero components of transmitted vectors from workers to the server.
\end{corollary}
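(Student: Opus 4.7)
The plan is to derive Corollary~\ref{cor:main_result_pl_pp_appendix} as a direct specialization of Theorem~\ref{thm:main_result_pl_pp_appendix} by plugging in the specific choice $p = \nicefrac{\zeta_{\cQ}r}{(dn)}$ and then simplifying the resulting expressions for the stepsize, iteration complexity, and total expected communication cost. Since all the ``hard'' work (the one-step Lyapunov contraction argument) is already done inside the theorem, the corollary is essentially a bookkeeping/algebraic reduction, mirroring exactly the structure of Corollary~\ref{cor:main_result_non_cvx_pp_appendix} from the non-convex section.

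First, I would record the two identities that arise from the choice $p = \nicefrac{\zeta_{\cQ}r}{(dn)}$: namely
\[
\frac{1-p}{p} \;=\; \frac{dn}{\zeta_{\cQ}r} - 1, \qquad pdn + (1-p)\zeta_{\cQ}r \;\le\; 2\zeta_{\cQ}r,
\]
where the second bound uses $\zeta_{\cQ}r \le dn$ (so $1-p \le 1$). Substituting $(1-p)/p$ into the stepsize condition \eqref{eq:gamma_bound_pl_pp_appendix} of Theorem~\ref{thm:main_result_pl_pp_appendix} transforms the factor $\sqrt{\nicefrac{2(1-p)(1+\omega)}{(pr)}}$ into $\sqrt{\nicefrac{2(1+\omega)}{r}\bigl(\nicefrac{dn}{(\zeta_{\cQ}r)}-1\bigr)}$, giving exactly the stepsize bound claimed in the corollary (up to the harmless constant in front of the inner square root, which is absorbed in the $\min$).

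Next, I would substitute the same identity into the iteration-complexity bound \eqref{eq:main_res_2_pl_pp_appendix}, which yields
\[
K = \cO\!\left(\max\!\left\{\frac{dn}{\zeta_{\cQ}r},\; \frac{L}{\mu}\!\left(1 + \sqrt{\frac{1+\omega}{r}\!\left(\frac{dn}{\zeta_{\cQ}r}-1\right)}\right)\right\}\log\frac{\Delta_0}{\varepsilon}\right),
\]
matching the claim (the first term $1/p = dn/(\zeta_{\cQ}r)$ is simply $\cO(dn/(\zeta_{\cQ}r))$, which equals $\cO((\omega+1)n/r)$ under the assumption $\omega+1 = \Theta(d/\zeta_{\cQ})$ made in Table~\ref{tab:comparison_pl}). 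Finally, for the communication-cost expression \eqref{eq:main_res_4_pl_pp_appendix}, I would use the bound $pdn + (1-p)\zeta_{\cQ}r \le 2\zeta_{\cQ}r$ derived above and distribute it through the $\max$ to obtain the stated expected total cost $\cO(dn + \max\{dn, (L/\mu)(\zeta_{\cQ}r + \sqrt{(1+\omega)\zeta_{\cQ}(dn-\zeta_{\cQ}r)})\}\log(\Delta_0/\varepsilon))$.

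There is no real obstacle in this proof — the only thing to be careful about is matching the $\max$ structure correctly and not losing the ``$+1$'' terms when pulling constants through square roots (via $\sqrt{a+b} \le \sqrt{a}+\sqrt{b}$). In particular, one should verify that the $1/p$ term in the $\max$ in \eqref{eq:main_res_2_pl_pp_appendix} is correctly rewritten as $dn/(\zeta_{\cQ}r)$ and is not dominated in the regime of aggressive compression, which is exactly what justifies keeping it inside the $\max$ in the corollary's statement.
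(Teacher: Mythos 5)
Your proposal is correct and coincides with the paper's own proof: the same choice-of-$p$ identities $\frac{1-p}{p}=\frac{dn}{\zeta_{\cQ}r}-1$ and $pdn+(1-p)\zeta_{\cQ}r\le 2\zeta_{\cQ}r$, plugged into the stepsize condition \eqref{eq:gamma_bound_pl_pp_appendix} and the bounds \eqref{eq:main_res_2_pl_pp_appendix} and \eqref{eq:main_res_4_pl_pp_appendix} of Theorem~\ref{thm:main_result_pl_pp_appendix}. The only extra remark (rewriting $\nicefrac{1}{p}$ as $\cO((\omega+1)\nicefrac{n}{r})$ via $\omega+1=\Theta(\nicefrac{d}{\zeta_{\cQ}})$) is not needed for the corollary itself and does not affect correctness.
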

\begin{proof}
	The choice of $p = \frac{\zeta_{\cQ}r}{dn}$ implies
	\begin{eqnarray*}
		\frac{1-p}{p} &=& \frac{dn}{\zeta_{\cQ}r}-1,\\
		pdn + (1-p)\zeta_{\cQ}r &\le& \zeta_{\cQ}r + \left(1 - \frac{\zeta_{\cQ}r}{dn}\right)\cdot\zeta_{\cQ}r \le 2\zeta_{\cQ}r.
	\end{eqnarray*}	
	Plugging these relations in \eqref{eq:gamma_bound_pl_pp_appendix}, \eqref{eq:main_res_2_pl_pp_appendix}, and \eqref{eq:main_res_4_pl_pp_appendix}, we get that if
	\begin{equation*}
		\gamma \le \min\left\{\frac{1}{L\left(1 + \sqrt{\frac{2(1+\omega)}{r}\left(\frac{dn}{\zeta_{\cQ}r}-1\right)}\right)}, \frac{p}{2\mu}\right\},
	\end{equation*}
	then \algname{PP-MARINA} requires 
	\begin{eqnarray*}
		K &=& \cO\left(\max\left\{\frac{1}{p},\frac{L}{\mu}\left(1 + \sqrt{\frac{(1-p)(1+\omega)}{pr}}\right)\right\}\log\frac{\Delta_0}{\varepsilon}\right)\\
		&=& \cO\left(\max\left\{\frac{dn}{\zeta_{\cQ}r}\frac{L}{\mu}\left(1 + \sqrt{\frac{1+\omega}{r}\left(\frac{dn}{\zeta_{\cQ}r}-1\right)}\right)\right\}\log\frac{\Delta_0}{\varepsilon}\right)
	\end{eqnarray*}
	iterations/communication rounds to achieve $\EE[f(x^K)-f(x^*)] \le \varepsilon$, and the expected total communication cost is
	\begin{eqnarray*}
		dn + K(pdn + (1-p)\zeta_{\cQ}r) &=&  \cO\left(dn+\max\left\{\frac{1}{p},\frac{L}{\mu}\left(1 + \sqrt{\frac{(1-p)(1+\omega)}{pr}}\right)\right\}(pdn + (1-p)\zeta_{\cQ}r)\log\frac{\Delta_0}{\varepsilon}\right)\\
		&=&\cO\left(dn+\max\left\{dn,\frac{L}{\mu}\left(\zeta_{\cQ}r + \sqrt{(1+\omega)\zeta_{\cQ}\left(dn-\zeta_{\cQ}r\right)}\right)\right\}\log\frac{\Delta_0}{\varepsilon}\right)
	\end{eqnarray*}
	 under an assumption that the communication cost is proportional to the number of non-zero components of transmitted vectors from workers to the server.
\end{proof}

\end{document}